\documentclass{article}

\usepackage[utf8]{inputenc}
\usepackage{amsmath,amssymb,amsthm}
\usepackage{bm,bbm,xfrac,url,hyperref}
\usepackage{dsfont}
\usepackage{wrapfig}
\usepackage[dvipsnames]{xcolor}

\usepackage{graphicx}
\usepackage{subcaption}

\usepackage{todonotes}

\newcommand{\mathsep}{,~}
\newcommand{\st}{\,\middle|\,}
\newcommand{\set}[1]{\left\lbrace #1 \right\rbrace}
\newcommand{\card}[1]{\left\lvert{#1}\right\rvert}
\newcommand{\absv}[1]{\card{#1}}
\newcommand{\norm}[2]{\left\lVert{#1}\right\rVert_{#2}}

\newcommand{\setR}{\mathbb R}

\newcommand{\probability}[1]{\mathbb P\left[#1\right]}
\newcommand{\expect}[1]{\mathbb E\left[#1\right]}

\newcommand{\indicator}[1]{\mathbbm{1}\left[#1\right]}

\newcommand{\expectVariable}[2]{\mathbb E_{#1}\left[#2\right]}

\newtheorem{definition}{Definition}
\newtheorem{proposition}{Proposition}
\newtheorem{theorem}{Theorem}
\newtheorem{lemma}{Lemma}
\newtheorem{corollary}{Corollary}

\newtheorem{assumption}{Assumption}

\newcommand{\param}{\theta}
\newcommand{\paramsub}[1]{\param_{#1}}
\newcommand{\paramfamily}{\vec \theta}
\newcommand{\common}{\rho}

\newcommand{\optimumsub}[1]{\param_{#1}^*}
\newcommand{\optimumfamily}{\vec{\param}^*}

\newcommand{\optcommon}{\common^*}

\newcommand{\trueparam}{\param^\dagger}
\newcommand{\trueparamsub}[1]{\param_{#1}^\dagger}
\newcommand{\trueparamfamily}{\vec{\param}^\dagger}

\newcommand{\query}[1]{\mathcal Q_{#1}}

\newcommand{\querydistribution}[1]{\tilde{\mathcal Q}_{#1}}
\newcommand{\answer}[1]{\mathcal A_{#1}}

\newcommand{\varx}{x}

\newcommand{\regweightsub}[1]{\lambda_{#1}}
\newcommand{\regweightcommon}[1]{\mu}
\newcommand{\reglocalweight}[1]{\nu}

\newcommand{\regnormsub}[1]{q_{#1}}
\newcommand{\regnormcommon}[1]{\regnormsub{0}}
\newcommand{\regpowersub}[1]{p_{#1}}
\newcommand{\regpowercommon}[1]{\regpowersub{0}}

\newcommand{\commonnorm}[1]{\mathcal N_{0}}
\newcommand{\regularization}{\mathcal R}

\newcommand{\Probability}{P}

\newcommand{\parambound}{\mathcal K}
\newcommand{\event}{\mathcal E}

\newcommand{\strategicvote}{\paramsub{\strategicnode}^\spadesuit}

\newcommand{\targetvalue}{\trueparam_{\strategicnode}}

\newcommand{\nodeinput}{i}
\newcommand{\NODEINPUT}[1]{\mathcal{I}_{#1}}

\newcommand{\data}[1]{\mathcal D_{#1}}
\newcommand{\datafamily}[1]{\vec{\mathcal D}_{#1}}

\newcommand{\node}{n}
\newcommand{\NODE}{N}

\newcommand{\strategicnode}{s}

\newcommand{\honest}{h}
\newcommand{\HONEST}{\mathcal{H}}
\newcommand{\byzantine}{f}
\newcommand{\BYZANTINE}{F}

\newcommand{\noise}[1]{\xi_{#1}}
\newcommand{\unitvector}[1]{{\bf u}_{#1}}
\newcommand{\unitvectorbis}[1]{{\bf v}_{#1}}

\newcommand{\utility}[1]{u}

\newcommand{\ball}{\mathcal B}
\newcommand{\averagingconstant}{C}

\newcommand{\lossperinput}{\ell}
\newcommand{\independentloss}[1]{\mathcal L_{#1}}

\newcommand{\globalloss}{\textsc{Loss}}
\newcommand{\ModifiedLoss}[1]{\textsc{Loss}_{#1}}
\newcommand{\localloss}[1]{\mathcal{L}_{#1}}
\newcommand{\minimizedLoss}[1]{\textsc{Loss}^1_{#1}}
\newcommand{\minimizedLosstwo}[1]{\textsc{Loss}^2_{#1}}

\newcommand{\distbound}{M_{\query{}}}

\newcommand{\empiricalcovariance}{\hat{\Sigma}}
\newcommand{\logisticdistance}{\Delta}

\newcommand{\spectrum}{\textsc{Sp}}
\newcommand{\sphere}[1]{\mathbb S^{#1}}
\newcommand{\support}{\textsc{Supp}}

\newcommand{\CounterGradientAttack}{\textsc{CGA}}
\newcommand{\alg}{\textsc{ALG}}

\newcommand{\datapoint}{x}
\newcommand{\gradient}[2]{g_{#1}^{#2}}
\newcommand{\minimizedgradient}[2]{G_{#1}^{#2}}
\newcommand{\nonnormalizedgradient}[2]{h_{#1}^{#2}}
\newcommand{\truegradient}[2]{g_{#1}^{\dagger,#2}}
\newcommand{\estimatedgradient}[2]{\hat g_{#1}^{#2}}
\newcommand{\GRADIENT}{\textsc{Grad}}
\newcommand{\iteration}{t}
\newcommand{\learningrate}[1]{\eta_{#1}}

\newcommand{\error}[2]{\zeta_{#1}^{#2}}

\newcommand{\identitymatrix}{I}

\usepackage{amsmath,amsfonts,bm}

\def\eqref#1{equation~\ref{#1}}

\def\1{\bm{1}}

\DeclareMathAlphabet{\mathsfit}{\encodingdefault}{\sfdefault}{m}{sl}
\SetMathAlphabet{\mathsfit}{bold}{\encodingdefault}{\sfdefault}{bx}{n}

\newcommand{\sigmoid}{\sigma}

\DeclareMathOperator*{\argmin}{arg\,min}

\usepackage[accepted]{icml2022}

\icmltitlerunning{Data poisoning and Byzantine gradients}

\begin{document}

\twocolumn[
\icmltitle{An Equivalence Between Data Poisoning and Byzantine Gradient Attacks}

\icmlsetsymbol{equal}{*}

\begin{icmlauthorlist}
\icmlauthor{Sadegh Farhadkhani}{equal,epfl}
\icmlauthor{Rachid Guerraoui}{epfl}
\icmlauthor{Lê-Nguyên Hoang}{equal,epfl}
\icmlauthor{Oscar Villemaud}{equal,epfl}
\end{icmlauthorlist}

\icmlaffiliation{epfl}{IC Schoold, EPFL, Lausanne, Switzerland}

\icmlcorrespondingauthor{Sadegh Farhadkhani}{sadegh.farhadkhani@epfl.ch}
\icmlcorrespondingauthor{Lê-Nguyên Hoang}{le.hoang@epfl.ch}

\icmlkeywords{Data poisoning, federated learning, personalized, Byzantine}

\vskip 0.3in
]

\printAffiliationsAndNotice{\icmlEqualContribution} %

\begin{abstract}
To study the resilience of distributed learning, 
the ``Byzantine" literature considers a strong threat model 
where workers can report arbitrary gradients to the parameter server. 
Whereas this model helped obtain several fundamental results, 
it has sometimes been considered unrealistic, when the workers 
are mostly trustworthy machines. 
In this paper, we show a surprising equivalence between this model 
and data poisoning, a threat considered much more realistic. 
More specifically, we prove that every gradient attack can be reduced 
to data poisoning,
in any personalized federated learning system with PAC guarantees 
(which we show are both desirable and realistic).
This equivalence makes it possible to obtain 
new impossibility results on the resilience of \emph{any} ``robust'' learning algorithm to data poisoning
{ in highly heterogeneous applications}, 
as corollaries of existing impossibility theorems on Byzantine machine learning.
Moreover, using our equivalence, we derive a practical attack that we show (theoretically and empirically) can be very effective against classical personalized federated learning models.
\end{abstract}

\section{Introduction}

Learning algorithms typically leverage data generated by a large number of users \citep{SmithSPKCL13,WangPNSMHLB19,WangSMHLB19} 
to often learn a common model that fits a large population~\citep{KonecnyMR15}, but also sometimes to construct a \emph{personalized} model for each individual~\citep{RicciRS11}.
Autocompletion~\citep{LehmannB21}, conversational~\citep{ShumHL18} and recommendation~\citep{IeJWNAWCCB19} schemes are examples of such personalization algorithms already deployed at scale.
To be effective, besides huge amounts of  data~\citep{BrownMRSKDNSSAA20,FedusZS21}, these algorithms require customization, motivating research into the promising but challenging field of \emph{personalized federated learning}~\citep{FallahMO20,HanzelyHHR20,DinhTN20}.

{Now, classical learning algorithms generally regard as desirable to fit all available data.
However, this approach dangerously fails in the context of user-generated data, as goal-oriented users may provide \emph{untrustworthy data} to reach their objectives.}
{In fact}, in applications such as content recommendation,
activists, companies, and politicians have strong incentives to do so to promote 
certain views, products or ideologies~\citep{Hoang20,HoangFE21}.
Perhaps unsurprisingly, this led to the proliferation of fabricated activities to bias algorithms~\citep{bradshaw19,neudert2019}, e.g. through ``fake reviews''~\citep{WuNWW20}.
The scale of this phenomenon is well illustrated by the case of Facebook which, in 2019 alone, reported the removal of around 6~billion fake accounts from its platform~\citep{facebook_fake_accounts}.
This is highly concerning in the era of ``stochastic parrots''~\citep{BenderGMS21}: 
climate denialists are incentivized to pollute textual datasets with claims like ``climate change is a hoax'', rightly assuming that autocompletion, conversational and recommendation algorithms trained on such data will more likely spread these views~\citep{McGuffieNewhouse20}.
This raises serious concerns about the vulnerability of personalized federated learning to misleading data.
Data poisoning attacks clearly constitute now a major machine learning security issue { \emph{in already deployed systems}}~\citep{KumarNLMGCSX20}.

Overall, in adversarial environments like social media, and given the advent of \emph{deep fakes}~\citep{JohnsonD21}, we should expect  \emph{most data to be strategically crafted and labeled}. 
In this context, the authentication of the data provider is critical.
In particular, the safety of learning algorithms arguably demands that they be trained solely on {\it cryptographically signed} data, namely, data that provably come from a known source.
But even signed data cannot be wholeheartedly trusted since users typically have preferences over what ought to be recommended to others. Naturally, {even ``authentic''} users have incentives to behave strategically in order to promote certain views or products.

To study resilience, the {\it Byzantine} learning literature usually assumes that each federated learning worker may behave arbitrarily~\citep{BlanchardMGS17,YinCRB18,KarimireddyHJ21,YangL21}.
To understand the implication of this assumption, 
recall that at each iteration of a federated learning stochastic gradient descent, every worker is given the updated model, and asked to compute the gradient of the loss function with respect to (a batch of) its local data.
Byzantine learning assumes that a  worker may report \emph{any} gradient; without having to certify that the gradient was generated through data poisoning.
Whilst very general, and widely studied in the last few years, this gradient attack threat model has been argued to be unrealistic in practical federated learning~\citep{ShejwalkarHKR21},    especially when the workers are machines owned by trusted entities~\citep{kairouz2021advances}.

We prove in this paper a somewhat surprising equivalence between 
gradient attacks and data poisoning, {in a convex setting}. 
Essentially, we give the first practically compelling 
argument for the necessity to protect learning against gradient attacks. 
Our result enables us to carry over results on Byzantine gradient attacks to the data poisoning world. 
For instance, the impossibility result of~\citet{collaborative_learning}, 
combined with our equivalence result, 
implies that the more \emph{heterogeneous} the data,  
{the more vulnerable \emph{any} ``robust'' learning algorithm is.}
Also, we derive 
concrete data poisoning attacks from gradient ones.

\paragraph{Contributions.}
As a preamble of our main result,  we formalize local PAC* learning\footnote{We omit complexity considerations for the sake of generality. We define PAC* to be PAC without such considerations.}~\citep{Valiant84} for personalized learning, and prove that a simple and general solution to personalized federated linear regression and classification is indeed locally PAC* learning.
Our proof leverages a new concept called \emph{gradient-PAC* learning}.
We prove that gradient PAC* learning, which is verified by basic learning algorithms like linear and logistic regression,
is sufficient to guarantee local PAC* learning.
This is an important and nontrivial contribution of this paper.

Our main contribution is to then prove that local PAC* { convex} learning in personalized federated learning essentially implies an equivalence between data poisoning and gradient attacks.
More precisely, we show how any (converging) gradient attack can be turned into a data poisoning attack, with equal harm.
 As a corollary, we derive new impossibility theorems on what any robust personalized learning algorithm can guarantee, given heterogeneous genuine users and under data poisoning.
Given how easy it generally is to create fake accounts on web platforms
and to inject poisonous data through fabricated activities, our results arguably greatly increase the concerns about the vulnerabilities of learning from user-generated data,
even when ``Byzantine learning algorithms'' are used,
especially on controversial issues like hate speech moderation, 
where genuine users will inevitably provide conflicting reports on which words are abusive and ought to be removed.

Finally, we present a simple but very general \emph{strategic} gradient attack, called the \emph{counter-gradient attack},
which any participant to federated learning can deploy to bias the global model towards any target model that better suits their interest.
We prove the effectiveness of this attack under fairly general assumptions, which apply to many proposed personalized learning frameworks including~\citet{HanzelyHHR20,DinhTN20}.
We then show empirically how this attack can be turned into a devastating data poisoning attack, with remarkably few data\footnote{
{The code can be found at \url{https://github.com/LPD-EPFL/Attack_Equivalence}.}
}.
{Our experiment also shows the effectiveness of a simple protection, which prevents attackers from arbitrarily manipulating the trained algorithm. 
Namely, it suffices to replace the $\ell_2^2$ regularization with a (smooth) $\ell_2$ regularization.
Note that this solution is strongly related to the Byzantine resilience of the geometric median~\cite{geometric_median,AcharyaH0SDT22}.}

 \paragraph{Related work.} 
 
 Collaborative PAC learning was introduced by~\citet{BlumHPQ17}, and then extensively studied~\citep{Chen0Z18,NguyenZ18}, sometimes assuming Byzantine collaborating users~\citep{Qiao18,JainO20a,KonstantinovFAL20}. It was however assumed that all honest users have the same labeling function.
 In other words, all honest users agree on how every query should be answered.
 This is a very unrealistic assumption in many critical applications, like content moderation or language processing.
 In fact, in such applications, removing outliers can be argued to amount to ignoring minorities' views, which would be highly unethical.
 The very definition of PAC learning must then be adapted, which is precisely what we do in this paper (by also adapting it to parameterized models).

A large literature has focused on \emph{data poisoning}, with either a focus on \emph{backdoor}~\citep{DaiCL19,ZhaoMZ0CJ20,SeveriMCO21,TruongJHAPJNT20,schwarzschild21a} or \emph{triggerless} attacks~\citep{BiggioNL12,Munoz-GonzalezB17,ShafahiHNSSDG18,ZhuHLTSG19,HuangGFTG20,Barreno06,aghakhani2021bullseye,geiping2021witches}.
 However, most of this research analyzed data poisoning without \emph{signed} data.
 A noteworthy exception is~\citet{MahloujifarMM19}, whose universal attack amplifies the probability of a (bad) property.
 Our work bridges the gap, for the first time, between that line of work and what has been called Byzantine resilience~\citep{MhamdiGR18,baruch19,XieKG19,MhamdiGR20}. 
 Results in this area typically establish the resilience against a minority of \emph{adversarial} users and many of them apply almost straightforwardly to personalized federated learning~\citep{El-MhamdiGGHR20,collaborative_learning}. 
  
 The attack we present in this paper considers a specific kind of Byzantine player, namely a \emph{strategic} one~\cite{SuyaMS0021}, whose aim is to bias the learned models towards a specific target model.
 The resilience of learning algorithms to such \emph{strategic} users has been studied in many special cases, including regression \citep{ChenPPS18, DEKEL2010759,PEROTE2004153,ben17}, classification \citep{MEIR2012123, chen2020, Meir11, hardt16}, statistical estimation \citep{yang15}, and clustering \citep{Perote04}. 
While some papers provide positive results in settings where each user can only provide a single data point~\cite{ChenPPS18,PEROTE2004153},~\citet{SuyaMS0021} show how to arbitrarily manipulate convex learning models through multiple data injections, when a single model is learned from all data at once.

\paragraph{Structure of the paper.} The rest of the paper is organized as follows.
Section~\ref{sec:model} presents a general model of personalized learning, formalizes local PAC* learning and describes a general federated gradient descent algorithm.
Section~\ref{sec:reduction} proves the equivalence between data poisoning and gradient attacks, under local PAC* learning.
Section~\ref{sec:pac} proves the local PAC* learning properties 
for federated linear regression and classification.
Section~\ref{sec:data_poisoning_attack} describes a simple and general data poisoning attack, and shows its effectiveness against $\ell_2^2$, both theoretically and empirically.
Section~\ref{sec:conclusion} concludes.
Proofs of our theoretical results and details about our experiments are given in the Appendix.

\section{A General Personalized Learning Framework}
\label{sec:model}

We consider a set $[\NODE] = \set{1, \ldots, \NODE}$ of users.
Each user $\node \in [\NODE]$ has a local \emph{signed} dataset $\data{\node}$, and learns a local model $\paramsub{\node} \in \setR^d$.
Users may collaborate
to improve their models.
Personalized learning must then input a tuple of users' local datasets $\datafamily{} \triangleq (\data{1}, \ldots, \data{\NODE})$, and output a tuple of local models $\optimumfamily{} \triangleq (\optimumsub{1}, \ldots, \optimumsub{\NODE})$.
Like many others, we assume that the users perform federated learning to do so, by leveraging the computation of a common global model $\common \in \setR^d$.
Intuitively, the global model is an aggregate of all users' local models, which users can leverage to improve their local models.
This model typically allows users with too few data to obtain an effective local model, while it may be mostly discarded by users whose local datasets are large.

More formally, we consider a personalized learning framework which generalizes the models proposed by~\citet{DinhTN20} and \citet{HanzelyHHR20}.
Namely, we consider that the personalized learning algorithm outputs a global minimum $(\optcommon, \optimumfamily{})$ of a global loss given by
\begin{equation}
\label{eq:loss_global}
    \globalloss{} (\common, \paramfamily{}, \datafamily{})
    \triangleq \sum_{\node \in [\NODE]}
    \localloss{\node} (\paramsub{\node}, \data{\node})
    + \sum_{\node \in [\NODE]} \regularization (\common, \paramsub{\node}),
\end{equation}
where $\regularization$ is a regularization, typically with a minimum at $\paramsub{\node} = \common$.
For instance, \citet{HanzelyHHR20} and \citet{DinhTN20} define $\regularization (\common, \paramsub{\node}) \triangleq \regweightsub{} \norm{\common - \paramsub{\node}}{2}^2$, which we shall call the $\ell_2^2$ regularization.
But other regularizations may be considered, like the $\ell_2$ regularization $\regularization (\common, \paramsub{\node}) \triangleq \regweightsub{} \norm{\common - \paramsub{\node}}{2}$, or the smooth-$\ell_2$ regularization $\regularization (\common, \paramsub{\node}) \triangleq \regweightsub{} \sqrt{1 + \norm{\common - \paramsub{\node}}{2}^2}$.
Note that, for all such regularizations, 
the limit $\regweightsub{} \rightarrow \infty$
essentially yields the classical non-personalized federated learning framework.

\subsection{Local PAC* Learning}

{We consider that each honest user $\node$ has a \emph{preferred model} $\trueparamsub{\node}$, 
and that they provide \emph{honest datasets} $\data{\node}$ that are consistent with their preferred models.}
We then focus on personalized learning algorithms that provably recover a user $\node$'s \emph{preferred model} $\trueparamsub{\node}$, if the user provides a large enough \emph{honest dataset}.
Such honest datasets $\data{\node}$ could typically be obtained by repeatedly drawing random queries (or features), and by using the user's preferred model $\trueparamsub{\node}$ to provide (potentially noisy) answers (or labels).
We refer to Section~\ref{sec:pac} for examples. 
The model recovery condition is then formalized as follows.

\begin{definition}
\label{def:pac}
  A personalized learning algorithm is locally PAC* learning if,
  for any subset $\HONEST \subset [\NODE]$ of users, any preferred models $\trueparamfamily_{\HONEST}$, any $\varepsilon, \delta>0$, and any datasets $\datafamily{-\HONEST}$ from other users $\node \notin \HONEST$,
  there exists $\NODEINPUT{}$ such that,
  if all users $\honest \in \HONEST$ provide honest datasets $\data{\honest}$ with at least $\card{\data{\honest}} \geq \NODEINPUT{}$ data points, then, with probability at least $1-\delta$, 
  we have $\norm{\optimumsub{\honest}\left( \datafamily{} \right) - \trueparamsub{\honest}}{2} \leq \varepsilon$ for all users $\honest \in \HONEST$.
\end{definition}

Local PAC* learning is arguably a very desirable property.
Indeed, it guarantees that any honest active user will not be discouraged to participate in federated learning as they will eventually learn their preferred model by providing more and more data. 
Note that the required number of data points $\NODEINPUT{}$ also depends on the datasets provided by other users $\datafamily{-\HONEST}$. This implies that a locally PAC* learning algorithm is still vulnerable to poisoning attacks as the attacker's data set is not  a priori fixed.
In Section~\ref{sec:pac}, we will show how local PAC* learning can be achieved in practice, by considering specific local loss functions $\localloss{\node}$.

\subsection{Federated Gradient Descent}

While the computation of $\optcommon$ and $\optimumfamily{}$ could be done by a single machine, which first collects the datasets $\datafamily{}$ and then minimizes the global loss $\globalloss{}$ defined in~(\ref{eq:loss_global}), 
modern machine learning deployments often rather rely on \emph{federated} (stochastic) gradient descent (or variants), with a central trusted parameter server.
In this setting, each user $\node$ keeps their data $\data{\node}$ locally.
At each iteration $\iteration$, the parameter server sends the latest global model $\common^\iteration$ to the users.
Each user $\node$ is then expected to update its local model given the global model $\common^\iteration$, 
either by solving $\paramsub{\node}^\iteration \triangleq \argmin_{\paramsub{\node}} \localloss{\node} (\paramsub{\node}, \data{\node}) + \regularization(\common^\iteration, \paramsub{\node})$~\cite{DinhTN20} 
or by making a (stochastic) gradient step from the previous local model $\paramsub{\node}^{\iteration - 1}$ ~\cite{hanzely2021federated}. 
User $\node$ is then expected to report the gradient $\gradient{\node}{\iteration} = \nabla_\common \regularization(\common^\iteration, \paramsub{\node}^\iteration)$ of the global model to the parameter server.
The parameter server then updates the global model, using a gradient step, i.e. it computes $\common^{\iteration +1} \triangleq \common^\iteration - \learningrate{\iteration} \sum_{\node \in [\NODE]} \gradient{\node}{\iteration}$, where $\learningrate{\iteration}$ is the learning rate at iteration $\iteration$.
For simplicity, here, and since our goal is to show the vulnerability of personalized federated learning even in good conditions, we assume that the network is synchronous and that no node can crash.
Note also that our setting could be generalized to fully decentralized collaborative learning, as was done by~\citet{collaborative_learning}.

Users are only allowed to send plausible gradient vectors.
More precisely, we denote $$\GRADIENT(\common) \triangleq \overline{\set{\nabla_\common \regularization(\common, \param) \st \param \in \setR^d}},$$ the closure set of plausible (sub)gradients at $\common$.
If user $\node$'s gradient $\gradient{\node}{\iteration}$ is not in the set $\GRADIENT(\common^\iteration)$, the parameter server can easily detect the malicious behavior and $\gradient{\node}{\iteration}$ will be ignored at iteration $\iteration$.
In the case of an $\ell_2^2$ regularization, where $\regularization(\common, \param) = \regweightsub{}\norm{\common - \param}{2}^2$, we clearly have $\GRADIENT(\common) = \setR^d$ for all $\common \in \setR^d$.
It can be easily shown that, for $\ell_2$ and smooth-$\ell_2$ regularizations, $\GRADIENT(\common)$ is the closed ball $\ball(0,\regweightsub{})$.
Nevertheless, even then, a strategic user $\strategicnode \in [\NODE]$ can deviate from its expected behavior, to bias the global model in their favor.
We identify, in particular, three sorts of attacks.
\begin{description}
\item[Data poisoning:] Instead of collecting an honest dataset, $\strategicnode$ fabricates any \emph{strategically crafted} dataset  $\data{\strategicnode}$, and then performs all other operations as expected.
\item[Model attack:] At each iteration $\iteration$, $\strategicnode$ fixes $\paramsub{\strategicnode}^\iteration \triangleq \strategicvote$, where $\strategicvote$ is any \emph{strategically crafted} model. All other operations would then be executed as expected.
\item[Gradient attack:] At each iteration $\iteration$, $\strategicnode$ sends any (plausible) \emph{strategically crafted} gradient $\gradient{\strategicnode}{\iteration}$. {The gradient attack is said to \emph{converge}, if the sequence $\gradient{\strategicnode}{\iteration}$ converges.}
\end{description}

Gradient attacks are intuitively most harmful, as the strategic user can adapt their attack based on what they observe during training.
However, because of this, gradient attacks are more likely to be flagged as suspicious behaviors.
At the other end, data poisoning may seem much less harmful. 
But it is also harder to detect, as the strategic user can report their entire dataset, and prove that they rigorously performed the expected computations.
In fact, data poisoning can be executed, even if users directly provide the data to a (trusted) central authority, which then executes (stochastic) gradient descent.
This is typically what is done to construct recommendation algorithms, where users' data are their online activities (what they view, like and share).
Crucially, especially in applications with no clear ground truth, such as content moderation or language processing, the strategic user can always argue that their dataset is ``honest''; not strategically crafted.
Ignoring the strategic user's data on the basis that it is an ``outlier'' may then be regarded as \emph{unethical}, as it amounts to rejecting minorities' viewpoints.

\section{The Equivalence Between Data Poisoning and Gradient Attacks}
\label{sec:reduction}

We now present our main result,
considering ``model-targeted attacks'', i.e., the attacker aims to bias the global model towards a target model $\trueparamsub{\strategicnode}$. 
This attack was also previously studied by ~\citet{SuyaMS0021}.

\begin{theorem}[Equivalence between gradient attacks and data poisoning]
\label{th:equivalence}
Assume local PAC* learning, and $\ell_2^2$, $\ell_2$ or smooth-$\ell_2$ regularization.
Suppose that each loss $\localloss{\node}$ is convex 
and that the learning rate $\learningrate{\iteration}$ is constant.
Consider any datasets $\datafamily{-\strategicnode}$ provided by users $\node \neq \strategicnode$.
Then, for any target model $\trueparamsub{\strategicnode} \in \setR^d$, there exists a converging gradient attack of strategic user $\strategicnode$ such that $\common^\iteration \rightarrow \trueparamsub{\strategicnode}$, if and only if, for any $\varepsilon > 0$, there exists a dataset $\data{\strategicnode}$ such that $\norm{\optcommon(\datafamily{}) - \trueparamsub{\strategicnode}}{2} \leq \varepsilon$.
\end{theorem}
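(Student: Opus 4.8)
\emph{Approach.} The plan is to reduce both sides to a single fixed-point condition on the global model. Writing $L_{-\strategicnode}(\common) \triangleq \sum_{\node \neq \strategicnode} \min_{\paramsub{\node} \in \setR^d} \big( \localloss{\node}(\paramsub{\node}, \data{\node}) + \regularization(\common, \paramsub{\node}) \big)$, which is convex in $\common$ as the partial minimisation of a jointly convex function, and letting $\Phi \triangleq \nabla L_{-\strategicnode}$ be the aggregate honest-gradient response (so that, by the envelope theorem, $\Phi(\common) = \sum_{\node\neq\strategicnode}\nabla_\common\regularization(\common, \paramsub{\node}(\common))$ with $\paramsub{\node}(\common)$ the local optimum of user $\node$ given $\common$), the server update under \emph{any} gradient attack reads $\common^{\iteration+1} = \common^\iteration - \learningrate{}\,[\Phi(\common^\iteration) + \gradient{\strategicnode}{\iteration}]$, whereas under data poisoning with a dataset $\data{\strategicnode}$ it is plain gradient descent on the convex function $\common \mapsto L_{-\strategicnode}(\common) + \min_{\paramsub{\strategicnode}}(\localloss{\strategicnode}(\paramsub{\strategicnode},\data{\strategicnode}) + \regularization(\common, \paramsub{\strategicnode}))$, hence (constant, small enough learning rate) converges to the global minimum $\optcommon(\datafamily{})$. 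In both cases the limit point $\common^\infty$ must satisfy the stationarity relation $\Phi(\common^\infty) + g^\infty = 0$ with $g^\infty$ the strategic user's limiting gradient; the whole argument is about which pairs $(\common^\infty,g^\infty)$ are attainable, and this turns out to be dictated precisely by the plausibility set $\GRADIENT$ ($=\setR^d$ for $\ell_2^2$, $=\ball(0,\regweightsub{})$ for $\ell_2$ and smooth-$\ell_2$).

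\emph{Data poisoning $\Rightarrow$ gradient attack.} Take $\varepsilon_k\downarrow 0$ with poisoned optima $\optcommon_k \to \trueparamsub{\strategicnode}$. At each such global optimum, the stationarity condition in $\common$ gives $\Phi(\optcommon_k) + g_k = 0$ where $g_k \triangleq \nabla_\common\regularization(\optcommon_k, \optimumsub{\strategicnode})$ is the strategic user's honest (hence plausible) gradient, so $g_k = -\Phi(\optcommon_k) \in \GRADIENT(\optcommon_k)$. Letting $k\to\infty$, continuity of $\Phi$ and closedness of $\GRADIENT(\cdot)$ (which, for $\ell_2$/smooth-$\ell_2$, is the fixed closed ball) yield that $g^* \triangleq -\Phi(\trueparamsub{\strategicnode})$ is itself a plausible gradient. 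The strategic user then mounts the constant gradient attack $\gradient{\strategicnode}{\iteration} \triangleq g^*$: it is plausible and trivially convergent, and the server dynamics becomes gradient descent on the convex $\Psi(\common) \triangleq L_{-\strategicnode}(\common) + \langle g^*, \common\rangle$, which is stationary at $\trueparamsub{\strategicnode}$ since $\nabla\Psi(\trueparamsub{\strategicnode}) = \Phi(\trueparamsub{\strategicnode}) + g^* = 0$; hence $\common^\iteration$ converges to $\trueparamsub{\strategicnode}$ (with the limit pinned down to $\trueparamsub{\strategicnode}$ via strict convexity of the losses, or by taking limits in the $\optcommon_k$, if $\Psi$ had several minimisers). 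For $\ell_2^2$ this is even more direct: since $\GRADIENT=\setR^d$, sending $\gradient{\strategicnode}{\iteration} = -\Phi(\common^\iteration) + \learningrate{}^{-1}(\common^\iteration - \trueparamsub{\strategicnode})$ forces $\common^{\iteration+1} = \trueparamsub{\strategicnode}$, after which the gradient stays constant.

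\emph{Gradient attack $\Rightarrow$ data poisoning.} Conversely, let a converging gradient attack drive $\common^\iteration \to \trueparamsub{\strategicnode}$ and $\gradient{\strategicnode}{\iteration}\to g^*$. Passing to the limit in the server update (valid since $\common^{\iteration+1}-\common^\iteration\to 0$ and $\Phi$ is continuous) gives $\Phi(\trueparamsub{\strategicnode}) + g^* = 0$, and $g^* \in \GRADIENT(\trueparamsub{\strategicnode})$ by closedness. Hence there is a model $\param^\sharp$ — exactly for $\ell_2^2$; for $\ell_2$/smooth-$\ell_2$ either exactly, or via a sequence $\param^\sharp_k$ in the boundary case $\norm{g^*}{2}=\regweightsub{}$, or with $\param^\sharp=\trueparamsub{\strategicnode}$ using the subdifferential — such that $\trueparamsub{\strategicnode}$ minimises $\common\mapsto L_{-\strategicnode}(\common) + \regularization(\common,\param^\sharp)$. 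Now have the strategic user submit a large honest dataset consistent with preferred model $\param^\sharp$. Local PAC* learning (Definition~\ref{def:pac}, with $\HONEST=\{\strategicnode\}$, preferred model $\param^\sharp$, and the given $\datafamily{-\strategicnode}$) guarantees that for a large enough such dataset $\norm{\optimumsub{\strategicnode}(\datafamily{}) - \param^\sharp}{2}$ is as small as desired on an event of probability $\geq 1-\delta>0$ — so such a dataset \emph{exists}. Since $\optcommon(\datafamily{})$ minimises $\common\mapsto L_{-\strategicnode}(\common) + \regularization(\common, \optimumsub{\strategicnode}(\datafamily{}))$ (again by the $\common$-stationarity of the joint minimum), and the minimiser of $\common\mapsto L_{-\strategicnode}(\common) + \regularization(\common,\param)$ depends continuously on $\param$, proximity of $\optimumsub{\strategicnode}(\datafamily{})$ to $\param^\sharp$ transfers to proximity of $\optcommon(\datafamily{})$ to $\trueparamsub{\strategicnode}$; tuning the dataset size yields $\norm{\optcommon(\datafamily{}) - \trueparamsub{\strategicnode}}{2} \le \varepsilon$.

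\emph{Main obstacle.} The delicate point in both directions is the stability step, combined with the plausibility set. For $\ell_2^2$ it is clean: $\regularization(\cdot,\param)$ is $2\regweightsub{}$-strongly convex uniformly in $\param$, so $L_{-\strategicnode}+\regularization(\cdot,\param)$ has a unique, Lipschitz-in-$\param$ minimiser, and $\GRADIENT=\setR^d$ kills all reachability constraints. For $\ell_2$ and smooth-$\ell_2$ one must instead exploit the precise geometry of $\GRADIENT(\common)=\ball(0,\regweightsub{})$ — treating the boundary case $\norm{g^*}{2}=\regweightsub{}$ by a limiting argument over preferred models, and checking that the (possibly not strongly convex) effective objective still has a minimiser varying continuously with $\param^\sharp$ — and one needs enough regularity of the honest losses to make $\Phi$ continuous and the honest local optima $\paramsub{\node}(\cdot)$ well defined. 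These are exactly the places where the convexity hypothesis has to be used with care.
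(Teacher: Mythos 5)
Your proposal is correct and, at its core, follows the same architecture as the paper's proof: both directions are mediated by an implicit ``model attack'' and by the stationarity relation $\nabla L_{-\strategicnode}(\common^\infty)+g^\infty=0$, with closedness of $\GRADIENT$ used to convert a limiting plausible gradient into a reported model $\param^\sharp$ (this is exactly the paper's Lemma~\ref{th:gradient_attack_convergence}), local PAC* plus a stability argument used to convert that model into a poisoned dataset (the paper's Lemma~\ref{th:reduction_data_to_model}, proved there via Heine--Cantor uniform continuity of the loss and uniqueness of the modified-loss minimum, where you instead invoke continuity of the argmin in $\param$ --- equivalent under the same local strong convexity the paper establishes in its appendix), and your observation that $\optcommon(\datafamily{})$ minimises $\common\mapsto L_{-\strategicnode}(\common)+\regularization(\common,\optimumsub{\strategicnode}(\datafamily{}))$ being precisely Lemma~\ref{th:reduction_model_to_data}. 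Where you genuinely diverge, to your benefit, is the data-poisoning$\Rightarrow$gradient-attack direction: the paper dispatches it with the remark that a model attack ``can clearly be achieved by the corresponding honest gradient attack,'' which for any single poisoned dataset only yields convergence to the $\varepsilon$-close optimum $\optcommon(\datafamily{})$ rather than to $\trueparamsub{\strategicnode}$ itself; your construction --- read off the plausible gradients $g_k=-\nabla L_{-\strategicnode}(\optcommon_k)$ from the stationarity conditions of a sequence of poisoned optima, pass to the limit inside the fixed closed set $\GRADIENT$, and play the constant plausible gradient $g^*=-\nabla L_{-\strategicnode}(\trueparamsub{\strategicnode})$, so that the server performs gradient descent on the convex $L_{-\strategicnode}(\common)+\langle g^*,\common\rangle$ whose stationary point is exactly the target --- delivers exact convergence and makes that step rigorous. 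The caveats you flag (uniqueness of the minimiser of this auxiliary objective, the boundary case $\norm{g^*}{2}=\regweightsub{}$ for $\ell_2$ and smooth-$\ell_2$, and the regularity needed for the honest response $\Phi$ to be continuous) are the same ones the paper handles through its local-strong-convexity and smoothness lemmas, and, like the paper's own appendix, your argument is fully rigorous only for $\ell_2^2$ and smooth-$\ell_2$, the plain $\ell_2$ case requiring additional subgradient care.
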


For the sake of exposition, our results are stated for $\ell_2^2$ or smooth-$\ell_2$ regularization only.
But the proof, in Appendix~\ref{app:reduction}, holds for all continuous regularizations $\regularization$ with $\regularization(\common, \param) \rightarrow \infty$ as $\norm{\common - \param}{2} \rightarrow \infty$.
We now sketch our proof, which goes through
model attacks.

\subsection{Data Poisoning and Model Attacks}
To study the model attack, we define the modified loss with directly strategic user $\strategicnode$'s reported model $\strategicvote$ as 
\begin{equation}
  \ModifiedLoss{\strategicnode} (\common, \paramfamily_{-\strategicnode}, \strategicvote, \datafamily{-\strategicnode})
  \triangleq
  \globalloss{} (\common, (\strategicvote, \paramfamily_{-\strategicnode}), (\emptyset, \datafamily{-\strategicnode}))
\end{equation}
where $\paramfamily_{-\strategicnode}$ and $\datafamily{-\strategicnode}$ are variables and datasets for users $\node \neq \strategicnode$.
Denote $\optcommon{} (\strategicvote, \datafamily{-\strategicnode})$ and $\optimumfamily_{-\strategicnode}(\strategicvote, \datafamily{-\strategicnode})$ a minimum of the modified loss function
and $\optimumsub{\strategicnode}(\strategicvote, \datafamily{-\strategicnode}) \triangleq \strategicvote$.

\begin{lemma}[Reduction from model attack to data poisoning]
\label{th:reduction_model_to_data}
  Consider any data $\datafamily{}$ and user $\strategicnode \in [\NODE]$.
  Assume the global loss has a global minimum $(\optcommon, \optimumfamily)$.
  Then $(\optcommon, \optimumfamily_{-\strategicnode})$ is also a global minimum of the modified loss with datasets $\datafamily{-\strategicnode}$ and strategic reporting $\strategicvote \triangleq \optimumsub{\strategicnode}(\datafamily{})$.
\end{lemma}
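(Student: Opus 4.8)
The plan is to exploit the additively separable form of $\globalloss{}$ in~(\ref{eq:loss_global}): once user $\strategicnode$'s reported model is frozen at $\strategicvote \triangleq \optimumsub{\strategicnode}(\datafamily{})$, the modified loss should coincide, up to an additive constant that is irrelevant for optimization, with the original global loss restricted to the slice $\set{\paramsub{\strategicnode} = \strategicvote}$. Since the assumed global minimum $(\optcommon, \optimumfamily)$ already lies in that slice, it will automatically remain a minimizer there.

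First I would unfold both sides using~(\ref{eq:loss_global}). For every $\common \in \setR^d$ and every tuple $\paramfamily_{-\strategicnode}$, substituting the local models $(\strategicvote, \paramfamily_{-\strategicnode})$ and the datasets $(\emptyset, \datafamily{-\strategicnode})$ into $\globalloss{}$ gives
$$\ModifiedLoss{\strategicnode}(\common, \paramfamily_{-\strategicnode}, \strategicvote, \datafamily{-\strategicnode}) = \localloss{\strategicnode}(\strategicvote, \emptyset) + \regularization(\common, \strategicvote) + \sum_{\node \neq \strategicnode} \Big( \localloss{\node}(\paramsub{\node}, \data{\node}) + \regularization(\common, \paramsub{\node}) \Big),$$
whereas substituting the same local models but the \emph{true} datasets $\datafamily{}$ yields the identical expression except that $\localloss{\strategicnode}(\strategicvote, \emptyset)$ is replaced by $\localloss{\strategicnode}(\strategicvote, \data{\strategicnode})$. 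Because $\strategicvote$ is a fixed vector, the difference $c \triangleq \localloss{\strategicnode}(\strategicvote, \emptyset) - \localloss{\strategicnode}(\strategicvote, \data{\strategicnode})$ does not depend on $(\common, \paramfamily_{-\strategicnode})$, so
$$\ModifiedLoss{\strategicnode}(\common, \paramfamily_{-\strategicnode}, \strategicvote, \datafamily{-\strategicnode}) = \globalloss{}(\common, (\strategicvote, \paramfamily_{-\strategicnode}), \datafamily{}) + c \quad \text{for all } (\common, \paramfamily_{-\strategicnode}).$$

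Then I would invoke the hypothesis that $(\optcommon, \optimumfamily)$ globally minimizes $\globalloss{}(\cdot, \cdot, \datafamily{})$. Since $\strategicvote = \optimumsub{\strategicnode}(\datafamily{})$, we have $\optimumfamily = (\strategicvote, \optimumfamily_{-\strategicnode})$, so global optimality gives $\globalloss{}(\optcommon, (\strategicvote, \optimumfamily_{-\strategicnode}), \datafamily{}) \leq \globalloss{}(\common, (\strategicvote, \paramfamily_{-\strategicnode}), \datafamily{})$ for all $(\common, \paramfamily_{-\strategicnode})$. Adding $c$ to both sides and applying the identity above turns this into $\ModifiedLoss{\strategicnode}(\optcommon, \optimumfamily_{-\strategicnode}, \strategicvote, \datafamily{-\strategicnode}) \leq \ModifiedLoss{\strategicnode}(\common, \paramfamily_{-\strategicnode}, \strategicvote, \datafamily{-\strategicnode})$ for all $(\common, \paramfamily_{-\strategicnode})$, which is precisely the claim that $(\optcommon, \optimumfamily_{-\strategicnode})$ is a global minimum of the modified loss with strategic reporting $\strategicvote$.

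I do not anticipate a genuine obstacle: the lemma is essentially bookkeeping on the separable structure of $\globalloss{}$, and it does not even use convexity of the $\localloss{\node}$ (that assumption only matters downstream, for Theorem~\ref{th:equivalence}). The single point deserving explicit care is the meaning of $\localloss{\strategicnode}(\strategicvote, \emptyset)$: one should fix a convention (e.g. $\localloss{\node}(\cdot, \emptyset) \equiv 0$) so that this term is a well-defined constant, which is what makes the passage to an additive shift legitimate and guarantees that the two losses share the same set of minimizers.
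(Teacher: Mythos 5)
Your proposal is correct and follows essentially the same route as the paper's proof: both observe that, once $\paramsub{\strategicnode}$ is frozen at $\strategicvote = \optimumsub{\strategicnode}(\datafamily{})$, the modified loss differs from the global loss restricted to that slice only by a constant (the strategic user's local-loss term), so global optimality of $(\optcommon, \optimumfamily)$ transfers directly. Your explicit remark about fixing the convention $\localloss{\node}(\cdot, \emptyset) \equiv 0$ is a minor point of care that the paper handles implicitly, not a different argument.
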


Now, intuitively, 
by virtue of local PAC* learning, strategic user $\strategicnode$ can essentially guarantee that the personalized learning framework will be learning $\optimumsub{\strategicnode} \approx \strategicvote$.
In the sequel, we show that this is the case.

\begin{lemma}[Reduction from data poisoning to model attack]
\label{th:reduction_data_to_model}
  Assume $\ell_2^2$, $\ell_2$ or smooth-$\ell_2$ regularization,
  and assume local PAC* learning.
  Consider any datasets $\data{-\strategicnode}$ and any attack model $\strategicvote$ such that the modified loss $\ModifiedLoss{\strategicnode}$ has a unique minimum $\optcommon(\strategicvote, \datafamily{-\strategicnode}), \optimumfamily_{-\strategicnode}(\strategicvote, \datafamily{-\strategicnode})$.
  Then, for any $\varepsilon > 0$,
  there exists a dataset $\data{\strategicnode}$ such that
  we have
  \begin{align}
    &\norm{\optcommon{}(\datafamily{}) - \optcommon{}(\strategicvote, \datafamily{-\strategicnode})}{2} \leq \varepsilon
    ~~\text{and}~~ \nonumber \\
    \forall \node \neq \strategicnode \mathsep &\norm{\optimumsub{\node}(\datafamily{}) - \optimumsub{\node}(\strategicvote, \datafamily{-\strategicnode})}{2} \leq \varepsilon.
  \end{align}
\end{lemma}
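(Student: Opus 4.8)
The plan is to use local PAC* learning directly: the strategic user $\strategicnode$ will feed in an honest-style dataset $\data{\strategicnode}$ built from the target model $\strategicvote$, forcing the personalized learning algorithm to recover $\optimumsub{\strategicnode}(\datafamily{}) \approx \strategicvote$; then a continuity/stability argument shows that the remaining outputs $\optcommon(\datafamily{})$ and $\optimumsub{\node}(\datafamily{})$, $\node \neq \strategicnode$, are forced to be close to the minimizer of the modified loss $\ModifiedLoss{\strategicnode}$, which by definition is the ``model attack'' outcome with the same $\strategicvote$. First I would apply Definition \ref{def:pac} with $\HONEST = \set{\strategicnode}$, preferred model $\trueparamsub{\strategicnode} := \strategicvote$, and the fixed datasets $\datafamily{-\strategicnode}$ in the role of the ``other users' datasets'': this yields, for any $\varepsilon', \delta' > 0$, an honest dataset $\data{\strategicnode}$ (of sufficiently many points) such that, with probability at least $1 - \delta'$, $\norm{\optimumsub{\strategicnode}(\datafamily{}) - \strategicvote}{2} \leq \varepsilon'$. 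Since we only need existence of one good dataset and the event has positive probability (taking $\delta' < 1$), we may fix one realization of $\data{\strategicnode}$ achieving $\norm{\optimumsub{\strategicnode}(\datafamily{}) - \strategicvote}{2} \leq \varepsilon'$.

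Next I would connect the full global minimizer $(\optcommon(\datafamily{}), \optimumfamily(\datafamily{}))$ to the modified-loss minimizer. Write $\strategicvote' := \optimumsub{\strategicnode}(\datafamily{})$, the actual local model the algorithm assigns to user $\strategicnode$ on input $\datafamily{}$. By Lemma \ref{th:reduction_model_to_data} applied with strategic reporting $\strategicvote'$, the pair $(\optcommon(\datafamily{}), \optimumfamily_{-\strategicnode}(\datafamily{}))$ is a global minimum of the modified loss $\ModifiedLoss{\strategicnode}(\cdot, \cdot, \strategicvote', \datafamily{-\strategicnode})$. So now I have two optimization problems that differ only in the clamped value of user $\strategicnode$'s model — $\strategicvote'$ versus $\strategicvote$ — and I know $\norm{\strategicvote' - \strategicvote}{2} \leq \varepsilon'$. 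The goal reduces to: the (unique) minimizer of $\ModifiedLoss{\strategicnode}(\cdot,\cdot,\psi,\datafamily{-\strategicnode})$ depends continuously on the parameter $\psi$ at $\psi = \strategicvote$. Given that, choosing $\varepsilon'$ small enough (by the modulus of continuity at $\strategicvote$) makes both $\norm{\optcommon(\datafamily{}) - \optcommon(\strategicvote,\datafamily{-\strategicnode})}{2}$ and $\norm{\optimumsub{\node}(\datafamily{}) - \optimumsub{\node}(\strategicvote,\datafamily{-\strategicnode})}{2}$ at most $\varepsilon$, as required.

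To establish that continuity, I would argue as follows. The map $\psi \mapsto \ModifiedLoss{\strategicnode}(\common, \paramfamily_{-\strategicnode}, \psi, \datafamily{-\strategicnode})$ enters only through the regularization term $\regularization(\common, \psi)$, which is jointly continuous (for $\ell_2^2$, $\ell_2$, or smooth-$\ell_2$, indeed for any continuous $\regularization$), so $\ModifiedLoss{\strategicnode}$ is jointly continuous in all its arguments, and for each fixed $\psi$ it is convex in $(\common, \paramfamily_{-\strategicnode})$ (sum of the convex losses $\localloss{\node}$ and the convex regularizers). Moreover the coercivity hypothesis — $\regularization(\common,\param) \to \infty$ as $\norm{\common - \param}{2} \to \infty$, which the excerpt notes is the true assumption behind the stated ones — guarantees that the sublevel sets near the minimizer are bounded uniformly for $\psi$ in a neighborhood of $\strategicvote$. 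A standard argument (Berge's maximum theorem, or a direct compactness argument: take a sequence $\psi_k \to \strategicvote$, note the minimizers stay in a fixed compact set, extract a convergent subsequence, pass to the limit using joint continuity and lower semicontinuity to see the limit minimizes $\ModifiedLoss{\strategicnode}(\cdot,\cdot,\strategicvote,\cdot)$, then invoke uniqueness of that minimizer) shows $\arg\min \ModifiedLoss{\strategicnode}(\cdot,\cdot,\psi,\datafamily{-\strategicnode}) \to (\optcommon(\strategicvote,\datafamily{-\strategicnode}), \optimumfamily_{-\strategicnode}(\strategicvote,\datafamily{-\strategicnode}))$ as $\psi \to \strategicvote$. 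The main obstacle is precisely this stability step: the lemma only assumes $\ModifiedLoss{\strategicnode}$ has a \emph{unique} minimum at $\psi = \strategicvote$, not strong convexity or any quantitative growth, so one cannot get an explicit modulus and must rely on the qualitative compactness-plus-uniqueness argument, which in turn is why the coercivity of $\regularization$ is essential — without it the minimizer set could escape to infinity as $\psi$ varies. I would also take a moment to confirm that $\data{\strategicnode}$ from the PAC* step is a genuine \emph{data poisoning} dataset (it is: it is built exactly like an honest dataset, by sampling queries and labeling with $\strategicvote$, so it is a legitimate — indeed indistinguishable-from-honest — poisoning attack), which is the whole point of the reduction.
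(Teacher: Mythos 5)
Your proposal is correct and follows essentially the same route as the paper's proof: apply local PAC* with $\HONEST = \set{\strategicnode}$ and preferred model $\strategicvote$ to force $\optimumsub{\strategicnode}(\datafamily{}) \approx \strategicvote$, note (as in Lemma~\ref{th:reduction_model_to_data}) that the learned tuple minimizes the modified loss clamped at the learned local model, and conclude by stability of the unique minimizer of $\ModifiedLoss{\strategicnode}$ under small perturbations of the clamped model, using uniqueness, continuity of $\regularization$, and $\regularization \rightarrow \infty$ at infinity. The only difference is in how that last stability step is phrased — the paper argues quantitatively via a loss gap $\eta > 0$ on the complement of the $\varepsilon$-neighborhood together with Heine--Cantor uniform continuity, whereas you invoke a qualitative Berge-style compactness-plus-uniqueness argument — but both rest on exactly the same hypotheses.
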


\begin{proof}[Sketch of proof]
  Given local PAC*, for a large dataset $\data{\strategicnode}$ constructed from $\strategicvote$, $\strategicnode$ can guarantee $\optimumsub{\strategicnode} (\datafamily{}) \approx \strategicvote$.
  By carefully bounding the effect of the approximation on the loss using the Heine-Cantor theorem, we show that this implies $\optcommon{} (\datafamily{}) \approx \optcommon{} (\strategicvote, \datafamily{-\strategicnode})$ and
  $\optimumsub{\node} (\datafamily{}) \approx \optimumsub{\node} (\strategicvote, \datafamily{-\strategicnode})$ for all $\node \neq \strategicnode$ too.
  The precise analysis is nontrivial.
\end{proof}

\subsection{Model Attacks and Gradient Attacks}

We now prove that any successful converging model-targeted gradient attack can be transformed into an equivalently successful model attack.

\begin{lemma}[Reduction from model attack to gradient attack]
\label{th:gradient_attack_convergence}
  Assume that $\localloss{\node}$ is convex 
  for all users $\node \in [\NODE]$, and that we use $\ell_2^2$, $\ell_2$ or smooth-$\ell_2$ regularization. Consider a converging gradient attack $\gradient{\strategicnode}{\iteration}$ with limit $\gradient{\strategicnode}{\infty}$ that makes the global model $\common^t$ converge to $\common^\infty$ with a constant learning rate $\learningrate{}$.
  Then for any $\varepsilon > 0$, there is $\strategicvote \in \setR^d$ such that $\norm{\common^\infty - \optcommon{}(\strategicvote, \datafamily{-\strategicnode})}{2} \leq \varepsilon$.
\end{lemma}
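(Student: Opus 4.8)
The plan is to push the gradient attack to its limit, read off the first-order conditions that the honest users' local updates satisfy there, and then choose the model report $\strategicvote$ so that the observed limit becomes an (approximate) stationary point — hence, by convexity, an (approximate) minimizer — of the modified loss $\ModifiedLoss{\strategicnode}(\cdot,\cdot,\strategicvote,\datafamily{-\strategicnode})$.

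First I would exploit the dynamics. The server update $\common^{\iteration+1}=\common^\iteration-\learningrate{}\sum_{\node\in[\NODE]}\gradient{\node}{\iteration}$ with constant $\learningrate{}>0$ and $\common^\iteration\to\common^\infty$ force $\sum_{\node\in[\NODE]}\gradient{\node}{\iteration}=(\common^\iteration-\common^{\iteration+1})/\learningrate{}\to 0$, and since $\gradient{\strategicnode}{\iteration}\to\gradient{\strategicnode}{\infty}$ this gives $\sum_{\node\neq\strategicnode}\gradient{\node}{\iteration}\to-\gradient{\strategicnode}{\infty}$. Next I would use that each honest $\node\neq\strategicnode$ reports $\gradient{\node}{\iteration}=\nabla_\common\regularization(\common^\iteration,\paramsub{\node}^\iteration)$ with $\paramsub{\node}^\iteration\in\argmin_{\paramsub{\node}}[\localloss{\node}(\paramsub{\node},\data{\node})+\regularization(\common^\iteration,\paramsub{\node})]$. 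For $\ell_2^2$ and smooth-$\ell_2$, $\regularization(\common,\cdot)$ is strictly convex and coercive, so this best response is unique and continuous in $\common$ (a routine compactness argument, using $\regularization(\common,\param)\to\infty$ as $\norm{\common-\param}{2}\to\infty$), whence $\paramsub{\node}^\iteration\to\paramsub{\node}^\infty\triangleq\argmin_{\paramsub{\node}}[\localloss{\node}(\paramsub{\node},\data{\node})+\regularization(\common^\infty,\paramsub{\node})]$ and $\gradient{\node}{\iteration}\to\nabla_\common\regularization(\common^\infty,\paramsub{\node}^\infty)$; for $\ell_2$ I would instead pass to a subsequence, using that $\norm{\gradient{\node}{\iteration}}{2}\le\regweightsub{}$ and that the subdifferential of a convex function has closed graph. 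Taking limits then yields $\sum_{\node\neq\strategicnode}\nabla_\common\regularization(\common^\infty,\paramsub{\node}^\infty)=-\gradient{\strategicnode}{\infty}$ together with $0\in\partial_{\paramsub{\node}}\localloss{\node}(\paramsub{\node}^\infty,\data{\node})+\partial_{\paramsub{\node}}\regularization(\common^\infty,\paramsub{\node}^\infty)$ for every $\node\neq\strategicnode$.

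Then I would choose $\strategicvote$ so that $(\common^\infty,(\paramsub{\node}^\infty)_{\node\neq\strategicnode})$ satisfies the first-order conditions of $\ModifiedLoss{\strategicnode}(\common,\paramfamily_{-\strategicnode},\strategicvote,\datafamily{-\strategicnode})=\sum_{\node\neq\strategicnode}\localloss{\node}(\paramsub{\node},\data{\node})+\regularization(\common,\strategicvote)+\sum_{\node\neq\strategicnode}\regularization(\common,\paramsub{\node})$. The $\paramsub{\node}$-equations already hold by the previous step, so only the $\common$-equation $0\in\partial_\common\regularization(\common^\infty,\strategicvote)+\sum_{\node\neq\strategicnode}\partial_\common\regularization(\common^\infty,\paramsub{\node}^\infty)$ remains, and, using $\sum_{\node\neq\strategicnode}\nabla_\common\regularization(\common^\infty,\paramsub{\node}^\infty)=-\gradient{\strategicnode}{\infty}$, it suffices to pick $\strategicvote$ with $\gradient{\strategicnode}{\infty}\in\partial_\common\regularization(\common^\infty,\strategicvote)$. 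For $\ell_2^2$ this holds exactly with $\strategicvote\triangleq\common^\infty-\gradient{\strategicnode}{\infty}/(2\regweightsub{})$. For $\ell_2$, plausibility of the attack forces $\gradient{\strategicnode}{\infty}\in\ball(0,\regweightsub{})$, and I take $\strategicvote\triangleq\common^\infty$ when $\norm{\gradient{\strategicnode}{\infty}}{2}<\regweightsub{}$ (since $\partial_\common\regularization(\common^\infty,\common^\infty)=\ball(0,\regweightsub{})$) and $\strategicvote\triangleq\common^\infty-\gradient{\strategicnode}{\infty}$ when $\norm{\gradient{\strategicnode}{\infty}}{2}=\regweightsub{}$. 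For smooth-$\ell_2$ the attainable values of $\nabla_\common\regularization(\common^\infty,\cdot)$ fill the \emph{open} ball $\ball(0,\regweightsub{})$, so an exact $\strategicvote$ exists whenever $\norm{\gradient{\strategicnode}{\infty}}{2}<\regweightsub{}$; in the boundary case $\norm{\gradient{\strategicnode}{\infty}}{2}=\regweightsub{}$, I take $\strategicvote\triangleq\common^\infty-k\gradient{\strategicnode}{\infty}$ with $k$ large so that $\nabla_\common\regularization(\common^\infty,\strategicvote)\to\gradient{\strategicnode}{\infty}$, making $(\common^\infty,(\paramsub{\node}^\infty)_{\node\neq\strategicnode})$ only an approximate stationary point. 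Since $\ModifiedLoss{\strategicnode}$ is convex in $(\common,\paramfamily_{-\strategicnode})$ and coercive (again using $\regularization(\common,\param)\to\infty$), stationary points are global minima, so in the exact cases $\optcommon{}(\strategicvote,\datafamily{-\strategicnode})=\common^\infty$; in the boundary smooth-$\ell_2$ case a stability and continuity-of-minimizers estimate (for which the uniqueness of the minimizer of $\ModifiedLoss{\strategicnode}$ assumed in the companion lemma is convenient) gives $\norm{\common^\infty-\optcommon{}(\strategicvote,\datafamily{-\strategicnode})}{2}\le\varepsilon$ for $k$ large.

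The hard part will be the limit interchange of the second step — showing the honest updates $\paramsub{\node}^\iteration$ and reported gradients $\gradient{\node}{\iteration}$ actually converge to the limiting first-order conditions (continuity of the best-response map for $\ell_2^2$ and smooth-$\ell_2$, and closedness of the subdifferential graph for $\ell_2$) — and, relatedly, quantifying the stability estimate that closes the smooth-$\ell_2$ boundary subcase, where no model report exactly realizes $\gradient{\strategicnode}{\infty}$ as a regularization (sub)gradient at $\common^\infty$.
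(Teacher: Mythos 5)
You are correct, and your route is essentially the paper's: read off the limiting stationarity condition $\sum_{\node\neq\strategicnode}\nabla_\common\regularization(\common^\infty,\optimumsub{\node}(\common^\infty))=-\gradient{\strategicnode}{\infty}$ from the constant-step dynamics (your explicit best-response-continuity argument is what the paper compresses into ``by the definition of $\common^\infty$'', with Lemma~\ref{lemma:lipschitz_local_optima} supplying the continuity), then choose $\strategicvote$ so that $\nabla_\common\regularization(\common^\infty,\strategicvote)$ (approximately) equals $\gradient{\strategicnode}{\infty}$ --- possible precisely because $\gradient{\strategicnode}{\infty}\in\GRADIENT(\common^\infty)$, a closure --- and conclude that $\common^\infty$ (nearly) minimizes the modified loss. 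The differences lie in how the quantitative step is closed. The paper partially minimizes over the honest local models and works with the reduced functions $\minimizedLoss{\strategicnode}(\common)$ and $\minimizedLosstwo{\strategicnode}(\common,\paramsub{\strategicnode})$; since these are locally strongly convex (Lemma~\ref{lemma:minimizedLossSmooth}, which uses the local term $\reglocalweight{}\norm{\paramsub{\node}}{2}^2$ of Assumption~\ref{ass:assumption1}), an approximate stationary point is automatically within $\varepsilon$ of the minimizer: if $\norm{\common^\infty-\optcommon(\strategicvote,\datafamily{-\strategicnode})}{2}>\varepsilon$ then $\norm{\nabla_\common\minimizedLosstwo{\strategicnode}(\common^\infty,\strategicvote)}{2}\geq\mu_1\varepsilon$, while $\strategicvote$ is chosen so this quantity is at most $\mu_1\varepsilon/2$; this single inequality treats the interior and boundary cases uniformly. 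Your version keeps the joint first-order conditions, which buys exact equality $\optcommon(\strategicvote,\datafamily{-\strategicnode})=\common^\infty$ whenever $\gradient{\strategicnode}{\infty}$ is exactly attainable, and a subdifferential treatment of plain $\ell_2$ that the appendix proof (built on Assumption~\ref{ass:assumption1}) does not really cover. The spot you must firm up is exactly the one you flag: in the boundary smooth-$\ell_2$ case, convexity plus uniqueness/continuity of minimizers does not by itself yield a quantitative bound (and the uniqueness hypothesis belongs to Lemma~\ref{th:reduction_data_to_model}, not to this lemma); the missing estimate is the local strong convexity inequality above, which does hold in this setting and is how the paper closes the argument. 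A final minor point: in the $\ell_2$ case, blockwise stationarity of a nonsmooth convex sum implies joint optimality only after checking the joint subdifferential of $(\common,\param)\mapsto\norm{\common-\param}{2}$ at points where $\common^\infty$ coincides with some $\paramsub{\node}^\infty$ or with $\strategicvote$, since the partial subdifferentials there are strictly larger than the projections of the joint one.
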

\begin{proof}[Sketch of proof]
The proof is based on the observation that since $\GRADIENT$ is closed and $\gradient{\strategicnode}{\infty} \in \GRADIENT$, we can construct $\strategicvote$ which approximately yields the gradient $\gradient{\strategicnode}{\infty}$.
\end{proof}

Since any model attack can clearly be achieved by the corresponding honest gradient attack {for a sufficiently small and constant learning rate}, model attacks and gradient attacks are thus equivalent.
In light of our previous results, this implies that gradient attacks are essentially equivalent to data poisoning (Theorem~\ref{th:equivalence}).

\subsection{Convergence of the Global Model}

Note that Theorem \ref{th:equivalence} (and Lemma \ref{th:gradient_attack_convergence}) assumes that the global model converges.
Here, we prove that this assumption is automatically satisfied for converging gradients,
at least when local models $\paramsub{\node}^\iteration$ are fully optimized given $\common^\iteration$, at each iteration $\iteration$, in the manner of~\cite{DinhTN20}, and under smoothness assumptions.

\begin{proposition}
\label{prop:convergence}
 Assume that $\localloss{\node}$ is convex and $L$-smooth for all users $\node \in [\NODE]$, and that we use $\ell_2^2$ or smooth-$\ell_2$ regularization. If $\gradient{\strategicnode}{\iteration}$ converges and if $\learningrate{\iteration} = \learningrate{}$ is a constant small enough, then $\common^\iteration$ will converge too.
\end{proposition}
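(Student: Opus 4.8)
The plan is to treat the parameter server's update as an inexact gradient descent on a well-behaved convex function, perturbed by a vanishing error coming from the strategic user's converging-but-not-yet-converged gradient reports. First I would fix the hypotheses on the honest side: since every $\localloss{\node}$ is convex and $L$-smooth and the regularization $\regularization$ is convex (and, for $\ell_2^2$ or smooth-$\ell_2$, globally smooth), the partially minimized objective $\varphi(\common) \triangleq \sum_{\node \neq \strategicnode} \min_{\paramsub{\node}} \left( \localloss{\node}(\paramsub{\node}, \data{\node}) + \regularization(\common, \paramsub{\node}) \right)$ is convex in $\common$; moreover, using the explicit form of the $\ell_2^2$ and smooth-$\ell_2$ regularizers, $\nabla \varphi$ is Lipschitz with some constant $L'$ depending on $L$, $\regweightsub{}$ and $\NODE$ (for $\ell_2^2$ this is the classical Moreau-envelope smoothness computation; for smooth-$\ell_2$ one uses that $\regularization$ is smooth and strongly convex-ish away from the diagonal, and bounded curvature otherwise). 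When local models are fully optimized at each step à la~\cite{DinhTN20}, the honest users' aggregate gradient reported to the server is exactly $\nabla \varphi(\common^\iteration)$, so the server update reads
\begin{equation*}
  \common^{\iteration+1} = \common^\iteration - \learningrate{} \left( \nabla \varphi(\common^\iteration) + \gradient{\strategicnode}{\iteration} \right).
\end{equation*}

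Next I would introduce the "limit objective" $\Phi(\common) \triangleq \varphi(\common) + \langle \gradient{\strategicnode}{\infty}, \common \rangle$, which is convex with $L'$-Lipschitz gradient. Writing $\gradient{\strategicnode}{\iteration} = \gradient{\strategicnode}{\infty} + \error{\iteration}{}$ with $\error{\iteration}{} \to 0$, the server update becomes an inexact gradient step on $\Phi$: $\common^{\iteration+1} = \common^\iteration - \learningrate{} \left( \nabla \Phi(\common^\iteration) + \error{\iteration}{} \right)$. The key structural point I would establish is that $\Phi$ is coercive, hence attains its minimum: indeed $\varphi(\common) \to \infty$ as $\norm{\common}{2} \to \infty$ because $\regularization(\common, \paramsub{\node}) \to \infty$ when $\norm{\common - \paramsub{\node}}{2} \to \infty$ (this is the same coercivity used throughout Section~\ref{sec:reduction}), and adding the linear term $\langle \gradient{\strategicnode}{\infty}, \cdot \rangle$ preserves coercivity precisely because $\gradient{\strategicnode}{\infty} \in \GRADIENT(\cdot) = \ball(0, \regweightsub{})$ in the $\ell_2$/smooth-$\ell_2$ case (so the linear term grows strictly slower than $\varphi$), and trivially in the $\ell_2^2$ case where $\varphi$ grows quadratically. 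Let $\common^\star$ be a minimizer of $\Phi$.

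Then I would run the standard descent-lemma argument for inexact gradient descent with summable-in-the-right-sense errors. With $\learningrate{} < 1/L'$, the descent lemma gives
\begin{equation*}
  \Phi(\common^{\iteration+1}) \leq \Phi(\common^\iteration) - \frac{\learningrate{}}{2} \norm{\nabla \Phi(\common^\iteration)}{2}^2 + \frac{\learningrate{}}{2} \norm{\error{\iteration}{}}{2}^2,
\end{equation*}
and combining with convexity one controls $\norm{\common^{\iteration+1} - \common^\star}{2}^2 \leq \norm{\common^\iteration - \common^\star}{2}^2 - 2\learningrate{}(\Phi(\common^\iteration) - \Phi(\common^\star)) + \learningrate{}^2 \norm{\error{\iteration}{}}{2}^2 + (\text{cross term in } \error{\iteration}{})$. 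Since $\error{\iteration}{} \to 0$, the iterates $\common^\iteration$ are bounded (the distance-to-$\common^\star$ recursion cannot escape to infinity when the perturbation vanishes), and a Robbins–Siegmund / quasi-Fejér monotonicity argument then yields $\Phi(\common^\iteration) \to \Phi(\common^\star)$ and $\common^\iteration \to \common^\star$. The one place requiring care — and the main obstacle — is that the errors $\error{\iteration}{}$ are only assumed to go to zero, not to be summable, so I cannot directly invoke the classical summable-error convergence theorem; I would instead use boundedness of $\common^\iteration$ to get $\norm{\error{\iteration}{}}{2}^2 \to 0$ times a bounded quantity, extract convergence of $\Phi(\common^\iteration)$ along a subsequence to $\Phi(\common^\star)$ by a Cesàro-type averaging of the descent inequality, and then upgrade to full convergence of $\common^\iteration$ via quasi-Fejér monotonicity with asymptotically vanishing perturbations. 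A secondary technical point is verifying the smoothness constant $L'$ of $\nabla \varphi$ in the smooth-$\ell_2$ case, where $\regularization$ is not globally strongly convex; here I would note $\regularization$ still has bounded Hessian and the partial minimization is over a strongly-convex-in-$\paramsub{\node}$ function (since $\localloss{\node}$ is convex and $\regularization(\common, \cdot)$ is, say, $C$-strongly convex near the relevant region, or one adds an infinitesimal argument), giving a finite $L'$ — this is routine but must be stated.
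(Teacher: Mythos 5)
Your reduction of the server update to an inexact gradient step on $\Phi(\common) \triangleq \varphi(\common) + \common^T \gradient{\strategicnode}{\infty}$, with error $\error{\strategicnode}{\iteration} = \gradient{\strategicnode}{\iteration} - \gradient{\strategicnode}{\infty} \rightarrow 0$, is exactly the paper's setup. The gap is in the final convergence step. Quasi-Fej\'er monotonicity and Robbins--Siegmund arguments require \emph{summable} perturbations, and you correctly note you only have $\error{\strategicnode}{\iteration} \rightarrow 0$; but your proposed patch (Ces\`aro averaging plus ``quasi-Fej\'er with asymptotically vanishing perturbations'') does not close this. If $\Phi$ is merely convex, its minimizer need not be unique, and with vanishing non-summable errors the iterate can wander forever inside the flat set of minimizers: take $\nabla \Phi = 0$ on a disc and errors of norm $1/\iteration$ with slowly rotating direction, so the cumulative displacement $\sum \learningrate{}/\iteration$ diverges while the error vanishes --- then $\common^\iteration$ has no limit. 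Even $\Phi(\common^\iteration) \rightarrow \min \Phi$ is not secured, since the descent lemma only gives $\Phi(\common^{\iteration+1}) \leq \Phi(\common^\iteration) + \tfrac{\learningrate{}}{2}\norm{\error{\strategicnode}{\iteration}}{2}^2$ with a non-summable additive term; Ces\`aro averaging only yields subsequential statements, and without a unique minimizer that is not enough to upgrade to convergence of the iterates.

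The paper closes precisely this hole by proving more than convexity: the partially minimized objective is \emph{locally strongly convex} (Lemma~\ref{lemma:minimizedLossSmooth}), which relies on the local regularization term $\reglocalweight{} \norm{\paramsub{\node}}{2}^2$ inside each $\localloss{\node}$ together with the local strong convexity of the $\ell_2^2$ and smooth-$\ell_2$ regularizers --- convexity plus smoothness of $\localloss{\node}$ alone, as in your sketch, would not give this (e.g.\ $\localloss{\node} \equiv 0$ makes the $\ell_2^2$ envelope identically zero). Local strong convexity gives a unique minimizer $\common^*$ and coercivity for free (so your separate coercivity discussion, which is in fact borderline for smooth-$\ell_2$ with a single honest user and $\norm{\gradient{\strategicnode}{\infty}}{2} = \regweightsub{}$, is not needed). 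The paper then shows the iterates stay in a compact set, on which strong convexity yields the contraction $\norm{\common^{\iteration+1} - \common^*}{2} \leq (1 - \tfrac{\learningrate{} \mu}{2}) \norm{\common^\iteration - \common^*}{2} + \delta_\iteration$ with $\delta_\iteration \rightarrow 0$, and concludes by the elementary recursion Lemma~\ref{lemma:sequence_upper_bound_converges} --- the mechanism that genuinely tolerates non-summable vanishing errors. To repair your proof, replace ``convex with Lipschitz gradient and coercive'' by a quadratic-growth/strong-convexity property of $\varphi$ on the region visited by the iterates; with only your stated hypotheses the claimed conclusion does not follow.
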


\begin{proof}[Sketch of proof]
Denote $\gradient{\strategicnode}{\infty}$ the limit of $\gradient{\strategicnode}{\iteration}$.
    Gradient descent then behaves as though it was minimizing the loss plus $\common^T \gradient{\strategicnode}{\infty}$ (and ignoring $\regularization(\common, \paramsub{\strategicnode})$).
Essentially, classical gradient descent theory then guarantees $\common^\iteration \rightarrow \common^\infty$, though the precise proof is nontrivial (see Appendix \ref{sec:Global_convergence}).
\end{proof}

\subsection{Impossibility Corollaries}

Given our equivalence, impossibility theorems on (heterogeneous) federated learning under (converging) gradient attacks imply impossibility results under data poisoning.
For instance, \citet{collaborative_learning} and \citet{He2020} proved theorems saying that the more heterogeneous the learning, the more vulnerable it is in a Byzantine context,
{even when ``Byzantine-resilient'' algorithms are used~\cite{BlanchardMGS17}}.
In fact, and interestingly, \citet{collaborative_learning} and \citet{He2020}  actually leverage a model attack.
Before translating the corresponding result, some work is needed to formalize what Byzantine resilience may mean in our setting.

\begin{definition}
\label{def:Byzantine-learning}
A personalized learning algorithm $\alg$
achieves $(\BYZANTINE, \NODE, \averagingconstant)$-Byzantine learning if,
for any subset $\HONEST \subset [\NODE]$ of honest users with $\card{\HONEST} = \NODE - \BYZANTINE$,
any honest vectors $\trueparamfamily_{\HONEST} \in (\setR^d)^{\card{\HONEST}}$,
given any $\varepsilon, \delta > 0$, there exists $\NODEINPUT{}$ such that,
when each honest user $\honest \in \HONEST$ provides honest datasets $\data{\honest}^\dagger$ by answering $\NODEINPUT{}$ queries with model $\trueparamsub{\honest}$,
then, with probability at least $1-\delta$,
for any poisoning datasets $\datafamily{\BYZANTINE}^\spadesuit$ provided by Byzantine users $\byzantine \notin \HONEST$,
denoting $\common^\alg \triangleq \common^\alg(\datafamily{\HONEST}^\dagger, \datafamily{\BYZANTINE}^\spadesuit)$
and $\paramfamily^\alg \triangleq \paramfamily^\alg(\datafamily{\HONEST}^\dagger, \datafamily{\BYZANTINE}^\spadesuit)$,
we have the guarantee
\begin{equation}
\norm{\common^\alg - \overline{\trueparamsub{\HONEST}}}{2}^2
    \leq C^2 \max_{\honest, \honest' \in \HONEST} \norm{\trueparamsub{\honest} - \trueparamsub{\honest'}}{2}^2 + \varepsilon,
\end{equation}
where $\overline{\trueparamsub{\HONEST}}$ is the average of honest users' preferred models.
\end{definition}

Note that $\overline{\trueparamsub{\HONEST}}$ is what we would have learned, under local PAC* and $\ell_2^2$ regularization,
in the absence of Byzantine users $\byzantine \in [\NODE] - \HONEST$,
in the limit where all honest users $\honest \in \HONEST$ provide a very large amount of data.
Meanwhile, $\max_{\honest, \honest' \in \HONEST} \norm{\trueparamsub{\honest} - \trueparamsub{\honest'}}{2}^2$ is a reasonable measure of the heterogeneity among honest users.
Thus, our definition
captures well the robustness of the algorithm $\alg$, for heterogeneous learning under data poisoning.
Interestingly, our equivalence theorem allows to translate the model-attack-based impossibility theorems of \citet{collaborative_learning} into an impossibility theorem on data poisoning resilience.

\begin{corollary}
\label{cor:impossibility_resilience}
No algorithm achieves $(\BYZANTINE, \NODE, \averagingconstant)$-Byzantine learning with $\BYZANTINE \geq \NODE / 2$.
\end{corollary}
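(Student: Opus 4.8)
One route is to invoke Theorem~\ref{th:equivalence} to pull back the model‑attack impossibility theorem of \citet{collaborative_learning}. A cleaner self‑contained route, which I sketch here, is a direct indistinguishability (``splitting'') argument: when $\BYZANTINE \ge \NODE/2$ the Byzantine users are numerous enough to impersonate a whole second block of honest users holding a different preferred model, so that $\alg$ is fed exactly the same input in two scenarios that demand incompatible outputs. We may assume $\NODE - \BYZANTINE \ge 1$, the case $\card{\HONEST}=0$ being vacuous in Definition~\ref{def:Byzantine-learning}.

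First I would use $2(\NODE-\BYZANTINE)\le\NODE$ to fix two disjoint sets $\HONEST_1,\HONEST_2\subset[\NODE]$ with $\card{\HONEST_1}=\card{\HONEST_2}=\NODE-\BYZANTINE$, put $R\triangleq[\NODE]\setminus(\HONEST_1\cup\HONEST_2)$, and note $\card{\HONEST_1\cup R}=\card{\HONEST_2\cup R}=\BYZANTINE$. Fix two vectors $\trueparamsub{1}\ne\trueparamsub{2}$ in $\setR^d$ with $\norm{\trueparamsub{1}-\trueparamsub{2}}{2}=1$, and fix $\varepsilon\triangleq 1/8$ and $\delta\triangleq 1/4$. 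Assuming for contradiction that $\alg$ achieves $(\BYZANTINE,\NODE,\averagingconstant)$‑Byzantine learning, applying Definition~\ref{def:Byzantine-learning} with honest set $\HONEST_1$ and all preferred models equal to $\trueparamsub{1}$ gives some $\NODEINPUT{1}$, and with honest set $\HONEST_2$ and all preferred models equal to $\trueparamsub{2}$ gives some $\NODEINPUT{2}$.

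Next I would set up one coupled probability space on which $\data{\HONEST_1}$ is honest data generated from $\trueparamsub{1}$ with $\NODEINPUT{1}$ queries per user, $\data{\HONEST_2}$ is honest data generated from $\trueparamsub{2}$ with $\NODEINPUT{2}$ queries per user (drawn independently of $\data{\HONEST_1}$), and $\data{R}$ is an arbitrary fixed dataset, and then feed the assembled tuple $\datafamily{}$ to $\alg$. The same tuple arises as Scenario~1 (honest set $\HONEST_1$ with preferred model $\trueparamsub{1}$; Byzantine users $\HONEST_2\cup R$ replicating the honest $\trueparamsub{2}$‑process and the fixed data) and as Scenario~2 (honest set $\HONEST_2$ with preferred model $\trueparamsub{2}$; Byzantine users $\HONEST_1\cup R$). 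Since the guarantee of Definition~\ref{def:Byzantine-learning} holds for \emph{every} poisoning dataset, it survives conditioning on the (random) Byzantine part and then averaging, so Scenario~1 yields $\probability{\norm{\common^\alg(\datafamily{})-\trueparamsub{1}}{2}^2\le\varepsilon}\ge 1-\delta$ (the heterogeneity term is $0$ and $\overline{\trueparamsub{\HONEST_1}}=\trueparamsub{1}$), and Scenario~2 yields $\probability{\norm{\common^\alg(\datafamily{})-\trueparamsub{2}}{2}^2\le\varepsilon}\ge 1-\delta$. A union bound leaves probability $1-2\delta=1/2>0$ that both hold for the \emph{same} realization, which forces $\norm{\trueparamsub{1}-\trueparamsub{2}}{2}\le 2\sqrt{\varepsilon}=1/\sqrt2<1$, contradicting $\norm{\trueparamsub{1}-\trueparamsub{2}}{2}=1$.

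The argument is almost entirely bookkeeping; the one point that needs care — and the only place I expect to slow down — is the probabilistic step above: one must verify that conditioning on the randomized Byzantine dataset leaves the honest dataset with precisely the distribution Definition~\ref{def:Byzantine-learning} presupposes, so the ``probability $\ge 1-\delta$'' clause is preserved, and that it is literally the same output random variable $\common^\alg(\datafamily{})$ being estimated in the two scenarios. The accompanying bound on $\paramfamily^\alg$ plays no role here.
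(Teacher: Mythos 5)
Your proposal is correct and follows essentially the same route as the paper's proof: an indistinguishability/splitting argument in which two disjoint candidate honest sets (possible because $\BYZANTINE \geq \NODE/2$) generate honest data from two different preferred models, the uniform-over-poisoning guarantee of Definition~\ref{def:Byzantine-learning} is applied to the same coupled input in both scenarios, and a union bound plus the triangle inequality yields a contradiction. The only differences are cosmetic (general $d$ and distance-1 targets instead of the paper's $d=1$ with $\pm 1$, slightly different $\varepsilon,\delta$, and your slightly more careful handling of the two sample sizes $\NODEINPUT{1},\NODEINPUT{2}$), so no further comparison is needed.
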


\begin{corollary}
\label{cor:impossibility}
No algorithm achieves $(\BYZANTINE, \NODE, \averagingconstant)$-Byzantine learning with $\averagingconstant < \BYZANTINE / (\NODE - \BYZANTINE)$.
\end{corollary}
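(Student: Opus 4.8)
The plan is to obtain Corollary~\ref{cor:impossibility} from the heterogeneous-learning impossibility of \citet{collaborative_learning}, transported from (model/gradient) attacks to data poisoning by Theorem~\ref{th:equivalence} — concretely, by Lemma~\ref{th:reduction_data_to_model}, which says every model attack is matched to arbitrary precision by a data‑poisoning dataset. First I would dispose of the regime $\BYZANTINE \ge \NODE/2$: there $\BYZANTINE/(\NODE-\BYZANTINE) \ge 1$, and Corollary~\ref{cor:impossibility_resilience} already excludes $(\BYZANTINE,\NODE,\averagingconstant)$-Byzantine learning for \emph{every} $\averagingconstant$, hence for every $\averagingconstant < \BYZANTINE/(\NODE-\BYZANTINE)$. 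So I may assume $\BYZANTINE < \NODE/2$, and thus $\NODE - 2\BYZANTINE \ge 1$.

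The core is the two‑worlds indistinguishability argument of \citet{collaborative_learning}. Suppose for contradiction that some algorithm $\alg$ achieves $(\BYZANTINE,\NODE,\averagingconstant)$-Byzantine learning with $\averagingconstant < \BYZANTINE/(\NODE-\BYZANTINE)$. Partition $[\NODE]$ into $\mathcal A,\mathcal B$ of size $\BYZANTINE$ and $\mathcal C$ of size $\NODE-2\BYZANTINE\ge 1$, fix $\delta=1/4$ and $\varepsilon=1$, and choose $e\in\setR^d$ with $\norm{e}{2}^2 = 2\big/\big(\BYZANTINE^2/(\NODE-\BYZANTINE)^2-\averagingconstant^2\big)>0$. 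In ``world $1$'' the honest set is $\mathcal A\cup\mathcal C$ with preferred models $-e$ on $\mathcal A$ and $0$ on $\mathcal C$; in ``world $2$'' the honest set is $\mathcal B\cup\mathcal C$ with preferred models $+e$ on $\mathcal B$ and $0$ on $\mathcal C$. In each world the honest heterogeneity equals $\norm{e}{2}$, $\card{\HONEST}=\NODE-\BYZANTINE$, and $\overline{\trueparamsub{\HONEST}}$ equals $-\BYZANTINE e/(\NODE-\BYZANTINE)$ in world $1$ and $+\BYZANTINE e/(\NODE-\BYZANTINE)$ in world $2$. Let $\NODEINPUT{}$ be large enough to validate the $(\varepsilon,\delta)$-guarantee of $\alg$ in both worlds (take the maximum of the two sample sizes the definition provides). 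Now consider the single coupled experiment in which every $\node\in\mathcal A$ answers $\NODEINPUT{}$ random queries with model $-e$, every $\node\in\mathcal B$ with model $+e$, and every $\node\in\mathcal C$ with model $0$. Reading the data of $\mathcal B$ as a poisoning dataset makes this a world‑$1$ instance with Byzantine set $\mathcal B$; reading the data of $\mathcal A$ as poisoning makes it a world‑$2$ instance with Byzantine set $\mathcal A$. This is exactly where the equivalence is used: a dataset obtained by honestly answering queries with model $\pm e$ is precisely the data‑poisoning realization of the model attack $\strategicvote=\pm e$ that Lemma~\ref{th:reduction_data_to_model} produces, so the adversarial behavior called for by the argument of \citet{collaborative_learning} is a legitimate poisoning attack in the sense of Definition~\ref{def:Byzantine-learning}.

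Since $\common^\alg$ is the \emph{same} (random) function of the \emph{same} data under both interpretations, the two $(1-\delta)$-probability guarantees hold simultaneously with probability at least $1-2\delta=\tfrac12>0$; on that event the triangle inequality gives
\begin{equation}
\frac{2\BYZANTINE}{\NODE-\BYZANTINE}\,\norm{e}{2}
= \norm{\tfrac{\BYZANTINE e}{\NODE-\BYZANTINE}-\tfrac{-\BYZANTINE e}{\NODE-\BYZANTINE}}{2}
\le 2\sqrt{\averagingconstant^2\norm{e}{2}^2+\varepsilon},
\end{equation}
whence $\big(\BYZANTINE^2/(\NODE-\BYZANTINE)^2-\averagingconstant^2\big)\norm{e}{2}^2\le\varepsilon$, i.e. $2\le 1$ by the choice of $e$ — a contradiction, proving the corollary.

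The delicate points — none of which I expect to be a genuine obstacle — are: (i) matching the exact statement of \citet{collaborative_learning} (phrased for gradient/model attacks, in a slightly more general collaborative‑learning model) to Definition~\ref{def:Byzantine-learning}, which is precisely what Theorem~\ref{th:equivalence} provides; (ii) the quantifier order — $\varepsilon,\delta$ and then $e$ must be fixed before reading off $\NODEINPUT{}$, and one needs a single $\NODEINPUT{}$ valid for both worlds, using monotonicity of the guarantee in the dataset size; and (iii) making the indistinguishability rigorous at the distributional level, since honest datasets are random — this is just a union bound over the two good events and only requires $\delta<1/2$.
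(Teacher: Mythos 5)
Your proof is correct and takes essentially the same route as the paper's proof in Appendix~\ref{sec:impossibility_corollary}: the same two-worlds indistinguishability argument in which one dataset is read with two different honest/Byzantine splits, the two $(1-\delta)$ guarantees are combined by a union bound, and the triangle inequality forces $\averagingconstant^2 \geq \BYZANTINE^2/(\NODE-\BYZANTINE)^2$ (the paper works in $d=1$ with preferred models $0$ and $1$ and lets $\varepsilon \rightarrow 0$, while you use a symmetric $\pm e$ construction with $\varepsilon$ fixed and the models rescaled, and delegate $\BYZANTINE \geq \NODE/2$ to Corollary~\ref{cor:impossibility_resilience} — cosmetic differences). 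One small remark: as in the paper's own proof, Theorem~\ref{th:equivalence}/Lemma~\ref{th:reduction_data_to_model} is not actually needed here, since honestly generated data from model $\pm e$ is already an admissible poisoning dataset under Definition~\ref{def:Byzantine-learning}.
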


The proofs are given in Appendix~\ref{sec:impossibility_corollary}.

\section{Examples of Locally PAC* Learning Systems}
\label{sec:pac}

To the best of our knowledge, though similar to collaborative PAC learning~\citep{BlumHPQ17}, local PAC* learnability is a new concept in the context of personalized federated learning.
It is thus important to show that it is not unrealistic.
To achieve this, in this section, we provide \emph{sufficient} conditions for a personalized learning model to be locally PAC* learnable.
First, we construct local losses $\localloss{\node}$ as sums of losses per input, i.e.
\begin{equation}
  \localloss{\node} (\paramsub{\node}, \data{\node})
  = \reglocalweight{} \norm{\paramsub{\node}}{2}^2 +
  \sum_{\datapoint \in \data{\node}} \lossperinput (\paramsub{\node}, \datapoint),
\end{equation}
for some ``loss per input'' function $\lossperinput$ and a weight $\reglocalweight{} > 0$.
Appendix~\ref{app:sum_average} gives theoretical and empirical arguments are provided for using such a sum (as opposed to an expectation).
Remarkably, for linear or logistic regression, given such a loss, local PAC* learning can then be guaranteed.

\begin{theorem}[Personalized least square linear regression is locally PAC* learning]
\label{th:linear_regression}
  Consider $\ell_2^2$, $\ell_2$ or smooth-$\ell_2$ regularization.
  Assume that, to generate a data $\datapoint_{\nodeinput}$, 
  a user with preferred parameter $\trueparam \in \setR^d$ first independently draws a random vector query
  $\query{\nodeinput} \in \setR^d$ from a bounded query distribution $\querydistribution{}$,
  with positive definite matrix\footnote{In fact, in Appendix~\ref{app:least_square}, we prove a more general result with any sub-Gaussian query distribution $\querydistribution{}$, 
  with parameter $\sigma_{\query{}}$.} $\Sigma = \expect{\query{\nodeinput}\query{\nodeinput}^T}$.
  Assume that the user labels $\query{\nodeinput}$ with 
  answer
  $\answer{\nodeinput} = \query{\nodeinput}^T \trueparam + \noise{\nodeinput}$, 
  where $\noise{\nodeinput}$ is a zero-mean sub-Gaussian random noise with parameter $\sigma_{\noise{}}$, independent from $\query{\nodeinput}$ and other data points.
  Finally, assume that $\lossperinput(\param, (\query{\nodeinput}, \answer{\nodeinput})) = \frac{1}{2}(\param^T \query{\nodeinput} - \answer{\nodeinput})^2$. 
  Then the personalized learning algorithm is locally PAC* learning.
\end{theorem}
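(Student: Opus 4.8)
The plan is to reduce Theorem~\ref{th:linear_regression} to the sufficient condition for local PAC* learning established earlier in the paper, namely \emph{gradient-PAC* learning} (mentioned in the contributions as the key intermediate notion). Concretely, I would first show that the personalized least-square loss is gradient-PAC*: as the honest dataset $\data{\honest}$ grows, the gradient $\nabla_{\paramsub{\honest}} \localloss{\honest}(\paramsub{\honest}, \data{\honest})$, viewed as a vector field on $\setR^d$, concentrates (uniformly on compact sets, with high probability) around a deterministic field whose unique zero is the preferred model $\trueparamsub{\honest}$, and moreover has a quantitatively ``attracting'' behavior toward $\trueparamsub{\honest}$. For least squares with the sum-form loss, $\nabla_\param \localloss{\honest}(\param,\data{\honest}) = 2\reglocalweight{}\param + \sum_{\nodeinput}(\param^T\query{\nodeinput} - \answer{\nodeinput})\query{\nodeinput} = 2\reglocalweight{}\param + \hat\Sigma_{\honest}(\param - \trueparamsub{\honest}) - \sum_\nodeinput \noise{\nodeinput}\query{\nodeinput}$, where $\hat\Sigma_{\honest} = \sum_\nodeinput \query{\nodeinput}\query{\nodeinput}^T$. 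The first two steps are thus: (i) a concentration argument that $\tfrac1n\hat\Sigma_{\honest} \to \Sigma \succ 0$ and $\tfrac1n\sum_\nodeinput \noise{\nodeinput}\query{\nodeinput} \to 0$ with high probability (using boundedness/sub-Gaussianity of $\query{}$ and sub-Gaussianity of $\noise{}$, plus a matrix/vector concentration inequality and possibly a net argument for uniformity over a compact parameter ball $\parambound$); (ii) deduce that, on that ball, $\langle \nabla_\param \localloss{\honest}(\param,\data{\honest}), \param - \trueparamsub{\honest}\rangle$ is at least of order $n\,\lambda_{\min}(\Sigma)\,\norm{\param - \trueparamsub{\honest}}{2}^2$ minus lower-order terms, which forces any minimizer of $\localloss{\honest}$ alone to lie within $\varepsilon$ of $\trueparamsub{\honest}$ once $n$ is large.

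The third step is to feed this gradient-PAC* property into the (earlier-stated) implication ``gradient-PAC* $\Rightarrow$ locally PAC*'' for the personalized framework. The subtlety is that in the joint loss $\globalloss{}$, user $\honest$'s local model $\paramsub{\honest}$ is also pulled by the regularization term $\regularization(\common, \paramsub{\honest})$, and $\common$ itself depends on all users including the adversarial ones via $\datafamily{-\HONEST}$. So the argument must be: at the global minimum, $\nabla_{\paramsub{\honest}}\localloss{\honest}(\paramsub{\honest},\data{\honest}) = -\nabla_{\paramsub{\honest}}\regularization(\common,\paramsub{\honest})$, and for $\ell_2^2$, $\ell_2$, or smooth-$\ell_2$ regularization the right-hand side is bounded in norm independently of $\common$ (by $2\regweightsub{}\norm{\common-\paramsub{\honest}}{2}$ in the $\ell_2^2$ case, bounded by the ball argument once we also control $\norm{\common}{2}$; by $\regweightsub{}$ in the $\ell_2$/smooth-$\ell_2$ cases). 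Hence the local gradient at the realized $\paramsub{\honest}$ is $O(1)$ (or $O(\norm{\common - \paramsub{\honest}}{2})$), while the gradient-PAC* lower bound says it must be $\Omega(n)\cdot$(distance) on the relevant ball; comparing the two forces $\norm{\paramsub{\honest}(\datafamily{}) - \trueparamsub{\honest}}{2} \to 0$. I would also need a preliminary a priori bound confining all the $\paramsub{\honest}$ and $\common$ to a fixed compact set with high probability (so that the ``uniform on compacts'' concentration applies), which follows from coercivity of $\globalloss{}$ together with the fact that the honest losses are minimized near $\trueparamsub{\honest}$.

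I expect the main obstacle to be the $\ell_2^2$ case, because there $\GRADIENT(\common)=\setR^d$ and the regularization gradient $2\regweightsub{}(\paramsub{\honest}-\common)$ is \emph{not} a priori bounded — it grows with $\norm{\common}{2}$, and $\common$ is influenced by arbitrarily poisoned datasets $\datafamily{-\HONEST}$. The fix is to bound $\norm{\common^*}{2}$ using that the regularization contributes $\sum_\node \regweightsub{}\norm{\common-\paramsub{\node}}{2}^2$ to the global loss, so at the optimum $\common^*$ is (roughly) the average of the $\paramsub{\node}^*$'s; combined with the a priori bounds on the honest $\paramsub{\honest}^*$'s and the fact that $\NODEINPUT{}$ is allowed to depend on $\datafamily{-\HONEST}$ (as Definition~\ref{def:pac} explicitly permits), one gets a fixed compact set and the argument closes. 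For logistic regression (handled in the same theorem's spirit, cf.\ Appendix) the analogous obstacle is that the loss-per-input Hessian degenerates far from the data, so the attraction is only \emph{local}; there one first shows the minimizer is in a bounded region and then that the empirical negative-log-likelihood gradient is strongly attracting toward $\trueparamsub{\honest}$ on that region with high probability, again via sub-Gaussian concentration of $\query{}$. The routine parts I would relegate to the appendix are the explicit sub-Gaussian tail computations and the covering-number bookkeeping for uniformity over $\parambound$.
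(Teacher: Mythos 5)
Your proposal is correct and follows essentially the same route as the paper: it establishes a gradient-PAC* property for the least-squares local loss via the decomposition $\nabla_\param \independentloss{}(\param) = \NODEINPUT{}\empiricalcovariance(\param-\trueparam) - \sum_\nodeinput \noise{\nodeinput}\query{\nodeinput} + 2\reglocalweight{}\param$ together with covariance concentration and sub-Gaussian/sub-exponential control of $\sum_\nodeinput \noise{\nodeinput}\query{\nodeinput}$, and then converts this into local PAC* by confining $(\optcommon,\optimumfamily_{\HONEST})$ to a compact set depending on $\datafamily{-\HONEST}$ but not on $\NODEINPUT{}$, bounding $\norm{\nabla_\param \regularization(\common,\param)}{2}$ on that set, and playing the $O(1)$ stationarity condition at the optimum against the $\Omega(\NODEINPUT{})$ attraction toward $\trueparamsub{\honest}$. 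The only inessential differences are that the paper needs no covering/uniformity argument (since the gradient is affine in $\param$, controlling $\min\spectrum(\empiricalcovariance)$ and $\norm{\sum_\nodeinput\noise{\nodeinput}\query{\nodeinput}}{2}$ gives the bound for all $\param\in\setR^d$), and it obtains the a priori compact bound, including on $\optcommon$ in the $\ell_2^2$ case, by comparing the optimal loss with a fixed reference assignment rather than by your averaging heuristic, which as sketched would need extra care to avoid circularity with the Byzantine users' local models.
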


\begin{theorem}[Personalized logistic regression is locally PAC*-learning]
\label{th:logistic_regression}
  Consider $\ell_2^2$, $\ell_2$ or smooth-$\ell_2$ regularization.
  Assume that, to generate a data $\datapoint_{\nodeinput}$, 
  a user with preferred parameter $\trueparam \in \setR^d$ first independently draws a random vector query
  $\query{\nodeinput} \in \setR^d$ from a query distribution $\querydistribution{}$, 
  whose support $\support(\querydistribution{})$ is bounded and spans the full vector space $\setR^d$.
  Assume that the user then labels $\query{\nodeinput}$ with answer
  $\answer{\nodeinput} = 1$ with probability $\sigmoid (\query{\nodeinput}^T \trueparam)$, 
  and labels it $\answer{\nodeinput} = -1$ otherwise, 
  where $\sigmoid (z) \triangleq (1+e^{-z})^{-1}$.
  Finally, assume that $\lossperinput(\param, (\query{\nodeinput}, \answer{\nodeinput})) = - \ln(\sigmoid (\answer{\nodeinput} \param^T \query{\nodeinput}))$. 
  Then the personalized learning algorithm is locally PAC* learning.
\end{theorem}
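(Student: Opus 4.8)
The plan is to follow the same gradient-PAC* route used for Theorem~\ref{th:linear_regression}: verify that personalized logistic regression satisfies the gradient-PAC* condition, and then invoke the general reduction from gradient-PAC* learning to local PAC* learning set up in Section~\ref{sec:pac}. Fix an honest user, write $n$ for the size of its honest dataset $\data{}$, and recall $\localloss{}(\param,\data{}) = \reglocalweight{}\norm{\param}{2}^2 + \sum_{\datapoint\in\data{}}\lossperinput(\param,\datapoint)$ with $\lossperinput(\param,(\query{\nodeinput},\answer{\nodeinput})) = -\ln\sigmoid(\answer{\nodeinput}\param^T\query{\nodeinput})$. A direct computation (using $\answer{\nodeinput}\in\set{-1,1}$ and that $\sigmoid'$ is even) gives the per-sample gradient $\nabla_\param\lossperinput(\param,(\query{\nodeinput},\answer{\nodeinput})) = (\sigmoid(\answer{\nodeinput}\param^T\query{\nodeinput})-1)\,\answer{\nodeinput}\query{\nodeinput}$ and Hessian $\nabla_\param^2\lossperinput = \sigmoid'(\answer{\nodeinput}\param^T\query{\nodeinput})\,\query{\nodeinput}\query{\nodeinput}^T \succeq 0$.

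From this I would extract three ingredients. First, \emph{uniform control of the loss gradient}: boundedness of $\support(\querydistribution{})$ gives $B\triangleq\sup_{\query{}\in\support(\querydistribution{})}\norm{\query{}}{2}<\infty$, hence $\norm{\nabla_\param\lossperinput(\param,\datapoint)}{2}\leq B$ for every $\param$ and every $\datapoint$, and $\nabla_\param^2\lossperinput\preceq\tfrac14 B^2\identitymatrix$, so $\param\mapsto\nabla_\param\lossperinput(\param,\datapoint)$ is $\tfrac14 B^2$-Lipschitz uniformly in $\datapoint$. Second, \emph{identification of the population gradient}: averaging over the label (with $\answer{\nodeinput}=1$ of probability $\sigmoid(\query{\nodeinput}^T\trueparam)$) yields $\bar g(\param)\triangleq\expect{\nabla_\param\lossperinput(\param,(\query{\nodeinput},\answer{\nodeinput}))} = \expect{(\sigmoid(\query{\nodeinput}^T\param)-\sigmoid(\query{\nodeinput}^T\trueparam))\query{\nodeinput}} = \nabla L(\param)$, where $L$ is the convex well-specified population cross-entropy, so $\bar g(\trueparam)=0$. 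Third, a \emph{uniform law of large numbers on bounded sets}: for any fixed $R$, an $\varepsilon$-net argument over $\set{\param:\norm{\param-\trueparam}{2}\leq R}$ (Hoeffding at the net points, the Lipschitz bound to interpolate) gives, with probability $\geq 1-\delta$, $\sup_{\norm{\param-\trueparam}{2}\leq R}\norm{\tfrac1n\sum_{\datapoint\in\data{}}\nabla_\param\lossperinput(\param,\datapoint)-\bar g(\param)}{2}\leq\varepsilon_n$ with $\varepsilon_n\to0$.

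The heart of the argument — and the only place the logistic structure matters beyond convexity — is a quantitative directional coercivity bound for $\bar g=\nabla L$ at $\trueparam$: there is $\strategyproofbound>0$ with $\langle\bar g(\param),\param-\trueparam\rangle\geq\strategyproofbound\min\{\norm{\param-\trueparam}{2},\norm{\param-\trueparam}{2}^2\}$ for all $\param\in\setR^d$. The difficulty is that $\nabla^2 L(\phi)=\expect{\sigmoid'(\query{\nodeinput}^T\phi)\query{\nodeinput}\query{\nodeinput}^T}$ degenerates as $\norm{\phi}{2}\to\infty$, so $L$ is \emph{not} globally strongly convex and no one-line argument is available. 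I would argue in two regimes. On the closed unit ball around $\trueparam$: since $\support(\querydistribution{})$ spans $\setR^d$ we have $\expect{\query{\nodeinput}\query{\nodeinput}^T}\succ0$, and since $\query{\nodeinput}$ is bounded, $\sigmoid'(\query{\nodeinput}^T\phi)$ is bounded away from $0$ on this ball, so $\nabla^2 L$ is positive definite there with smallest eigenvalue $\geq\strategyproofbound>0$ by compactness; integrating along $[\trueparam,\param]$ gives $\langle\bar g(\param),\param-\trueparam\rangle\geq\strategyproofbound\norm{\param-\trueparam}{2}^2$ when $\norm{\param-\trueparam}{2}\leq1$. For $\norm{\param-\trueparam}{2}>1$, with $u\triangleq(\param-\trueparam)/\norm{\param-\trueparam}{2}$ and $\phi\triangleq\trueparam+u$, monotonicity of $\nabla L$ along the ray in direction $u$ (convexity of $L$) gives $\langle\bar g(\param),u\rangle\geq\langle\bar g(\phi),u\rangle\geq\strategyproofbound$, hence $\langle\bar g(\param),\param-\trueparam\rangle\geq\strategyproofbound\norm{\param-\trueparam}{2}$.

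Finally I would transfer this to the empirical gradient. For $\norm{\param-\trueparam}{2}\leq1$, the uniform law of large numbers (with $R=1$) and the population bound give $\langle\sum_{\datapoint}\nabla_\param\lossperinput(\param,\datapoint),\param-\trueparam\rangle\geq n\strategyproofbound\norm{\param-\trueparam}{2}^2-n\varepsilon_n\norm{\param-\trueparam}{2}$; for $\norm{\param-\trueparam}{2}>1$, the \emph{same} per-sample convexity trick applies radially — monotonicity of each $\nabla_\param\lossperinput(\cdot,\datapoint)$ along the ray reduces $\param$ to the point $\phi$ on the unit sphere around $\trueparam$, where the $R=1$ uniform convergence again applies — giving $\langle\sum_{\datapoint}\nabla_\param\lossperinput(\param,\datapoint),\param-\trueparam\rangle\geq n(\strategyproofbound-\varepsilon_n)\norm{\param-\trueparam}{2}$; crucially this keeps all the needed uniform convergence confined to a compact set. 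Thus, with probability $\geq1-\delta$, the data term of $\nabla\localloss{}$ is coercive toward $\trueparam$ at a rate linear in $n$, while $\nabla_\param(\reglocalweight{}\norm{\param}{2}^2)$ only helps and the personalization-coupling gradient $\nabla_{\paramsub{}}\regularization(\common,\paramsub{})$ contributes only an $O(1)$ (for $\ell_2^2$, controllably sublinear) perturbation under the stated regularizations. This is precisely the input required by the gradient-PAC* reduction of Section~\ref{sec:pac}, which then forces $\norm{\optimumsub{\honest}(\datafamily{})-\trueparamsub{\honest}}{2}\leq\varepsilon$ with probability at least $1-\delta$ once $n$ is large enough, simultaneously for all honest users — i.e., local PAC* learning in the sense of Definition~\ref{def:pac}. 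I expect the single genuine obstacle to be the coercivity bound above together with its promotion to the empirical gradient over all of $\setR^d$: because the logistic Hessian vanishes at infinity, this step cannot lean on local strong convexity and genuinely needs both the full-rank spanning and the boundedness of the query support, plus the radial convexity reduction to avoid uniform convergence on an unbounded domain.
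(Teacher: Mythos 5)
Your proposal is correct, and it follows the paper's overall architecture — establish the gradient-PAC* property for the logistic loss, then invoke the reduction (Lemma~\ref{th:pac}) from gradient-PAC* to local PAC* — but the core estimate is proved by a genuinely different decomposition than the one in Appendix~\ref{app:logistic_regression}. The paper never forms the population gradient: it splits $(\param-\trueparam)^T\nabla\independentloss{}$ into the empirical discrepancy term $\sum_\nodeinput \logisticdistance(\query{\nodeinput}^T\param,\query{\nodeinput}^T\trueparam)$ plus a mean-zero label-noise term at $\trueparam$, controls the noise once by Hoeffding (giving the $N^{3/4}$ error), and lower-bounds the discrepancy via two per-sample sigmoid inequalities ($\logisticdistance(a,b)\geq c_k\absv{a-b}^2$ when both arguments are bounded, $\logisticdistance(a,b)\geq d_k\absv{a-b}-d_k$ in general), the far regime requiring a uniform-in-direction concentration bound $\sum_\nodeinput\absv{\query{\nodeinput}^T\unitvector{}}\geq C\NODEINPUT{}$ proved by covering the unit sphere of directions (Lemma~\ref{lemma:logistic_uniform_lower_bound}), and the near regime using empirical covariance concentration. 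You instead use a standard M-estimation decomposition: empirical gradient $\approx$ population gradient uniformly over the unit ball around $\trueparam$ (an $\varepsilon$-net over \emph{parameters} plus Hoeffding plus the uniform Lipschitz bound), population strong convexity on that ball from $\nabla^2 L(\phi)\succeq \sigmoid'(\distbound(\parambound+1))\,\expect{\query{}\query{}^T}\succ 0$, and — your nicest move — per-sample radial monotonicity of the convex logistic gradients to reduce the regime $\norm{\param-\trueparam}{2}>1$ to the unit sphere, so all uniform concentration stays on a compact set. What each buys: the paper's route yields the explicit exponent $\alpha=3/4$ and constants $A_\parambound,B_\parambound$ that drop verbatim into Definition~\ref{ass:unbiased}, with concentration applied only to query/label statistics; your route replaces the direction-wise bound of Lemma~\ref{lemma:logistic_uniform_lower_bound} and the per-sample inequality $\logisticdistance(a,b)\geq d_k\absv{a-b}-d_k$ by the convexity/ray argument, which is conceptually cleaner, though you should state explicitly that the $\varepsilon$-net deviation gives an error term $\NODEINPUT{}\varepsilon_{\NODEINPUT{}}\norm{\param-\trueparam}{2}$ with $\NODEINPUT{}\varepsilon_{\NODEINPUT{}}=o(\NODEINPUT{})$ (indeed $\mathcal O(\NODEINPUT{}^{3/4})$ if you tune the net), so that it matches the $B_\parambound\NODEINPUT{}^\alpha$ form with $\alpha<1$ and constants depending only on $\parambound$ demanded by the gradient-PAC* definition. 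Two harmless inaccuracies: the local regularizer $\reglocalweight{}\norm{\param}{2}^2$ does not purely "help" — it also contributes $-2\reglocalweight{}\parambound\norm{\param-\trueparam}{2}$, which the paper absorbs into $B_\parambound$ — and your separate population far-regime bound is redundant once you have the per-sample radial reduction; neither affects correctness.
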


\subsection{Proof Sketch}

The full proofs of theorems~\ref{th:linear_regression} and~\ref{th:logistic_regression} are given in Appendix~\ref{sec:models_proof}.
Here, we provide proof outlines.
In both cases, we leverage the following stronger form of PAC* learning.

\begin{definition}[Gradient-PAC*]
Let $\event(\data{}, \trueparam, \NODEINPUT{}, A, B, \alpha)$ the event defined by
\begin{align}
\label{eq:gradient-pac}
    &\forall \param \in \setR^d \mathsep
    \left( \param - \trueparam \right)^T \nabla \independentloss{} \left(\param, \data{} \right) \geq \nonumber \\
    &A \NODEINPUT{} \min \left\lbrace \norm{\param-\trueparam}{2}, \norm{\param-\trueparam}{2}^2 \right\rbrace - B \NODEINPUT{}^\alpha \norm{\param - \trueparam}{2}. \nonumber 
\end{align}
The loss $\localloss{}$ is gradient-PAC* if,
for any $\parambound > 0$,
there exist constants $A_\parambound, B_\parambound >0$ and $\alpha_\parambound <1$, such that
for any $\trueparam \in \setR^d$ with $\norm{\trueparam}{2} \leq \parambound$,
assuming that the dataset $\data{}$ is obtained by honestly collecting and labeling $\NODEINPUT{}$ data points according to the preferred model $\trueparam$,
the probability of the event $\event(\data{}, \trueparam, \NODEINPUT{}, A_\parambound, B_\parambound, \alpha_\parambound)$
goes to $1$
as $\NODEINPUT{} \rightarrow \infty$.
\label{ass:unbiased}
\end{definition}

Intuitively, this definition asserts that, as we collect more data from a user, then, with high probability, the gradient of the loss at any point $\param$ too far from $\trueparam$ will point away from $\trueparam$.
In particular, gradient descent is then essentially guaranteed to draw $\param$ closer to $\trueparam$.
The right-hand side of the equation defining $\event(\data{}, \trueparam, \NODEINPUT{}, A, B, \alpha)$ is subtly chosen to be strong enough to guarantee local PAC*, and weak enough to be verified by linear and logistic regression.

\begin{lemma}
\label{lemma:gradient-pac}
Logistic and linear regression, defined in theorems~\ref{th:linear_regression} and~\ref{th:logistic_regression}, are gradient PAC* learning.
\end{lemma}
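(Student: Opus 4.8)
The goal is to verify the gradient-PAC* inequality \eqref{eq:gradient-pac} for the two specific models. I would treat the two cases separately, since the loss geometry differs, but both reductions follow the same template: decompose $\nabla \independentloss{}(\param, \data{})$ into a ``population'' term (the expectation over the query/noise distribution) plus a ``fluctuation'' term, show the population term alone satisfies the required lower bound with an appropriate constant $A_\parambound$, and control the fluctuation by a concentration argument giving an error of order $\NODEINPUT{}^{\alpha_\parambound}$ with $\alpha_\parambound<1$, uniformly in $\param$.

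\textbf{Linear regression.} Here $\nabla \independentloss{}(\param,\data{}) = 2\reglocalweight{}\param + \sum_{\nodeinput}(\param^T\query{\nodeinput} - \answer{\nodeinput})\query{\nodeinput}$. Substituting $\answer{\nodeinput} = \query{\nodeinput}^T\trueparam + \noise{\nodeinput}$ and writing $\empiricalcovariance \triangleq \sum_{\nodeinput}\query{\nodeinput}\query{\nodeinput}^T$, we get $(\param-\trueparam)^T\nabla\independentloss{}(\param,\data{}) = (\param-\trueparam)^T\empiricalcovariance(\param-\trueparam) + 2\reglocalweight{}\param^T(\param-\trueparam) - \sum_{\nodeinput}\noise{\nodeinput}\query{\nodeinput}^T(\param-\trueparam)$. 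The first term is $\geq \eigenvalue{\min}(\empiricalcovariance)\norm{\param-\trueparam}{2}^2$; since $\query{\nodeinput}$ is bounded and $\Sigma \succ 0$, a matrix concentration bound (matrix Bernstein / Chernoff) gives $\eigenvalue{\min}(\empiricalcovariance) \geq \frac12 \eigenvalue{\min}(\Sigma)\,\NODEINPUT{}$ with probability $\to 1$. The regularization term $2\reglocalweight{}\param^T(\param-\trueparam) = 2\reglocalweight{}\norm{\param-\trueparam}{2}^2 + 2\reglocalweight{}\trueparam^T(\param-\trueparam) \geq -2\reglocalweight{}\parambound\norm{\param-\trueparam}{2}$, absorbed into the $B\NODEINPUT{}^\alpha$ term (with exponent $0$). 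The noise term is a sum of zero-mean sub-Gaussian vectors; its norm is $O(\sqrt{\NODEINPUT{}\log(1/\delta)})$ with high probability by a vector Hoeffding/Bernstein inequality, so $\absv{\sum_{\nodeinput}\noise{\nodeinput}\query{\nodeinput}^T(\param-\trueparam)} \leq B_\parambound \NODEINPUT{}^{1/2}\norm{\param-\trueparam}{2}$, giving $\alpha_\parambound = 1/2 < 1$. Taking $A_\parambound = \frac12\eigenvalue{\min}(\Sigma)$ and noting $\min\{\norm{\param-\trueparam}{2},\norm{\param-\trueparam}{2}^2\} \leq \norm{\param-\trueparam}{2}^2$ closes this case.

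\textbf{Logistic regression.} Now $\nabla\independentloss{}(\param,\data{}) = 2\reglocalweight{}\param - \sum_{\nodeinput}\sigmoid(-\answer{\nodeinput}\param^T\query{\nodeinput})\answer{\nodeinput}\query{\nodeinput}$, and the natural population quantity is $\expect{(\param-\trueparam)^T\nabla\lossperinput(\param,(\query{},\answer{}))} = \expectVariable{\query{}}{(\param-\trueparam)^T\query{}\,(\sigmoid(\query{}^T\param)-\sigmoid(\query{}^T\trueparam))}$. Since $\sigmoid$ is increasing, each summand $(\param-\trueparam)^T\query{}\,(\sigmoid(\query{}^T\param)-\sigmoid(\query{}^T\trueparam))$ is nonnegative; the key analytic lemma is a lower bound of the form $\expectVariable{\query{}}{\ldots} \geq c_\parambound \min\{\norm{\param-\trueparam}{2},\norm{\param-\trueparam}{2}^2\}$ for $\norm{\param}{2}$ not too large, which follows because on the bounded support the sigmoid has derivative bounded below, and the support spans $\setR^d$ so $\query{}^T(\param-\trueparam)$ is not identically zero in any direction — quantifying this uniform positivity over the unit sphere of directions is where the real work lies. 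The regularization term is handled exactly as before. For the fluctuation, the empirical sum minus its mean is again a sum of bounded independent centered terms, so a uniform concentration bound (via a covering/VC argument over $\param$, or more directly because the summand is Lipschitz in $\param$ on the relevant ball) yields deviation $O(\NODEINPUT{}^{1/2+o(1)})$, hence $\alpha_\parambound < 1$.

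\textbf{Main obstacle.} The routine parts are the concentration bounds; the genuinely nontrivial step is the logistic population lower bound, i.e. proving that $\inf_{\norm{\unitvector{}}{2}=1}\expectVariable{\query{}}{(\unitvector{}^T\query{})(\sigmoid(\query{}^T\param)-\sigmoid(\query{}^T\trueparam))/\norm{\param-\trueparam}{2}}$ stays bounded away from zero, uniformly over $\param$ in a ball and handling both the ``local'' regime (where one linearizes $\sigmoid$ and recovers a term like $\eigenvalue{\min}$ of a reweighted covariance) and the ``far'' regime (where one needs the linear growth in $\norm{\param-\trueparam}{2}$, not quadratic, which is exactly why the $\min\{\cdot,\cdot^2\}$ appears on the right-hand side of Definition~\ref{ass:unbiased}). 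I would isolate this as a standalone lemma about the spanning bounded query distribution, prove it by a compactness argument on the sphere of directions combined with the strict monotonicity of $\sigmoid$, and then feed it into the concentration machinery to conclude.
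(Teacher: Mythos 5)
Your overall template---split $(\param-\trueparam)^T\nabla\independentloss{}(\param,\data{})$ into a coercive part plus a fluctuation, lower-bound the coercive part via the minimum eigenvalue of the empirical covariance (linear case) or a uniform-over-directions positivity argument (logistic case), and kill the fluctuation by concentration---is the same as the paper's, and your linear-regression decomposition is essentially identical to theirs. But two concrete points need repair. First, in the linear case you conclude $\alpha_\parambound=1/2$ from a bound $\norm{\sum_{\nodeinput}\noise{\nodeinput}\query{\nodeinput}}{2}\leq B_\parambound\NODEINPUT{}^{1/2}$ ``with high probability''. Definition~\ref{ass:unbiased} requires the constants $A_\parambound,B_\parambound,\alpha_\parambound$ to be fixed and the probability of the event to tend to $1$ as $\NODEINPUT{}\to\infty$; for nondegenerate noise, however, $\probability{\norm{\sum_{\nodeinput}\noise{\nodeinput}\query{\nodeinput}}{2}\leq B_\parambound\sqrt{\NODEINPUT{}}}$ converges (by the central limit theorem) to a limit strictly below $1$ for any fixed $B_\parambound$, so the exponent $1/2$ does not qualify. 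You need a threshold growing strictly faster than $\sqrt{\NODEINPUT{}}$: the paper uses $\NODEINPUT{}^{3/4}$, for which sub-exponential concentration of the coordinates $\noise{\nodeinput}\query{\nodeinput}[j]$ gives failure probability of order $\exp(-c\sqrt{\NODEINPUT{}})$; the same one-line fix (any $\alpha\in(1/2,1)$) works in your argument, since your $\sqrt{\NODEINPUT{}\log(1/\delta)}$ bound with $\delta=\exp(-\sqrt{\NODEINPUT{}})$ is exactly of this form.

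Second, in the logistic case your plan hinges on concentration of the empirical sum around its expectation \emph{uniformly over all} $\param\in\setR^d$, but the covering/Lipschitz argument you invoke only gives uniformity on a bounded ball, while the event in Definition~\ref{ass:unbiased} quantifies over every $\param$, including the far regime where $\norm{\param-\trueparam}{2}$ is unbounded. As written this is a gap (it is repairable, e.g.\ by convexity of $\lossperinput$: $t\mapsto\unitvector{}^T\nabla\lossperinput(\trueparam+t\unitvector{},\cdot)$ is nondecreasing along rays, so the far regime reduces to a sphere of fixed radius---but you would have to say so). The paper sidesteps the issue by adding and subtracting $\sigmoid(\query{\nodeinput}^T\trueparam)$ inside the gradient: all label randomness is then carried by the single $\param$-independent, zero-mean, bounded vector $\sum_{\nodeinput}(\sigmoid(\query{\nodeinput}^T\trueparam)-\indicator{\answer{\nodeinput}=1})\query{\nodeinput}$, controlled by Hoeffding at level $\NODEINPUT{}^{3/4}$, while the remaining query-only term $\sum_{\nodeinput}\logisticdistance(\query{\nodeinput}^T\param,\query{\nodeinput}^T\trueparam)$ is bounded below \emph{for every realization and every} $\param$ by two pointwise sigmoid inequalities ($\logisticdistance(a,b)\geq c_k\absv{a-b}^2$ on compacts, $\logisticdistance(a,b)\geq d_k\absv{a-b}-d_k$ globally). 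The only uniform concentration the paper then needs is over directions, namely $\sum_{\nodeinput}\absv{\query{\nodeinput}^T\unitvector{}}\geq C\NODEINPUT{}$ for all $\unitvector{}\in\sphere{d-1}$ (sphere covering, Hoeffding, union bound) together with the empirical covariance bound---precisely the compactness-on-the-sphere idea you correctly identify as the crux, but applied to the empirical quantity rather than to a population functional, which is what lets the paper avoid any uniform-in-$\param$ empirical-process argument.
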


\begin{proof}[Sketch of proof]
For linear regression, remarkably, the discrepancy between the empirical and the expected loss functions depends only on a few key random variables, such as $\min{} \textsc{Sp} \left( \frac{1}{\NODEINPUT{}} \sum \query{\nodeinput} \query{\nodeinput}^T \right)$ and $\sum \noise{\nodeinput} \query{\nodeinput}$, which can be controlled by appropriate concentration bounds.
Meanwhile, for logistic regression, for $\absv{b} \leq \parambound$, we observe that $(a-b)(\sigmoid (a)-\sigmoid(b)) \geq c_\parambound \min( \absv{a-b}, \absv{a-b}^2)$.
Essentially, this proves that gradient-PAC* would hold if the empirical loss was replaced by the expected loss.
The actual proofs, however, are nontrivial, especially in the case of logistic regression, 
which leverages topological considerations to derive a critical uniform concentration bound.
\end{proof}

Now, under very mild assumptions on the regularization $\regularization$ (not even convexity!), which are verified by the $\ell_2^2$, $\ell_2$ and smooth-$\ell_2$ regularizations, we prove that the gradient-PAC* learnability through $\lossperinput$ suffices to guarantee that personalized learning will be locally PAC* learning.

\begin{lemma}
\label{th:pac}
  Consider $\ell_2^2$, $\ell_2$ or smooth-$\ell_2$ regularization.
  If $\lossperinput$ is gradient-PAC* and nonnegative, then personalized learning is locally PAC*-learning.
\end{lemma}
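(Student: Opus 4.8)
The plan is to derive local PAC* learning directly from the first‑order optimality conditions satisfied by a global minimizer $(\optcommon,\optimumfamily)$ of $\globalloss{}$, using gradient‑PAC* to pin each honest coordinate $\optimumsub{\honest}$ to $\trueparamsub{\honest}$. First I would fix $\HONEST$, $\trueparamfamily_{\HONEST}$, $\varepsilon,\delta>0$ and the other users' datasets $\datafamily{-\HONEST}$, set $\parambound\triangleq\max_{\honest\in\HONEST}\norm{\trueparamsub{\honest}}{2}$, and record that $\globalloss{}$ is continuous and coercive in $(\common,\paramfamily)$ for each of the three regularizations — coercivity in $\paramsub{\node}$ comes from the $\reglocalweight{}\norm{\paramsub{\node}}{2}^2$ term and coercivity in $\common$ from $\regularization(\common,\paramsub{\node})\to\infty$ as $\norm{\common-\paramsub{\node}}{2}\to\infty$ — so a global minimum exists. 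The goal then becomes: on a ``good event'' of probability tending to $1$, show $\norm{\optimumsub{\honest}(\datafamily{})-\trueparamsub{\honest}}{2}\to0$ as $\NODEINPUT{}\to\infty$ for every $\honest\in\HONEST$ at once; taking $\NODEINPUT{}$ large enough for the failure probability to fall below $\delta$ and the bound below $\varepsilon$ then gives Definition~\ref{def:pac}. The good event is the intersection, over $\honest\in\HONEST$, of the gradient‑PAC* event $\event(\data{\honest},\trueparamsub{\honest},\NODEINPUT{},A_\parambound,B_\parambound,\alpha_\parambound)$ and of the auxiliary event $\sum_{\datapoint\in\data{\honest}}\lossperinput(\trueparamsub{\honest},\datapoint)=O(\NODEINPUT{})$ (a routine concentration bound, using that the per‑point loss at the preferred model has finite mean); a union bound over the $\card{\HONEST}$ honest users handles simultaneity.

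Working on the good event, stationarity of $\globalloss{}$ in the $\paramsub{\honest}$ block gives $\nabla\localloss{\honest}(\optimumsub{\honest},\data{\honest})=-v_\honest$ for some (sub)gradient $v_\honest\in\partial_{\paramsub{\honest}}\regularization(\optcommon,\optimumsub{\honest})$ (for the non‑smooth $\ell_2$ regularization this uses that $\globalloss{}$ is a differentiable loss plus a convex regularizer, so $0$ is in its subdifferential at the minimum and the sum rule applies). Writing $d_\honest\triangleq\optimumsub{\honest}-\trueparamsub{\honest}$, taking the inner product of this identity with $d_\honest$ and invoking gradient‑PAC* at $\param=\optimumsub{\honest}$ yields the core inequality
\[
A_\parambound\NODEINPUT{}\,\min\!\left\lbrace\norm{d_\honest}{2},\norm{d_\honest}{2}^2\right\rbrace-B_\parambound\NODEINPUT{}^{\alpha_\parambound}\norm{d_\honest}{2}\;\le\;-\,d_\honest^{T}v_\honest .
\]
For the $\ell_2$ and smooth‑$\ell_2$ regularizations every subgradient satisfies $\norm{v_\honest}{2}\le\regweightsub{}$, so the right‑hand side is at most $\regweightsub{}\norm{d_\honest}{2}$; since $\alpha_\parambound<1$, for $\NODEINPUT{}$ large the case $\norm{d_\honest}{2}>1$ is impossible (it would give $A_\parambound\NODEINPUT{}\le B_\parambound\NODEINPUT{}^{\alpha_\parambound}+\regweightsub{}$), and then the case $\norm{d_\honest}{2}\le1$ forces $\norm{d_\honest}{2}\le(B_\parambound\NODEINPUT{}^{\alpha_\parambound}+\regweightsub{})/(A_\parambound\NODEINPUT{})=O(\NODEINPUT{}^{\alpha_\parambound-1})\to0$.

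The $\ell_2^2$ regularization needs one more ingredient because the penalty gradient is unbounded: $v_\honest=2\regweightsub{}(\optimumsub{\honest}-\optcommon)$, so $-d_\honest^{T}v_\honest=-2\regweightsub{}\norm{d_\honest}{2}^2-2\regweightsub{}\,d_\honest^{T}(\trueparamsub{\honest}-\optcommon)\le2\regweightsub{}\norm{d_\honest}{2}\big(\norm{\trueparamsub{\honest}}{2}+\norm{\optcommon}{2}\big)$ after discarding the favourable quadratic term, so I need an a priori bound on $\norm{\optcommon}{2}$. Stationarity in $\common$ gives $\optcommon=\tfrac1\NODE\sum_{\node}\optimumsub{\node}$, hence $\norm{\optcommon}{2}\le\max_\node\norm{\optimumsub{\node}}{2}$, and since $\lossperinput\ge0$ makes every summand of $\globalloss{}$ nonnegative, $\reglocalweight{}\norm{\optimumsub{\node}}{2}^2\le\localloss{\node}(\optimumsub{\node},\data{\node})\le\globalloss{}(\optcommon,\optimumfamily,\datafamily{})$ for every $\node$. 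Evaluating $\globalloss{}$ at the reference point with $\common=0$, $\paramsub{\honest}=\trueparamsub{\honest}$ for $\honest\in\HONEST$ and $\paramsub{\node}=0$ for $\node\notin\HONEST$, all terms are bounded by constants independent of $\NODEINPUT{}$ except the honest losses $\reglocalweight{}\norm{\trueparamsub{\honest}}{2}^2+\sum_{\datapoint\in\data{\honest}}\lossperinput(\trueparamsub{\honest},\datapoint)$, which are $O(\NODEINPUT{})$ on the good event; hence $\norm{\optcommon}{2}=O(\sqrt{\NODEINPUT{}})$. Feeding this back into the core inequality gives $A_\parambound\NODEINPUT{}\min\{\norm{d_\honest}{2},\norm{d_\honest}{2}^2\}\le(B_\parambound\NODEINPUT{}^{\alpha_\parambound}+O(\sqrt{\NODEINPUT{}}))\norm{d_\honest}{2}$, and, exactly as before, $\alpha_\parambound<1$ rules out $\norm{d_\honest}{2}>1$ for $\NODEINPUT{}$ large and then forces $\norm{d_\honest}{2}=O(\NODEINPUT{}^{\alpha_\parambound-1}+\NODEINPUT{}^{-1/2})\to0$.

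I expect the main obstacle to be exactly this a priori control of $\norm{\optcommon}{2}$ in the $\ell_2^2$ case: the naive estimate — $\optcommon$ is an average of the $\optimumsub{\node}$, each of which is itself pulled towards $\optcommon$ — is circular and does not close. It is broken by using that the non‑honest datasets $\datafamily{-\HONEST}$ are fixed independently of $\NODEINPUT{}$, so a single $\NODEINPUT{}$‑independent reference point already certifies $\globalloss{}(\optcommon,\optimumfamily,\datafamily{})=O(\NODEINPUT{})$; this suffices precisely because the gradient‑PAC* restoring term grows like $\NODEINPUT{}$, i.e.\ strictly faster than the $\sqrt{\NODEINPUT{}}\cdot\norm{d_\honest}{2}$ perturbation it must dominate, leaving a slack that shrinks the error to $0$.
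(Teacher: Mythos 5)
Your core mechanism is the same as the paper's: write the first-order optimality condition of the global loss in the $\paramsub{\honest}$ block, take the inner product with $\optimumsub{\honest}-\trueparamsub{\honest}$, invoke the gradient-PAC* inequality at $\param=\optimumsub{\honest}$, and conclude once the regularization (sub)gradient term is shown to be negligible compared to $A_\parambound\NODEINPUT{}$. For the $\ell_2$ and smooth-$\ell_2$ regularizations your treatment is correct and in fact more direct than the paper's, since there every (sub)gradient of $\regularization$ in $\paramsub{\honest}$ has norm at most $\regweightsub{}$ and no localization of the optimum is needed; the paper instead runs a single compactness argument covering all three regularizations.

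The gap is in the $\ell_2^2$ branch, exactly at the step you single out. To get $\norm{\optcommon}{2}=O(\sqrt{\NODEINPUT{}})$ you bound the optimal global loss by its value at the reference point, and bound that reference value by $O(\NODEINPUT{})$ via the auxiliary event $\sum_{\datapoint\in\data{\honest}}\lossperinput(\trueparamsub{\honest},\datapoint)=O(\NODEINPUT{})$, justified by assuming the per-point loss at the preferred model has finite mean. That integrability hypothesis is not among the lemma's assumptions: gradient-PAC* constrains only $\nabla\independentloss{}$, never the loss values, and nonnegativity does not help. Concretely, adding to $\lossperinput$ a nonnegative, parameter-independent term $f(\datapoint)$ with infinite (or superlinearly heavy-tailed) expectation preserves gradient-PAC* and nonnegativity and moves no minimizer, yet invalidates your $O(\NODEINPUT{})$ bound on the optimal loss, hence your $O(\sqrt{\NODEINPUT{}})$ bound on $\norm{\optcommon}{2}$; so as written your argument proves the lemma only under an extra hypothesis. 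The paper avoids this by never bounding the loss in absolute terms: Lemmas~\ref{lemma:bounded_optimum} and~\ref{lemma:bounded_global_optimum} integrate the gradient-PAC* inequality along the segment from $\trueparamsub{\honest}$ to $\paramsub{\honest}$ and compare the optimum against a reference configuration in which the honest terms $\sum_{\datapoint\in\data{\honest}}\lossperinput(\trueparamsub{\honest},\datapoint)$ appear on both sides and cancel, yielding an $\NODEINPUT{}$-independent compact set containing $(\optcommon,\optimumfamily_{\HONEST})$, on which the regularization gradient is bounded by a constant $E_{K_\HONEST}$. Your proof is repairable in the same spirit without the extra assumption — make the honest data terms cancel (first deduce an $\NODEINPUT{}$-independent bound on each $\norm{\optimumsub{\honest}-\trueparamsub{\honest}}{2}$ from the integrated inequality, then bound $\regweightsub{}\norm{\optcommon-\optimumsub{\honest}}{2}^2$ by the reference loss minus $\sum_{\honest'}\independentloss{\honest'}(\optimumsub{\honest'})$, which is $O(\NODEINPUT{}^{\alpha_\parambound})$ rather than $O(\NODEINPUT{})$) — but the submitted $\ell_2^2$ argument rests on a hypothesis the lemma does not grant.
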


\begin{proof}[Sketch of proof]
  Given other users' datasets, $\regularization$ yields a fixed bias.
  But as the user provides more data, by gradient-PAC*, the local loss dominates, thereby guaranteeing local PAC*-learning.
  Appendix~\ref{sec:pac_proof} provides a full proof. 
\end{proof}

Combining the two lemmas clearly yields theorems~\ref{th:linear_regression} and~\ref{th:logistic_regression} as special cases.
Note that our result actually applies to a more general set of regularizations and losses.

\subsection{The Case of Deep Neural Networks}

Deep neural networks generally do \emph{not} verify gradient PAC*.
After all, because of symmetries like neuron swapping, different values of the parameters might compute the same neural network function.
Thus the ``preferred model'' $\trueparam$ is arguably ill-defined for neural networks\footnote{Evidently, our definition could be modified to focus on the computed function, rather than to the model parameters.}.
Nevertheless, we may consider a strategic user who only aims to bias the last layer.
In particular, assuming that all layers but the last one of a neural network are pretrained and fixed, 
{ thereby defining a ``shared representation''~\cite{CollinsHMS21},
and assuming the last layer performs a linear regression or classification,
then our theory essentially applies to the fine-tuning of the parameters of the last layer (sometimes known as the ``head'').}

{Note that for our data poisoning reconstruction (see Section~\ref{sec:data_poisoning_attack}) to be applicable, 
the attacker would need to have the capability to generate a data point whose vector representation matches any given predefined latent vector.
In certain applications, this can be achieved through generative networks~\cite{GoodfellowPMXWO20}.
If so, then our data poisoning attacks would apply as well to deep neural network head tuning.}

\section{A Practical Data Poisoning Attack}
\label{sec:data_poisoning_attack}

We now construct a practical data poisoning attack, by introducing a new gradient attack, and by then leveraging our equivalence to turn it into a data poisoning attack.

\subsection{The Counter-Gradient Attack}
\label{sec:counter_gradient_attack}

We define a simple, general and practical gradient attack, which we call the counter-gradient attack (\CounterGradientAttack{}). 
Intuitively, this attack estimates the sum $\truegradient{-\strategicnode}{\iteration}$ of the gradients of other users based on its value at the previous iteration, which can be inferred from the way the global model $\common^{\iteration -1}$ was updated into $\common^\iteration$.
More precisely, apart from initialization $\estimatedgradient{-\strategicnode}{1} \triangleq 0$, \CounterGradientAttack{} makes the estimation
\begin{equation}
\label{eq:estimated_gradient}
    \estimatedgradient{-\strategicnode}{\iteration} \triangleq \frac{\common^{\iteration-1} - \common^{\iteration}}{\learningrate{\iteration - 1}} - \gradient{\strategicnode}{\iteration-1}
    = \truegradient{-\strategicnode}{\iteration-1}.
\end{equation}
Strategic user $\strategicnode$ then reports the plausible gradient that moves the global model closest to the user's target model $\trueparamsub{\strategicnode}$, assuming others report $\estimatedgradient{-\strategicnode}{\iteration}$.
In other words, at every iteration, \CounterGradientAttack{} 
reports
\begin{equation}
\label{eq:counter-gradient-attack}
    \gradient{\strategicnode}{\iteration}
    \in \argmin_{\gradient{}{} \in \GRADIENT(\common^\iteration)}
    \norm{\common^\iteration - \learningrate{\iteration}
    (\estimatedgradient{-\strategicnode}{\iteration} + \gradient{}{})
    - \trueparamsub{\strategicnode}}{2}.
\end{equation}
Note that this attack only requires user $\strategicnode$ to know the learning rates $\learningrate{\iteration-1}$ and $\learningrate{\iteration}$, the global models $\common^{\iteration-1}$ and $\common^{\iteration}$, and their target model $\trueparamsub{\strategicnode}$.

\paragraph{Computation of \CounterGradientAttack{}.} 
Define $\nonnormalizedgradient{\strategicnode}{\iteration} \triangleq \gradient{\strategicnode}{\iteration-1}
  + \frac{\common^\iteration - \trueparamsub{\strategicnode}}{\learningrate{\iteration}}
  - \frac{\common^{\iteration-1} - \common^{\iteration}}{\learningrate{\iteration - 1}}$.
For convex sets $\GRADIENT(\common^\iteration)$, it is straightforward to see that \CounterGradientAttack{} boils down to computing the orthogonal projection of $\nonnormalizedgradient{\strategicnode}{\iteration}$ on $\GRADIENT(\common^\iteration)$.
This yields very simple computations for $\ell_2^2$, $\ell_2$ and smooth-$\ell_2$ regularizations.

\begin{proposition}
\label{prop:cga_smooth_ell2}
  For $\ell_2^2$ regularization, \CounterGradientAttack{} reports $\gradient{\strategicnode}{\iteration} =
    \nonnormalizedgradient{\strategicnode}{\iteration}$.
  For $\ell_2$ or smooth-$\ell_2$ regularization,
  \CounterGradientAttack{} reports $\gradient{\strategicnode}{\iteration} = \nonnormalizedgradient{\strategicnode}{\iteration} \min \set{1, \regweightsub{} / \norm{\nonnormalizedgradient{\strategicnode}{\iteration}}{2}}$.
\end{proposition}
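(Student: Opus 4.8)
The plan is to show that the argmin defining \CounterGradientAttack{} in~(\ref{eq:counter-gradient-attack}) is nothing but the Euclidean projection of $\nonnormalizedgradient{\strategicnode}{\iteration}$ onto $\GRADIENT(\common^\iteration)$, and then to evaluate that projection in each of the three cases. First I would rewrite the objective. Since $\learningrate{\iteration} > 0$, factoring it out and substituting the definition of $\estimatedgradient{-\strategicnode}{\iteration}$ from~(\ref{eq:estimated_gradient}) together with the definition of $\nonnormalizedgradient{\strategicnode}{\iteration}$, a direct computation yields
\[
\common^\iteration - \learningrate{\iteration}\bigl(\estimatedgradient{-\strategicnode}{\iteration} + \gradient{}{}\bigr) - \trueparamsub{\strategicnode}
= -\learningrate{\iteration}\left(\gradient{}{} + \estimatedgradient{-\strategicnode}{\iteration} - \frac{\common^\iteration - \trueparamsub{\strategicnode}}{\learningrate{\iteration}}\right)
= -\learningrate{\iteration}\bigl(\gradient{}{} - \nonnormalizedgradient{\strategicnode}{\iteration}\bigr).
\]
Hence the objective in~(\ref{eq:counter-gradient-attack}) equals $\learningrate{\iteration}\norm{\gradient{}{} - \nonnormalizedgradient{\strategicnode}{\iteration}}{2}$, so minimizing it over $\gradient{}{} \in \GRADIENT(\common^\iteration)$ returns the orthogonal projection of $\nonnormalizedgradient{\strategicnode}{\iteration}$ onto $\GRADIENT(\common^\iteration)$, which is well defined and unique because $\GRADIENT(\common^\iteration)$ is a nonempty closed convex set in each case considered.

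It then remains to compute this projection for each regularization. For $\ell_2^2$ regularization, $\GRADIENT(\common^\iteration) = \setR^d$ (as recalled in Section~\ref{sec:model}), so the projection is the identity and $\gradient{\strategicnode}{\iteration} = \nonnormalizedgradient{\strategicnode}{\iteration}$. For $\ell_2$ and smooth-$\ell_2$ regularization, $\GRADIENT(\common^\iteration) = \ball(0,\regweightsub{})$ is the closed Euclidean ball of radius $\regweightsub{}$ centered at the origin; projecting a vector $\varx$ onto such a ball leaves it unchanged when $\norm{\varx}{2} \leq \regweightsub{}$ and rescales it to $\regweightsub{}\varx/\norm{\varx}{2}$ otherwise, which is written compactly as $\varx\min\set{1, \regweightsub{}/\norm{\varx}{2}}$ (the case $\varx = 0$ being trivial). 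Taking $\varx = \nonnormalizedgradient{\strategicnode}{\iteration}$ gives the claimed formula.

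I expect no genuine obstacle here: the proposition is an algebraic identity followed by the closed form of the projection onto a Euclidean ball. The only points requiring some care are (i) the sign bookkeeping when factoring out $-\learningrate{\iteration}$ and substituting the definitions of $\estimatedgradient{-\strategicnode}{\iteration}$ and $\nonnormalizedgradient{\strategicnode}{\iteration}$, and (ii) the justification that $\GRADIENT(\common^\iteration)$ really is $\ball(0,\regweightsub{})$ for the two $\ell_2$-type regularizations, which follows by differentiating $\regweightsub{}\norm{\common - \param}{2}$ and $\regweightsub{}\sqrt{1 + \norm{\common - \param}{2}^2}$ in $\common$, noting that the resulting gradients have norm at most $\regweightsub{}$ with every such value attained (and, at $\common = \param$, the entire subdifferential ball in the $\ell_2$ case), and then taking closures, exactly as asserted in Section~\ref{sec:model}.
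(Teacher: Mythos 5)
Your proposal is correct and follows essentially the same route as the paper's proof: rewriting the objective in the argmin as $\learningrate{\iteration}\norm{\gradient{}{} - \nonnormalizedgradient{\strategicnode}{\iteration}}{2}$, so that \CounterGradientAttack{} computes the orthogonal projection of $\nonnormalizedgradient{\strategicnode}{\iteration}$ onto $\GRADIENT(\common^\iteration)$, which is the identity for $\ell_2^2$ and the usual radial rescaling onto the ball $\ball(0,\regweightsub{})$ for $\ell_2$ and smooth-$\ell_2$. You merely spell out the sign bookkeeping and the characterization of $\GRADIENT(\common^\iteration)$ that the paper leaves implicit.
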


\begin{proof}
  Equation (\ref{eq:counter-gradient-attack}) boils down to minimizing the distance between $\frac{\common^\iteration - \trueparamsub{\strategicnode}}{\learningrate{\iteration}} - \estimatedgradient{-\strategicnode}{\iteration}$ and $\GRADIENT(\common)$,
  which is the ball $\ball(0, \regweightsub{})$.
  This minimum is the orthogonal projection.
\end{proof}

\paragraph{Theoretical analysis.}

We   prove that \CounterGradientAttack{} is perfectly successful against $\ell_2^2$ regularization.
To do so, we suppose that, at each iteration $\iteration$ and for each user $\node \neq \strategicnode$, the local models $\paramsub{\node}$ are fully optimized with respect to $\common^\iteration$, and the honest gradients of $\truegradient{\node}{\iteration}$ are used to update $\common$.

\begin{theorem}
\label{th:counter_gradient_manipulates_arbitrarily}
    Consider $\ell_2^2$ regularization.
    Assume that $\lossperinput$ is convex and $L_\lossperinput$-smooth, and that $\learningrate{\iteration} = \learningrate{}$ is small enough.
    Then \CounterGradientAttack{} is converging and optimal, as $\common^\iteration \rightarrow \trueparamsub{\strategicnode}$.
\end{theorem}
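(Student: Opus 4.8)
The plan is to unwind the $\CounterGradientAttack{}$ dynamics into an autonomous (second‑order) recursion for $\common^\iteration$ and then recognise that recursion as a contraction. First I would note that, since $\regularization(\common,\param)=\regweightsub{}\norm{\common-\param}{2}^2$ gives $\GRADIENT(\common)=\setR^d$, the minimisation in~(\ref{eq:counter-gradient-attack}) is unconstrained and its minimum value $0$ is attained, so user $\strategicnode$ reports exactly $\gradient{\strategicnode}{\iteration}=\nonnormalizedgradient{\strategicnode}{\iteration}$ (Proposition~\ref{prop:cga_smooth_ell2}). By hypothesis every honest $\node\neq\strategicnode$ fully optimizes its local model against $\common^\iteration$ and sends the honest gradient $\truegradient{\node}{\iteration}=\nabla_\common\regularization(\common^\iteration,\paramsub{\node}^\iteration)=2\regweightsub{}(\common^\iteration-\paramsub{\node}^\iteration)$; writing $\Phi(\common)$ for the sum $\sum_{\node\neq\strategicnode}\truegradient{\node}{\iteration}$ seen as a function of the global model, we have $\truegradient{-\strategicnode}{\iteration}=\Phi(\common^\iteration)$. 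Combining the definition~(\ref{eq:estimated_gradient}) of $\estimatedgradient{-\strategicnode}{\iteration}$ with the server update $\common^\iteration=\common^{\iteration-1}-\learningrate{}(\Phi(\common^{\iteration-1})+\gradient{\strategicnode}{\iteration-1})$ shows the attacker's estimate is in fact \emph{exact}, $\estimatedgradient{-\strategicnode}{\iteration}=\Phi(\common^{\iteration-1})$, so the $\CounterGradientAttack{}$ report is $\gradient{\strategicnode}{\iteration}=\learningrate{}^{-1}(\common^\iteration-\trueparamsub{\strategicnode})-\Phi(\common^{\iteration-1})$. Substituting this back into the server update yields, for all $\iteration\geq 2$,
\[
  \common^{\iteration+1}=\trueparamsub{\strategicnode}-\learningrate{}\bigl(\Phi(\common^\iteration)-\Phi(\common^{\iteration-1})\bigr),
\]
with the matching initialization $\common^{2}=\trueparamsub{\strategicnode}-\learningrate{}\Phi(\common^{1})$ from $\estimatedgradient{-\strategicnode}{1}=0$; in particular the only possible limit is $\trueparamsub{\strategicnode}$.

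The second ingredient I would establish is that $\Phi$ is globally Lipschitz. Here $\Phi=\nabla F$ with $F\triangleq\sum_{\node\neq\strategicnode}F_\node$ and $F_\node(\common)\triangleq\min_{\param}\bigl(\localloss{\node}(\param,\data{\node})+\regweightsub{}\norm{\common-\param}{2}^2\bigr)$. Each $F_\node$ is a (rescaled) Moreau envelope of the closed convex function $\localloss{\node}(\cdot,\data{\node})$, hence convex and $2\regweightsub{}$‑smooth --- equivalently, the inner minimizer $\paramsub{\node}^\iteration$ is a proximal map, hence nonexpansive in $\common$. Consequently $\Phi$ is $L_\Phi$‑Lipschitz with $L_\Phi\leq 2\regweightsub{}(\NODE-1)$. (Only convexity of $\lossperinput$ is needed for this step, not its smoothness.)

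I would then close the argument with Banach's fixed‑point theorem applied to the extended state $z_\iteration\triangleq(\common^\iteration,\common^{\iteration-1})\in(\setR^d)^2$, which satisfies $z_{\iteration+1}=T(z_\iteration)$ for the map $T(\common,\common')\triangleq\bigl(\trueparamsub{\strategicnode}-\learningrate{}(\Phi(\common)-\Phi(\common')),\,\common\bigr)$, whose unique fixed point is $(\trueparamsub{\strategicnode},\trueparamsub{\strategicnode})$. Using the $L_\Phi$‑Lipschitz bound, a direct estimate in the weighted norm $\norm{(\common,\common')}{}\triangleq\norm{\common}{2}+c\,\norm{\common'}{2}$ with $c\triangleq\sqrt{\learningrate{}L_\Phi}$ gives $\norm{T(z_1)-T(z_2)}{}\leq\kappa\,\norm{z_1-z_2}{}$ with $\kappa\triangleq\learningrate{}L_\Phi+\sqrt{\learningrate{}L_\Phi}$, and $\kappa<1$ whenever $\learningrate{}$ is small enough (for instance $\learningrate{}L_\Phi<1/4$, i.e.\ $\learningrate{}<1/(8\regweightsub{}(\NODE-1))$). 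Hence $z_\iteration\to(\trueparamsub{\strategicnode},\trueparamsub{\strategicnode})$, so $\common^\iteration\to\trueparamsub{\strategicnode}$, which is optimal since reaching the exact target model is the best an attacker can achieve; moreover $\gradient{\strategicnode}{\iteration}=\learningrate{}^{-1}(\common^\iteration-\trueparamsub{\strategicnode})-\Phi(\common^{\iteration-1})\to-\Phi(\trueparamsub{\strategicnode})$, so $\CounterGradientAttack{}$ is converging as well.

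The step I expect to be the crux is the contraction estimate, precisely because the recursion is genuinely \emph{second‑order}: $\CounterGradientAttack{}$ reconstructs the others' aggregate gradient with a one‑step delay, so $\common^{\iteration+1}$ depends on both $\common^\iteration$ and $\common^{\iteration-1}$, and no naive one‑step contraction is available. What rescues the argument --- and must be exploited carefully --- is that the ``estimation mismatch'' term $\Phi(\common^\iteration)-\Phi(\common^{\iteration-1})$ carries an explicit factor $\learningrate{}$, so on the extended state the off‑diagonal coupling can be absorbed by a weight of order $\sqrt{\learningrate{}}$; by comparison, the exact recovery $\estimatedgradient{-\strategicnode}{\iteration}=\truegradient{-\strategicnode}{\iteration-1}$ and the global Lipschitz control of $\Phi$ via the Moreau‑envelope structure of the honest users' value functions are routine bookkeeping.
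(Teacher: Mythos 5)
Your proof is correct, and its backbone coincides with the paper's: both first observe that under $\ell_2^2$ the \CounterGradientAttack{} report is unconstrained and the delayed estimate of the honest aggregate gradient is exact, and both then derive the same key identity $\common^{\iteration+1}=\trueparamsub{\strategicnode}-\learningrate{}\left(\truegradient{-\strategicnode}{\iteration}-\truegradient{-\strategicnode}{\iteration-1}\right)$, after which only the limit $\trueparamsub{\strategicnode}$ is possible. You diverge in the two supporting ingredients, and both divergences are sound. First, for Lipschitz control of $\common\mapsto\truegradient{-\strategicnode}{}(\common)$, the paper goes through its general machinery (Lemma~\ref{lemma:minDerivative} and Lemma~\ref{lemma:minimizedLossSmooth}), whose smoothness constant $L$ is inherited from the $L_\lossperinput$-smoothness of $\lossperinput$; you instead recognize each $\common\mapsto\min_{\paramsub{\node}}\left(\localloss{\node}(\paramsub{\node},\data{\node})+\regweightsub{}\norm{\common-\paramsub{\node}}{2}^2\right)$ as a Moreau envelope, so nonexpansiveness of the proximal map gives the explicit constant $2\regweightsub{}(\NODE-1)$ using only convexity of $\lossperinput$ — a more elementary and assumption-lighter step that also yields an explicit learning-rate threshold, whereas the paper's $\learningrate{}\leq 1/3L$ refers to a less explicit $L$ (the paper keeps the smoothness hypothesis partly because the same lemmas are reused elsewhere, e.g.\ for Proposition~\ref{prop:convergence}). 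Second, to conclude from the genuinely second-order recursion, the paper bounds successive differences $u_\iteration=\norm{\common^{\iteration+1}-\common^\iteration}{2}$ via $u_{\iteration+1}\leq L\learningrate{}(u_\iteration+u_{\iteration-1})$, shows geometric decay and summability, deduces that $\common^\iteration$ is Cauchy, and only then identifies the limit by passing to the limit in the recursion; you instead run Banach's fixed-point theorem on the extended state $(\common^\iteration,\common^{\iteration-1})$ with the weighted norm $\norm{\common}{2}+\sqrt{\learningrate{}L_\Phi}\,\norm{\common'}{2}$, which establishes convergence and identifies the unique fixed point $(\trueparamsub{\strategicnode},\trueparamsub{\strategicnode})$ in one stroke; your contraction estimate ($\kappa=\learningrate{}L_\Phi+\sqrt{\learningrate{}L_\Phi}<1$) is quantitatively comparable to the paper's condition, and your observations about the initialization (the recursion only holding from $\iteration\geq 2$) and about the convergence of $\gradient{\strategicnode}{\iteration}$ itself are handled correctly.
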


\begin{proof}[Sketch of proof]
The main challenge is to guarantee that the other users' gradients $\truegradient{\node}{\iteration}$ for $\node \neq \strategicnode$ remain sufficiently stable over time to guarantee convergence, which can be done by leveraging $L$-smoothness.
The full proof, with the necessary upper-bound on $\learningrate{}$, is given in Appendix~\ref{app:cga_vs_l22}.
\end{proof}

The analysis of the convergence against smooth-$\ell_2$ is unfortunately significantly more challenging.
Here, we simply make a remark about \CounterGradientAttack{} at convergence.

\begin{proposition}
    If \CounterGradientAttack{} against smooth-$\ell_2$ regularization converges for $\learningrate{\iteration} = \learningrate{}$, then it either achieves perfect manipulation, or it is eventually partially honest, in the sense that the gradient by \CounterGradientAttack{} correctly points towards $\trueparamsub{\strategicnode}$.
\end{proposition}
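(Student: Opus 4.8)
The plan is to pass to the limit in the two recursions that define \CounterGradientAttack{} and then run a short algebraic case split on whether the limiting ``ideal'' gradient $\nonnormalizedgradient{\strategicnode}{\infty}$ lies inside the admissible ball $\ball(0,\regweightsub{})$ or strictly outside it.

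I would work in the same setting as Theorem~\ref{th:counter_gradient_manipulates_arbitrarily}: the losses $\localloss{\node}$ are convex and smooth, every honest user $\node \neq \strategicnode$ fully optimizes $\paramsub{\node}^\iteration$ against $\common^\iteration$ and reports the honest gradient $\truegradient{\node}{\iteration} = \nabla_\common\regularization(\common^\iteration,\paramsub{\node}^\iteration)$, and $\learningrate{\iteration} = \learningrate{}$ is a small constant. In this setting the honest aggregate $\truegradient{-\strategicnode}{\iteration} = \sum_{\node\neq\strategicnode}\truegradient{\node}{\iteration}$ is a continuous function of $\common^\iteration$, so the two natural readings of ``\CounterGradientAttack{} converges'' coincide: if $\common^\iteration \to \common^\infty$ then $\truegradient{-\strategicnode}{\iteration}\to\truegradient{-\strategicnode}{\infty}$, hence $\gradient{\strategicnode}{\iteration}\to\gradient{\strategicnode}{\infty}$, and conversely Proposition~\ref{prop:convergence} gives $\common^\iteration\to\common^\infty$ from convergence of $\gradient{\strategicnode}{\iteration}$. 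Taking the limit of the server update $\common^{\iteration+1} = \common^\iteration - \learningrate{}(\gradient{\strategicnode}{\iteration}+\truegradient{-\strategicnode}{\iteration})$ and using $\common^{\iteration+1}-\common^\iteration\to 0$ yields the fixed-point identity $\gradient{\strategicnode}{\infty} = -\truegradient{-\strategicnode}{\infty}$. Plugging this into~(\ref{eq:estimated_gradient}) gives $\estimatedgradient{-\strategicnode}{\iteration}\to\truegradient{-\strategicnode}{\infty}=-\gradient{\strategicnode}{\infty}$ (the estimator becomes exact), and then the definition of $\nonnormalizedgradient{\strategicnode}{\iteration}$ gives
\[
  \nonnormalizedgradient{\strategicnode}{\infty}
  \;=\; \gradient{\strategicnode}{\infty} + \frac{\common^\infty - \trueparamsub{\strategicnode}}{\learningrate{}}.
\]

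Next I would pass to the limit in the closed form of Proposition~\ref{prop:cga_smooth_ell2}, obtaining $\gradient{\strategicnode}{\infty} = \nonnormalizedgradient{\strategicnode}{\infty}\,\min\{1,\regweightsub{}/\norm{\nonnormalizedgradient{\strategicnode}{\infty}}{2}\}$, and split into two cases. If $\norm{\nonnormalizedgradient{\strategicnode}{\infty}}{2}\le\regweightsub{}$ (the degenerate case $\nonnormalizedgradient{\strategicnode}{\infty}=0$ included, since projecting $0$ onto the ball returns $0$), the projection acts as the identity, so $\gradient{\strategicnode}{\infty}=\nonnormalizedgradient{\strategicnode}{\infty}$; combined with the display above this forces $\common^\infty = \trueparamsub{\strategicnode}$, i.e. perfect manipulation. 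If instead $\norm{\nonnormalizedgradient{\strategicnode}{\infty}}{2}>\regweightsub{}$, set $\lambda \triangleq \regweightsub{}/\norm{\nonnormalizedgradient{\strategicnode}{\infty}}{2}\in(0,1)$, so that $\gradient{\strategicnode}{\infty}=\lambda\,\nonnormalizedgradient{\strategicnode}{\infty}$; substituting the display and rearranging gives $\gradient{\strategicnode}{\infty} = \frac{\lambda}{(1-\lambda)\learningrate{}}\,(\common^\infty-\trueparamsub{\strategicnode})$, hence $-\gradient{\strategicnode}{\infty}$ is a strictly positive multiple of $\trueparamsub{\strategicnode}-\common^\infty$ (and, moreover, $\norm{\gradient{\strategicnode}{\infty}}{2}=\regweightsub{}$ saturates the norm bound). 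In other words, the gradient reported by \CounterGradientAttack{} is aligned with $\common^\infty - \trueparamsub{\strategicnode}$, so the step it induces on the server side pulls the global model straight towards $\trueparamsub{\strategicnode}$ — exactly the ``eventually partially honest'' alternative.

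The only step that is not pure algebra, and thus the one I would pin down most carefully, is the continuity of the honest aggregate gradient as a function of $\common$, which is what transfers convergence of $\common^\iteration$ to convergence of $\truegradient{-\strategicnode}{\iteration}$ (and hence of $\gradient{\strategicnode}{\iteration}$, $\estimatedgradient{-\strategicnode}{\iteration}$ and $\nonnormalizedgradient{\strategicnode}{\iteration}$). This follows from convexity and smoothness of the $\localloss{\node}$ together with the fact that, for $\ell_2^2$, $\ell_2$ and smooth-$\ell_2$ regularization, the inner minimizer $\optimumsub{\node}(\common)$ depends continuously on $\common$; alternatively one may simply quote the limits already established in the proof of Proposition~\ref{prop:convergence}. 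Everything else is taking limits in the two recursions and the one-line case distinction above. It is worth stressing what the argument deliberately does not address: it says nothing about \emph{whether} \CounterGradientAttack{} converges against smooth-$\ell_2$ (the hard point flagged in the text), only about the dichotomy that its limit must satisfy when it does.
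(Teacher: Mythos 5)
Your proposal is correct and follows essentially the same route as the paper's proof: pass to the limit in the \CounterGradientAttack{} recursion to get the fixed-point identity $P\left(\gradient{\strategicnode}{\infty} + \frac{\common^\infty - \trueparamsub{\strategicnode}}{\learningrate{}}\right) = \gradient{\strategicnode}{\infty}$ for the projection $P$ onto $\ball(0,\regweightsub{})$, then split on whether the projection acts as the identity (forcing $\common^\infty = \trueparamsub{\strategicnode}$) or scales by some $\lambda \in (0,1)$ (forcing $\gradient{\strategicnode}{\infty}$ to be a positive multiple of $\common^\infty - \trueparamsub{\strategicnode}$ with norm $\regweightsub{}$). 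Your write-up is just a more explicit version of the paper's three-line argument, with the added (harmless) care about continuity of the honest aggregate and the equivalence of the two notions of convergence.
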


\begin{proof}
   Denote $P$ the projection onto the closed ball $\ball (0, \regweightsub{})$.
   If \CounterGradientAttack{} converges, then, by Proposition~\ref{prop:cga_smooth_ell2}, $P \left(\gradient{\strategicnode}{\infty} + \frac{\common^\infty - \trueparamsub{\strategicnode}}{\learningrate{}} \right) = \gradient{\strategicnode}{\infty}$.
   Thus $\common^\infty - \trueparamsub{\strategicnode}$ and $\gradient{\strategicnode}{\infty}$ must be colinear.
   If perfect manipulation is not achieved (i.e. $\common^\infty \neq \trueparamsub{\strategicnode}$), 
   then we must have $\gradient{\strategicnode}{\infty} = \regweightsub{} \frac{\common^\infty - \trueparamsub{\strategicnode}}{\norm{\common^\infty - \trueparamsub{\strategicnode}}{2}}$.
\end{proof}

{ 
It is interesting that, against smooth-$\ell_2$, \CounterGradientAttack{} actually favors partial honesty.
Overall, this condition is critical for the safety of learning algorithms, as they are usually trained to generalize their training data.
However, it should be stressed that this is evidence that \CounterGradientAttack{} is suboptimal,
as \cite{geometric_median} instead showed that the geometric median rather (slightly) incentivizes untruthful strategic behaviors.
The problem of designing general \emph{strategyproof} learning algorithms is arguably still mostly open, despite recent progress~\cite{MEIR2012123,ChenPPS18,FarhadkhaniGH21}.
}

\paragraph{Empirical evaluation of \CounterGradientAttack{}.}
We deployed \CounterGradientAttack{} to bias the federated learning of MNIST.
We consider a strategic user whose target model is one that labels $0$'s as $1$'s, $1$'s as $2$'s, and so on, until $9$'s that are labeled as $0$'s.
In particular, this target model has a nil accuracy.
Figure~\ref{fig:attack_acc} shows that such a user effectively hacks the $\ell_2^2$ regularization against 10 honest users who each have 6,000 data points of MNIST, in the case where local models only undergo a single gradient step at each iteration, but fails to hack the $\ell_2$ regularization. This suggests the effectiveness of simple defense strategies like the geometric median~\cite{geometric_median,AcharyaH0SDT22}. 
See Appendix~\ref{app:CGA_MNIST} for more details.
We also ran a similar successful attack on the last layer of a deep neural network trained on cifar-10, which is detailed in Appendix~\ref{app:cifar}.

\begin{figure}[h]
    \centering
     \vspace{-0.8mm}
    \includegraphics[width=.8\linewidth]{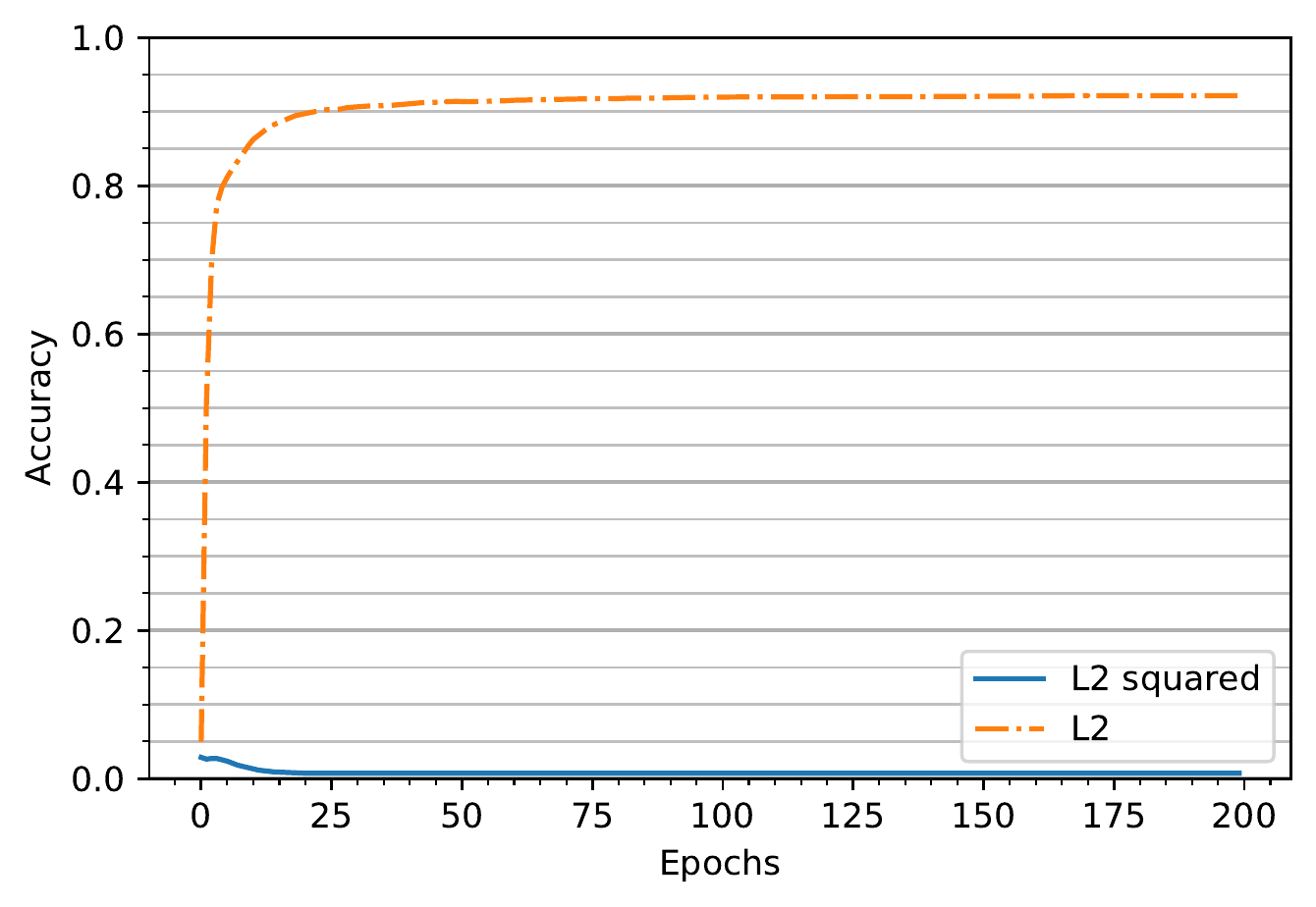}
    \caption{Accuracy of the global model under attack by \CounterGradientAttack{}.}
    \vspace{-2mm}
     \label{fig:attack_acc}
\end{figure}

\begin{figure*}[!ht]
    \centering
        {(a)~\includegraphics[width=0.34\linewidth]{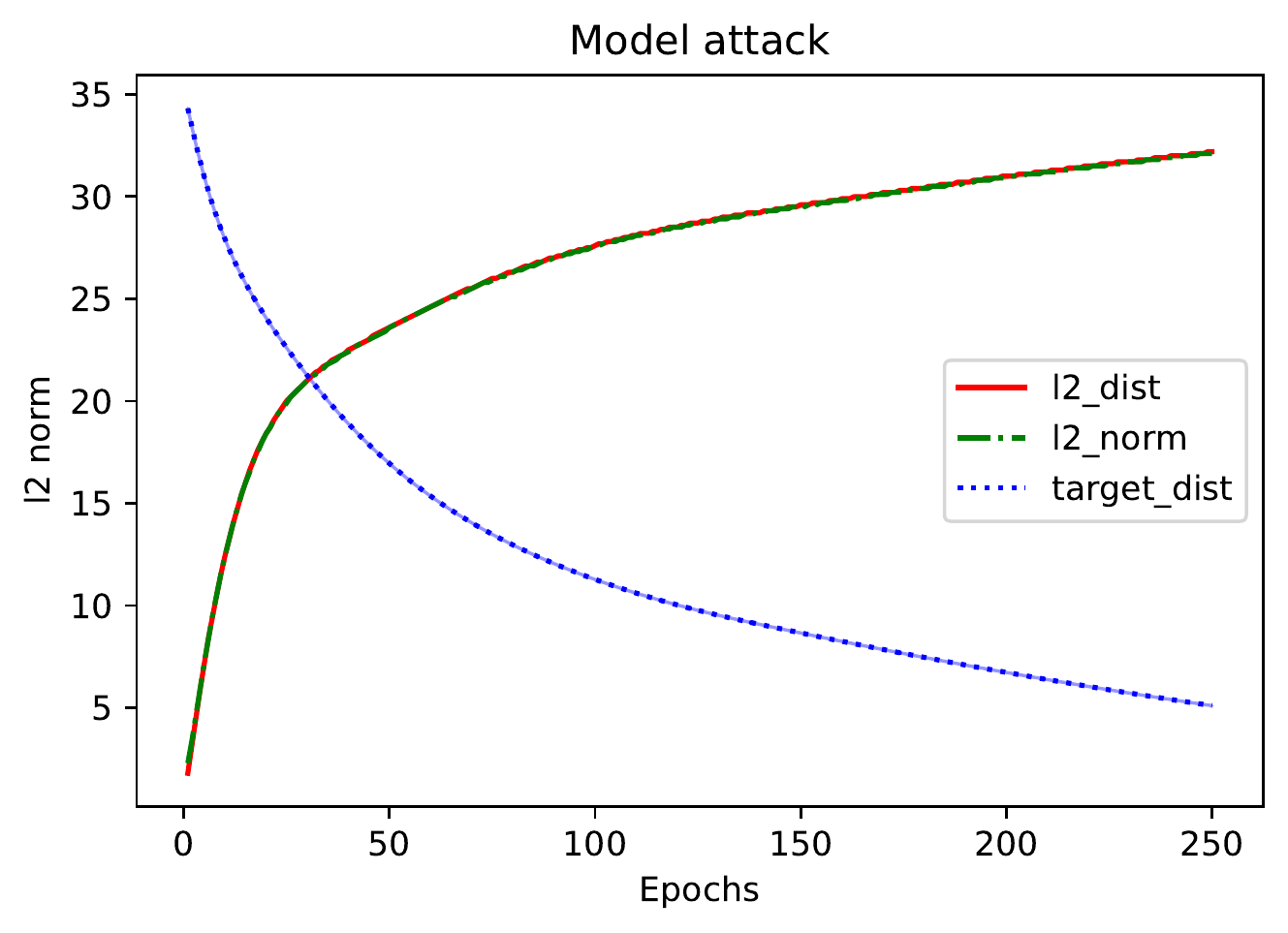}
        \phantomsubcaption\label{fig:model_attack_distance}} \quad \quad
        {(b)~\includegraphics[width=0.34\linewidth]{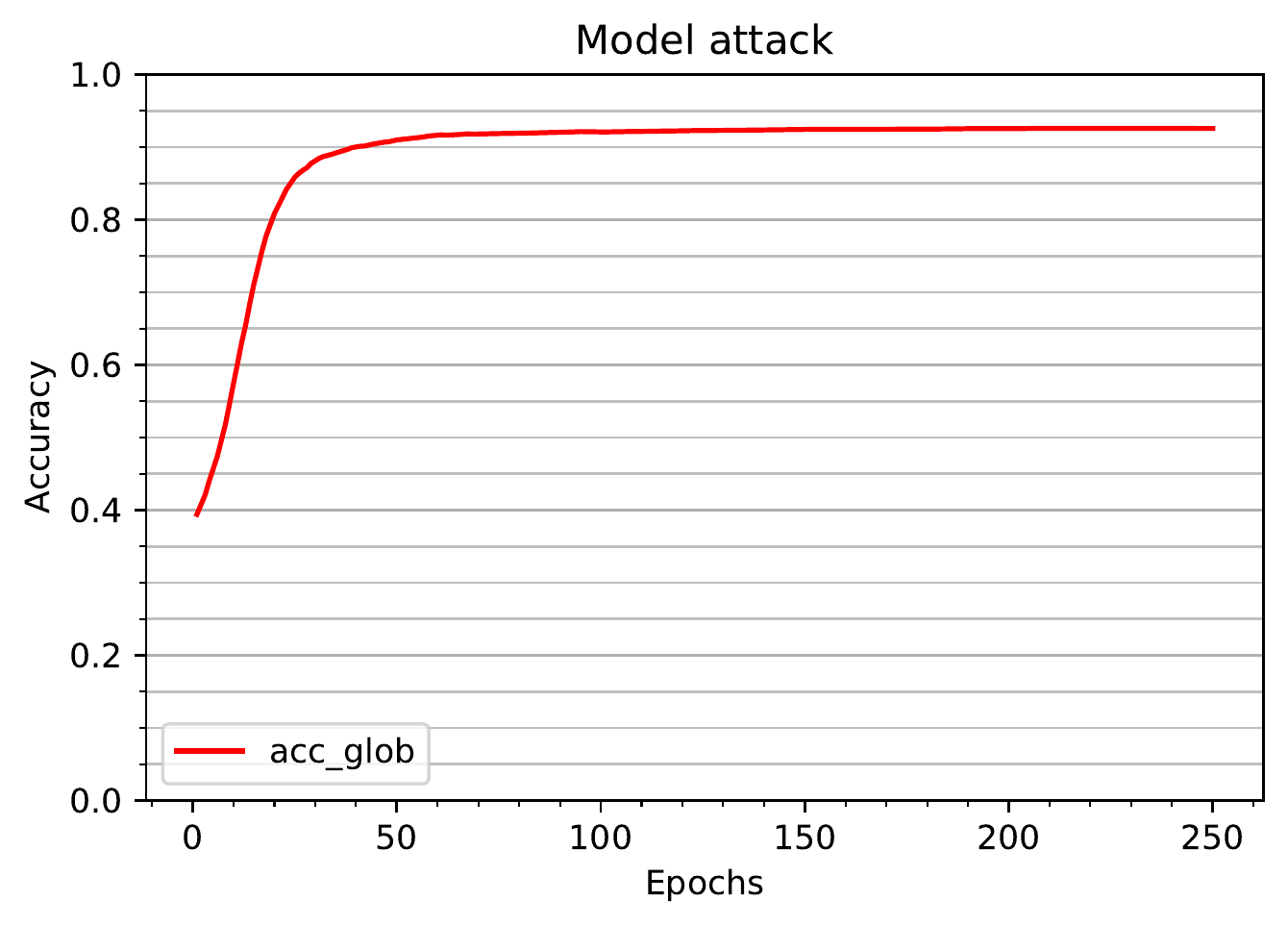}
        \phantomsubcaption\label{fig:model_attack_accuracy}}
        {(c)~\includegraphics[width=0.34\linewidth]{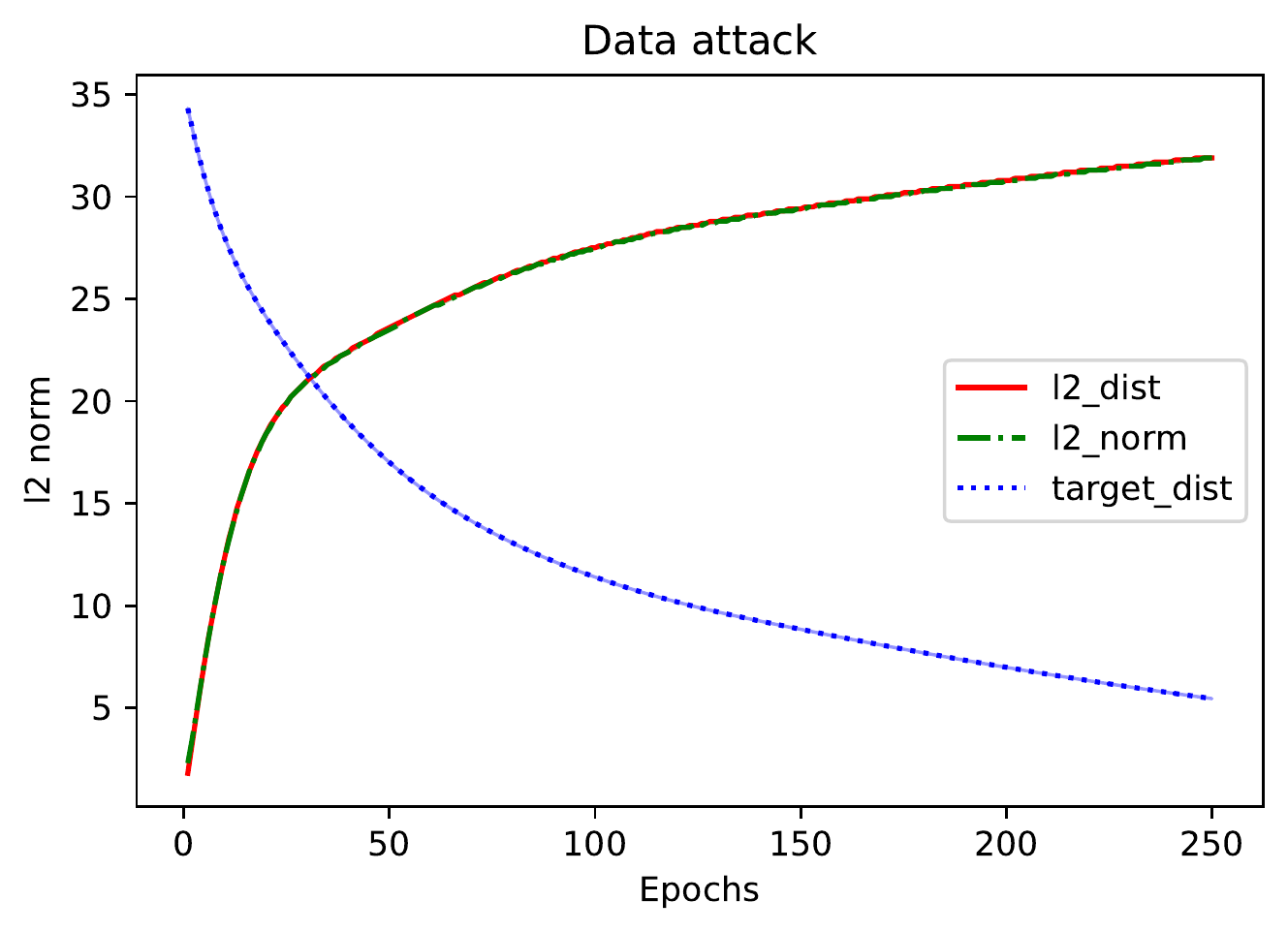}
        \phantomsubcaption\label{fig:gradient_attack}} \quad \quad
        {(d)~\includegraphics[width=0.34\linewidth]{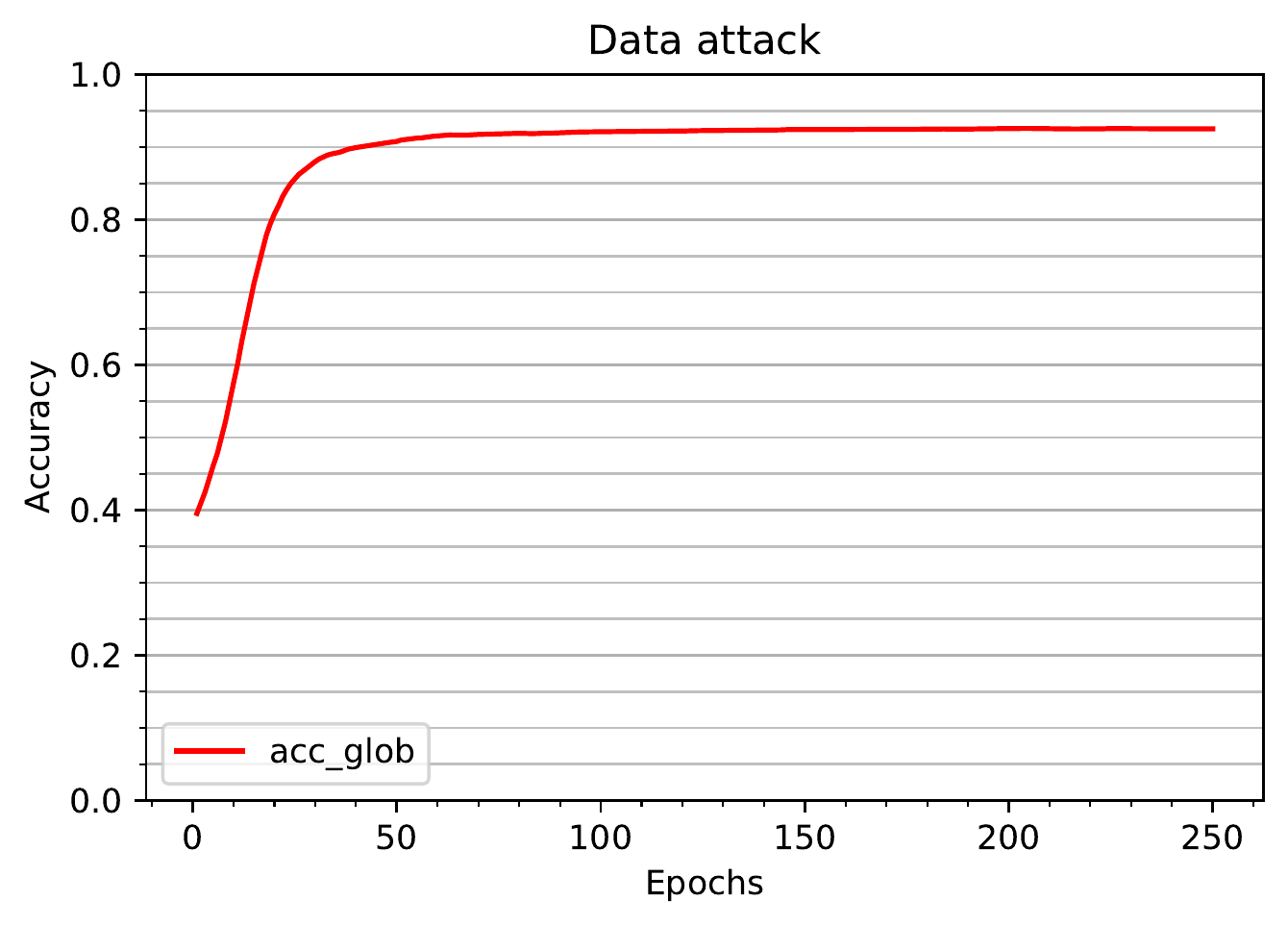}
        \phantomsubcaption\label{fig:data_attack}}
        \vspace{-2mm}
    \caption{
        (\subref{fig:model_attack_distance}) Distance between $\common^\iteration$ and $\trueparamsub{\strategicnode}$ (target\_dist), under model attack (combining \CounterGradientAttack{} and Proposition~\ref{prop:gradient_to_model_ell22}).
        (\subref{fig:model_attack_accuracy}) Accuracy of $\common^\iteration$ according to $\trueparamsub{\strategicnode}$ (which relabels $0\rightarrow1\rightarrow2\rightarrow ...\rightarrow 9 \rightarrow0$), under model attack (combining \CounterGradientAttack{} and Proposition~\ref{prop:gradient_to_model_ell22}).
        (\subref{fig:gradient_attack}) Distance between the global model $\common^\iteration$ and the target model $\trueparamsub{\strategicnode}$ (target\_dist), under our data poisoning attack.
        (\subref{fig:data_attack}) Accuracy of $\common^\iteration$ according to $\trueparamsub{\strategicnode}$ (which relabels $0\rightarrow1\rightarrow2\rightarrow ...\rightarrow 9 \rightarrow0$), under our data poisoning attack.
    }
    \vspace{-2mm}
    \label{fig:main}
\end{figure*}

\subsection{From Gradient Attack to Model Attack Against \texorpdfstring{$\ell_2^2$}{l2}}
\label{sec:data_poisoning}

We now show how to turn a gradient attack into model attack, against $\ell_2^2$ regularization.
It is trivial to transform any gradient $\gradient{\strategicnode}{\infty}$ such that $\common^\infty = \trueparamsub{\strategicnode}$ into a model attack by setting $\strategicvote \triangleq \trueparamsub{\strategicnode} - \frac{1}{2} \gradient{\strategicnode}{\infty}$, as guaranteed by the following result, and as depicted by Figure~\ref{fig:model_attack_distance} and Figure~\ref{fig:model_attack_accuracy}.

\begin{proposition}
\label{prop:gradient_to_model_ell22}
  Consider the $\ell_2^2$ regularization.
  Suppose that $\gradient{\strategicnode}{\iteration} \rightarrow \gradient{\strategicnode}{\infty}$ and $\common^\iteration \rightarrow \trueparamsub{\strategicnode}$, with a constant learning rate $\learningrate{\iteration} = \learningrate{}$.
  Then, under the model attack $\strategicvote \triangleq \trueparamsub{\strategicnode} - \frac{1}{2 \regweightsub{}} \gradient{\strategicnode}{\infty}$,
  the gradient at $\common = \trueparamsub{\strategicnode}$ vanishes, i.e. $\nabla_\common \globalloss{} (\trueparamsub{\strategicnode}, \optimumfamily_{-\strategicnode} (\trueparamsub{\strategicnode}, \datafamily{-\strategicnode}), \strategicvote, \data{-\strategicnode}) = 0$.
\end{proposition}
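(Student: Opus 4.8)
The plan is to pass to the limit in the federated gradient-descent update rule, identify the limiting honest gradients, and then evaluate $\nabla_\common$ of the modified loss by hand.

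First I would use the update $\common^{\iteration+1} = \common^\iteration - \learningrate{} \sum_{\node \in [\NODE]} \gradient{\node}{\iteration}$. Since $\common^\iteration \rightarrow \trueparamsub{\strategicnode}$ and $\learningrate{} > 0$ is constant, $\common^{\iteration+1} - \common^\iteration \rightarrow 0$, hence $\sum_{\node \in [\NODE]} \gradient{\node}{\iteration} \rightarrow 0$. Isolating the strategic user and using $\gradient{\strategicnode}{\iteration} \rightarrow \gradient{\strategicnode}{\infty}$ gives $\sum_{\node \neq \strategicnode} \truegradient{\node}{\iteration} \rightarrow - \gradient{\strategicnode}{\infty}$.

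Second I would rewrite the honest gradients in terms of the honest local models. For $\ell_2^2$ regularization each honest user reports $\truegradient{\node}{\iteration} = \nabla_\common \regularization(\common^\iteration, \paramsub{\node}^\iteration) = 2 \regweightsub{} (\common^\iteration - \paramsub{\node}^\iteration)$. Under the standing assumption that honest local models are fully optimized against the current global model, $\paramsub{\node}^\iteration$ is the unique minimizer of $\localloss{\node}(\cdot, \data{\node}) + \regularization(\common^\iteration, \cdot)$; the $\ell_2^2$ term makes this objective strongly convex, so its minimizer is a continuous (indeed Lipschitz) function of $\common^\iteration$. Consequently $\paramsub{\node}^\iteration \rightarrow \paramsub{\node}^\infty$ with $\paramsub{\node}^\infty = \argmin_{\paramsub{\node}} \localloss{\node}(\paramsub{\node}, \data{\node}) + \regularization(\trueparamsub{\strategicnode}, \paramsub{\node})$, i.e. $(\paramsub{\node}^\infty)_{\node \neq \strategicnode} = \optimumfamily_{-\strategicnode}(\trueparamsub{\strategicnode}, \datafamily{-\strategicnode})$. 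Combined with the first step, $\sum_{\node \neq \strategicnode} 2 \regweightsub{} (\trueparamsub{\strategicnode} - \paramsub{\node}^\infty) = - \gradient{\strategicnode}{\infty}$.

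Finally, only the regularization terms of $\ModifiedLoss{\strategicnode}(\common, \paramfamily_{-\strategicnode}, \strategicvote, \datafamily{-\strategicnode})$ depend on $\common$, so at $\common = \trueparamsub{\strategicnode}$ and $\paramfamily_{-\strategicnode} = (\paramsub{\node}^\infty)_{\node \neq \strategicnode}$,
\[ \nabla_\common \ModifiedLoss{\strategicnode} = \nabla_\common \regularization(\trueparamsub{\strategicnode}, \strategicvote) + \sum_{\node \neq \strategicnode} \nabla_\common \regularization(\trueparamsub{\strategicnode}, \paramsub{\node}^\infty) = 2 \regweightsub{} (\trueparamsub{\strategicnode} - \strategicvote) - \gradient{\strategicnode}{\infty} . \]
Substituting $\strategicvote = \trueparamsub{\strategicnode} - \frac{1}{2 \regweightsub{}} \gradient{\strategicnode}{\infty}$ makes $2 \regweightsub{} (\trueparamsub{\strategicnode} - \strategicvote) = \gradient{\strategicnode}{\infty}$, so the expression vanishes, as claimed.

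The computation is routine; the one step deserving care is the second one — establishing that the honest local models, and hence the honest gradients, actually converge to the quantities attached to the global model $\trueparamsub{\strategicnode}$. Under full local optimization this is just strong convexity of the $\ell_2^2$-regularized local objective together with continuity of the argmin; if honest users instead perform a single gradient step per round, the same limit is obtained by passing to the fixed-point / first-order conditions satisfied by the converged iterates. A small bookkeeping point is the factor $2$ in $\nabla_\common \regularization(\common, \param) = 2 \regweightsub{}(\common - \param)$, which is exactly why $\strategicvote$ is offset by $\frac{1}{2\regweightsub{}} \gradient{\strategicnode}{\infty}$ rather than $\frac{1}{2} \gradient{\strategicnode}{\infty}$.
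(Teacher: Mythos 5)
Your proof is correct and follows essentially the same route as the paper's: pass to the limit in the update rule to identify the sum of honest gradients at $\common = \trueparamsub{\strategicnode}$ as $-\gradient{\strategicnode}{\infty}$, then choose $\strategicvote$ so that $\nabla_\common \regularization(\trueparamsub{\strategicnode}, \strategicvote) = 2\regweightsub{}(\trueparamsub{\strategicnode} - \strategicvote)$ cancels it. You merely make explicit two points the paper leaves implicit — the convergence of the honest local models via strong convexity and continuity of the argmin, and the factor $2$ in the gradient of the $\ell_2^2$ regularization — both handled correctly.
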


\begin{proof}
  Given a constant learning rate, the convergence $\common^\iteration \rightarrow \trueparamsub{\strategicnode}$ implies that the sum of honest users' gradients at $\common = \trueparamsub{\strategicnode}$ equals $-\gradient{\strategicnode}{\infty}$.
  Therefore, to achieve $\optcommon = \trueparamsub{\strategicnode}$, it suffices to send $\strategicvote$ such that the gradient of $\regweightsub{} \norm{\common - \strategicvote}{2}^2$ with respect to $\common$ at $\common = \trueparamsub{\strategicnode}$ equals $\gradient{\strategicnode}{\infty}$.
  Since the gradient is $\regweightsub{} (\trueparamsub{\strategicnode} - \strategicvote)$, $\strategicvote \triangleq \trueparamsub{\strategicnode} - \frac{1}{2\regweightsub{}} \gradient{\strategicnode}{\infty}$ does the trick.
\end{proof}

\subsection{From Model Attack to Data Poisoning Against \texorpdfstring{$\ell_2^2$}{l2}}
\label{sec:model_to_data_l22}

\paragraph{The case of linear regression.} 
In linear regression, any model attack can be turned into a \emph{single data} poisoning attack, as proved by the following theorem
whose proof is given in Appendix~\ref{app:data_poisoning_linear_regression}.

\begin{theorem}
\label{th:singleDataAttach}
  Consider the $\ell_2^2$ regularization and linear regression.
  For any data $\data{-\strategicnode}$ and any target value $\trueparamsub{\strategicnode}$,
  there is a datapoint $(\query{}, \answer{})$ to be injected by user $\strategicnode$ such that
  $\optcommon(\set{(\query{}, \answer{})}, \data{-\strategicnode}) = \trueparamsub{\strategicnode}$.
\end{theorem}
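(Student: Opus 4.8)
The plan is to read the single poisoned point off the first-order optimality conditions of the global loss, which is legitimate because here the loss is a strictly convex quadratic. First I would note that, with $\ell_2^2$ regularization and $\lossperinput(\param,(\query{},\answer{}))=\tfrac12(\param^T\query{}-\answer{})^2$, the map $\globalloss{}(\common,\paramfamily{},\datafamily{})$ is a positive-definite quadratic in $(\common,\paramfamily{})$ for \emph{any} data (the blocks $\reglocalweight{}\norm{\paramsub{\node}}{2}^2$ and the couplings $\regweightsub{}\norm{\common-\paramsub{\node}}{2}^2$ already pin down the Hessian, and the injected point only adds a positive-semidefinite rank-one term), so it has a unique minimizer $(\optcommon,\optimumfamily{})$, characterized by $\nabla\globalloss{}=0$. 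Writing this out yields, for each user $\node$, the relation $\nabla_{\paramsub{\node}}\localloss{\node}(\optimumsub{\node},\data{\node})=2\regweightsub{}(\optcommon-\optimumsub{\node})$, together with the averaging identity $\optcommon=\tfrac1{\NODE}\sum_{\node\in[\NODE]}\optimumsub{\node}$ coming from $\common$-stationarity. For $\node\neq\strategicnode$ the first relation is linear in $\optimumsub{\node}$ (the local loss being quadratic), so it determines $\optimumsub{\node}$ as an explicit affine function $\paramsub{\node}(\optcommon)$ of $\optcommon$ alone, not depending on $\data{\strategicnode}$.

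Next I would impose the target $\optcommon=\trueparamsub{\strategicnode}$. Then each $\optimumsub{\node}$ with $\node\neq\strategicnode$ is frozen to the fixed vector $\paramsub{\node}(\trueparamsub{\strategicnode})$, and the averaging identity forces $\optimumsub{\strategicnode}=\vp$, where $\vp\triangleq\NODE\,\trueparamsub{\strategicnode}-\sum_{\node\neq\strategicnode}\paramsub{\node}(\trueparamsub{\strategicnode})$ is fixed and explicitly computable. The only remaining condition to arrange is the stationarity relation for $\paramsub{\strategicnode}$ evaluated at $\optimumsub{\strategicnode}=\vp$: with $\data{\strategicnode}=\set{(\query{},\answer{})}$ and $\localloss{\strategicnode}(\param,\data{\strategicnode})=\reglocalweight{}\norm{\param}{2}^2+\tfrac12(\param^T\query{}-\answer{})^2$, it collapses to the single vector equation
\[
(\vp^T\query{}-\answer{})\,\query{}=\vc,\qquad \vc\triangleq 2\regweightsub{}(\trueparamsub{\strategicnode}-\vp)-2\reglocalweight{}\,\vp,
\]
where $\vc$ depends only on $\data{-\strategicnode}$ and $\trueparamsub{\strategicnode}$.

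Finally I would solve this equation, which is always possible since the residual $\vp^T\query{}-\answer{}$ is a free scalar and $\query{}$ a free direction: if $\vc=0$ take any $\query{}$ with $\answer{}=\vp^T\query{}$ (e.g.\ $\query{}=0,\ \answer{}=0$); if $\vc\neq0$ take $\query{}=\vc$ and $\answer{}=\vp^T\vc-1$, so that the scalar prefactor equals $1$. For this $(\query{},\answer{})$, the tuple $\big(\trueparamsub{\strategicnode},(\paramsub{\node}(\trueparamsub{\strategicnode}))_{\node\neq\strategicnode},\vp\big)$ satisfies every first-order condition of the global loss when user $\strategicnode$'s dataset is $\set{(\query{},\answer{})}$, hence by strict convexity it is the unique global minimum, giving $\optcommon(\set{(\query{},\answer{})},\data{-\strategicnode})=\trueparamsub{\strategicnode}$. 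This is essentially the exact, single-point specialization of the (approximate) model-attack-to-data-poisoning reduction of Lemma~\ref{th:reduction_data_to_model}, made possible by the linearity of the regression gradient; the one point needing care is exactly this last step, namely verifying that a \emph{single} data point already provides enough degrees of freedom in gradient space (one scalar magnitude together with one free direction) to match any prescribed vector $\vc$, and that freezing the other users' local optima is legitimate because those depend on $\optcommon$ only.
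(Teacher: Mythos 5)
Your proof is correct and takes essentially the same route as the paper's: both reverse-engineer the single poisoned point from the first-order optimality conditions of the strictly convex quadratic global loss, pinning down the strategic user's forced local model at the target and then choosing a rank-one datapoint of the form (required gradient vector, its inner product with that model minus one). The only difference is presentational — you solve the full stationarity system explicitly (via the $\ell_2^2$ averaging identity $\optcommon = \tfrac{1}{\NODE}\sum_{\node}\optimumsub{\node}$), whereas the paper works with the gradient $g$ of the partially minimized loss $\ModifiedLoss{-\strategicnode}^*$ at the target, and your bookkeeping carries the local regularization weight $\reglocalweight{}$ explicitly, a term the paper's step $\nabla \independentloss{\strategicnode} (\strategicvote, \data{\strategicnode}) = g$ silently omits.
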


{
\begin{proof}[Sketch of proof]
  We first identify the sum $\gradient{}{}$ of honest users' gradients, if the global model $\common$ took the target value $\trueparamsub{\strategicnode}$.
  We then determine the value $\strategicvote$ that the strategic user's model must take, to counteract other users' gradients.
  Reporting datapoint $(\query{}, \answer{}) \triangleq (\gradient{}{}, \gradient{}{T} \strategicvote - 1)$ then guarantees that the strategic user's learned model will equal $\strategicvote$.
\end{proof}
Note that this single datapoint attack requires reporting a query $\query{}$ whose norm grows as $\Theta(\NODE)$, 
while the answer $\answer{}$ grows as $\Theta(\NODE^2)$.
Assuming a large number of users, this query will fall out of the distribution of users' queries, 
and could thus be flagged by basic outlier detection techniques.
We stress, however, that our proof can be trivially transformed into an attack with $\Theta(\NODE^2)$ data points,
all of which have a query whose norm is $\mathcal O(1)$.
}

\paragraph{The case of linear classification.}
We now consider linear classification, with the case of MNIST.
By Lemma~\ref{th:reduction_data_to_model}, any model attack can be turned into data poisoning, by (mis)labeling sufficiently many (random) data points,
However, this may require creating too many data labelings, especially if the norm of $\strategicvote$ is large (which holds if $\strategicnode$ faces many active users), as suggested by Theorem~\ref{th:logistic_regression}.

For efficient data poisoning, define the indifference affine subspace $V \subset \setR^d$ as the set of images with equiprobable labels.
Intuitively, labeling images close to $V$ is very informative, as it informs us directly about the separating hyperplanes.
To generate images, we draw random images, project them orthogonally on $V$ and add a small noise.
We then label the image probabilistically with model $\strategicvote$.

Figure~\ref{fig:data_attack} shows the effectiveness of the resulting data poisoning attack, with only 2,000 data points, 
as opposed to the 60,000 honestly labeled data points that the 10 other users cumulatively have.
Remarkably, complete data relabeling was achieved by poisoning merely 3.3\% of the total database.
More details are given in Appendix~\ref{app:data_poisoning_linear_classification}.

{ 
Note that this attack leads us to consider images not in $[0,1]^d$.
{ 
In Appendix \ref{sec:clipp_attack}, we report another equivalently effective attack, which only reports images in $[0,1]^d$,
though it requires significantly more data injection.
}
}

{
\subsection{Gradient Attack on Local Models}

Note that \CounterGradientAttack{} aims to merely bias the global model.
However, the attacker may instead prefer to bias other users' local models.
To this end, we present here a variant of \CounterGradientAttack{}, which targets the \emph{average} of other users' local models.
At each iteration of this variant, the attacker reports
\begin{equation*}
  \gradient{\strategicnode}{\iteration} 
  \in \argmin_{\gradient{}{} \in \GRADIENT(\common^{\iteration})}
  \norm{ \common^{\iteration} - \learningrate{\iteration} (\estimatedgradient{-\strategicnode}{\iteration} + \gradient{}{}) - \trueparamsub{\strategicnode} - \frac{ \estimatedgradient{-\strategicnode}{\iteration} }{2 \regweightsub{} (\NODE -1)} }{2}.
\end{equation*}
Figure \ref{fig:cga_local} shows the effectiveness of this attack.
{This gradient attack can evidently be turned into data poisoning similar to what was achieved for \CounterGradientAttack{}.}

}

\begin{figure}[h]
    \centering
    \includegraphics[width=.8\linewidth]{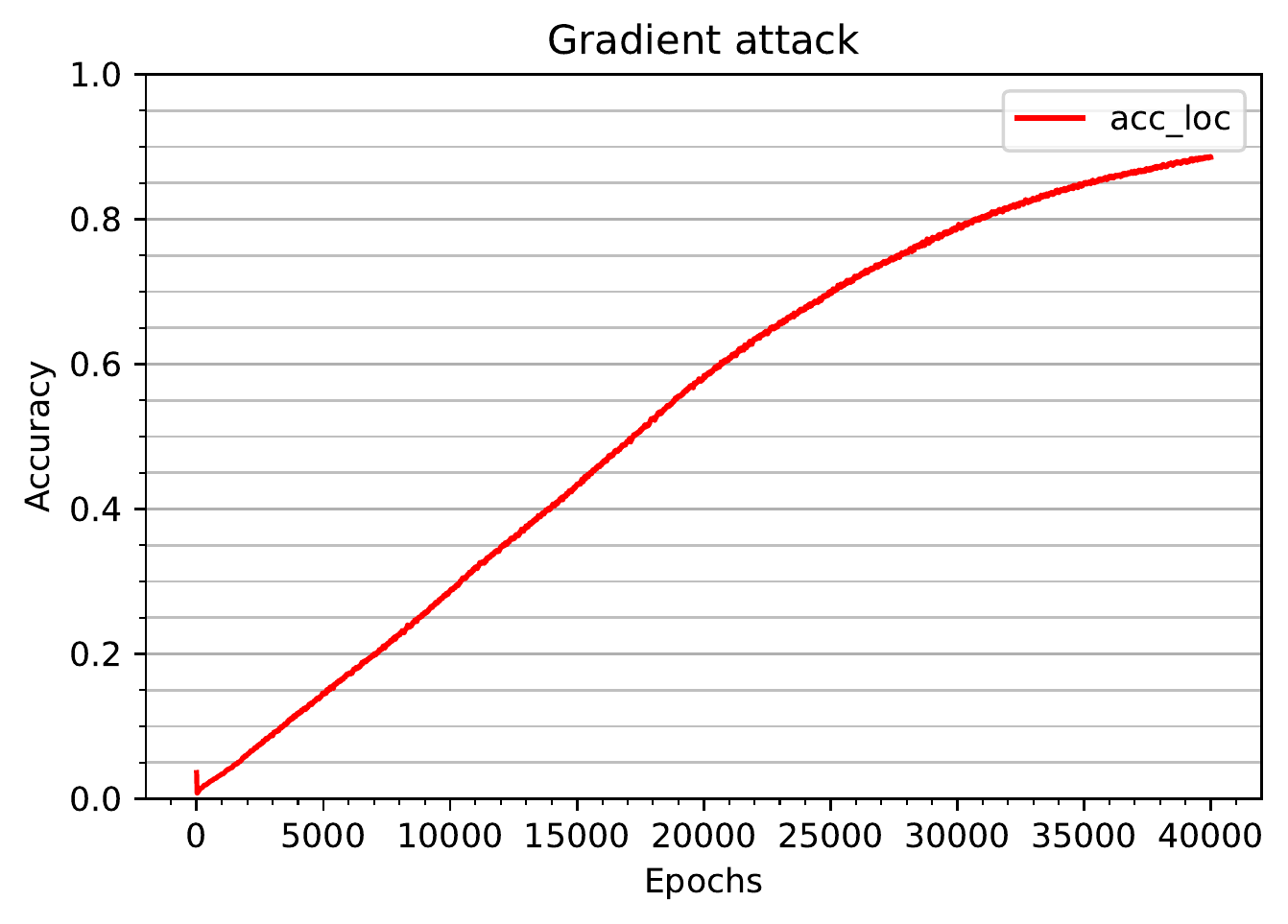}
    \caption{Accuracy of other users' average local models according to $\trueparamsub{\strategicnode}$ (which relabels $0\rightarrow1\rightarrow2\rightarrow ...\rightarrow 9 \rightarrow0$), when attacked by CGA variant.} 
    \label{fig:cga_local}
    \vspace{-5mm}
\end{figure}

\section{Conclusion}
\label{sec:conclusion}

We showed that, unlike what has been argued, e.g., \citet{ShejwalkarHKR21}, the gradient attack threat is not unrealistic.
More precisely, for personalized federated learning with local PAC* guarantees,
effective gradient attacks can be derived from
strategic data reporting, with potentially surprisingly few data.
In fact, by leveraging our newly found equivalence, we derived new impossibility theorems on what any robust learning algorithm can guarantee, under data poisoning attacks{, especially, in highly-heterogeneous  settings.
Yet such attacks are known to be ubiquitous for high-risk applications, many of which are known to feature especially high heterogeneity, like online content recommendation.}
Arguably, a lot more security measures are urgently needed to make large-scale learning algorithms safe.

\section*{Acknowledgement}
This work has been supported in part by the Swiss National Science Foundation (SNSF) project 200021\_200477. The authors are thankful to the anonymous reviewers of ICLR 2022 and ICML 2022 for their constructive comments. 
\section*{Ethics statement}

The safety of algorithms is arguably a prerequisite to their ethics.
After all, an arbitrarily manipulable large-scale algorithm will unavoidably endanger the targets of the entities that successfully design such algorithms.
Typically, unsafe large-scale recommendation algorithms may be hacked by health disinformation campaigns that aim to promote non-certified products, e.g., by falsely pretending that they cure COVID-19.
Such algorithms must not be regarded as ethical, even if they were designed with the best intentions.
We believe that our work helps understand the vulnerabilities of such algorithms, and will motivate further research in the ethics and security of machine learning.

\bibliography{references}
\bibliographystyle{icml2022}

\onecolumn

\begin{center}
{\huge Appendix}
\end{center}

\appendix
\section{Convexity Lemmas}
\label{app:convex}

\subsection{General Lemmas}

\begin{definition}
    We say that $f : \setR^d \rightarrow \setR$ is locally strongly convex if,
    for any convex compact set $C \subset \setR^d$,
    there exists $\mu > 0$ such that $f$ is $\mu$-strongly convex on $C$,
    i.e. for any $x,y \in C$ and any $\lambda \in [0,1]$, we have
    \begin{equation}
      f(\lambda x + (1-\lambda) y)
      \leq \lambda f(x) + (1-\lambda) f(y) - \frac{\mu}{2} \lambda (1-\lambda) \norm{x-y}{2}^2.
    \end{equation}
    It is well-known that if $f$ is differentiable, this condition amounts to saying that $\norm{\nabla f(x) - \nabla f(y)}{2} \geq \mu \norm{x-y}{2}$ for all $x,y \in C$. 
    And if $f$ is twice differentiable, then it amounts to saying $\nabla^2 f(x) \succeq \mu \identitymatrix$ for all $x \in C$.
\end{definition}

\begin{lemma}
\label{lemma:additivity_strongly_convex}
  If $f$ is locally strongly convex and $g$ is convex, then $f+g$ is locally strongly convex.
\end{lemma}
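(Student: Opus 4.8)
The plan is to prove this directly from the definition of local strong convexity, by fixing an arbitrary convex compact set and producing a single strong-convexity modulus that works for $f+g$ on that set.

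First I would fix any convex compact set $C \subset \setR^d$. Since $f$ is locally strongly convex, there exists $\mu > 0$ such that $f$ is $\mu$-strongly convex on $C$; that is, for all $x, y \in C$ and all $\lambda \in [0,1]$,
\begin{equation*}
  f(\lambda x + (1-\lambda) y) \leq \lambda f(x) + (1-\lambda) f(y) - \frac{\mu}{2} \lambda (1-\lambda) \norm{x-y}{2}^2.
\end{equation*}
Separately, since $g$ is convex, for the same $x, y, \lambda$ we have $g(\lambda x + (1-\lambda) y) \leq \lambda g(x) + (1-\lambda) g(y)$.

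Next I would simply add these two inequalities. The left-hand sides sum to $(f+g)(\lambda x + (1-\lambda)y)$, and the right-hand sides sum to $\lambda (f+g)(x) + (1-\lambda)(f+g)(y) - \frac{\mu}{2}\lambda(1-\lambda)\norm{x-y}{2}^2$, which shows that $f+g$ is $\mu$-strongly convex on $C$ with the same modulus $\mu$ inherited from $f$. Since $C$ was an arbitrary convex compact set, $f+g$ is locally strongly convex by definition, completing the proof.

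There is no real obstacle here: the argument is just the additivity of the defining inequality, with $g$ contributing a nonnegative slack. The only thing to be careful about is to state the modulus $\mu$ as depending on $C$ (as the definition already allows) and to note that convexity of $g$ is exactly the $\mu = 0$ case of the same inequality, so nothing is lost when summing. If desired, one could add a one-line remark that the differentiable/twice-differentiable characterizations follow likewise, since $\nabla^2(f+g) = \nabla^2 f + \nabla^2 g \succeq \mu \identitymatrix + 0$ on $C$, but this is not needed for the statement.
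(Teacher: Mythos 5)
Your proof is correct and follows essentially the same argument as the paper: fix a convex compact set, add the $\mu$-strong convexity inequality for $f$ on that set to the convexity inequality for $g$, and conclude that $f+g$ is $\mu$-strongly convex there. Your explicit handling of the arbitrary compact set $C$ and the $C$-dependence of $\mu$ is just a slightly more spelled-out version of the paper's one-line computation.
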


\begin{proof}
  Indeed, $(f+g)(\lambda x + (1-\lambda) y)
  \leq \lambda f(x) + (1-\lambda) f(y) - \frac{\mu}{2} \lambda (1-\lambda) \norm{x-y}{2}^2 + \lambda g(x) + (1-\lambda) g(y)
  = \lambda (f+g)(x) + (1-\lambda) (f+g)(y) - \frac{\mu}{2} \lambda (1-\lambda) \norm{x-y}{2}^2$.
\end{proof}

\begin{definition}
    We say that $f : \setR^d \rightarrow \setR$ is $L$-smooth
    if it is differentiable and if its gradient is $L$-Lipschitz continuous, i.e. for any $x,y \in \setR^d$,
    \begin{equation}
      \norm{\nabla f(x) - \nabla f(y)}{2} \leq L \norm{x-y}{2}.
    \end{equation}
\end{definition}

\begin{lemma}
\label{lemma:additivity_smoothness}
  If $f$ is $L_f$-smooth and $g$ is $L_g$-smooth, then $f+g$ is $(L_f+L_g)$-smooth.
\end{lemma}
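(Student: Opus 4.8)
The plan is to reduce the claim to the triangle inequality for the Euclidean norm. First I would observe that, since $L$-smoothness is defined to include differentiability, both $f$ and $g$ are differentiable, hence so is $f+g$, with $\nabla(f+g) = \nabla f + \nabla g$. Thus the definition of $L$-smoothness applies to $f+g$, and it remains only to exhibit a Lipschitz constant for $\nabla(f+g)$.

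Next, I would fix arbitrary $x,y \in \setR^d$ and write
\[
  \norm{\nabla(f+g)(x) - \nabla(f+g)(y)}{2}
  = \norm{\bigl(\nabla f(x) - \nabla f(y)\bigr) + \bigl(\nabla g(x) - \nabla g(y)\bigr)}{2}.
\]
Applying the triangle inequality and then the two smoothness hypotheses gives
\[
  \norm{\nabla(f+g)(x) - \nabla(f+g)(y)}{2}
  \leq \norm{\nabla f(x) - \nabla f(y)}{2} + \norm{\nabla g(x) - \nabla g(y)}{2}
  \leq L_f \norm{x-y}{2} + L_g \norm{x-y}{2},
\]
and the right-hand side equals $(L_f + L_g)\norm{x-y}{2}$. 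Since $x$ and $y$ were arbitrary, this is precisely the $(L_f+L_g)$-smoothness of $f+g$.

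Honestly, there is no real obstacle here: the argument is a single application of the triangle inequality, exactly paralleling the proof of Lemma~\ref{lemma:additivity_strongly_convex} for local strong convexity. The only point worth stating carefully is that the gradient of a sum splits as the sum of the gradients (which is where differentiability of both summands is used), so that the norm estimate can be carried out directly on $\setR^d$.
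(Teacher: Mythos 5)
Your proof is correct and is essentially identical to the paper's: both split $\nabla(f+g)(x)-\nabla(f+g)(y)$ as the sum of the two gradient differences, apply the triangle inequality, and invoke the two smoothness hypotheses to get the constant $L_f+L_g$. Your additional remark that differentiability of the sum and linearity of the gradient justify the decomposition is a harmless (and accurate) elaboration of what the paper leaves implicit.
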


\begin{proof}
  Indeed, $\norm{\nabla (f+g)(x) - \nabla (f+g)(y)}{2}
  \leq \norm{\nabla f(x) - \nabla f(y)}{2} + \norm{\nabla g(x) - \nabla g(y)}{2}
  \leq L_f \norm{x-y}{2} + L_g \norm{x-y}{2}
  = (L_f + L_g) \norm{x-y}{2}.$
\end{proof}

\begin{lemma}
\label{lemma:optimum_lipschitz}
  Suppose that $f : \setR^d \times \setR^{d'} \mapsto \setR$ is locally strongly convex and $L$-smooth, 
  and that, for any $x \in X$, where $X \subset \setR^d$ is a convex compact subset, the map $y \mapsto f(x,y)$ has a minimum $y^*(x)$.
  Note that local strong convexity guarantees the uniqueness of this minimum.
  Then, there exists $K$ such that the function $y^*$ is $K$-Lipschitz continuous on $X$.
\end{lemma}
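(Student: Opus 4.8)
The plan is to exploit the first-order optimality condition together with local strong convexity of $f$, restricted to a suitable compact set. First I would fix any two points $x_1, x_2 \in X$. Since $X$ is convex and compact and $f$ is locally strongly convex, there exists $\mu > 0$ such that $f$ is $\mu$-strongly convex on $X \times Y$ for any convex compact $Y$; but before we can invoke this we must know that the minimizers $y^*(x_1), y^*(x_2)$ lie in a fixed compact set independent of the particular pair of points. So the first genuine step is to show that $y^*(X) \triangleq \set{y^*(x) \st x \in X}$ is bounded. I would argue this by contradiction or directly: if $y^*(x_n)$ were unbounded for some sequence $x_n \in X$, compactness of $X$ gives a convergent subsequence $x_n \to x_\infty$, and $L$-smoothness plus strong convexity along the fiber would force $f(x_n, y^*(x_n)) \to \infty$ while $f(x_n, y^*(x_\infty))$ stays bounded (using $L$-smoothness in the first argument), contradicting optimality of $y^*(x_n)$. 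Let $Y$ be a closed ball containing $y^*(X)$ in its interior, and let $C \triangleq X \times Y$, so $f$ is $\mu$-strongly convex on $C$ for some $\mu > 0$.

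Next, I would write the optimality conditions. Since $y^*(x_i)$ lies in the interior of $Y$, it is an unconstrained minimizer of $y \mapsto f(x_i, y)$, so $\nabla_y f(x_i, y^*(x_i)) = 0$ for $i = 1, 2$. Now apply the gradient characterization of $\mu$-strong convexity on $C$: the map $(x,y) \mapsto \nabla f(x,y)$ satisfies
\begin{equation}
  \left\langle \nabla f(x_1, y^*(x_1)) - \nabla f(x_2, y^*(x_2)), \ (x_1, y^*(x_1)) - (x_2, y^*(x_2)) \right\rangle \geq \mu \norm{(x_1, y^*(x_1)) - (x_2, y^*(x_2))}{2}^2.
\end{equation}
Because the $y$-components of both gradients vanish, the left-hand side reduces to $\langle \nabla_x f(x_1, y^*(x_1)) - \nabla_x f(x_2, y^*(x_2)), \ x_1 - x_2 \rangle$. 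Bounding this by Cauchy--Schwarz and using $L$-smoothness of $f$ (so the full gradient is $L$-Lipschitz, hence $\norm{\nabla_x f(x_1, y^*(x_1)) - \nabla_x f(x_2, y^*(x_2))}{2} \leq L \norm{(x_1, y^*(x_1)) - (x_2, y^*(x_2))}{2}$), the inequality becomes
\begin{equation}
  L \norm{(x_1, y^*(x_1)) - (x_2, y^*(x_2))}{2} \norm{x_1 - x_2}{2} \geq \mu \norm{(x_1, y^*(x_1)) - (x_2, y^*(x_2))}{2}^2.
\end{equation}
Dividing through (the degenerate case $y^*(x_1) = y^*(x_2)$ being trivial) yields $\norm{(x_1, y^*(x_1)) - (x_2, y^*(x_2))}{2} \leq (L/\mu) \norm{x_1 - x_2}{2}$, and since $\norm{y^*(x_1) - y^*(x_2)}{2} \leq \norm{(x_1, y^*(x_1)) - (x_2, y^*(x_2))}{2}$, we conclude $y^*$ is $K$-Lipschitz on $X$ with $K \triangleq L/\mu$.

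The main obstacle is the preliminary boundedness step: establishing that $y^*(X)$ lies in a fixed compact set, so that a single strong-convexity modulus $\mu$ applies uniformly over all fibers we care about. Without this, one only gets a local Lipschitz estimate with an $x$-dependent constant. The cleanest route is probably to observe that $x \mapsto \min_y f(x,y)$ is finite and continuous (even $L$-smooth-ish) on the compact $X$, hence bounded, and then use that $f(x,y) \to \infty$ as $\norm{y}{2} \to \infty$ uniformly for $x$ in the compact set $X$ — a consequence of strong convexity along fibers plus continuity of $\nabla_y f$ on compacta — to confine all minimizers to a bounded region. Once the compact set $C$ is in hand, the rest is the short monotonicity-of-gradient computation above.
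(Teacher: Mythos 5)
Your core computation is correct and yields the same constant $K = L/\mu$ as the paper, but via a slightly different mechanism. The paper argues fiber-wise: with $x$ fixed, $\mu$-strong convexity of $y \mapsto f(x,y)$ gives $\mu \norm{y^*(x)-y^*(x')}{2} \leq \norm{\nabla_y f(x,y^*(x)) - \nabla_y f(x,y^*(x'))}{2}$, the two optimality conditions $\nabla_y f(x,y^*(x)) = \nabla_y f(x',y^*(x')) = 0$ turn the right-hand side into $\norm{\nabla_y f(x,y^*(x')) - \nabla_y f(x',y^*(x'))}{2}$, and $L$-smoothness bounds this by $L\norm{x-x'}{2}$. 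You instead invoke strong monotonicity of the \emph{joint} gradient on $X \times Y$ and observe that the $y$-components of both gradients vanish, then close with Cauchy--Schwarz and $L$-smoothness. Both routes are legitimate; yours needs joint strong convexity of $f$ in $(x,y)$ on a product compact set (which the paper's definition of local strong convexity does provide), while the paper's only uses strong convexity along the $y$-fibers.

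The one substantive issue is the preliminary boundedness step, and here your instincts are better than your execution. You are right that a uniform modulus $\mu$ requires confining $\set{y^*(x) \st x \in X}$ to a fixed compact set -- a point the paper's proof silently assumes, since its $\mu$ must be valid on a set containing all the minimizers it compares. However, your two sketches of this step do not quite work as stated: the claim that $f(x_n, y^*(x_n)) \rightarrow \infty$ when $\norm{y^*(x_n)}{2} \rightarrow \infty$, and the claimed uniform coercivity in $y$ over $x \in X$, both lean on a strong-convexity modulus whose value degrades as the compact set grows, so neither follows directly from ``strong convexity along fibers plus continuity of $\nabla_y f$.'' A clean repair works at the gradient level: fix $x_0 \in X$; on the unit ball around $y^*(x_0)$ the fiber $f(x_0,\cdot)$ is $\mu_0$-strongly convex, so $\norm{\nabla_y f(x_0,y)}{2} \geq \mu_0$ on the unit sphere, and by monotonicity of the gradient along rays the same lower bound holds for all $y$ outside that ball; meanwhile $\nabla_y f(x,y^*(x)) = 0$ and $L$-smoothness give $\norm{\nabla_y f(x_0, y^*(x))}{2} \leq L \norm{x - x_0}{2}$, so $y^*(x)$ stays within distance $1$ of $y^*(x_0)$ whenever $\norm{x-x_0}{2} < \mu_0 / L$. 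Local boundedness of $y^*$ plus compactness of $X$ (finite subcover) then yields the bounded set $Y$ you need, after which your monotonicity computation goes through verbatim.
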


\begin{proof}
  The existence and uniqueness of $y^*(x)$ hold by strong convexity.
  Fix $x,x'$.
  By optimality of $y^*$, we know that $\nabla_y f(x,y^*(x)) = \nabla_y f(x', y^*(x')) = 0$.
  We then have the following bounds
  \begin{align}
    \mu \norm{y^*(x) - y^*(x')}{2}
    &\leq \norm{\nabla_y f(x,y^*(x)) - \nabla_y f(x,y^*(x'))}{2}
    = \norm{\nabla_y f(x,y^*(x'))}{2} \\
    &= \norm{\nabla_y f(x,y^*(x')) - \nabla_y f(x', y^*(x'))}{2} \\
    &\leq \norm{\nabla f(x,y^*(x')) - \nabla f(x', y^*(x'))}{2} \\
    &\leq L \norm{(x-x', y^*(x')-y^*(x'))}{2}
    = L \norm{x-x'}{2},
  \end{align}
  where we first used the local strong convexity assumption, then the fact that $\nabla_y f(x,y^*(x)) = 0$, then the fact that $\nabla_y f(x', y^*(x')) = 0$, and then the $L$-smooth assumption.
\end{proof}

\begin{lemma}
\label{lemma:minDerivative}
  Suppose that $f : \setR^d \times \setR^{d'} \mapsto \setR$ is locally strongly convex and $L$-smooth,
  and that, for any $x \in X$, where $X \subset \setR^d$ is a convex compact subset, the map $y \mapsto f(x,y)$ has a minimum $y^*(x)$.
  Define $g(x) \triangleq \min_{y \in Y} f(x,y)$.
  Then $g$ is convex and differentiable on $X$ and $\nabla g(x) = \nabla_x f(x,y^*(x))$.
\end{lemma}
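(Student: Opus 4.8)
The plan is to prove the two assertions — convexity, and differentiability with $\nabla g(x) = \nabla_x f(x, y^*(x))$ — separately, the second being a version of the envelope (Danskin) theorem that leans on the Lipschitz regularity of $y^*$ already established in Lemma~\ref{lemma:optimum_lipschitz}.

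For convexity, first note that local strong convexity forces $f$ to be (globally) convex on $\setR^d \times \setR^{d'}$, since convexity can be checked on line segments and every such segment lies in some compact convex set on which $f$ is strongly convex. Then, for $x_0, x_1 \in X$ and $\lambda \in [0,1]$, convexity of $X$ gives $x_\lambda \triangleq \lambda x_1 + (1-\lambda) x_0 \in X$, and using that $y^*(x_\lambda)$ minimizes $f(x_\lambda, \cdot)$ together with joint convexity of $f$,
\begin{equation*}
  g(x_\lambda) \leq f\big(x_\lambda, \lambda y^*(x_1) + (1-\lambda) y^*(x_0)\big) \leq \lambda g(x_1) + (1-\lambda) g(x_0),
\end{equation*}
so $g$ is convex on $X$.

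For differentiability I would establish the first-order expansion $g(x') = g(x) + \nabla_x f(x, y^*(x))^T (x' - x) + O(\norm{x'-x}{2}^2)$ as $x' \to x$ in $X$, by sandwiching the increment $g(x') - g(x) = f(x', y^*(x')) - f(x, y^*(x))$ between two estimates. For the upper bound, $y^*(x)$ is feasible, though not necessarily optimal, for $f(x', \cdot)$, so $g(x') \leq f(x', y^*(x))$, and $L$-smoothness of $f$ in its first argument gives $f(x', y^*(x)) - f(x, y^*(x)) \leq \nabla_x f(x, y^*(x))^T (x'-x) + \tfrac{L}{2}\norm{x'-x}{2}^2$. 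For the lower bound, $y^*(x)$ minimizes $f(x, \cdot)$, hence $g(x) \leq f(x, y^*(x'))$ and $g(x') - g(x) \geq f(x', y^*(x')) - f(x, y^*(x')) \geq \nabla_x f(x, y^*(x'))^T (x'-x) - \tfrac{L}{2}\norm{x'-x}{2}^2$, again by $L$-smoothness.

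The crux — and the only real obstacle — is that these two bounds evaluate $\nabla_x f$ at two different points, $y^*(x')$ versus $y^*(x)$. Here I would invoke Lemma~\ref{lemma:optimum_lipschitz}, which supplies a constant $K$ with $\norm{y^*(x') - y^*(x)}{2} \leq K \norm{x'-x}{2}$; combined with $L$-smoothness this yields $\norm{\nabla_x f(x, y^*(x')) - \nabla_x f(x, y^*(x))}{2} \leq L K \norm{x'-x}{2}$, so $\nabla_x f(x, y^*(x'))^T (x'-x) = \nabla_x f(x, y^*(x))^T (x'-x) + O(\norm{x'-x}{2}^2)$. Substituting this into the lower bound, the upper and lower estimates coincide up to $O(\norm{x'-x}{2}^2)$, which is exactly the claimed expansion; hence $g$ is differentiable on the interior of $X$ with $\nabla g(x) = \nabla_x f(x, y^*(x))$, and the extension of this identity to all of $X$ (using continuity of $x \mapsto \nabla_x f(x, y^*(x))$) is routine. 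Everything apart from this gradient-mismatch step is the standard envelope-theorem sandwich.
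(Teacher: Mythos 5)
Your proposal is correct and follows essentially the same route as the paper: the standard envelope-theorem sandwich (upper bound from feasibility of $y^*(x)$, lower bound from optimality of $y^*(x)$ at $x$), with the gradient-mismatch between $\nabla_x f(x,y^*(x'))$ and $\nabla_x f(x,y^*(x))$ controlled by the Lipschitz continuity of $y^*$ from Lemma~\ref{lemma:optimum_lipschitz} together with $L$-smoothness, and convexity of $g$ obtained from joint convexity of $f$ exactly as in the paper. The only cosmetic difference is that the paper phrases the sandwich as showing $\nabla_x f(x,y^*(x))$ is simultaneously a super- and sub-derivative of $g$, which is the same argument.
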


\begin{proof}
   First we prove that $g$ is convex.
   Let $x_1, x_2 \in \setR^d$, and $\lambda_1, \lambda_2 \in [0,1]$ with $\lambda_1+\lambda_2 = 1$.
   For any $y_1, y_2 \in \setR^{d'}$, we have
   \begin{align}
       g(\lambda_1 x_1 + \lambda_2 x_2)
       &= \min_{y \in \setR^{d'}} f(\lambda_1 x_1 + \lambda_2 x_2, y) \\
       &\leq f(\lambda_1 x_1 + \lambda_2 x_2, \lambda_1 y_1 + \lambda_2 y_2) \\
       &\leq \lambda_1 f(x_1, y_1) + \lambda_2 f(x_2, y_2).
   \end{align}
   Taking the infimum of the right-hand side over $y_1$ and $y_2$ yields $g(\lambda_1 x_1 + \lambda_2 x_2) \leq \lambda_1 g(x_1) + \lambda_2 g(x_2)$, which proves the convexity of $g$.
   
   Now denote $h(x) = \nabla_x f(x,y^*(x))$.
   We aim to show that $\nabla g(x) = h(x)$.
   Let $\varepsilon \in \setR^d$ small enough so that $x +\varepsilon \in X$.
   Now note that we have
   \begin{align}
       g(x+\varepsilon) &= \min_{y \in \setR^{d'}} f(x+\varepsilon, y)
       \leq f(x+\varepsilon, y^*(x)) \\
       &= f(x,y^*(x)) + \varepsilon^T \nabla_x f(x,y^*(x)) + o(\norm{\varepsilon}{2}) \\
       &= g(x) + \varepsilon^T h(x) + o(\norm{\varepsilon}{2}),
   \end{align}
   which shows that $h(x)$ is a superderivative of $g$ at $x$.
   We now show that it is also a subderivative.
   To do so, first note that its value at $x+\varepsilon$ is approximately the same, i.e.
   \begin{align}
       \norm{h(x+\varepsilon) - h(x)}{2} 
       &\leq \norm{\nabla_x f(x+\varepsilon, y^*(x+\varepsilon)) - \nabla_x f(x, y^*(x+\varepsilon))}{2} \nonumber \\
       &\qquad \qquad \qquad + \norm{\nabla_x f(x, y^*(x+\varepsilon)) - \nabla_x f(x,y^*(x))}{2} \\
       &\leq L \norm{\varepsilon}{2} + L \norm{y^*(x+\varepsilon) - y^*(x)}{2}
       \leq \left( L + \frac{L^2}{\mu} \right) \norm{\varepsilon}{2},
   \end{align}
   where we used the $L$-smoothness of $f$ and Lemma~\ref{lemma:optimum_lipschitz}.
   Now notice that 
   \begin{align}
       g(x) &= \min_{y \in \setR^{d'}} f(x, y)
       \leq f(x, y^*(x + \varepsilon))
       = f((x+\varepsilon) - \varepsilon, y^*(x + \varepsilon)) \\
       &= f(x+\varepsilon, y^*(x+\varepsilon)) - \varepsilon^T \nabla_x f(x+\varepsilon, y^*(x+\varepsilon)) + o(\norm{\varepsilon}{2}) \\
       &= g(x+\varepsilon) - \varepsilon^T h(x)
       - \varepsilon^T \left( h(x+\varepsilon) - h(x) \right) + o(\norm{\varepsilon}{2}),
   \end{align}
   But we know that $\norm{h(x+\varepsilon) - h(x)}{2} = \mathcal O(\norm{\varepsilon}{2})$.
   Rearranging the terms then yields
   \begin{equation}
       g(x+\varepsilon) \geq g(x) + \varepsilon^T h(x) - o(\norm{\varepsilon}{2}),
   \end{equation}
   which shows that $h(x)$ is also a subderivative.
   Therefore, we know that $g(x+\varepsilon) = g(x) + \varepsilon^T h(x) + o(\norm{\varepsilon}{2})$, which boils down to saying that $g$ is differentiable in $x \in X$, and that $\nabla g(x) = h(x)$.
\end{proof}

\begin{lemma}
\label{lemma:infimum_strongly_convex}
Suppose that $f : X \times \setR^{d'} \rightarrow \setR$ is $\mu$-strongly convex, where $X \subset \setR^d$ is closed and convex.
Then $g : X \rightarrow \setR$, defined by $g(x) = \inf_{y \in Y} f(x,y)$, is well-defined and $\mu$-strongly convex too.
\end{lemma}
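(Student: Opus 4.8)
The plan is to prove the two claimed properties of $g(x) \triangleq \inf_{y \in Y} f(x,y)$ separately: first that $g$ is well-defined (i.e.\ finite-valued), and then that it inherits $\mu$-strong convexity from $f$. For well-definedness, I would fix $x \in X$ and observe that $\mu$-strong convexity of $f$ in the joint variable implies that $y \mapsto f(x,y)$ is $\mu$-strongly convex, hence bounded below (a $\mu$-strongly convex function on $\setR^{d'}$ is coercive, so its infimum is attained and in particular finite). Thus $g(x) \in \setR$ for every $x \in X$. One subtlety to flag: the statement writes $g(x) = \inf_{y \in Y} f(x,y)$ but $f$ is defined on $X \times \setR^{d'}$, so I would read $Y = \setR^{d'}$ here; if $Y$ were a proper subset it would need to be closed and convex for the argument to go through unchanged, but that is not needed for the inequality below.

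For the strong convexity of $g$, I would essentially reprise the convexity argument from Lemma~\ref{lemma:minDerivative} but carry along the strong-convexity slack term. Fix $x_1, x_2 \in X$ and $\lambda \in [0,1]$, and let $y_1, y_2 \in \setR^{d'}$ be arbitrary. By $\mu$-strong convexity of $f$ applied to the points $(x_1, y_1)$ and $(x_2, y_2)$,
\begin{align*}
  g(\lambda x_1 + (1-\lambda) x_2)
  &\leq f\bigl(\lambda x_1 + (1-\lambda) x_2,\; \lambda y_1 + (1-\lambda) y_2\bigr) \\
  &\leq \lambda f(x_1, y_1) + (1-\lambda) f(x_2, y_2)
    - \frac{\mu}{2} \lambda (1-\lambda) \norm{(x_1,y_1) - (x_2,y_2)}{2}^2 \\
  &\leq \lambda f(x_1, y_1) + (1-\lambda) f(x_2, y_2)
    - \frac{\mu}{2} \lambda (1-\lambda) \norm{x_1 - x_2}{2}^2,
\end{align*}
where the last step uses $\norm{(x_1,y_1)-(x_2,y_2)}{2}^2 = \norm{x_1-x_2}{2}^2 + \norm{y_1-y_2}{2}^2 \geq \norm{x_1-x_2}{2}^2$. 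The right-hand side now splits into a term depending only on $y_1$ and a term depending only on $y_2$ (the slack term is independent of $y_1, y_2$), so I take the infimum over $y_1 \in \setR^{d'}$ and then over $y_2 \in \setR^{d'}$, obtaining
\begin{equation*}
  g(\lambda x_1 + (1-\lambda) x_2)
  \leq \lambda g(x_1) + (1-\lambda) g(x_2) - \frac{\mu}{2} \lambda (1-\lambda) \norm{x_1 - x_2}{2}^2,
\end{equation*}
which is exactly $\mu$-strong convexity of $g$ on $X$.

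The argument is almost entirely routine; the only point requiring a little care — and the closest thing to an obstacle — is the order of operations when passing to the infimum, namely that the strong-convexity penalty $\frac{\mu}{2}\lambda(1-\lambda)\norm{x_1-x_2}{2}^2$ does not involve $y_1$ or $y_2$ and so survives untouched when both infima are taken. The dropping of the $\norm{y_1-y_2}{2}^2$ term before taking infima is what makes this clean; keeping it would (correctly but unnecessarily) yield something stronger only if one also minimized jointly. No smoothness or differentiability of $f$ is used, so this lemma applies in the same generality as Lemma~\ref{lemma:minDerivative}'s convexity half.
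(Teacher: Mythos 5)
Your proposal is correct and follows essentially the same route as the paper's own proof: coercivity of the $\mu$-strongly convex map $y \mapsto f(x,y)$ for well-definedness, then the same chain of inequalities applying joint strong convexity at $(x_1,y_1)$, $(x_2,y_2)$, dropping the $\norm{y_1-y_2}{2}^2$ term, and taking infima over $y_1, y_2$. Your remark that $Y$ should be read as $\setR^{d'}$ matches the paper's intent as well.
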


\begin{proof}
    The function $y \mapsto f(x,y)$ is still strongly convex, which means that it is at least equal to a quadratic approximation around 0, which is a function that goes to infinity in all directions as $\norm{y}{2} \rightarrow \infty$.
    This proves that the infimum must be reached within a compact set, which implies the existence of a minimum.
    Thus $g$ is well-defined.
    Moreover, for any $x_1, x_2 \in X, y_1, y_2 \in \setR^{d'}$, and $\lambda_1, \lambda_2 \geq 0$ with $\lambda_1 + \lambda_2 = 1$, we have
    \begin{align}
        g(\lambda_1 x_1 + \lambda_2 x_2)
        &= \inf_y f(\lambda_1 x_1 + \lambda_2 x_2, y) \\
        &\leq f(\lambda_1 x_1 + \lambda_2 x_2, \lambda_1 y_1 + \lambda_2 y_2) \\
        &\leq \lambda_1 f(x_1, y_1) + \lambda_2 f(x_2, y_2) - \frac{\mu}{2} \lambda_1 \lambda_2 \norm{(x_1 - x_2, y_1 - y_2)}{2}^2 \\
        &\leq \lambda_1 f(x_1, y_1) + \lambda_2 f(x_2, y_2) - \frac{\mu}{2} \lambda_1 \lambda_2 \norm{x_1 - x_2}{2}^2,
    \end{align}
    where we used the $\mu$-strong convexity of $f$.
    Taking the infimum over $y_1, y_2$ implies the $\mu$-strong convexity of $g$.
\end{proof}

\subsection{Applications to \globalloss{}}
Now instead of proving our theorems for different cases separately, we make the following assumptions on the components of the global loss that encompasses both $\ell_2^2$ and smooth-$\ell_2$ regularization, a well as linear regression and logistic regression.

\begin{assumption}
\label{ass:assumption1}
  Assume that $\lossperinput$ is convex and $L_\lossperinput$-smooth,
  and that $\regularization(\common, \param) = \regularization_0 (\common - \param)$, where $\regularization_0 : \setR^d \rightarrow \setR$ is locally strongly convex (i.e. strongly convex on any convex compact set), $L_{\regularization_0}$-smooth 
  and satisfy $\regularization_0(z) = \Omega(\norm{z}{2})$ as $\norm{z}{2} \rightarrow \infty$.
\end{assumption}

\begin{lemma}
\label{lemma:smooth_globalloss}
  Under Assumption~\ref{ass:assumption1}, $\globalloss{}$ is locally strongly convex and $L$-smooth.
\end{lemma}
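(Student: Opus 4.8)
The goal is to show that $\globalloss{}$, viewed as a function of the joint variable $(\common, \paramfamily{})$, is locally strongly convex and $L$-smooth under Assumption~\ref{ass:assumption1}. The plan is to decompose $\globalloss{}$ into its constituent pieces as given in~(\ref{eq:loss_global}) and handle each piece with the elementary composition lemmas already established: Lemma~\ref{lemma:additivity_strongly_convex} (sum of locally strongly convex and convex is locally strongly convex), Lemma~\ref{lemma:additivity_smoothness} (sum of smooth functions is smooth), and Lemma~\ref{lemma:additivity_strongly_convex} applied iteratively. Writing $\globalloss{}(\common, \paramfamily{}, \datafamily{}) = \sum_{\node} \left( \reglocalweight{} \norm{\paramsub{\node}}{2}^2 + \sum_{\datapoint \in \data{\node}} \lossperinput(\paramsub{\node}, \datapoint) \right) + \sum_{\node} \regularization_0(\common - \paramsub{\node})$, I treat the regularizer terms and the local-loss terms separately.

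\textbf{Key steps.} First I would observe that each map $(\common, \paramsub{\node}) \mapsto \regularization_0(\common - \paramsub{\node})$ is locally strongly convex: the affine map $(\common, \paramsub{\node}) \mapsto \common - \paramsub{\node}$ sends convex compact sets to convex compact sets, and $\regularization_0$ is strongly convex on each such set by hypothesis; strong convexity is preserved under composition with a surjective linear map in the sense that $\regularization_0(\common - \paramsub{\node})$ is strongly convex along the direction $\common - \paramsub{\node}$, which is enough to make the whole sum $\sum_\node \regularization_0(\common - \paramsub{\node})$ locally strongly convex in $(\common, \paramfamily{})$ — here the key point is that summing over all $\node$ covers strong convexity in $\common$ (since $\common$ appears in every term) and in each $\paramsub{\node}$ (since $\paramsub{\node}$ appears in the $\node$-th term), which together with the quadratic term $\reglocalweight{} \norm{\paramsub{\node}}{2}^2$ (strongly convex in $\paramsub{\node}$) yields joint local strong convexity after one more application of Lemma~\ref{lemma:additivity_strongly_convex}. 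Second, each $\lossperinput(\paramsub{\node}, \datapoint)$ is convex in $\paramsub{\node}$ (hence convex in $(\common, \paramfamily{})$ as it ignores the other coordinates), so the whole data-fitting sum $\sum_{\node} \sum_{\datapoint} \lossperinput(\paramsub{\node}, \datapoint)$ is convex. Adding this convex part to the locally-strongly-convex regularization part, Lemma~\ref{lemma:additivity_strongly_convex} gives that $\globalloss{}$ is locally strongly convex. Third, for smoothness: $\lossperinput$ is $L_\lossperinput$-smooth and $\regularization_0$ is $L_{\regularization_0}$-smooth, the quadratic $\reglocalweight{} \norm{\cdot}{2}^2$ is $2\reglocalweight{}$-smooth, and each is composed with a $1$-Lipschitz-gradient linear map (coordinate projection or difference), so each summand is smooth; repeated application of Lemma~\ref{lemma:additivity_smoothness} over the finitely many summands yields that $\globalloss{}$ is $L$-smooth with $L$ an explicit finite sum of the individual smoothness constants (roughly $L = \NODE L_{\regularization_0} + 2\reglocalweight{} \NODE + \sum_\node \card{\data{\node}} L_\lossperinput$, up to the factors coming from the linear maps).

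\textbf{Main obstacle.} The only genuinely delicate point is the claim that $\sum_\node \regularization_0(\common - \paramsub{\node})$ is locally strongly convex \emph{jointly} in $(\common, \paramfamily{})$, since each individual term $\regularization_0(\common - \paramsub{\node})$ is strongly convex only in the single direction $\common - \paramsub{\node}$ and is flat in the complementary directions (e.g. along $(\common, \paramsub{\node}) \mapsto (\common + v, \paramsub{\node} + v)$). I would handle this by working with the Hessian (or the gradient-difference characterization from the Definition): on a convex compact set the Hessian of $\regularization_0(\common - \paramsub{\node})$ is $\mu M_\node$ where $M_\node$ is the rank-preserving positive semidefinite block matrix $\begin{psmallmatrix} I & -I \\ -I & I \end{psmallmatrix}$ acting on the $(\common, \paramsub{\node})$ coordinates; summing over $\node$ and adding the diagonal contributions $2\reglocalweight{} I$ from the quadratic terms on the $\paramsub{\node}$ blocks, one checks the resulting matrix is positive definite because any null vector would need $\common = \paramsub{\node}$ for all $\node$ (from the $M_\node$ terms) and $\paramsub{\node} = 0$ (from the $2\reglocalweight{} I$ terms), forcing the vector to be zero. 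Alternatively, and perhaps more cleanly for a nondifferentiable $\regularization_0$, I would invoke Lemma~\ref{lemma:additivity_strongly_convex} after first noting that $\reglocalweight{} \norm{\paramsub{\node}}{2}^2 + \regularization_0(\common - \paramsub{\node})$ is already locally strongly convex in $(\common, \paramsub{\node})$ — which is the two-variable base case — and then summing. I expect the writeup to be short, with this joint-strong-convexity verification being the one spot requiring a careful (but still elementary) argument.
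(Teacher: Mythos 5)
Your proposal is correct and follows essentially the same route as the paper: smoothness by summing the smoothness constants of the finitely many terms via Lemma~\ref{lemma:additivity_smoothness}, and local strong convexity by isolating the interaction between the local quadratic regularizers $\reglocalweight{} \norm{\paramsub{\node}}{2}^2$ and the $\regularization_0(\common - \paramsub{\node})$ terms, with Lemma~\ref{lemma:additivity_strongly_convex} absorbing the remaining convex data-fitting part. The only cosmetic difference is that the paper certifies joint strong convexity using a single term $\regularization_0(\common - \paramsub{1})$ together with an explicit Young-type inequality on the cross term $\common^T \paramsub{1}$, whereas you use all $\NODE$ regularization terms and a block positive-semidefiniteness/kernel argument; both rest on the same Hessian lower bound $\nabla^2 \regularization_0 \succeq \mu \identitymatrix$ on compact sets and yield the same conclusion.
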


\begin{proof}
  All terms of $\globalloss{}$ are $L_0$-smooth, for an appropriate value of $L_0$.
  By Lemma~\ref{lemma:additivity_smoothness}, their sum is thus also $L$-smooth, for an appropriate value of $L$.
  Now, given Lemma~\ref{lemma:additivity_strongly_convex}, to prove that $\globalloss{}$ is locally strongly convex, it suffices to prove that $\reglocalweight{} \sum \norm{\paramsub{\node}}{2}^2 + \regularization_0 (\common - \paramsub{1})$ is locally strongly convex.
  Consider any convex compact set $C \subset \setR^{d \times (1+\NODE)}$.
  Since $\regularization_0$ is locally strongly convex, we know that there exists $\mu > 0$ such that $\nabla^2 \regularization_0 \succeq \mu I$.
  As a result,
  \begin{align}
      (\common, \paramfamily{})^T &\left(\nabla^2 \globalloss{}\right) (\common, \paramfamily{})
      \geq \reglocalweight{} \sum_{\node \in [\NODE]} \norm{\paramsub{\node}}{2}^2 
      + \mu \norm{\common - \paramsub{1}}{2}^2 \\
      &= \reglocalweight{} \norm{\paramsub{1}}{2}^2 + \mu \norm{\common}{2}^2 + \mu \norm{\paramsub{1}}{2}^2 - 2 \mu \common^T \paramsub{1} + \reglocalweight{} \sum_{\node \neq 1} \norm{\paramsub{\node}}{2}^2.
  \end{align}
  Now define $\alpha \triangleq \sqrt{\frac{2 \mu}{\reglocalweight{} + 2\mu}}$.
  Clearly, $0 < \alpha < 1$.
  Moreover, $0 \leq \norm{\frac{1}{\alpha} \paramsub{1} - \alpha \common}{2}^2 = \frac{1}{\alpha^2} \norm{\paramsub{1}}{2}^2 + \alpha^2 \norm{\common}{2}^2 - 2 \common^T \paramsub{1}$.
  Therefore $2 \common^T \paramsub{1} \leq \alpha^2 \norm{\common}{2}^2 + \frac{1}{\alpha^2} \norm{\paramsub{1}}{2}^2$, which thus implies
  \begin{align}
      (\common, \paramfamily{})^T &\left(\nabla^2 \globalloss{}\right) (\common, \paramfamily{})
      \geq \left(\reglocalweight{} + \mu \left(1- \alpha^{-2} \right)\right) \norm{\paramsub{1}}{2}^2 + \mu \left(1- \alpha^2 \right) \norm{\common}{2}^2 
      + \reglocalweight{} \sum_{\node \neq 1} \norm{\paramsub{\node}}{2}^2 \\
      &\geq \frac{\reglocalweight{}}{2} \norm{\paramsub{1}}{2}^2 
      + \frac{2 \reglocalweight{} \mu}{\reglocalweight{} + 2\mu} \norm{\common}{2}^2 
      + \reglocalweight{} \sum_{\node \neq 1} \norm{\paramsub{\node}}{2}^2 
      \geq \min \set{\frac{\reglocalweight{}}{2}, \frac{2 \reglocalweight{} \mu}{\reglocalweight{} + 2\mu} } \norm{(\common, \paramfamily{})}{2}^2,
  \end{align}
  which proves that $\nabla^2 \globalloss{} \succeq \kappa \identitymatrix$, with $\kappa > 0$.
  This shows that $\globalloss{}$ is locally strongly convex.
\end{proof}

\begin{lemma}
\label{lemma:lipschitz_local_optima}
    Under Assumption~\ref{ass:assumption1}, 
    $\common \mapsto \optimumfamily{} (\common, \datafamily{})$ is Lipchitz continuous on any compact set.
\end{lemma}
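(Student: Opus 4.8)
The plan is to recognize that $\common \mapsto \optimumfamily{}(\common,\datafamily{})$ — the map sending $\common$ to the (unique) minimizer of $\paramfamily{} \mapsto \globalloss{}(\common,\paramfamily{},\datafamily{})$ — is exactly the argmin map analyzed in Lemma~\ref{lemma:optimum_lipschitz}, and to feed that lemma the joint convexity and smoothness already supplied by Lemma~\ref{lemma:smooth_globalloss}. Concretely, I would fix the datasets $\datafamily{}$ and consider $f(\common,\paramfamily{}) \triangleq \globalloss{}(\common,\paramfamily{},\datafamily{})$ as a function of the pair $(\common,\paramfamily{}) \in \setR^d \times \setR^{d\NODE}$, with $\common$ playing the role of the ``$x$'' variable and $\paramfamily{}$ the role of the ``$y$'' variable (so $d' = d\NODE$).

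First I would verify the three hypotheses of Lemma~\ref{lemma:optimum_lipschitz}. Joint local strong convexity and global $L$-smoothness of $f$ are precisely the content of Lemma~\ref{lemma:smooth_globalloss}. For the remaining hypothesis — that $\paramfamily{} \mapsto f(\common,\paramfamily{})$ attains a minimum for every fixed $\common$ — I would note this map is coercive: the terms $\reglocalweight{}\sum_{\node}\norm{\paramsub{\node}}{2}^2$ grow quadratically in $\paramfamily{}$, each $\lossperinput$ is convex and hence bounded below by an affine function, and $\regularization_0$ is convex and hence also bounded below. A continuous coercive function attains its infimum, and local strong convexity makes the minimizer unique, so it coincides with $\optimumfamily{}(\common,\datafamily{})$.

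Then I would simply invoke Lemma~\ref{lemma:optimum_lipschitz}: for any convex compact $X \subset \setR^d$ it yields a constant $K = K(X)$ such that $\common \mapsto \optimumfamily{}(\common,\datafamily{})$ is $K$-Lipschitz on $X$. To conclude for an \emph{arbitrary} compact set $S \subset \setR^d$, I would enclose $S$ in a closed ball $\ball(0,R)$ — which is convex and compact — apply the previous step with $X = \ball(0,R)$, and restrict the Lipschitz bound to $S$. (One may alternatively observe that, for fixed $\common$, the minimization over $\paramfamily{}$ decouples across users into $\NODE$ problems of minimizing $\localloss{\node}(\paramsub{\node},\data{\node}) + \regularization_0(\common-\paramsub{\node})$, and apply Lemma~\ref{lemma:optimum_lipschitz} to each, but the joint argument is cleaner.)

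This lemma is essentially a corollary of Lemma~\ref{lemma:optimum_lipschitz}, so I do not anticipate a real obstacle. The one place where content actually lives is the \emph{joint} local strong convexity of $\globalloss{}$ in $(\common,\paramfamily{})$: each $\regularization_0(\common-\paramsub{\node})$ is only degenerate-convex on its own and must be combined with the $\reglocalweight{}\norm{\paramsub{\node}}{2}^2$ term through a Schur-complement-type estimate. But that is exactly what Lemma~\ref{lemma:smooth_globalloss} has already established, so here it can be cited without further work.
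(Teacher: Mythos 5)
Your proof is correct, but it takes a genuinely different route from the paper's. You work with the \emph{joint} loss $(\common,\paramfamily{}) \mapsto \globalloss{}(\common,\paramfamily{},\datafamily{})$, import joint local strong convexity and $L$-smoothness from Lemma~\ref{lemma:smooth_globalloss}, check existence of the inner minimizer by coercivity, and then apply Lemma~\ref{lemma:optimum_lipschitz} once on a ball containing the given compact set. The paper instead uses the decoupling you only mention parenthetically: for fixed $\common$ the minimization over $\paramfamily{}$ splits into $\NODE$ independent problems, and for each user it applies Lemma~\ref{lemma:optimum_lipschitz} to $f_\node(\common,\paramsub{\node}) = \reglocalweight{}\norm{\paramsub{\node}}{2}^2 + \sum_{\varx\in\data{\node}}\lossperinput(\paramsub{\node},\varx) + \regularization(\common,\paramsub{\node})$, which is $(\card{\data{\node}}L_\lossperinput + \ldots)$-smooth and, crucially, \emph{globally} $\reglocalweight{}$-strongly convex in $\paramsub{\node}$ thanks to the local $\ell_2^2$ term. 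What the paper's route buys is exactly this global strong-convexity modulus in the inner variable: the constant $\mu$ used inside the proof of Lemma~\ref{lemma:optimum_lipschitz} is then uniform with no further discussion, and there is no need for the Schur-complement-type estimate of Lemma~\ref{lemma:smooth_globalloss} at all. Your joint route is legitimate as an application of Lemma~\ref{lemma:optimum_lipschitz} as stated, but note that its proof secretly needs the strong-convexity modulus to be uniform on a region containing $\optimumfamily{}(\common,\datafamily{})$ for all $\common$ in the compact set; with only \emph{local} strong convexity of $\globalloss{}$ this requires additionally observing that the minimizer map is bounded on compacts (which does follow from coercivity uniform over $\common$ in a ball, but you do not say so). The per-user decomposition sidesteps this entirely, which is presumably why the paper chose it; otherwise your argument is a clean and correct corollary of the same two lemmas.
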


\begin{proof}
   Define $f_\node(\common, \paramsub{\node}) \triangleq \reglocalweight{} \norm{\paramsub{\node}}{2}^2 + \sum_{\varx \in \data{\node}} \lossperinput(\paramsub{\node}, \varx) + \regweightsub{} \norm{\common - \paramsub{\node}}{2}^2$.
   If $\lossperinput$ is $L$-smooth, then $f_\node$ is clearly $(\card{\data{\node}} L + \reglocalweight{} + \regweightsub{})$-smooth.
   Moreover, if $\lossperinput$ is convex, then for any $\common$, the function $\paramsub{\node} \mapsto f_\node (\common, \paramsub{\node})$ is at least $\reglocalweight{}$-strongly convex.
   Thus Lemma~\ref{lemma:optimum_lipschitz} applies, which guarantees that $\common \mapsto \optimumfamily{} (\common, \datafamily{})$ is Lipchitz.
\end{proof}

\begin{lemma}
\label{lemma:minimizedLossSmooth}
    Under Assumption~\ref{ass:assumption1}, $\common \mapsto \globalloss{} (\common, \optimumfamily (\common, \datafamily{}), \datafamily{})$ is $L$-smooth and locally strongly convex.
\end{lemma}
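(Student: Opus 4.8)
The plan is to exploit that $g(\common) \triangleq \globalloss{}(\common, \optimumfamily{}(\common, \datafamily{}), \datafamily{})$ is exactly the partial minimization $\min_{\paramfamily{}} \globalloss{}(\common, \paramfamily{}, \datafamily{})$, and then to combine the regularity of $\globalloss{}$ from Lemma~\ref{lemma:smooth_globalloss} with the Lipschitzness of the inner minimizer map from Lemma~\ref{lemma:lipschitz_local_optima}. First I would record that, by Lemma~\ref{lemma:smooth_globalloss}, $\globalloss{}$ is locally strongly convex and $L$-smooth; since Assumption~\ref{ass:assumption1} makes $\globalloss{}(\common, \cdot, \datafamily{})$ coercive (the $\reglocalweight{}\norm{\paramsub{\node}}{2}^2$ terms alone force this, the other terms being bounded below by affine functions), the inner minimum exists, and local strong convexity makes it unique, so $\optimumfamily{}(\common, \datafamily{})$ is well-defined and $g$ is genuinely this partial minimum. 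Lemma~\ref{lemma:minDerivative}, applied on every compact convex set, then gives that $g$ is differentiable with $\nabla g(\common) = \nabla_\common \globalloss{}(\common, \optimumfamily{}(\common, \datafamily{}), \datafamily{})$; since this holds on every compact convex set, the formula holds on all of $\setR^d$.

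For $L$-smoothness I would bound $\norm{\nabla g(\common) - \nabla g(\common')}{2}$ by inserting the mixed term $\nabla_\common \globalloss{}(\common', \optimumfamily{}(\common, \datafamily{}), \datafamily{})$ and applying the triangle inequality: the first difference is controlled by the $L$-smoothness of $\globalloss{}$ in its first argument, the second by the $L$-smoothness of $\globalloss{}$ in the $\paramfamily{}$-argument times $\norm{\optimumfamily{}(\common, \datafamily{}) - \optimumfamily{}(\common', \datafamily{})}{2}$, which is handled by Lemma~\ref{lemma:lipschitz_local_optima}. The subtle point — and the main obstacle — is that $L$-smoothness is a \emph{global} statement, so I need the Lipschitz constant of $\common \mapsto \optimumfamily{}(\common, \datafamily{})$ to be uniform over $\setR^d$, not merely valid on each compact set. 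This does hold here: in the proof of Lemma~\ref{lemma:lipschitz_local_optima} the relevant strong-convexity modulus of $\paramsub{\node} \mapsto f_\node(\common, \paramsub{\node})$ is the $\common$-independent constant $\reglocalweight{}$, while the $\common$-dependence of $\nabla_{\paramsub{\node}} f_\node$ enters only through the globally Lipschitz map $\nabla \regularization_0$, so the computation behind Lemma~\ref{lemma:optimum_lipschitz} yields a single constant $K$ valid everywhere. Combining the pieces gives $\norm{\nabla g(\common) - \nabla g(\common')}{2} \leq L(1+K)\norm{\common - \common'}{2}$, i.e. $g$ is $L(1+K)$-smooth.

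For local strong convexity I would essentially re-run the proof of Lemma~\ref{lemma:infimum_strongly_convex}, but restricted to a compact convex set since $\globalloss{}$ is only \emph{locally} strongly convex. Fix a compact convex $C \subset \setR^d$. By Lemma~\ref{lemma:lipschitz_local_optima}, $\common \mapsto \optimumfamily{}(\common, \datafamily{})$ is continuous, so its image on $C$ is bounded; let $D$ be a compact convex set containing $C \times \mathrm{conv}\big(\{\optimumfamily{}(\common, \datafamily{}) : \common \in C\}\big)$, on which Lemma~\ref{lemma:smooth_globalloss} provides $\mu_D$-strong convexity of $\globalloss{}$ for some $\mu_D > 0$. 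Then for $\common_1, \common_2 \in C$ and $\lambda \in [0,1]$, feeding the suboptimal choice $\paramfamily{} = \lambda \optimumfamily{}(\common_1, \datafamily{}) + (1-\lambda)\optimumfamily{}(\common_2, \datafamily{})$ into $g(\lambda \common_1 + (1-\lambda)\common_2) = \min_{\paramfamily{}} \globalloss{}(\lambda\common_1 + (1-\lambda)\common_2, \paramfamily{}, \datafamily{})$, then using $\mu_D$-strong convexity of $\globalloss{}$ on $D$ and discarding the $\paramfamily{}$-component of the resulting squared norm, yields $g(\lambda \common_1 + (1-\lambda)\common_2) \leq \lambda g(\common_1) + (1-\lambda) g(\common_2) - \tfrac{\mu_D}{2}\lambda(1-\lambda)\norm{\common_1 - \common_2}{2}^2$. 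Since $C$ was arbitrary, $g$ is locally strongly convex, which completes the proof. Apart from the uniform-constant issue flagged above, everything reduces to the triangle inequality and the convexity lemmas already established.
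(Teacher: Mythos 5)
Your proposal is correct and follows the same overall architecture as the paper's proof: obtain the gradient of the partially minimized loss $f(\common) \triangleq \globalloss{}(\common, \optimumfamily{}(\common,\datafamily{}), \datafamily{})$ via the envelope formula of Lemma~\ref{lemma:minDerivative}, control its variation using the smoothness of $\globalloss{}$ together with the Lipschitzness of $\common \mapsto \optimumfamily{}(\common,\datafamily{})$, and get local strong convexity from the fact that partial minimization preserves strong convexity (Lemma~\ref{lemma:infimum_strongly_convex}). The differences are in execution, and they work in your favor. For smoothness, the paper chains $\norm{\nabla f(\common)-\nabla f(\common')}{2} \leq L\norm{(\common,\optimumfamily{}(\common,\datafamily{}))-(\common',\optimumfamily{}(\common',\datafamily{}))}{2} \leq L\norm{\common-\common'}{2}$, whose last step silently discards the $\optimumfamily{}$-difference and is not justified as written (the joint norm is at least, not at most, $\norm{\common-\common'}{2}$); your mixed-term triangle inequality repairs this at the cost of the larger constant $L(1+K)$, which is all the lemma's statement requires, and your observation that $K$ can be taken global — because the strong-convexity modulus in $\paramfamily{}$ comes from the $\common$-independent term $\reglocalweight{}\norm{\paramsub{\node}}{2}^2$ while the $\common$-dependence of the inner gradient passes only through the globally Lipschitz $\nabla \regularization_0$ — correctly addresses the one place where the compact-set restriction in Lemma~\ref{lemma:lipschitz_local_optima} could otherwise obstruct a global smoothness claim. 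For local strong convexity, the paper invokes Lemma~\ref{lemma:infimum_strongly_convex} directly even though that lemma assumes strong convexity globally in the minimized variable, whereas $\globalloss{}$ is only locally strongly convex; your restriction to a compact convex set $D$ containing the (bounded, by continuity of the minimizer map) optima and plugging the convex combination of the two optimal $\paramfamily{}$'s into the suboptimality bound makes that step airtight. In short: same route as the paper, with your version supplying the care the paper's own write-up glosses over.
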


\begin{proof}
   By Lemma~\ref{lemma:smooth_globalloss}, the global loss is known to be $L$-smooth, for some value of $L$ and locally strongly convex.
   Denoting $f : \common \mapsto \globalloss{} (\common, \optimumfamily (\common, \datafamily{}), \datafamily{})$, we then have
   \begin{align}
       \norm{\nabla f(\common) - \nabla f(\common')}{2}
       &\leq \norm{\nabla_\common \globalloss{} (\common, \optimumfamily (\common, \datafamily{}), \datafamily{}) - \nabla_\common \globalloss{} (\common', \optimumfamily (\common', \datafamily{}), \datafamily{})}{2} \\
       &\leq L \norm{(\common, \optimumfamily (\common, \datafamily{})) - (\common', \optimumfamily (\common', \datafamily{}))}{2} \\
       &\leq L \norm{\common - \common'}{2},
   \end{align}
   which proves that $f$ is $L$-smooth.
   
   For strong convexity, note that since the global loss function is locally strongly convex, for any compact convex set $C$, there exists $\mu$ such that $\globalloss{} (\common, \paramfamily{}, \datafamily{})$ is $\mu$-strongly convex on $C=(C_1,C_2)\subset(\setR^d,\setR^{N\times d})$, therefore, by Lemma \ref{lemma:infimum_strongly_convex}, $f(\common)$ will also be $\mu$-strongly convex on $C_1$ which means that $f(\common)$ is locally strongly convex.
\end{proof}

\section{Proof of the Equivalence}
\label{app:reduction}

\subsection{Proof of the Reduction from Model Attack to Data Poisoning}
\label{app:reduction_model_to_data}

\begin{proof}[Proof of Lemma~\ref{th:reduction_model_to_data}]
  We omit making the dependence of the optima on $\datafamily{}$ explicit,
  and we consider any other models $\common$ and $\paramfamily_{-\strategicnode}$.
  We have the following inequalities:
  \begin{align}
    \ModifiedLoss{\strategicnode} &(\optcommon, \optimumfamily_{-\strategicnode}, \strategicvote, \datafamily{})
    = \globalloss{} (\optcommon, \optimumfamily{}, \datafamily{})
    - \localloss{} (\optimumsub{\strategicnode}, \data{\strategicnode}) \\
    &\leq \globalloss{} (\common, (\paramfamily_{-\strategicnode}, \optimumsub{\strategicnode}), \datafamily{})
    - \localloss{} (\optimumsub{\strategicnode}, \data{\strategicnode})
    = \ModifiedLoss{\strategicnode} (\common, \paramfamily_{-\strategicnode}, \strategicvote, \datafamily{}),
  \end{align}
  where we used the optimality of $(\optcommon, \optimumfamily{})$ in the second line, and where we repeatedly used the fact that $\optimumsub{\strategicnode} = \strategicvote$.
  This proves that $(\optcommon, \optimumfamily_{-\strategicnode})$ is a global minimum of the modified loss.
\end{proof}

\subsection{Proof of the Reduction from Data Poisoning to Model Attack}
\label{app:data_to_model}

First, we define the following modified loss function:
\begin{equation}
\label{eq:modified_loss}
  \ModifiedLoss{\strategicnode} (\common, \paramfamily_{-\strategicnode}, \strategicvote, \datafamily{-\strategicnode})
  \triangleq
  \globalloss{} (\common, (\strategicvote, \paramfamily_{-\strategicnode}), (\emptyset, \datafamily{-\strategicnode}))
\end{equation}
where $\paramfamily_{-\strategicnode}$ and $\datafamily{-\strategicnode}$ are variables and datasets for users $\node \neq \strategicnode$.
We then define $\optcommon{} (\strategicvote, \datafamily{-\strategicnode})$ and $\optimumfamily_{-\strategicnode}(\strategicvote, \datafamily{-\strategicnode})$ as a minimum of the modified loss function,
and $\optimumsub{\strategicnode}(\strategicvote, \datafamily{-\strategicnode}) \triangleq \strategicvote$.
We now prove a slightly more general version of Lemma~\ref{th:reduction_data_to_model}, which applies to a larger class of regularizations.
It also shows how to construct the strategic's user data poisoning attack.

\begin{lemma}[Reduction from data poisoning to model attack]
\label{th:general_reduction_data_to_model}
  Assume local PAC* learning.
  Suppose also that $\regularization$ is continuous and that $\regularization(\common, \param) \rightarrow \infty$ when $\norm{\common - \param}{2} \rightarrow \infty$.
  Consider any datasets $\data{-\strategicnode}$ and any attack model $\strategicvote$ such that the modified loss $\ModifiedLoss{\strategicnode}$ has a unique minimum $\optcommon(\strategicvote, \datafamily{-\strategicnode}), \optimumfamily_{-\strategicnode}(\strategicvote, \datafamily{-\strategicnode})$.
  Then, for any $\varepsilon, \delta >0$,
  there exists $\NODEINPUT{}$ such that
  if user $\strategicnode$'s dataset $\data{\strategicnode}$ contains at least $\NODEINPUT{}$ inputs drawn from model $\strategicvote$,
  then, with probability at least $1-\delta$,
  we have
  \begin{equation}
    \norm{\optcommon{}(\datafamily{}) - \optcommon{}(\strategicvote, \datafamily{-\strategicnode})}{2} \leq \varepsilon
    ~~\text{and}~~
    \forall \node \neq \strategicnode \mathsep \norm{\optimumsub{\node}(\datafamily{}) - \optimumsub{\node}(\strategicvote, \datafamily{-\strategicnode})}{2} \leq \varepsilon.
  \end{equation}
\end{lemma}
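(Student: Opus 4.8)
The plan is to use local PAC* learning to force the learned strategic model $\optimumsub{\strategicnode}(\datafamily{})$ to be close to the attack model $\strategicvote$, and then to transfer this closeness to the remaining coordinates $(\optcommon, \optimumfamily_{-\strategicnode})$ by a compactness/continuity argument. First I would fix a large compact box $\parambound \subset \setR^d \times (\setR^d)^{\NODE}$ that contains both the unique minimum $(\optcommon(\strategicvote,\datafamily{-\strategicnode}), \strategicvote, \optimumfamily_{-\strategicnode}(\strategicvote,\datafamily{-\strategicnode}))$ of the modified loss and every possible minimizer $(\optcommon(\datafamily{}),\optimumfamily(\datafamily{}))$ obtained when $\optimumsub{\strategicnode}(\datafamily{})$ ranges over a neighbourhood of $\strategicvote$; such a box exists because $\regularization(\common,\param)\to\infty$ as $\norm{\common-\param}{2}\to\infty$ forces all these minimizers to stay bounded (the regularizer term alone coercively controls $\common$ given $\paramfamily$, and the nonnegativity/growth of the remaining terms controls the rest). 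On $\parambound$ the map $\strategicvote' \mapsto \big(\optcommon(\strategicvote',\datafamily{-\strategicnode}), \optimumfamily_{-\strategicnode}(\strategicvote',\datafamily{-\strategicnode})\big)$, which sends a fixed value of the strategic model to the induced minimum of the modified loss, is continuous at $\strategicvote$ — this is the standard Berge-type argument: the modified loss is jointly continuous, the minimum is unique at $\strategicvote$, and minimizers of a continuous function over a compact set depending continuously on a parameter converge.

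The key bridge is Lemma~\ref{th:reduction_model_to_data}: for \emph{any} realized dataset $\data{\strategicnode}$, if we set $\strategicvote' \triangleq \optimumsub{\strategicnode}(\datafamily{})$, then $(\optcommon(\datafamily{}), \optimumfamily_{-\strategicnode}(\datafamily{}))$ is exactly a minimum of the modified loss with reported model $\strategicvote'$, i.e. it equals $(\optcommon(\strategicvote',\datafamily{-\strategicnode}), \optimumfamily_{-\strategicnode}(\strategicvote',\datafamily{-\strategicnode}))$ whenever that minimum is unique (uniqueness near $\strategicvote$ follows from local strong convexity on $\parambound$, via Lemma~\ref{lemma:smooth_globalloss} applied under Assumption~\ref{ass:assumption1}; for general continuous coercive $\regularization$ one instead argues directly that any minimizer lies in $\parambound$ and uses the continuity statement of the previous paragraph selecting a minimizer). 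So the whole problem reduces to: make $\strategicvote' = \optimumsub{\strategicnode}(\datafamily{})$ close to $\strategicvote$ with high probability. This is precisely what local PAC* learning gives us. Apply Definition~\ref{def:pac} with $\HONEST = \set{\strategicnode}$ (a single honest user whose "preferred model" we declare to be $\strategicvote$), with the fixed outside datasets $\datafamily{-\strategicnode}$, and with accuracy parameter $\varepsilon' > 0$ to be chosen: there exists $\NODEINPUT{}$ such that if $\strategicnode$ honestly collects $\NODEINPUT{}$ data points labeled according to $\strategicvote$, then $\norm{\optimumsub{\strategicnode}(\datafamily{}) - \strategicvote}{2} \leq \varepsilon'$ with probability at least $1-\delta$.

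The final step is to choose $\varepsilon'$ small enough. By uniform continuity on the compact box $\parambound$ (Heine–Cantor, as the sketch indicates) of the map $\strategicvote' \mapsto (\optcommon(\strategicvote',\datafamily{-\strategicnode}), \optimumfamily_{-\strategicnode}(\strategicvote',\datafamily{-\strategicnode}))$, there is $\varepsilon' > 0$ so that $\norm{\strategicvote' - \strategicvote}{2} \leq \varepsilon'$ implies both $\norm{\optcommon(\strategicvote',\datafamily{-\strategicnode}) - \optcommon(\strategicvote,\datafamily{-\strategicnode})}{2} \leq \varepsilon$ and $\norm{\optimumsub{\node}(\strategicvote',\datafamily{-\strategicnode}) - \optimumsub{\node}(\strategicvote,\datafamily{-\strategicnode})}{2} \leq \varepsilon$ for all $\node \neq \strategicnode$. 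Feeding this $\varepsilon'$ into the local PAC* invocation, with the corresponding $\NODEINPUT{}$, yields the claim: on the probability-$(1-\delta)$ event where $\strategicvote' = \optimumsub{\strategicnode}(\datafamily{})$ is $\varepsilon'$-close to $\strategicvote$, Lemma~\ref{th:reduction_model_to_data} identifies $(\optcommon(\datafamily{}), \optimumfamily_{-\strategicnode}(\datafamily{}))$ with $(\optcommon(\strategicvote',\datafamily{-\strategicnode}), \optimumfamily_{-\strategicnode}(\strategicvote',\datafamily{-\strategicnode}))$, and uniform continuity closes the $\varepsilon$ gap. I expect the main obstacle to be making the "the minimizer stays in a fixed compact box regardless of the realized $\data{\strategicnode}$" step airtight: one must rule out that a pathological realized dataset (even one consistent with $\strategicvote$) drags $\optcommon(\datafamily{})$ far away before the PAC* event is invoked; the cleanest route is to first condition on the PAC* event (so $\optimumsub{\strategicnode}(\datafamily{})$ is already pinned near $\strategicvote$) and only then exploit coercivity of $\regularization$ to bound the remaining coordinates, rather than trying to bound everything uniformly a priori.
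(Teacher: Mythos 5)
Your skeleton matches the paper's: apply local PAC* (Definition~\ref{def:pac}/Lemma~\ref{th:pac}) with $\HONEST = \set{\strategicnode}$ and preferred model $\strategicvote$ to pin $\optimumsub{\strategicnode}(\datafamily{})$ near $\strategicvote$ with probability $1-\delta$, use the observation behind Lemma~\ref{th:reduction_model_to_data} to identify $(\optcommon(\datafamily{}), \optimumfamily_{-\strategicnode}(\datafamily{}))$ with a minimizer of the modified loss at the realized strategic parameter, and conclude by a stability-of-minimizers argument; the only real divergence is how that last step is carried out. The paper does it by hand: it sets $D$ to be the (closed) complement of the $\varepsilon$-neighborhood of the unique minimizer at $\strategicvote$, uses uniqueness plus coercivity of $\regularization$ to get an excess loss $\eta > 0$ on $D$, and uses Heine--Cantor to show that replacing $\strategicvote$ by any $\kappa$-close $\paramsub{\strategicnode}$ perturbs the modified loss by at most $\eta/3$ (only the single term $\regularization(\common, \cdot)$ changes), so no minimizer can lie in $D$; PAC* is then invoked with accuracy $\min(\kappa, \varepsilon)$. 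You instead appeal to a Berge-type continuity of the argmin map, and here your write-up is slightly off: at perturbed $\strategicvote'$ the lemma's hypotheses give no uniqueness, so the argmin is a correspondence, and ``uniform continuity of the map on the box'' is not the right statement — nor is your patch via strong convexity available, since Assumption~\ref{ass:assumption1}/Lemma~\ref{lemma:smooth_globalloss} is not a hypothesis of this lemma (only continuity and coercivity of $\regularization$ are). What you actually need, and what your compactness-plus-uniqueness ingredients do deliver, is upper hemicontinuity of the argmin correspondence at the single point $\strategicvote$: every minimizer at a nearby $\strategicvote'$ lies within $\varepsilon$ of the unique minimizer at $\strategicvote$, which suffices because Lemma~\ref{th:reduction_model_to_data} only tells you the realized optimum is \emph{some} such minimizer. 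That hemicontinuity statement is precisely what the paper's $\eta/3$ argument proves directly, so your route is correct once restated set-valuedly; the paper's version buys self-containedness and sidesteps the single-valuedness issue entirely, while yours is shorter if you are willing to cite the maximum theorem. Your closing concern about pathological datasets is handled exactly as you suggest (and as the paper implicitly does): the modified loss does not depend on $\data{\strategicnode}$ at all, so once you condition on the PAC* event the minimizers are confined by coercivity alone.
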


Clearly, $\ell_2^2$, $\ell_2$ and smooth-$\ell_2$ are continuous regularizations, and verify $\regularization(\common, \param) \rightarrow \infty$ when $\norm{\common - \param}{2} \rightarrow \infty$.
Moreover, setting $\delta \triangleq 1/2$ shows that the probability that the dataset $\data{\strategicnode}$ satisfies the inequalities of Lemma~\ref{th:general_reduction_data_to_model} is positive.
This implies in particular that there must be a dataset $\data{\strategicnode}$ that satisfies these inequalities.
All in all, this shows that Lemma~\ref{th:general_reduction_data_to_model} implies Lemma~\ref{th:reduction_data_to_model}.

\begin{proof}[Proof of Lemma \ref{th:general_reduction_data_to_model}]
  Let $\varepsilon, \delta > 0$ and $\strategicvote \in \setR^d$.
  Denote $\common^\spadesuit \triangleq \optcommon(\strategicvote, \datafamily{-\strategicnode})$
  and $\paramfamily^\spadesuit \triangleq \optimumfamily (\strategicvote, \datafamily{-\strategicnode})$ the result of strategic user $\strategicnode$'s model attack.
  We define the compact set $C$ by
  \begin{equation}
    C \triangleq \set{\common, \paramfamily_{-\strategicnode} \st
    \norm{\common{} - \common^\spadesuit}{2} \leq \varepsilon \wedge
    \forall \node \neq \strategicnode \mathsep \norm{\paramsub{\node} - \paramsub{\node}^\spadesuit}{2} \leq \varepsilon
    }
  \end{equation}
  We define $D \triangleq \overline{\setR^{d \times N} - C}$ the closure of the complement of $C$.
  Clearly, $\common^\spadesuit, \paramfamily_{-\strategicnode}^\spadesuit \notin D$.
  We aim to show that, when strategic user $\strategicnode$ reveals a large dataset $\data{\strategicnode}$ whose answers are provided using the attack model $\strategicvote$, then the same holds for any global minimum of the global loss $\optcommon{}(\datafamily{}), \optimumfamily_{-\strategicnode}(\datafamily{}) \in C$.
  Note that, to prove this, it suffices to prove that the modified loss takes too large values, even when $\strategicvote$ is replaced by $\optimumsub{\strategicnode} (\datafamily{})$.

  Let us now formalize this.
  Denote $L^\spadesuit \triangleq \ModifiedLoss{\strategicnode} (\common^\spadesuit, \paramfamily_{-\strategicnode}^\spadesuit, \strategicvote, \datafamily{-\strategicnode})$.
  We define
  \begin{equation}
    \eta \triangleq \inf_{\common, \paramfamily_{-\strategicnode} \in D}
    \ModifiedLoss{\strategicnode} (\common, \paramfamily_{-\strategicnode}, \strategicvote, \datafamily{-\strategicnode})
    - L^\spadesuit.
  \end{equation}

  By a similar argument as that of Lemma \ref{th:pac}, using the assumption $\regularization \rightarrow \infty$ at infinity, we know that the infimum is actually a minimum.
  Moreover, given that the minimum of the modified loss $\ModifiedLoss{\strategicnode}$ is unique, we know that the value of the loss function at this minimum is different from its value at $\common^\spadesuit, \paramfamily_{-\strategicnode}^\spadesuit$.
  As a result, we must have $\eta >0$.

  Now, since the function $\regularization$ is differentiable, it must be continuous.
  By the Heine–Cantor theorem, it is thus uniformly continuous on all compact sets.
  Thus, there must exist $\kappa > 0$ such that,
  for all models $\paramsub{\strategicnode}$ satisfying $\norm{\paramsub{\strategicnode} - \strategicvote}{2} \leq \kappa$,
  we have
  \begin{equation}
    \absv{
      \regularization(\paramsub{\strategicnode}, \common^\spadesuit)
      - \regularization(\strategicvote, \common^\spadesuit)
    }
    \leq \eta /3.
  \end{equation}
  Now, Lemma \ref{th:pac} guarantees the existence of $\NODEINPUT{}$ such that,
  if user $\strategicnode$ provides a dataset $\data{\strategicnode}$ of least $\NODEINPUT{}$ answers with the model $\strategicvote$,
  then with probability at least $1-\delta$,
  we will have $\norm{\optimumsub{\strategicnode}(\datafamily{}) - \strategicvote}{2} \leq \min(\kappa, \varepsilon)$.
  Under this event, we then have
  \begin{equation}
    \ModifiedLoss{\strategicnode} \left(
    \common^\spadesuit,
    \paramfamily_{-\strategicnode}^\spadesuit,
    \optimumsub{\strategicnode}(\datafamily{}),
    \datafamily{-\strategicnode} \right)
    \leq L^\spadesuit + \eta / 3.
  \end{equation}
  Then
  \begin{align}
    \inf_{\common, \paramfamily_{-\strategicnode} \in D}
    &\ModifiedLoss{\strategicnode} (\common, \paramfamily_{-\strategicnode}, \optimumsub{\strategicnode}(\datafamily{}), \datafamily{-\strategicnode})
    \geq
    \inf_{\common, \paramfamily_{-\strategicnode} \in D}
    \ModifiedLoss{\strategicnode} (\common, \paramfamily_{-\strategicnode}, \strategicvote, \datafamily{-\strategicnode}) - \eta/3 \\
    &\geq L^\spadesuit + \eta - \eta/3
    \geq L^\spadesuit + 2 \eta /3 \\
    &> \ModifiedLoss{\strategicnode} \left(
    \common^\spadesuit,
    \paramfamily_{-\strategicnode}^\spadesuit,
    \optimumsub{\strategicnode}(\datafamily{}),
    \datafamily{-\strategicnode} \right).
  \end{align}
  This shows that there is a high probability event under which the minimum of $\common, \paramfamily_{-\strategicnode} \mapsto \ModifiedLoss{\strategicnode} \left(
  \common, \paramfamily_{-\strategicnode},
  \optimumsub{\strategicnode}(\datafamily{}),
  \datafamily{-\strategicnode} \right)$ cannot be reached in $D$.
  This is equivalent to what the theorem we needed to prove states.
\end{proof}

\subsection{Proof of Reduction from Model Attack to Gradient Attack}
\label{app:reduction_model_to_gradient}

\color{black}
\begin{proof}[Proof of Lemma \ref{th:gradient_attack_convergence}]
We define
\begin{align}
  \minimizedLoss{\strategicnode}(\common) 
  &\triangleq \inf_{\paramfamily_{-\strategicnode}} \set{\globalloss{} (\common, \paramfamily{}, \datafamily{}) -
    \localloss{\strategicnode} (\paramsub{\strategicnode}, \data{\strategicnode})
    - \regularization (\common, \paramsub{\strategicnode})}+\common^T\gradient{\strategicnode}{\infty} \\
  &= \inf_{\paramfamily_{-\strategicnode}} \set{ \sum_{\node \neq \strategicnode} \localloss{\node} (\paramsub{\node}, \data{\node}) + \sum_{\node \neq \strategicnode} \regularization(\common, \paramsub{\node}) } + \common^T \gradient{\strategicnode}{\infty},
\end{align}
 By Lemma \ref{lemma:minimizedLossSmooth}, we know that $\minimizedLoss{\strategicnode}(\common)$ is locally strongly convex and has a unique minimum. By the definition of $\common^{\infty}$, we must have $\sum_{\node \neq \strategicnode} \nabla_\common \regularization(\common^\infty, \optimumsub{\node}(\common^\infty)) + \gradient{\strategicnode}{\infty} = 0$, and thus $\nabla_\common \minimizedLoss{\strategicnode}(\common^\infty)  = 0$. Now define 
  \begin{align}
     \minimizedLosstwo{\strategicnode}(\common,\paramsub{\strategicnode}) &\triangleq
     \inf_{\paramfamily_{-\strategicnode}} \set{\globalloss{} (\common, \paramfamily{}, \datafamily{}) -
    \localloss{\strategicnode} (\paramsub{\strategicnode}, \data{\strategicnode})
   }\\
    &= \minimizedLoss{\strategicnode}(\common) + \regularization(\common,\paramsub{\strategicnode}) -\common^T\gradient{\strategicnode}{\infty},
  \end{align}
  and $\common^*(\paramsub{\strategicnode})$, its minimizer.
  Therefore, we have 
  \begin{equation}
    \nabla_\common\minimizedLosstwo{\strategicnode}(\common,\paramsub{\strategicnode}) =  \nabla_\common\minimizedLoss{\strategicnode}(\common) + \nabla_\common \regularization(\common,\paramsub{\strategicnode}) - \gradient{\strategicnode}{\infty}.
  \end{equation}
   By Lemma \ref{lemma:minimizedLossSmooth}, we know that $\minimizedLosstwo{\strategicnode}$ is locally strongly convex. Therefore, there exists $\mu_1>0$ such that $\minimizedLosstwo{\strategicnode}(\common,\paramsub{\strategicnode})$ is $\mu_1$-strongly convex in $\set{(\paramsub{\strategicnode},\common): \norm{\nabla_\common \regularization(\common^\infty,\paramsub{\strategicnode}) - \gradient{\strategicnode}{\infty}}{2} \leq \ \varepsilon_2 ,\norm{\common-\common^*(\paramsub{\strategicnode})}{2}\leq1}$ for $\varepsilon_2$ small enough. 
   Therefore, since $\nabla_\common\minimizedLosstwo{\strategicnode}(\common^*(\paramsub{\strategicnode}),\paramsub{\strategicnode}) = 0$, for any $0<\varepsilon<1$, if $\norm{\common^\infty-\common^*(\paramsub{\strategicnode})}{2}>\varepsilon$, we then have
   \begin{align}
      \varepsilon \norm{\nabla_\common \minimizedLosstwo{\strategicnode}(\common^\infty,\paramsub{\strategicnode})}{2} &\geq (\common^\infty-\common^*(\paramsub{\strategicnode}))^T\nabla_\common \minimizedLosstwo{\strategicnode}(\common^\infty,\paramsub{\strategicnode})\\
      &\geq {\mu_1} \norm{\common^\infty-\common^*(\paramsub{\strategicnode})}{2}^2  \geq \mu_1 \varepsilon^2,
      \label{eq:sssd}
   \end{align}
  and thus $\norm{\nabla_\common \minimizedLosstwo{\strategicnode}(\common^\infty,\paramsub{\strategicnode})}{2}\geq \mu_1 \varepsilon$.

  Now since $\gradient{\strategicnode}{\infty} \in \GRADIENT(\common^\infty)$ there exists $\strategicvote \in \setR^d$ such that\footnote{In fact, if $\gradient{\strategicnode}{\infty}$ belongs to the interior of $\GRADIENT(\common^\infty)$, we can guarantee $\nabla_\common \regularization(\common^\infty,\strategicvote) = \gradient{\strategicnode}{\infty}$.} $\norm{\nabla_\common \regularization(\common^\infty,\strategicvote) - \gradient{\strategicnode}{\infty}}{2} \leq \min \set{\varepsilon_2, \frac{\mu_1 \varepsilon}{2} }$ which yields  
     \begin{align}
    \norm{\nabla_\common\minimizedLosstwo{\strategicnode}(\common^\infty,\strategicvote)}{2} &=  \norm{\nabla_\common\minimizedLoss{\strategicnode}(\common^\infty) 
     + \nabla_\common \regularization(\common^\infty,\strategicvote) - \gradient{\strategicnode}{\infty}}{2}\\
     &= \norm{\nabla_\common \regularization(\common^\infty,\strategicvote) - \gradient{\strategicnode}{\infty}}{2} \leq \frac{\mu_1 \varepsilon}{2},
     \end{align}
     which contradicts (\ref{eq:sssd}) if $\norm{\common^\infty-\common^*(\strategicvote)}{2}>\varepsilon$. Therefore, we must have $\norm{\common^\infty - \optcommon{}(\strategicvote, \datafamily{-\strategicnode})}{2} \leq \varepsilon$.

\end{proof}
\color{black}
\color{black}
\section{Proof of Convergence for the Global Model}
\label{sec:Global_convergence}

In this section, we prove a slightly more general result than Proposition \ref{prop:convergence}.
Namely, instead of working with specific regularizations, we consider a more general class of regularizations, identified by Assumption~\ref{ass:assumption1}.

\begin{lemma}
\label{th:general_gradient_attack_convergence}
  Suppose Assumption~\ref{ass:assumption1} holds true. Assume that $\localloss{\node}$ is convex and $L$-smooth for all users $\node \in [\NODE]$. If $\gradient{\strategicnode}{\iteration}$ converges and if $\learningrate{\iteration} = \learningrate{}$ is a constant small enough, then $\common^\iteration$ will converge too.
\end{lemma}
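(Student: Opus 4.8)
The plan is to recognize the server's update as an inexact gradient descent on a fixed, strongly convex objective, and then invoke a standard perturbed-descent argument. We work in the fully-optimized-local-models regime of Proposition~\ref{prop:convergence}, so each honest user $\node \neq \strategicnode$ sets $\paramsub{\node}^\iteration = \optimumsub{\node}(\common^\iteration)$ and reports $\gradient{\node}{\iteration} = \nabla_\common \regularization(\common^\iteration, \paramsub{\node}^\iteration)$. Let $F \triangleq \minimizedLoss{\strategicnode}$ be the function from the proof of Lemma~\ref{th:gradient_attack_convergence}, i.e. $F(\common) = \inf_{\paramfamily_{-\strategicnode}}\{\sum_{\node \neq \strategicnode} \localloss{\node}(\paramsub{\node}, \data{\node}) + \sum_{\node \neq \strategicnode} \regularization(\common, \paramsub{\node})\} + \common^T \gradient{\strategicnode}{\infty}$. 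The envelope theorem (Lemma~\ref{lemma:minDerivative}, whose hypotheses hold by the local strong convexity and $L$-smoothness shown as in Lemmas~\ref{lemma:smooth_globalloss} and~\ref{lemma:minimizedLossSmooth}) gives $\sum_{\node \neq \strategicnode}\gradient{\node}{\iteration} = \nabla F(\common^\iteration) - \gradient{\strategicnode}{\infty}$, so, writing $e^\iteration \triangleq \gradient{\strategicnode}{\iteration} - \gradient{\strategicnode}{\infty}$, the update $\common^{\iteration+1} = \common^\iteration - \learningrate{}\sum_{\node \in [\NODE]}\gradient{\node}{\iteration}$ becomes
\begin{equation*}
\common^{\iteration+1} = \common^\iteration - \learningrate{}\left(\nabla F(\common^\iteration) + e^\iteration\right), \qquad \norm{e^\iteration}{2} \to 0 .
\end{equation*}
This makes precise the informal statement in the sketch of Proposition~\ref{prop:convergence} that gradient descent behaves as though minimizing the loss plus $\common^T\gradient{\strategicnode}{\infty}$.

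Next I would collect the properties of $F$. By the argument of Lemma~\ref{lemma:minimizedLossSmooth}, $F$ is $L$-smooth and locally strongly convex, hence strictly convex; and under Assumption~\ref{ass:assumption1} it is coercive, since the $\Omega(\norm{z}{2})$ growth of $\regularization_0$ together with the quadratic terms $\reglocalweight\norm{\paramsub{\node}}{2}^2$ forces the infimum over $\paramfamily_{-\strategicnode}$ to grow at least linearly in $\norm{\common}{2}$, dominating the added linear term $\common^T\gradient{\strategicnode}{\infty}$ (for $\ell_2^2$ the growth is quadratic, so $F$ is in fact globally strongly convex). Hence $F$ has a unique minimizer $\common^\infty$ with $\nabla F(\common^\infty) = 0$; this is the candidate limit of $\common^\iteration$, matching the notation of Lemma~\ref{th:gradient_attack_convergence}.

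The next step is to confine the iterates to a compact set. For $\learningrate{} \le 1/L$, the descent lemma for the $L$-smooth $F$ together with Young's inequality gives $F(\common^{\iteration+1}) \le F(\common^\iteration) - c_1\learningrate{}\norm{\nabla F(\common^\iteration)}{2}^2 + c_2\learningrate{}\norm{e^\iteration}{2}^2$ for absolute constants $c_1, c_2 > 0$; since $\norm{e^\iteration}{2} \to 0$, a routine bookkeeping argument then shows that $F(\common^\iteration)$ stays bounded, and coercivity of $F$ puts all $\common^\iteration$ in a fixed compact set $C$. On the compact convex hull of $C \cup \{\common^\infty\}$, local strong convexity furnishes a uniform modulus $\mu > 0$, and, further shrinking $\learningrate{}$, the standard smooth strongly convex contraction gives $\norm{\common^\iteration - \learningrate{}\nabla F(\common^\iteration) - \common^\infty}{2} \le (1 - \learningrate{}\mu)\norm{\common^\iteration - \common^\infty}{2}$; combined with the triangle inequality this yields, for all large $\iteration$,
\begin{equation*}
\norm{\common^{\iteration+1} - \common^\infty}{2} \le (1 - \learningrate{}\mu)\norm{\common^\iteration - \common^\infty}{2} + \learningrate{}\norm{e^\iteration}{2} .
\end{equation*}

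Finally, since $1 - \learningrate{}\mu \in (0,1)$ and $\norm{e^\iteration}{2} \to 0$, the elementary fact that $a_{\iteration+1} \le c\,a_\iteration + b_\iteration$ with $c \in (0,1)$ and $b_\iteration \to 0$ implies $a_\iteration \to 0$ gives $\common^\iteration \to \common^\infty$, which is the claim. I expect the boundedness step to be the main obstacle: it is the only place where Assumption~\ref{ass:assumption1} (coercivity of the regularization, hence of $F$) is genuinely used, and one must check that the linear term $\common^T\gradient{\strategicnode}{\infty}$ --- whose norm for $\ell_2$-type regularizations may be as large as $\regweightsub{}$ --- does not cancel the linear growth of the rest of $F$; this is implicitly why enough honest users (or a strictly interior limiting gradient) are required for smooth-$\ell_2$. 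The envelope-theorem reduction and the contraction are routine once $F$ is known to be $L$-smooth, strictly convex, and coercive.
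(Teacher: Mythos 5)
Your proposal is correct and follows essentially the same route as the paper's proof: rewrite the update as inexact gradient descent on the partially minimized loss $\minimizedLoss{\strategicnode}$ with a vanishing error $\error{\strategicnode}{\iteration}$, show the iterates stay in a compact set, use local strong convexity there to obtain a contraction of the form $u_{\iteration+1} \leq (1-\learningrate{}\mu)u_\iteration + \learningrate{}\norm{\error{\strategicnode}{\iteration}}{2}$, and conclude with the elementary sequence lemma. The only substantive nuance is the existence of the minimizer $\common^\infty$ (coercivity of $\minimizedLoss{\strategicnode}$), which you argue heuristically and flag as delicate for smooth-$\ell_2$, while the paper assumes it implicitly and instead derives linear growth from local strong convexity around the (assumed) minimizer in its boundedness lemmas.
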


Note that since $\ell_2^2$ and smooth-$\ell_2$ regularizations satisfy Assumption~\ref{ass:assumption1}, Lemma~\ref{th:general_gradient_attack_convergence} clearly implies  Proposition \ref{prop:convergence}.
We now introduce the key objects of the proof of Lemma~\ref{th:general_gradient_attack_convergence}.

Denote $\gradient{\strategicnode}{\infty}$ the limit of the attack gradients $\gradient{\strategicnode}{\iteration}$. We now define
\begin{align}
  \minimizedLoss{\strategicnode}(\common) 
  &\triangleq \inf_{\paramfamily_{-\strategicnode}} \set{\globalloss{} (\common, \paramfamily{}, \datafamily{}) -
    \localloss{\strategicnode} (\paramsub{\strategicnode}, \data{\strategicnode})
    - \regularization (\common, \paramsub{\strategicnode})}+\common^T\gradient{\strategicnode}{\infty} \\
  &= \inf_{\paramfamily_{-\strategicnode}} \set{ \sum_{\node \neq \strategicnode} \localloss{\node} (\paramsub{\node}, \data{\node}) + \sum_{\node \neq \strategicnode} \regularization(\common, \paramsub{\node}) } + \common^T \gradient{\strategicnode}{\infty},
\end{align}
and prove that $\common^{\iteration}$ will converge to the minimizer of $ \minimizedLoss{\strategicnode}(\common)$. By Lemma \ref{lemma:minimizedLossSmooth}, we know that $\minimizedLoss{\strategicnode}(\common)$ is both locally strongly convex and $L$-smooth.

Now define $\error{\strategicnode}{\iteration} \triangleq \gradient{\strategicnode}{\iteration} - \gradient{\strategicnode}{\infty}$. We then have $\error{\strategicnode}{\iteration} \rightarrow 0$ and  $\nabla \minimizedLoss{\strategicnode}(\common^{\iteration}) $ is the sum of all gradient vectors received from all users assuming the strategic user $s$  sends the vector $g_s^\infty$ in all iterations. 
Thus, at iteration $\iteration$ of the optimization algorithm, we will take one step in the direction $\minimizedgradient{}{\iteration} \triangleq \nabla \minimizedLoss{\strategicnode}(\common^\iteration)+\error{\strategicnode}{\iteration}$, i.e.,
\begin{equation}
    \common^{\iteration + 1} = \common^{\iteration} - \learningrate{\iteration} \minimizedgradient{}{\iteration}.
\end{equation}

We now prove the following lemma that bounds the difference between the function value in two successive iterations.
\begin{lemma}
\label{lemma:sufficinet_decrease}
  If $\minimizedLoss{\strategicnode}(\common)$ is $L$-smooth and $\learningrate{\iteration} \leq 1/L$, we have
  \begin{equation}
    \minimizedLoss{\strategicnode}(\common^{\iteration+1}) -  \minimizedLoss{\strategicnode}(\common^{\iteration}) \leq -\frac{\learningrate{\iteration}}{2}\norm{\minimizedgradient{}{\iteration}}{2}^2 + \learningrate{\iteration} {\error{\strategicnode}{\iteration}}^T\minimizedgradient{}{\iteration}.
  \end{equation}
\end{lemma}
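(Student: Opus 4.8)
The plan is to apply the standard descent lemma for $L$-smooth functions to the iterate update $\common^{\iteration+1} = \common^{\iteration} - \learningrate{\iteration} \minimizedgradient{}{\iteration}$, and then absorb the step-size condition $\learningrate{\iteration} \leq 1/L$ into the resulting quadratic term. Since $\minimizedLoss{\strategicnode}$ is $L$-smooth (established in Lemma~\ref{lemma:minimizedLossSmooth}, which already handles the partial minimization over $\paramfamily_{-\strategicnode}$), the smoothness inequality gives, for any $\common, \common'$,
\[
  \minimizedLoss{\strategicnode}(\common') \leq \minimizedLoss{\strategicnode}(\common) + \nabla \minimizedLoss{\strategicnode}(\common)^T (\common' - \common) + \frac{L}{2} \norm{\common' - \common}{2}^2 .
\]
Substituting $\common = \common^{\iteration}$ and $\common' = \common^{\iteration+1} = \common^{\iteration} - \learningrate{\iteration} \minimizedgradient{}{\iteration}$ turns the right-hand side into $\minimizedLoss{\strategicnode}(\common^{\iteration}) - \learningrate{\iteration} \nabla \minimizedLoss{\strategicnode}(\common^{\iteration})^T \minimizedgradient{}{\iteration} + \frac{L \learningrate{\iteration}^2}{2} \norm{\minimizedgradient{}{\iteration}}{2}^2$.

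Next I would use the definition $\minimizedgradient{}{\iteration} \triangleq \nabla \minimizedLoss{\strategicnode}(\common^{\iteration}) + \error{\strategicnode}{\iteration}$, that is $\nabla \minimizedLoss{\strategicnode}(\common^{\iteration}) = \minimizedgradient{}{\iteration} - \error{\strategicnode}{\iteration}$, to rewrite the linear term as $-\learningrate{\iteration} \nabla \minimizedLoss{\strategicnode}(\common^{\iteration})^T \minimizedgradient{}{\iteration} = -\learningrate{\iteration} \norm{\minimizedgradient{}{\iteration}}{2}^2 + \learningrate{\iteration} {\error{\strategicnode}{\iteration}}^T \minimizedgradient{}{\iteration}$. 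Collecting terms gives
\[
  \minimizedLoss{\strategicnode}(\common^{\iteration+1}) - \minimizedLoss{\strategicnode}(\common^{\iteration}) \leq \left( \frac{L \learningrate{\iteration}^2}{2} - \learningrate{\iteration} \right) \norm{\minimizedgradient{}{\iteration}}{2}^2 + \learningrate{\iteration} {\error{\strategicnode}{\iteration}}^T \minimizedgradient{}{\iteration} .
\]
Finally, the step-size bound $\learningrate{\iteration} \leq 1/L$ implies $\frac{L\learningrate{\iteration}^2}{2} \leq \frac{\learningrate{\iteration}}{2}$, hence $\frac{L\learningrate{\iteration}^2}{2} - \learningrate{\iteration} \leq -\frac{\learningrate{\iteration}}{2}$, and since $\norm{\minimizedgradient{}{\iteration}}{2}^2 \geq 0$ this yields exactly the claimed inequality.

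There is essentially no genuine obstacle here: the proof is a one-line application of the descent lemma, and the only points requiring care are invoking $L$-smoothness of $\minimizedLoss{\strategicnode}$ (rather than of $\globalloss{}$) via Lemma~\ref{lemma:minimizedLossSmooth}, and keeping track of the sign and the factor $1/2$ when splitting $\minimizedgradient{}{\iteration}$ into the true gradient $\nabla \minimizedLoss{\strategicnode}(\common^{\iteration})$ plus the vanishing error $\error{\strategicnode}{\iteration}$. This inequality is then the workhorse for the convergence argument: combined with $\error{\strategicnode}{\iteration} \to 0$ and the local strong convexity of $\minimizedLoss{\strategicnode}$, it forces $\norm{\minimizedgradient{}{\iteration}}{2} \to 0$ and hence $\common^{\iteration}$ to converge to the unique minimizer of $\minimizedLoss{\strategicnode}$, which is precisely $\common^\infty$.
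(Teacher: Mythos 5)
Your proof is correct and follows essentially the same route as the paper's: apply the $L$-smoothness descent inequality at $\common^{\iteration}$, substitute $\common^{\iteration+1}-\common^{\iteration} = -\learningrate{\iteration}\minimizedgradient{}{\iteration}$ and $\nabla \minimizedLoss{\strategicnode}(\common^{\iteration}) = \minimizedgradient{}{\iteration} - \error{\strategicnode}{\iteration}$, then use $\learningrate{\iteration} \leq 1/L$ to absorb the quadratic term. No gaps.
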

\begin{proof}
   Since $\minimizedLoss{\strategicnode}$ is $L$-smooth, we have
   \begin{equation}
       \minimizedLoss{\strategicnode}(\common^{\iteration+1}) \leq \minimizedLoss{\strategicnode}(\common^{\iteration}) + (\common^{\iteration+1}-\common^{\iteration})^T \nabla \minimizedLoss{\strategicnode}(\common^{\iteration}) + \frac{L}{2}\norm{\common^{\iteration+1}-\common^{\iteration}}{2}^2.
   \end{equation}
   Now plugging $\common^{\iteration+1} - \common^{\iteration} = -\learningrate{\iteration} \minimizedgradient{}{\iteration}$ and $\nabla \minimizedLoss{\strategicnode}(\common^{\iteration}) = \minimizedgradient{}{\iteration} -\error{\strategicnode}{\iteration}$ into the inequality implies
   \begin{align}
       \minimizedLoss{\strategicnode}(\common^{\iteration+1}) - \minimizedLoss{\strategicnode}(\common^{\iteration}) &\leq \left(-\learningrate{\iteration} \minimizedgradient{}{\iteration}\right)^T \left(\minimizedgradient{}{\iteration} -\error{\strategicnode}{\iteration}
       \right) + \frac{L}{2}\norm{-\learningrate{\iteration} \minimizedgradient{}{\iteration}}{2}^2\\
       &\leq -\frac{\learningrate{\iteration}}{2}\norm{\minimizedgradient{}{\iteration}}{2}^2 + \learningrate{\iteration} {\error{\strategicnode}{\iteration}}^T\minimizedgradient{}{\iteration},
   \end{align}
   where we used the fact $\learningrate{\iteration} \leq 1/L$.
\end{proof}

\subsubsection{The global model remains bounded}

\begin{lemma}
  There is $M$ such that, for all $\iteration$, $\minimizedLoss{\strategicnode} (\common^\iteration) \leq M$.
\end{lemma}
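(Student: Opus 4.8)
The plan is to exhibit a compact set that the iterates essentially cannot leave once the residual $\error{\strategicnode}{\iteration}$ has become small, and to bound $\minimizedLoss{\strategicnode}$ on a slightly larger compact set. By Lemma~\ref{lemma:minimizedLossSmooth}, applied to $\minimizedLoss{\strategicnode}$ exactly as in the proof of Lemma~\ref{th:gradient_attack_convergence}, the function $\minimizedLoss{\strategicnode}$ is $L$-smooth, locally strongly convex, and has a unique minimizer $\common^\star$, so that $\nabla \minimizedLoss{\strategicnode}(\common^\star) = 0$. The first step would be a gradient lower bound away from $\common^\star$: there is $g_0 > 0$ with $\norm{\nabla \minimizedLoss{\strategicnode}(\common)}{2} \geq g_0$ whenever $\norm{\common - \common^\star}{2} \geq 1$. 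Indeed, let $\mu_1 > 0$ be a strong-convexity constant of $\minimizedLoss{\strategicnode}$ on the closed ball $\ball(\common^\star, 1)$ and fix a unit vector $u$; the convex map $t \mapsto \minimizedLoss{\strategicnode}(\common^\star + t u)$ has a nondecreasing derivative $t \mapsto u^T \nabla \minimizedLoss{\strategicnode}(\common^\star + t u)$ that vanishes at $t=0$ and is at least $\mu_1 t$ for $t \in [0,1]$ by strong convexity, hence at least $\mu_1$ for all $t \geq 1$, giving $\norm{\nabla \minimizedLoss{\strategicnode}(\common)}{2} \geq \mu_1 =: g_0$ for $\norm{\common - \common^\star}{2} \geq 1$.

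Next, since $\error{\strategicnode}{\iteration} \to 0$, I would fix $T_0$ with $\norm{\error{\strategicnode}{\iteration}}{2} \leq g_0/4$ for all $\iteration \geq T_0$, and take $\learningrate{} \leq 1/L$ so that Lemma~\ref{lemma:sufficinet_decrease} applies. At a step $\iteration \geq T_0$ I would split into two cases. If $\norm{\common^\iteration - \common^\star}{2} \geq 1$, then $\norm{\nabla \minimizedLoss{\strategicnode}(\common^\iteration)}{2} \geq g_0 \geq 4 \norm{\error{\strategicnode}{\iteration}}{2}$, whence $\norm{\minimizedgradient{}{\iteration}}{2} \geq \tfrac34 g_0$ and $(\error{\strategicnode}{\iteration})^T \minimizedgradient{}{\iteration} \leq \norm{\error{\strategicnode}{\iteration}}{2}\,\norm{\minimizedgradient{}{\iteration}}{2} \leq \tfrac12 \norm{\minimizedgradient{}{\iteration}}{2}^2$, so Lemma~\ref{lemma:sufficinet_decrease} yields $\minimizedLoss{\strategicnode}(\common^{\iteration+1}) \leq \minimizedLoss{\strategicnode}(\common^\iteration)$. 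If instead $\norm{\common^\iteration - \common^\star}{2} < 1$, then $L$-smoothness gives $\norm{\nabla \minimizedLoss{\strategicnode}(\common^\iteration)}{2} = \norm{\nabla \minimizedLoss{\strategicnode}(\common^\iteration) - \nabla \minimizedLoss{\strategicnode}(\common^\star)}{2} \leq L$, hence $\norm{\minimizedgradient{}{\iteration}}{2} \leq L + g_0/4$ and $\norm{\common^{\iteration+1} - \common^\star}{2} \leq 1 + \learningrate{}(L + g_0/4)$, so $\common^{\iteration+1}$ lies in the compact ball $\ball\bigl(\common^\star, 1 + \learningrate{}(L + g_0/4)\bigr)$, on which $\minimizedLoss{\strategicnode}$ attains a finite maximum $M_3$.

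I would then conclude by setting $M \triangleq \max\set{M_3, \max_{\iteration \leq T_0} \minimizedLoss{\strategicnode}(\common^\iteration)}$ and proving $\minimizedLoss{\strategicnode}(\common^\iteration) \leq M$ for every $\iteration$ by induction: this holds for $\iteration \leq T_0$ by definition of $M$, and the inductive step at $\iteration \geq T_0$ uses the first case (which keeps $\minimizedLoss{\strategicnode}$ non-increasing) or the second case (which forces $\minimizedLoss{\strategicnode}(\common^{\iteration+1}) \leq M_3 \leq M$). The main obstacle is the quantitative matching in the first case: one must ensure that the gradient magnitude guaranteed by local strong convexity (propagated outward by the ray-monotonicity argument) beats the attack residual $\error{\strategicnode}{\iteration}$ strongly enough that Lemma~\ref{lemma:sufficinet_decrease} produces a genuine non-increase of $\minimizedLoss{\strategicnode}$ rather than a bounded-but-possibly-accumulating increase; since we only know $\error{\strategicnode}{\iteration} \to 0$ and not $\sum \norm{\error{\strategicnode}{\iteration}}{2}^2 < \infty$, a naive telescoping of Lemma~\ref{lemma:sufficinet_decrease} would not suffice. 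A secondary point to handle carefully is the existence of the minimizer $\common^\star$, which is where the growth assumption $\regularization_0(z) = \Omega(\norm{z}{2})$ in Assumption~\ref{ass:assumption1} enters (ensuring $\minimizedLoss{\strategicnode}$ is coercive).
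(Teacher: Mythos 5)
Your proposal is correct and follows essentially the same route as the paper's proof: a strong-convexity gradient lower bound outside a unit ball around the minimizer $\common^*$, a time $T_0$ after which $\norm{\error{\strategicnode}{\iteration}}{2}$ is small enough that Lemma~\ref{lemma:sufficinet_decrease} gives non-increase of $\minimizedLoss{\strategicnode}$ outside the ball, a compactness bound on the value after a step taken from inside the (slightly enlarged) ball, and a maximum over the finitely many iterations before $T_0$. The only differences are cosmetic (your constant $\mu_1$ versus the paper's $\sqrt{\mu_1}$, and your explicit ray-monotonicity justification of the gradient bound, which the paper leaves implicit).
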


\begin{proof}
   Consider the closed ball $\ball(\common^*, 1)$ centered on $\common^*$ and of radius 1. 
   By Lemma \ref{lemma:minimizedLossSmooth}, we know that $\minimizedLoss{\strategicnode}$ is locally strongly convex and thus there exists a $\mu_1>0$ such that $\minimizedLoss{\strategicnode}$ is $\mu_1$-strongly convex on $\ball(\common^*, 1)$. 
   Now consider a point $\common_1$ on the boundary of $\ball(\common^*, 1)$. By strong convexity we have
   \begin{equation}
   \label{eq:large_gradient}
     \norm{\nabla \minimizedLoss{\strategicnode}(\common_1)}{2}^2\geq(\common_1-\common^*)^T\nabla \minimizedLoss{\strategicnode}(\common_1)\geq {\mu_1} \norm{\common_1-\common^*}{2}^2 = {\mu_1}.
   \end{equation}
   Now similarly, by the convexity of $\minimizedLoss{\strategicnode}$ on $\setR^d$, for any $\common \in \setR^d-\ball(\common^*, 1)$,  we have $ \norm{\nabla \minimizedLoss{\strategicnode}(\common_1)}{2}\geq\sqrt{{\mu_1}}$. Now since $\error{\strategicnode}{\iteration} \rightarrow 0$, there exists an iteration $T_1$ after which ($\iteration\geq T_1$), we have $\norm{\error{\strategicnode}{\iteration}}{2} \leq \frac{1}{4} \sqrt{{\mu_1}}$, and thus $\norm{\minimizedgradient{}{\iteration}}{2} \geq \norm{ \nabla \minimizedLoss{\strategicnode}(\common^\iteration)}{2}-\norm{\error{\strategicnode}{\iteration}}{2} \geq \frac{3}{4} \sqrt{{\mu_1}}$.
   Thus, Lemma \ref{lemma:sufficinet_decrease} implies that for $\iteration\geq T_1$, if $\norm{\common^\iteration - \common^*}{2}\geq 1$, then
   \begin{align}
     \minimizedLoss{\strategicnode}(\common^{\iteration+1}) - \minimizedLoss{\strategicnode}(\common^{\iteration}) 
     &\leq -\frac{\learningrate{}}{2}\norm{\minimizedgradient{}{\iteration}}{2}^2 + \learningrate{} {\error{\strategicnode}{\iteration}}^T\minimizedgradient{}{\iteration}\\ 
     &\leq -\frac{\learningrate{}}{2}\norm{\minimizedgradient{}{\iteration}}{2}^2 + \learningrate{}\norm{\error{\strategicnode}{\iteration}}{2} \norm{\minimizedgradient{}{\iteration}}{2}\\
     &\leq -\frac{\learningrate{}}{2}\norm{\minimizedgradient{}{\iteration}}{2}\left(\norm{\minimizedgradient{}{\iteration}}{2}- 2\norm{\error{\strategicnode}{\iteration}}{2}\right) \\
     &\leq - \frac{\learningrate{}}{2} \frac{3}{4} \sqrt{\mu_1} \left( \frac{3}{4} \sqrt{\mu_1} - \frac{2}{4} \sqrt{\mu_1} \right) 
     \leq - \frac{3 \learningrate{}}{32} \mu_1 < 0.
   \end{align}
   Thus, for $\norm{\common^\iteration - \common^*}{2}\geq 1$, the loss cannot increase at the next iteration.
   
   Now consider the case $\norm{\common^\iteration - \common^*}{2} < 1$ for $\iteration \geq T_1$. 
   The smoothness of $\minimizedLoss{\strategicnode}$ implies $\norm{\nabla\minimizedLoss{\strategicnode}(\common^{\iteration})}{2} < L$. Therefore, 
   \begin{align}
      \norm{\common^{\iteration+1} - \common^*}{2} &= \norm{\common^{\iteration} - \learningrate{}(\nabla \minimizedLoss{\strategicnode}(\common^\iteration)+\error{\strategicnode}{\iteration}) - \common^*}{2} \\ &\leq \norm{\common^{\iteration+1} - \common^*}{2} 
      + \learningrate{} (L+\frac{1}{4} \sqrt{{\mu_1}}) \leq 1+ \learningrate{} (L+\frac{1}{4} \sqrt{{\mu_1}}).
   \end{align}
   Now we define $M_1 \triangleq \max_{\common \in \ball \left(\common^*, 1+ \learningrate{} (L+\frac{1}{4} \sqrt{{\mu_1}}) \right)} \minimizedLoss{\strategicnode}(\common)$, the maximum function value in the closed ball $\ball \left(\common^*, 1+ \learningrate{} (L+\frac{1}{4} \sqrt{{\mu_1}}) \right)$. 
   Therefore, we have $\minimizedLoss{\strategicnode}(\common^{\iteration +1})\leq M_1$. So far we proved that for $\iteration \geq T_1$, in each iteration of gradient descent either the function value will not increase or it will be upper-bounded by $M_1$. This implies that for all $\iteration$, the function value $\minimizedLoss{\strategicnode}(\common^{\iteration})$ is upper-bounded by
   \begin{equation}
       M \triangleq \max \set{\max_{\iteration \leq T_1}\set{ \minimizedLoss{\strategicnode}(\common^{\iteration})},M_1}.
   \end{equation}
   This concludes the proof.
\end{proof}

\begin{lemma}
\label{lemma:compact_set_convergence}
  There is a compact set $X$ such that, for all $\iteration$, $\common^\iteration \in X$.
\end{lemma}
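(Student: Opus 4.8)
The plan is to deduce the claim from the preceding lemma (which bounds $\minimizedLoss{\strategicnode}(\common^\iteration)$ by a constant $M$ for all $\iteration$) together with the fact that the sublevel sets of $\minimizedLoss{\strategicnode}$ are bounded. Concretely, I would show that $\set{\common \in \setR^d \st \minimizedLoss{\strategicnode}(\common) \leq M}$ is contained in a closed ball $X$, which is compact; since $\common^\iteration$ lies in this sublevel set for every $\iteration$, this yields the lemma.

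To prove the sublevel set is bounded, recall that $\minimizedLoss{\strategicnode}$ is convex on all of $\setR^d$ and, by Lemma~\ref{lemma:minimizedLossSmooth}, locally strongly convex with a unique minimizer $\common^*$; let $\mu_1 > 0$ be a strong-convexity modulus of $\minimizedLoss{\strategicnode}$ on the closed ball $\ball(\common^*, 1)$, exactly as in the proof of the preceding lemma. Fix any $\common$ with $R \triangleq \norm{\common - \common^*}{2} \geq 1$, write $\common = \common^* + R u$ with $\norm{u}{2} = 1$, and set $\phi(r) \triangleq \minimizedLoss{\strategicnode}(\common^* + r u)$ for $r \geq 0$. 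Then $\phi$ is convex and differentiable with $\phi'(0) = u^T \nabla \minimizedLoss{\strategicnode}(\common^*) = 0$, and since the segment $[\common^*, \common^* + u]$ lies in $\ball(\common^*, 1)$, the restriction of $\phi$ to $[0,1]$ inherits $\mu_1$-strong convexity, so $\phi'(1) = \phi'(1) - \phi'(0) \geq \mu_1$. Global convexity of $\phi$ makes $\phi'$ nondecreasing, hence $\phi'(r) \geq \mu_1$ for all $r \geq 1$; integrating from $1$ to $R$ and using $\phi(1) \geq \phi(0) = \minimizedLoss{\strategicnode}(\common^*)$ gives $\minimizedLoss{\strategicnode}(\common) = \phi(R) \geq \minimizedLoss{\strategicnode}(\common^*) + \mu_1 (R - 1)$.

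Consequently, any $\common$ with $\minimizedLoss{\strategicnode}(\common) \leq M$ satisfies $\norm{\common - \common^*}{2} \leq 1 + (M - \minimizedLoss{\strategicnode}(\common^*))/\mu_1$, so taking $X$ to be the closed ball of that radius centered at $\common^*$ gives a compact set containing the whole sublevel set. By the preceding lemma, $\minimizedLoss{\strategicnode}(\common^\iteration) \leq M$ for every $\iteration$, hence $\common^\iteration \in X$ for all $\iteration$, which is the claim. The only mildly delicate point is the one-dimensional restriction step, which upgrades the combination of \emph{global} convexity and merely \emph{local} strong convexity into an at-least-linear lower bound on $\minimizedLoss{\strategicnode}$ far from $\common^*$; everything else is immediate from the results already established.
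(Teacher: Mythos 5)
Your proof is correct and follows essentially the same route as the paper's: both combine the uniform bound $M$ on $\minimizedLoss{\strategicnode}(\common^\iteration)$ from the preceding lemma with strong convexity of $\minimizedLoss{\strategicnode}$ on $\ball(\common^*,1)$ and global convexity to obtain an at-least-linear lower bound on $\minimizedLoss{\strategicnode}$ away from $\common^*$, hence a bounded (compact) sublevel set containing every iterate. The only difference is cosmetic — you argue via the monotone derivative along a ray from $\common^*$, while the paper compares function values through a convex combination with a point on the unit sphere around $\common^*$.
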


\begin{proof}
   Now since $\minimizedLoss{\strategicnode}$ is $\mu_1$-strongly convex in $\ball(\common^*, 1)$, for any point $\common \in \setR^d$ such that $\norm{\common-\common^{\iteration}}{2} = 1$, we have
  \begin{equation}
    \minimizedLoss{\strategicnode}(\common) \geq \minimizedLoss{\strategicnode}(\common^*) + \frac{\mu_1}{2} \norm{\common-\common^*}{2}^2 = \minimizedLoss{\strategicnode}(\common^*) + \frac{\mu_1}{2}.
  \end{equation}
  But now by the convexity of $\minimizedLoss{\strategicnode}$ in $\setR^d$, for any $\common$ such that $\norm{\common-\common^*}{2} \geq 1$, we have
  \begin{equation}
     \minimizedLoss{\strategicnode}(\common)\geq \minimizedLoss{\strategicnode}(\common^*) + \norm{\common-\common^*}{2} \frac{\mu_1}{2}.
  \end{equation}
  This implies that if $\norm{\common^\iteration-\common^*}{2}>\frac{2}{\mu_1}\left( M_2-\minimizedLoss{\strategicnode}(\common^*) \right)$, then $\minimizedLoss{\strategicnode}(\common^\iteration)>M_2$. Therefore, we must have $\norm{\common^\iteration-\common^*}{2}\leq\frac{2}{\mu_1}\left( M_2-\minimizedLoss{\strategicnode}(\common^*) \right)$, for all $\iteration\geq0$.
  This describes a closed ball, which is a compact set.
\end{proof}

\subsubsection{Convergence of the global model under converging gradient attack}

\begin{lemma}
\label{lemma:sequence_upper_bound_converges}
Suppose $u_t \geq 0$ verifies $u_{t+1} \leq \alpha u_t + \delta_t$, with $\delta_t \rightarrow 0$.
Then $u_t \rightarrow 0$.
\end{lemma}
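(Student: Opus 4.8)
The plan is to unroll the recursion and control the resulting sum, under the (implicit, but necessary) standing assumption $0 \le \alpha < 1$. Iterating $u_{t+1} \leq \alpha u_t + \delta_t$ from $0$ to $t-1$ gives $u_t \leq \alpha^t u_0 + \sum_{k=0}^{t-1} \alpha^{t-1-k} \delta_k$. The first term vanishes since $\alpha < 1$, and the second term is the convolution of a summable geometric sequence with a sequence tending to $0$, which is a classical null sequence; so $u_t \to 0$.

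Concretely, I would proceed as follows. First, since $\delta_t \to 0$ the sequence $(\delta_t)$ is bounded, say by $D$; an immediate induction then yields $u_t \leq u_0 + D/(1-\alpha)$ for all $t$, so $(u_t)$ is bounded and $L \triangleq \limsup_t u_t$ is finite. Taking the $\limsup$ of both sides of $u_{t+1} \leq \alpha u_t + \delta_t$, and using $\limsup_t u_{t+1} = \limsup_t u_t = L$ together with $\delta_t \to 0$, gives $L \leq \alpha L$, i.e. $(1-\alpha)L \leq 0$. Since $\alpha < 1$ and $u_t \geq 0$ forces $L \geq 0$, we conclude $L = 0$, hence $u_t \to 0$.

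An alternative that avoids $\limsup$: fix $\varepsilon > 0$, pick $N$ with $\delta_k \leq (1-\alpha)\varepsilon/2$ for all $k \geq N$, split $\sum_{k=0}^{t-1} \alpha^{t-1-k}\delta_k$ at $N$, bound the tail by $\frac{(1-\alpha)\varepsilon}{2}\sum_{j\geq 0}\alpha^j = \varepsilon/2$, and bound the head by $\alpha^{t-N}\sum_{k<N}\alpha^{N-1-k}\delta_k$, which tends to $0$ as $t\to\infty$; together with $\alpha^t u_0 \to 0$ this shows $u_t \leq \varepsilon$ for $t$ large, and hence $u_t \to 0$.

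There is essentially no obstacle here: the statement is a routine discrete Grönwall-type estimate. The only points worth flagging are the standing assumption $0 \le \alpha < 1$ — needed both for convergence of the geometric series and for $(1-\alpha) > 0$ in the $\limsup$ step — and the minor technicality that one should first establish boundedness of $(u_t)$ before invoking the $\limsup$ argument.
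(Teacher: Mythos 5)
Your proposal is correct, and it is worth noting that your \emph{primary} argument differs from the paper's proof while your ``alternative'' essentially reproduces it. The paper proceeds exactly as in your second route: it unrolls the recursion to get $u_{t+1} \leq u_0\alpha^{t+1} + \sum_{\tau=0}^{t}\alpha^{\tau}\delta_{t-\tau}$, splits the convolution sum at the index $T_2(\varepsilon)$ beyond which $\delta_t \leq \varepsilon(1-\alpha)/2$, bounds the recent part by $\varepsilon/2$ via the geometric series, and absorbs the finitely many old terms into a constant times $\alpha^{t+1}$ that vanishes. Your main argument instead establishes boundedness of $(u_t)$ by a one-line induction and then takes $\limsup$ on both sides of the recursion to get $L \leq \alpha L$, hence $L = 0$; this is shorter and avoids the explicit index bookkeeping, at the cost of needing the preliminary boundedness step (which you correctly supply) and of being less quantitative — the paper's unrolled estimate also yields an explicit rate (geometric decay up to the decay of $\delta_t$), which is irrelevant for the lemma as stated but closer in spirit to how it is invoked in the convergence proof. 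You are also right that $0 \leq \alpha < 1$ is an implicit standing assumption: the paper uses it tacitly (convergence of $\sum \alpha^\tau$ and $\ln\alpha < 0$), and in its application $\alpha = 1 - \learningrate{}\mu_2/2 \in (0,1)$, so flagging it is appropriate rather than a discrepancy.
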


\begin{proof}
   We now show that for any $\varepsilon>0$, there exists an iteration $T(\varepsilon)$, such that for $t\geq T(\varepsilon)$, we have $u_t\leq\varepsilon$. For this, note that by induction, we observe that, for all $t \geq 0$,
   \begin{equation}
    u_{t+1} \leq u_0 \alpha^{t+1}+ \sum_{\tau = 0}^{t} \alpha^{\tau} \delta_{t-\tau}.
  \end{equation}
  Since $\delta_t \rightarrow 0$, there exists an iteration $T_2(\varepsilon)$ such that for all $t\geq T_2(\varepsilon)$, we have $\delta_t \leq \frac{\varepsilon (1-\alpha)}{2}$. Therefore, for $t\geq T_2(\varepsilon)$, we have 
  \begin{align}
       u_{t+1} &\leq u_0 \alpha^{t+1}+ \sum_{\tau = 0}^{t-T_2(\varepsilon)}   \alpha^{\tau} \delta_{t-\tau}+ \sum_{\tau = t-T_2(\varepsilon)+1}^{t}   \alpha^{\tau} \delta_{t-\tau} \\
       &\leq u_0 \alpha^{t+1}+ \frac{\varepsilon (1-\alpha)}{2} \sum_{\tau = 0}^{t-T_2(\varepsilon)}  \alpha^{\tau} + \sum_{s = 0}^{T_2(\varepsilon)-1}   \alpha^{t-s} \delta_{s}\\
       &\leq \left(u_0+ \sum_{s = 0}^{T_2(\varepsilon)-1}   \alpha^{-s-1} \delta_{s} \right) \alpha^{t+1} + \frac{\varepsilon (1-\alpha)}{2}\sum_{\tau = 0}^{\infty}   \alpha^{\tau}.
  \end{align}
  Denoting $M_0(\varepsilon) \triangleq  \sum_{s = 0}^{T_2(\varepsilon)-1}   \alpha^{-s-1} \delta_{s}$, we then have
  \begin{equation}
     u_{t+1} \leq\left(u_0 +M_0(\varepsilon) \right) \alpha^{t+1} + \frac{\varepsilon}{2}.
  \end{equation}
  Therefore, for $t\geq \frac{\ln{\frac{\varepsilon}{2(u_0 +M_0(\varepsilon))}}}{\ln \alpha}$, we have 
  \begin{equation}
     u_{t+1} \leq \frac{\varepsilon}{2} + \frac{\varepsilon}{2} = \varepsilon.
  \end{equation}
  This proves that $u_t \rightarrow 0$.
\end{proof}

We now prove Lemma~\ref{th:general_gradient_attack_convergence} (and hence  Proposition \ref{prop:convergence}).

\begin{proof}[Proof of Lemma \ref{th:general_gradient_attack_convergence}]
      Define $X$ based on Lemma~\ref{lemma:compact_set_convergence}.
   Since $\minimizedLoss{\strategicnode}$ is locally strongly convex, there exists $\mu_2>0$ such that $\minimizedLoss{\strategicnode}$ is $\mu_2$-strongly convex in a convex compact set $X$ containing $\common^\iteration$ for all $\iteration \geq 0$.
   By the strong convexity of $\minimizedLoss{\strategicnode}(\common)$, we have
   \begin{align}
      \minimizedLoss{\strategicnode}(\common^{\iteration}) - \minimizedLoss{\strategicnode}(\common^*) &\leq (\common^{\iteration}-\common^{*})^T \nabla \minimizedLoss{\strategicnode}(\common^{\iteration}) - \frac{\mu_2}{2} \norm{\common^{\iteration}-\common^{*}}{2}^2\\
      &= (\common^{\iteration}-\common^{*})^T \left(\minimizedgradient{}{\iteration} -\error{\strategicnode}{\iteration}
       \right) - \frac{\mu_2}{2} \norm{\common^{\iteration}-\common^{*}}{2}^2.
   \end{align}
   
   Now, using the fact 
   \begin{align}
       (\common^{\iteration}-\common^{*})^T \minimizedgradient{}{\iteration} 
       &= \frac{1}{\learningrate{}}(\common^{\iteration}-\common^{*})^T(\common^{\iteration}-\common^{\iteration+1}) \\
       &= \frac{1}{2\learningrate{}}  \left( \norm{\common^\iteration-\common^*}{2}^2 +\norm{\common^\iteration-\common^{\iteration+1}}{2}^2 - \norm{\common^{\iteration+1}-\common^*}{2}^2\right)\\
       &= \frac{1}{2\learningrate{}}  \left( \learningrate{}^2 \norm{\minimizedgradient{}{\iteration}}{2}^2 +\norm{\common^\iteration-\common^{*}}{2}^2 - \norm{\common^{\iteration+1}-\common^*}{2}^2\right)\\
       &= \frac{\learningrate{}}{2} \norm{\minimizedgradient{}{\iteration}}{2}^2 + \frac{1}{2\learningrate{}} \left( \norm{\common^{\iteration}-\common^{*}}{2}^2 - \norm{\common^{\iteration+1}-\common^{*}}{2}^2 \right),
   \end{align}
    we  have
   \begin{align}
       &\minimizedLoss{\strategicnode}(\common^{\iteration}) - \minimizedLoss{\strategicnode}(\common^*) \leq\\ &\frac{\learningrate{}}{2} \norm{\minimizedgradient{}{\iteration}}{2}^2 + \frac{1}{2\learningrate{}} \left( \norm{\common^{\iteration}-\common^{*}}{2}^2 - \norm{\common^{\iteration+1}-\common^{*}}{2}^2 \right) - (\common^{\iteration}-\common^{*})^T \error{\strategicnode}{\iteration} - \frac{\mu_2}{2} \norm{\common^{\iteration}-\common^{*}}{2}^2
       \label{eq:temp}.
   \end{align}
   But now note that $\minimizedLoss{\strategicnode}(\common^{\iteration}) -  \minimizedLoss{\strategicnode}(\common^{*})\geq \minimizedLoss{\strategicnode}(\common^{\iteration}) -  \minimizedLoss{\strategicnode}(\common^{\iteration+1})$. Thus, combining Equation (\ref{eq:temp}) and Lemma \ref{lemma:sufficinet_decrease} yields
   \begin{equation}
       -\learningrate{}{\error{\strategicnode}{\iteration}}^T\minimizedgradient{}{\iteration} \leq \frac{1}{2\learningrate{}} \left( \norm{\common^{\iteration}-\common^{*}}{2}^2 - \norm{\common^{\iteration+1}-\common^{*}}{2}^2 \right) - (\common^{\iteration}-\common^{*})^T \error{\strategicnode}{\iteration} - \frac{\mu_2}{2} \norm{\common^{\iteration}-\common^{*}}{2}^2.
   \end{equation}
   By rearranging the terms, we then have
   \begin{align}
     \norm{\common^{\iteration+1}-\common^{*}}{2}^2 &\leq (1-\mu_2\learningrate{}) \norm{\common^{\iteration}-\common^{*}}{2}^2 - \learningrate{} \left(\common^{\iteration+1}-\common^{*} \right)^T \error{\strategicnode}{\iteration}\\
     &\leq (1-\mu_2\learningrate{}) \norm{\common^{\iteration}-\common^{*}}{2}^2 + \learningrate{} \norm{\common^{\iteration+1}-\common^{*}}{2} \norm{\error{\strategicnode}{\iteration}}{2}.
     \label{eq:sequence}
   \end{align}
   Now note that $ \learningrate{} \leq 1/L < 1/\mu_2$ and thus $0<1-\mu_2\learningrate{}<1$. We now define two sequences $u_t \triangleq \norm{\common^{\iteration}-\common^{*}}{2}$ and $\delta_{\iteration} = \learningrate{}\norm{\error{\strategicnode}{\iteration}}{2}$. We already know that $\delta_t \rightarrow 0$, and we want to show $u_t$ also converges to $0$. By Equation (\ref{eq:sequence}), we have
   \begin{equation}
       u_{t+1}^2 \leq (1-\learningrate{}\mu_2) u_{t}^2 + \delta_{\iteration} u_{t+1},
   \end{equation}
   which implies
   \begin{equation}
    \left(u_{t+1} - \frac{\delta_t}{2} \right)^2   = u_{t+1}^2 - u_{t+1} \delta_t + \frac{\delta_t^2}{4} \leq (1-\learningrate{}\mu_2) u_{t}^2 + \frac{\delta_t^2}{4},
   \end{equation}
   and thus
   \begin{equation}
     u_{t+1} \leq \sqrt{(1-\learningrate{}\mu_2) u_{t}^2 + \frac{\delta_t^2}{4}} + \frac{\delta_t}{2}\leq \sqrt{(1-\learningrate{}\mu_2) u_{t}^2} +\frac{\delta_t}{2}+\frac{\delta_t}{2} \leq \left(1-\frac{\learningrate{}\mu_2}{2} \right) u_t +\delta_t.
   \end{equation}
   Lemma~\ref{lemma:sequence_upper_bound_converges} allows to conclude.
\end{proof}

\color{black}
\section{Proofs of the Impossibility Corollaries}
\label{sec:impossibility_corollary}

\subsection{Lower Bound on Byzantine Resilience}

\begin{proof}[Proof of Corollary~\ref{cor:impossibility_resilience}]
Assume $\BYZANTINE \geq \NODE /2$, and consider $d=1$.
Denote $H_0 \triangleq \lfloor \NODE / 2 \rfloor$.
Let us define $\trueparamsub{\node} \triangleq -1$ for all users $\node \in [H_0] = \set{1, \ldots, H_0}$,
$\trueparamsub{\node} \triangleq 1$ for all users $\node \in [2 H_0] - [H_0] = \set{H_0 + 1, \ldots, 2 H_0}$
and $\trueparamsub{\node} \triangleq 0$ for all users $\node \in [\NODE] - [2 H_0]$ (which is either empty or contains one element).
Now fix $\varepsilon, \delta > 0$, with $\varepsilon \triangleq 1/4$ and $\delta \triangleq 1/3$.
Consider the honest datasets $\datafamily{}$ of size $\NODEINPUT{}$ that they may have reported, 
where $\NODEINPUT{}$ is chosen to guarantee high-probability $(\BYZANTINE, \NODE, \averagingconstant)$-Byzantine learning, as guaranteed by Definition~\ref{def:Byzantine-learning}. 
Since the guarantee must hold for $\HONEST \subseteq [H_0]$ and for $\HONEST \subseteq [2 H_0] - [H_0]$,
with probability at least $1-2\delta \geq 1/3 > 0$ (so that both guarantees hold),
we must then have $\absv{\common^\alg - (-1)}^2 \leq \varepsilon$ (for $\HONEST \subseteq [H_0]$) 
and $\absv{\common^\alg - 1}^2 \leq \varepsilon$ 
(for $\HONEST \subseteq [2 H_0] - [H_0]$).
But then, by the triangle inequality, we must have
\begin{equation}
    2 = \absv{1 - \common^\alg + \common^\alg - (-1)}
    \leq \absv{1 - \common^\alg} + \absv{\common^\alg - (-1)}
    \leq \sqrt{\varepsilon} + \sqrt{\varepsilon}
    = 1/2 + 1/2 = 1.
\end{equation}
This is a contradiction. 
Thus $(\BYZANTINE, \NODE, \averagingconstant)$-Byzantine learning cannot be guaranteed for $\BYZANTINE \geq \NODE /2$.
\end{proof}

\subsection{Lower Bound on Correctness}

\begin{proof}[Proof of Corollary~\ref{cor:impossibility}]
Consider $d=1$.
Let us define $\trueparamsub{\node} \triangleq 0$ for all users $\node \in [\card{\HONEST}] = \set{1, \ldots, \card{\HONEST}}$,
and $\trueparamsub{\node} \triangleq 1$ for all users $\node \in [\NODE] - [\card{\HONEST}] = \set{\card{\HONEST} + 1, \ldots, \NODE}$.
Now fix $\varepsilon, \delta > 0$, with $\varepsilon < \BYZANTINE^2 / (\NODE - \BYZANTINE)^2$ and $\delta \triangleq 1/3$.
Consider the honest datasets $\datafamily{}$ of size $\NODEINPUT{}$ that they may have reported, 
where $\NODEINPUT{}$ is chosen to guarantee high-probability $(\BYZANTINE, \NODE, \averagingconstant)$-Byzantine learning, as guaranteed by Definition~\ref{def:Byzantine-learning}. 
Since the guarantee must hold for $\HONEST = [\card{\HONEST}]$ and for $\HONEST = [\NODE] - [\BYZANTINE]$,
with probability at least $1-2\delta \geq 1/3 > 0$ (so that both guarantees hold),
we must then have $\absv{\common^\alg}^2 \leq \varepsilon$ (for $\HONEST = [\card{\HONEST}]$) and
\begin{equation}
    \absv{\common^\alg - \frac{\BYZANTINE}{\NODE - \BYZANTINE}}^2
    \leq C^2 + \varepsilon,
\end{equation}
for the case $\HONEST = [\NODE] - [\BYZANTINE]$.
The first inequality implies $\absv{\common^\alg} \leq \BYZANTINE / (\NODE - \BYZANTINE)$,
while the second can then be rewritten
\begin{align}
    C^2 \geq \left( \frac{\BYZANTINE}{\NODE - \BYZANTINE} - \varepsilon \right)^2 - \varepsilon.
\end{align}
But this equation is now deterministic. 
Since it must hold with a strictly positive probability, it must thus hold deterministically.
Moreover, it holds for any $\varepsilon >0$.
Taking the limit $\varepsilon \rightarrow 0$ yields the result.
\end{proof}

\section{Sum over Expectations}
\label{app:sum_average}

In this section, we provide both theoretical and empirical results to argue for using a sum-based local loss over an expectation-based local loss.

\subsection{Theoretical Arguments}

Indeed, intuitively, if one considers an expectation $\expectVariable{\datapoint \sim \data{\node}}{\lossperinput (\paramsub{\node}, \datapoint)}$ rather than a sum, as is done by \cite{HanzelyHHR20}, \cite{DinhTN20} and \cite{collaborative_learning}, then the weight of an honest active user's local loss will not increase as a user provides more and more data, which will hinder the ability of $\paramsub{\node}$ to fit the user's local data.
In fact, intuitively, using an expectation wrongly yields the same influence to any two users, even when one (honest) user provides a much larger dataset $\data{\node}$ than the other, and should thus intuitively be regarded as ``more reliable''.

There is another theoretical argument for using the sum rather than the expectation.
Namely, if the loss is regarded as a Bayesian negative log-posterior, given a prior $\exp \left( - \sum_{\node \in [\NODE]} \reglocalweight{} \norm{\paramsub{\node}}{2} - \sum_{\node \in [\NODE]} \regularization(\common, \paramsub{\node}) \right)$ on the local and global models, 
then the term that fits local data should equal the negative log-likelihood of the data, given the models $(\common, \paramfamily{})$.
Assuming that the distribution of each data point $\datapoint \in \data{\node}$ is independent from all other data points, and depends only on the local model $\paramsub{\node}$, this negative log-likelihood yields a sum over data points; not an expectation.

\subsection{Empirical Results}

We also empirically compared the performances of sum as opposed to the expectation.
To do so, we constructed a setting where 10 ``idle'' users draw randomly 10 data points from the FashionMNIST dataset, while one ``active'' user has all of the FashionMNIST dataset (60,000 data points).
We then learned local and global models, with $\regularization(\common, \param) \triangleq  \regweightsub{} \norm{\common - \param}{2}^2$, $\lambda = 1$.
We compared two different classifiers to which we refer as a ``linear model'' and ``2-layers neural network'', both using \textit{CrossEntropy} loss.
The linear model has $(784 + 1)  \times 10$ parameters.
The neural network has 2 layers of 784 parameters with bias, with \textit{ReLU} activation in between, adding up to ($(784 + 1)  \times 784 + (784 + 1)  \times 10$.

Note also that, in all our experiments, we did not consider any local regularization, i.e. we set $\reglocalweight{} \triangleq 0$. All our experiments are seeded with seed 999.

\subsubsection{Noisy FashionMNIST}

To see a strong difference between sum and average, 
we made the FashionMNIST dataset harder to learn, by randomly labeling 60\% of the training set. 
Table~\ref{tab:sum_average} reports the accuracy of local and global models in the different settings.
Our results clearly and robustly indicate that the use of sums outperforms the use of expectations.

\begin{table}
    \centering
    \begin{tabular}{||c||c|c||c|c||}
        \hline \hline
        & $\mathbb E L$ & $\Sigma L$ & $\mathbb E NN$ & $\Sigma NN$  \\ \hline
        idle user's model & 0.52 & 0.80 & 0.55 & 0.79 \\ \hline
        active user's model & 0.58 & 0.80 & 0.56 & 0.79 \\ \hline
        global model & 0.55 & 0.80 & 0.58 & 0.79 \\ \hline \hline
    \end{tabular}
    \caption{Accuracy of trained models, depending on the use of expectation (denoted $\mathbb E$) or sum ($\Sigma$), and on the use of linear classifier ($L$) or a 2-layer neural net $(NN)$. Here, all users are honest and an $\ell_2^2$ regularization is used, but there is a large heterogeneity in the amount of data per user.}
    \label{tab:sum_average}
\end{table}

On each of the following plots, we display the top-1 accuracy on the MNIST test dataset (10 000 images) for the active user, for the global model and for one of the idle users (in Table \ref{tab:sum_average}, the mean accuracy for idle users is reported), as we vary the value of $\regweightsub{}$.
Intuitively, $\regweightsub{}$ models how much we want the local models to be similar.

In the case of learning FashionMNIST, given that the data is i.i.d., larger values of $\regweightsub{}$ are more meaningful (though our experiments show that they may hinder convergence speed).
However, in settings where users have different data distributions, e.g. because the labels depend on users' preferences, then smaller values of $\regweightsub{}$ may be more relevant.

Note that the use of a common value of $\regweightsub{}$ in both cases is slightly misleading, as using the sum intuitively decreases the comparative weight of the regularization term. To reduce this effect, for this experiment only, we divide the local losses by the average of the number of data points per user for the sum version. This way, if the number of points is equal for all users, the two losses will be exactly the same.
More importantly, our experiments seem to robustly show that using the sum consistently outperforms the expectation, for both a linear classifier and a 2-layer neural network, for the problem of noisy FashionMNIST classification.

\begin{figure}
    \centering
    \subfloat[Using the average] 
    {\includegraphics[scale=0.4]{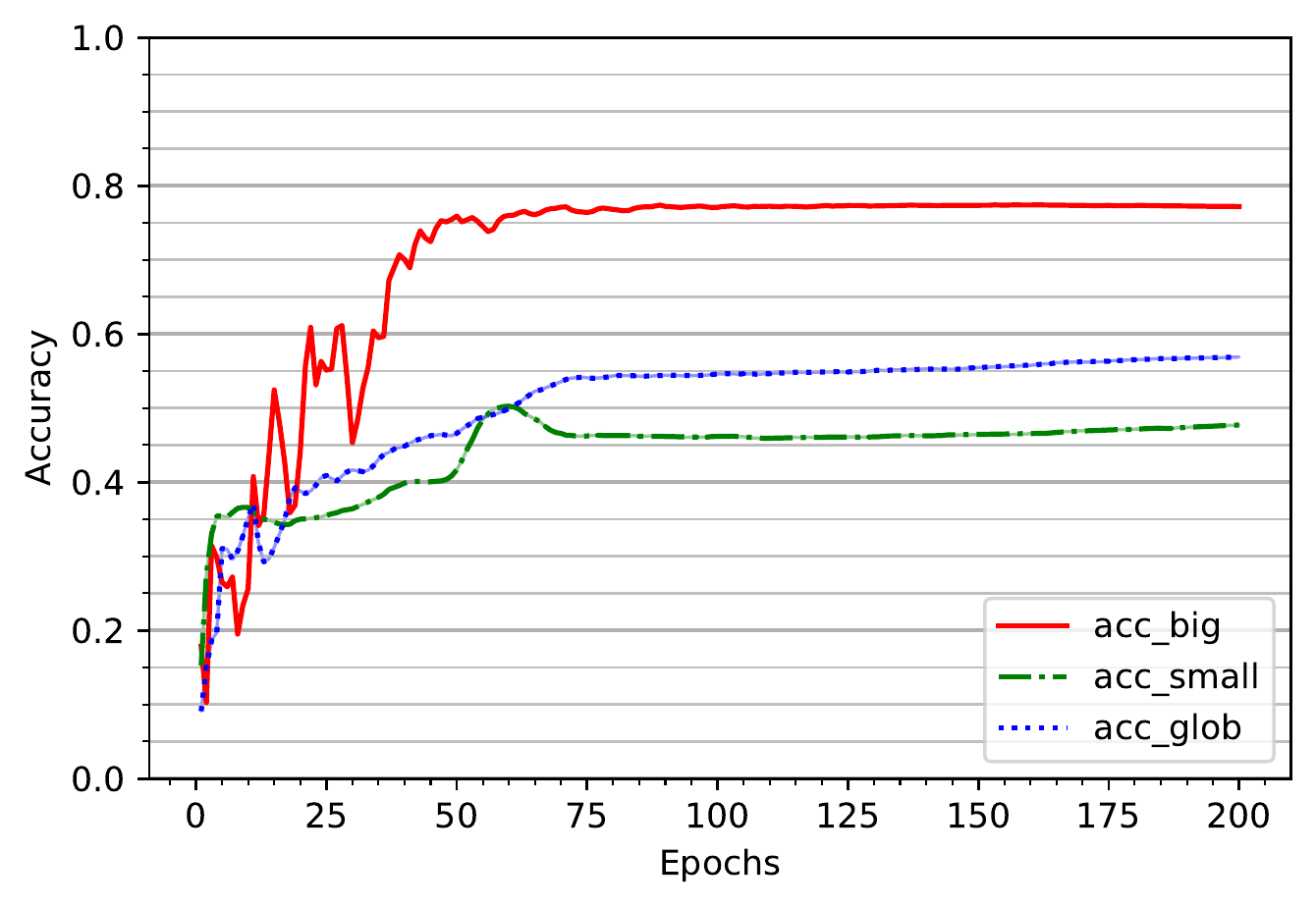}}
      \qquad
    \subfloat[Using the sum] {\includegraphics[scale=0.4]{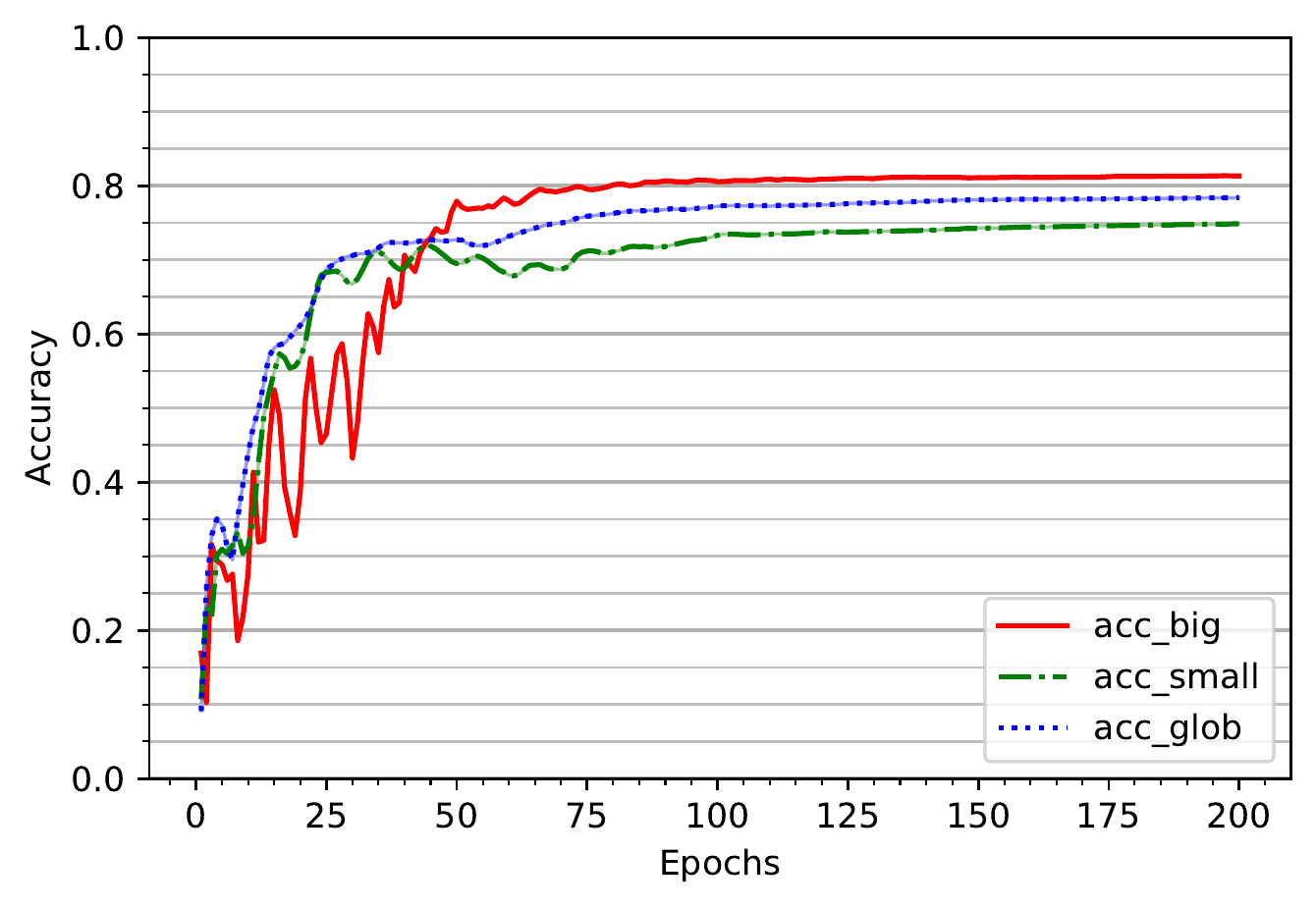}}
      \qquad
     \caption{Linear model on noisy FashionMNIST, for $\lambda = 0.01$.}
\end{figure}

\begin{figure}
    \centering
    \subfloat[Using the average] 
    {\includegraphics[scale=0.4]{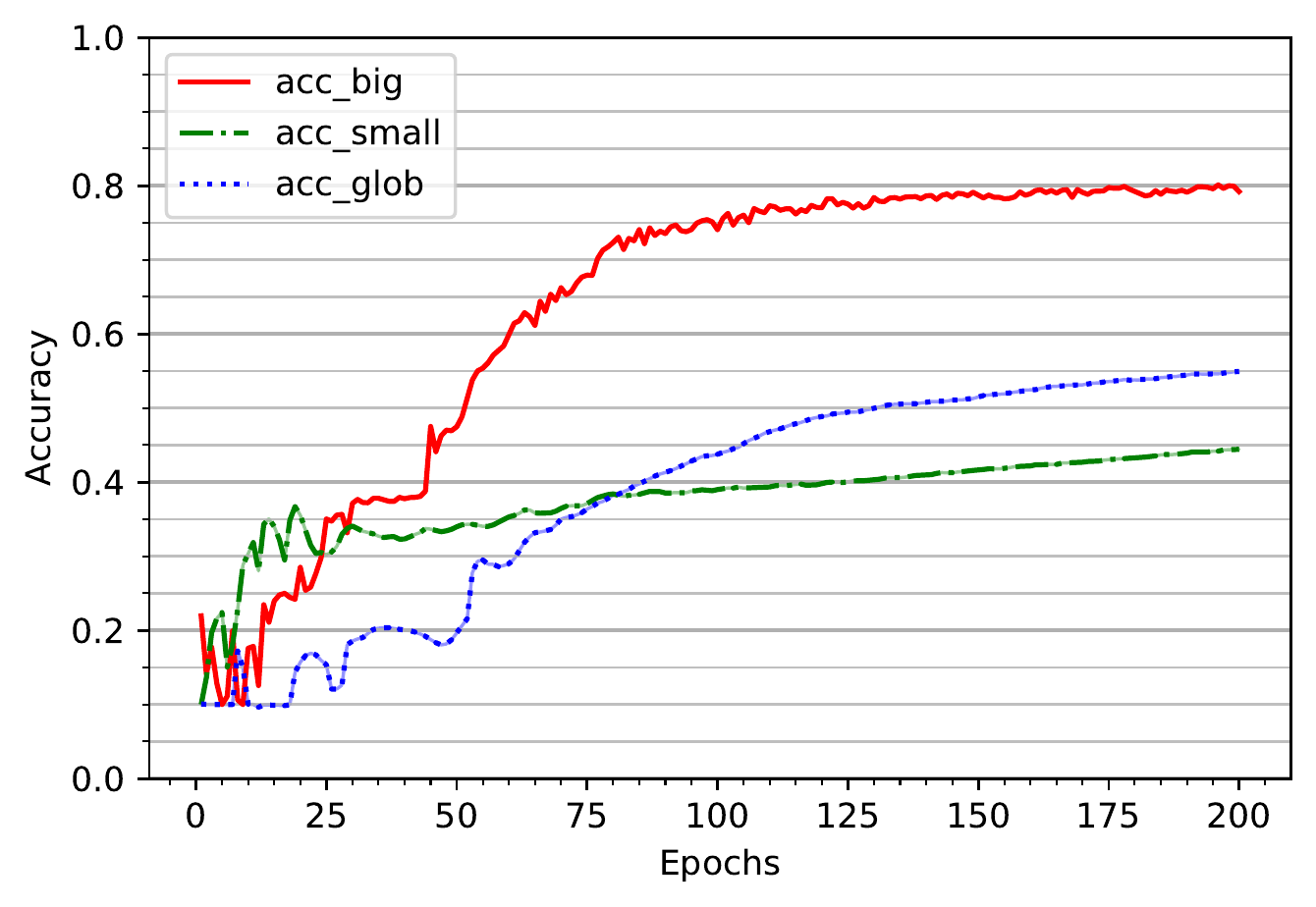}}
      \qquad
    \subfloat[Using the sum] {\includegraphics[scale=0.4]{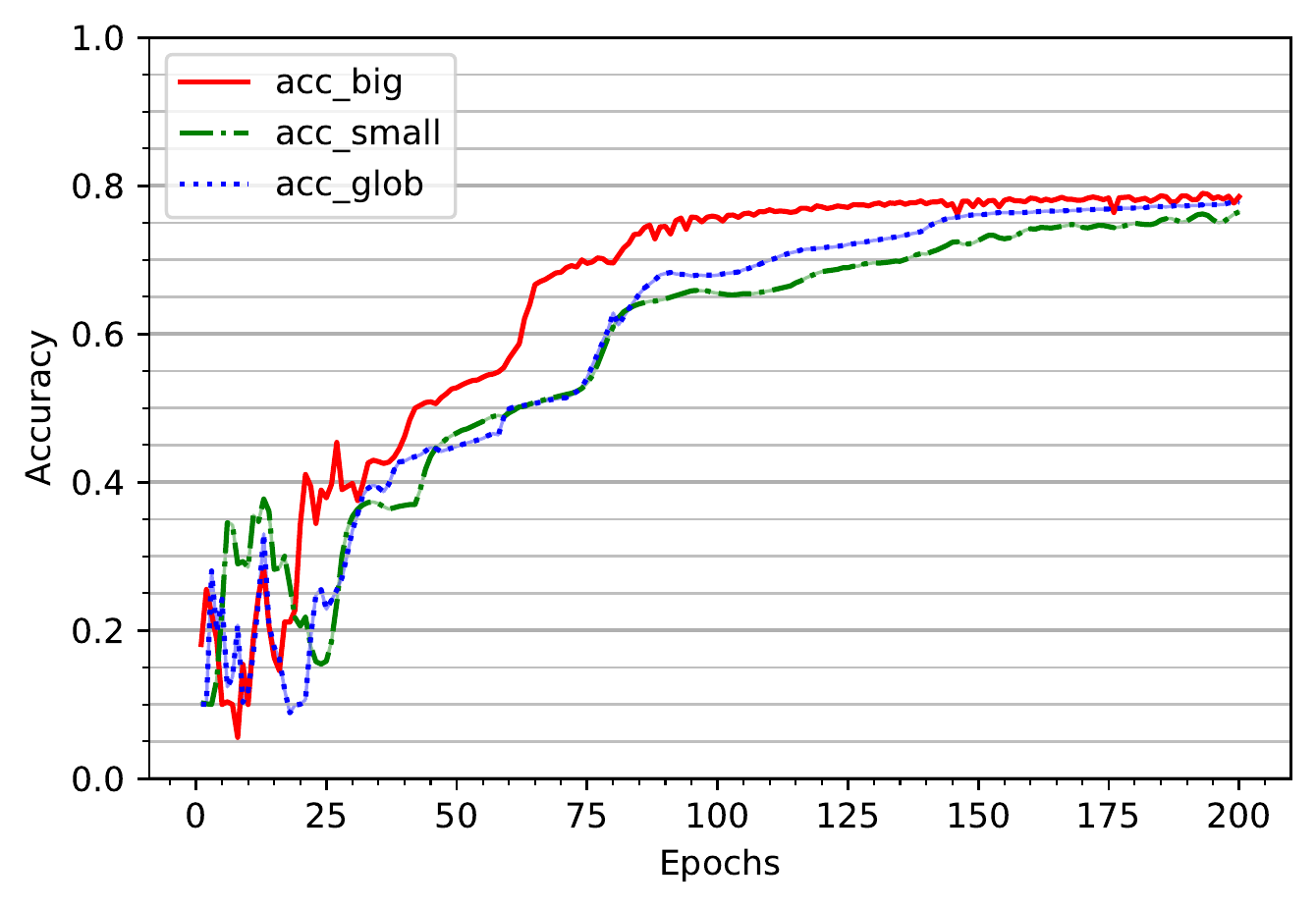}}
      \qquad
     \caption{2-layer neural network on noisy FashionMNIST, for $\lambda = 0.01$.}
\end{figure}

\begin{figure}
    \centering
    \subfloat[Using the average] 
    {\includegraphics[scale=0.4]{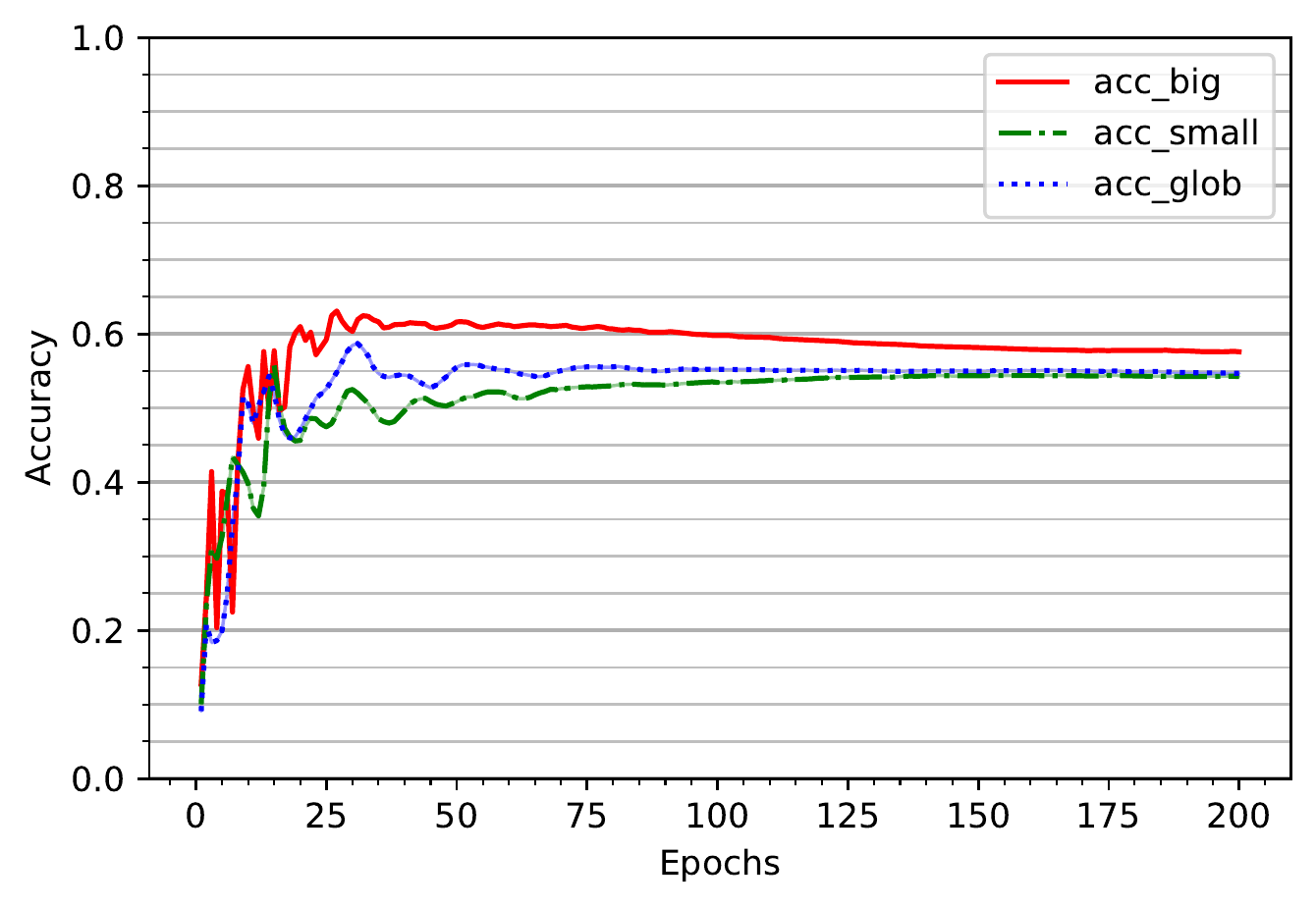}}
      \qquad
    \subfloat[Using the sum] {\includegraphics[scale=0.4]{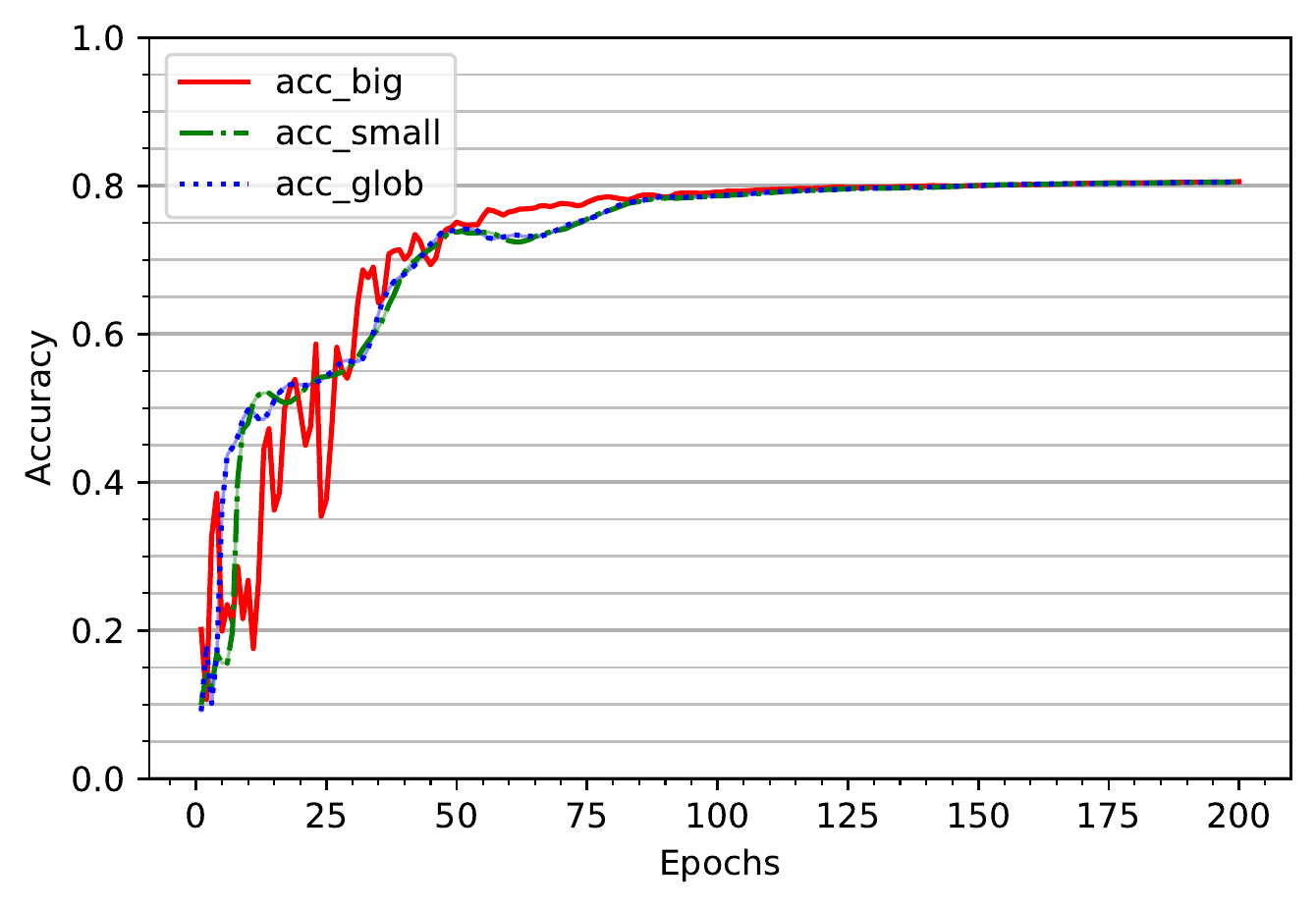}}
      \qquad
     \caption{Linear model on noisy FashionMNIST, for $\lambda = 0.1$.}
\end{figure}

\begin{figure}
    \centering
    \subfloat[Using the average] 
    {\includegraphics[scale=0.4]{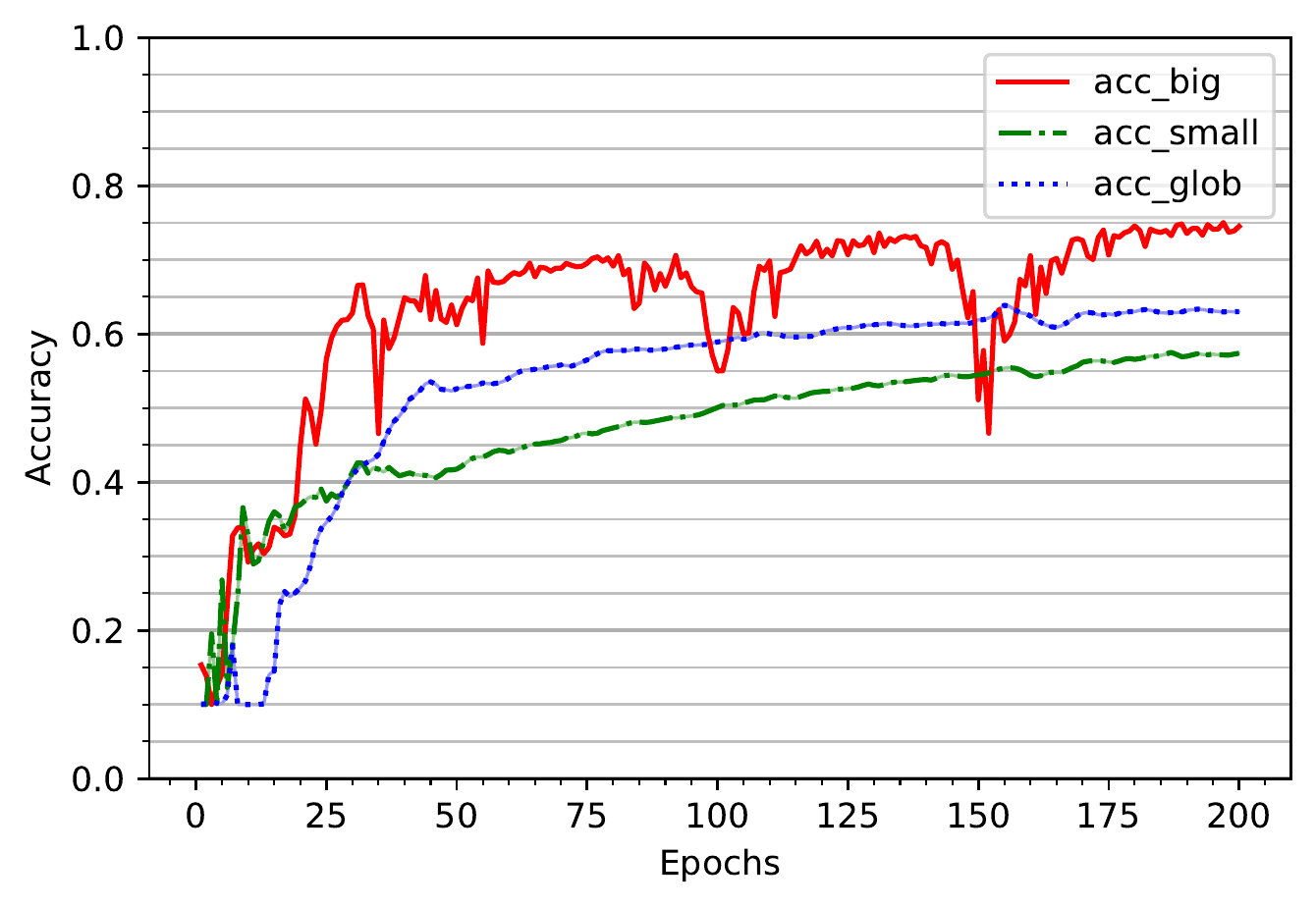}}
      \qquad
    \subfloat[Using the sum] {\includegraphics[scale=0.4]{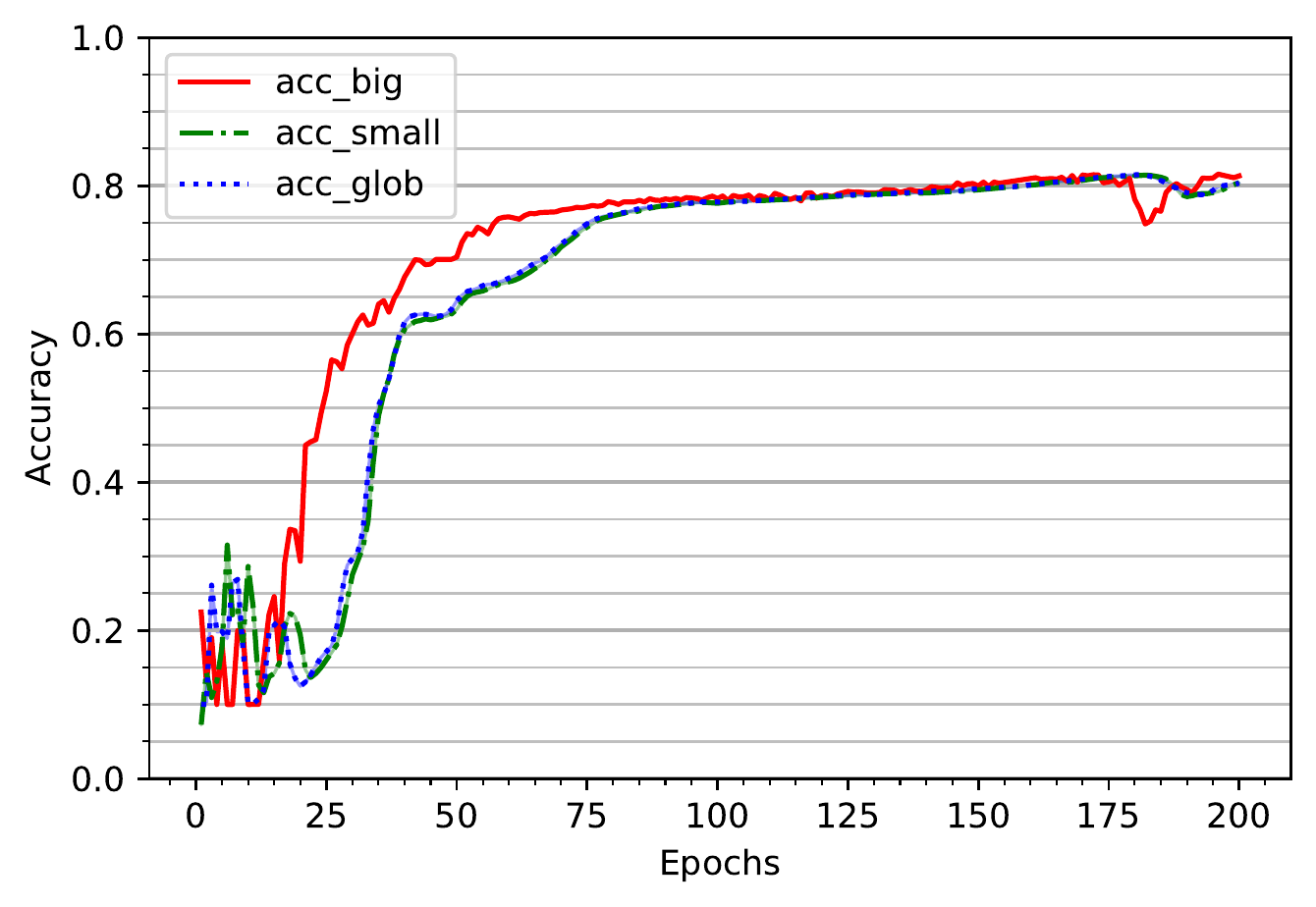}}
      \qquad
     \caption{2-layer neural network on noisy FashionMNIST, for $\lambda = 0.1$.}
\end{figure}

\begin{figure}
    \centering
    \subfloat[Using the average] 
    {\includegraphics[scale=0.4]{sumvsavg/linear/1average_acc.pdf}}
      \qquad
    \subfloat[Using the sum] {\includegraphics[scale=0.4]{sumvsavg/linear/1sum_acc.pdf}}
      \qquad
     \caption{Linear model on noisy FashionMNIST, for $\lambda = 1$.}
\end{figure}

\begin{figure}
    \centering
    \subfloat[Using the average] 
    {\includegraphics[scale=0.4]{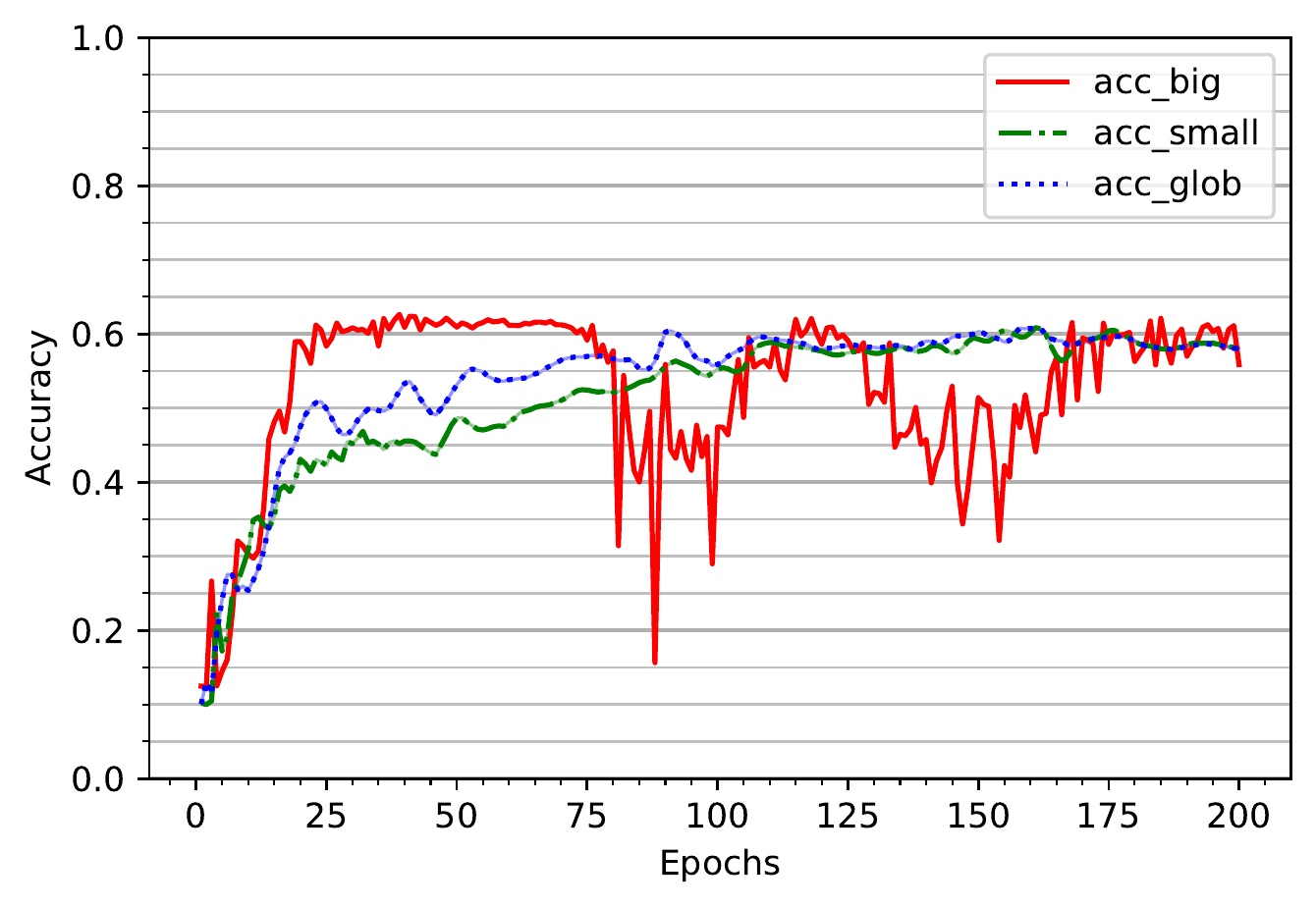}}
      \qquad
    \subfloat[Using the sum] {\includegraphics[scale=0.4]{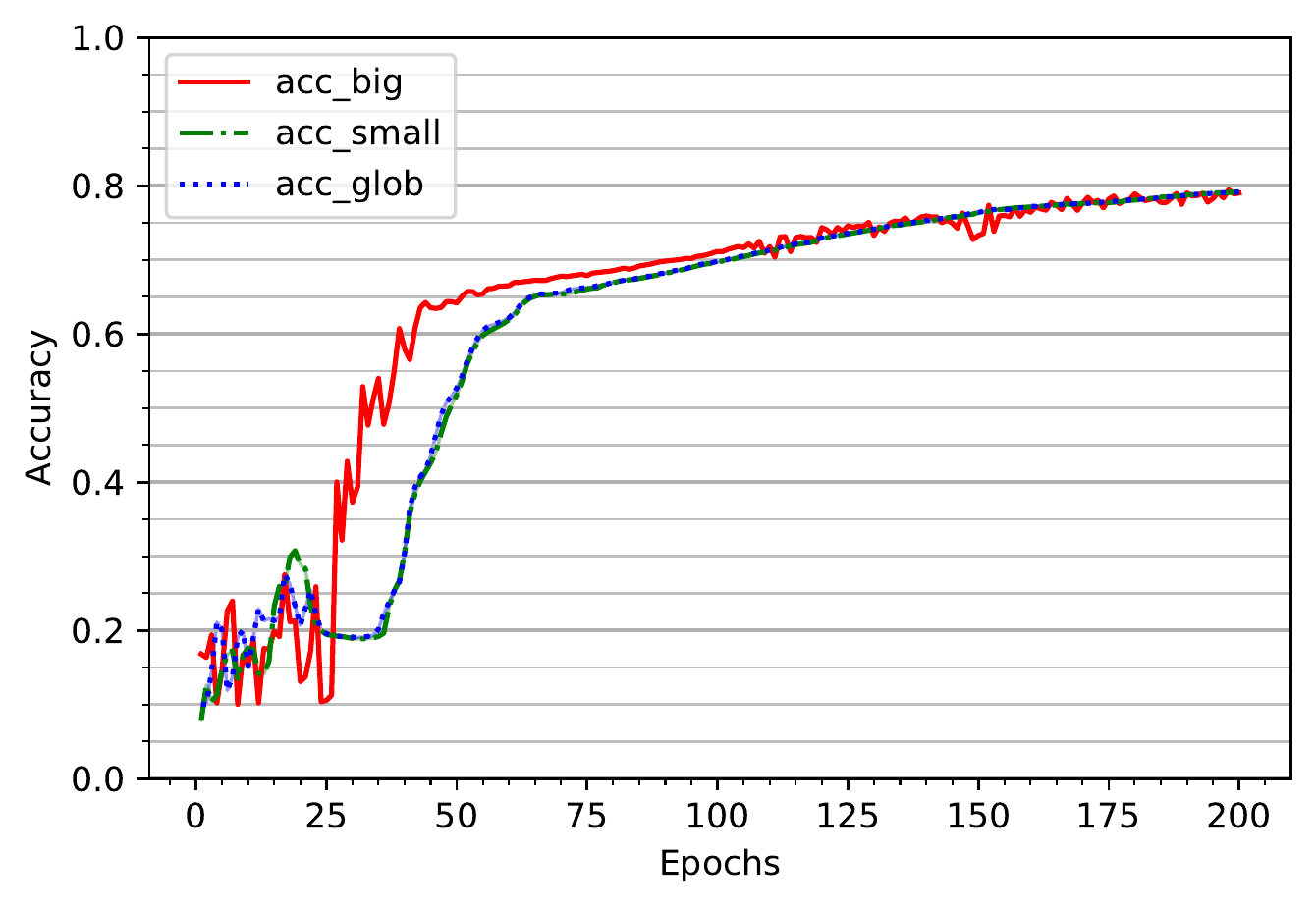}}
      \qquad
     \caption{2-layer neural network on noisy FashionMNIST, for $\lambda = 1$.}
\end{figure}

\begin{figure}
    \centering
    \subfloat[Using the average] 
    {\includegraphics[scale=0.4]{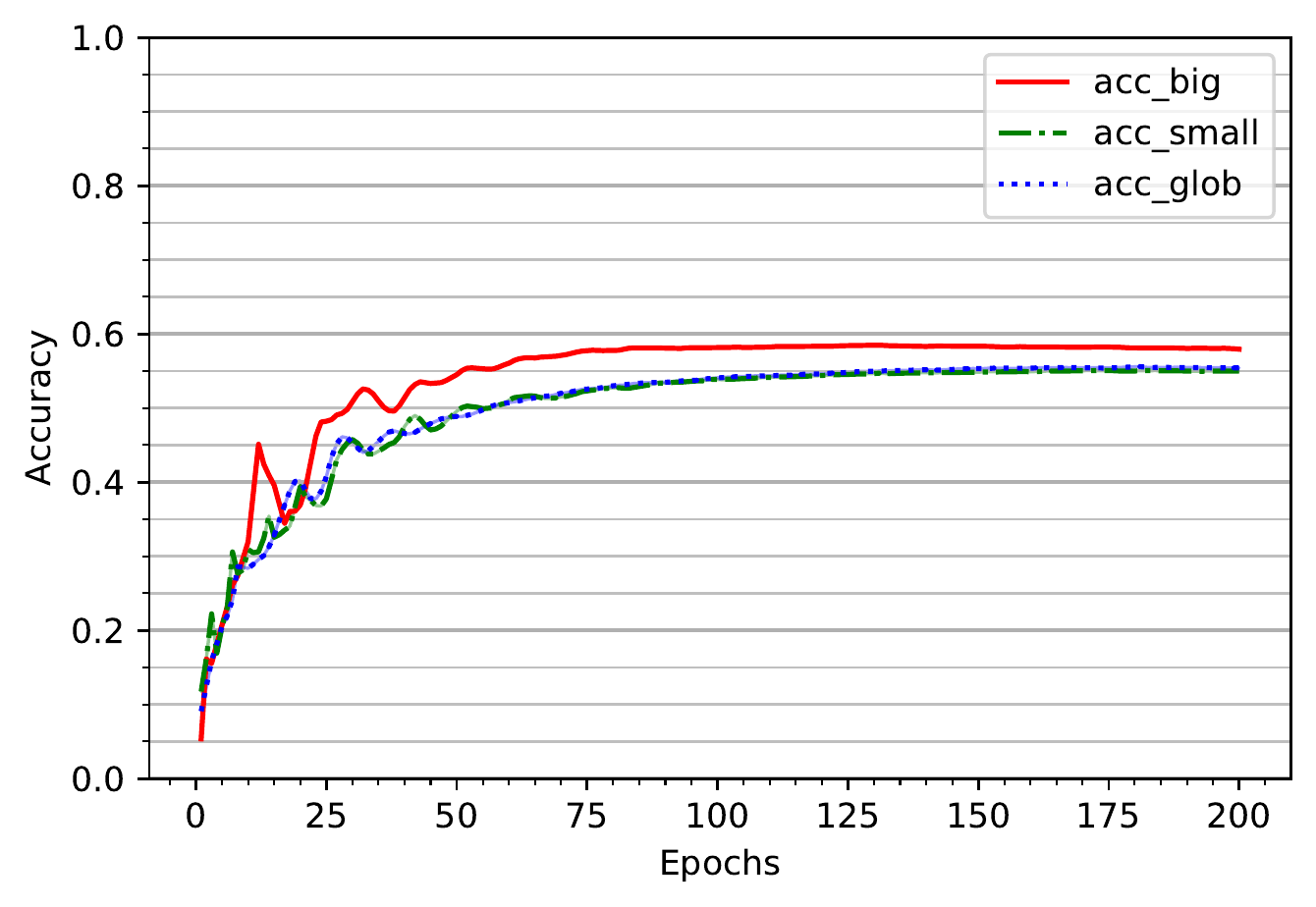}}
      \qquad
    \subfloat[Using the sum] {\includegraphics[scale=0.4]{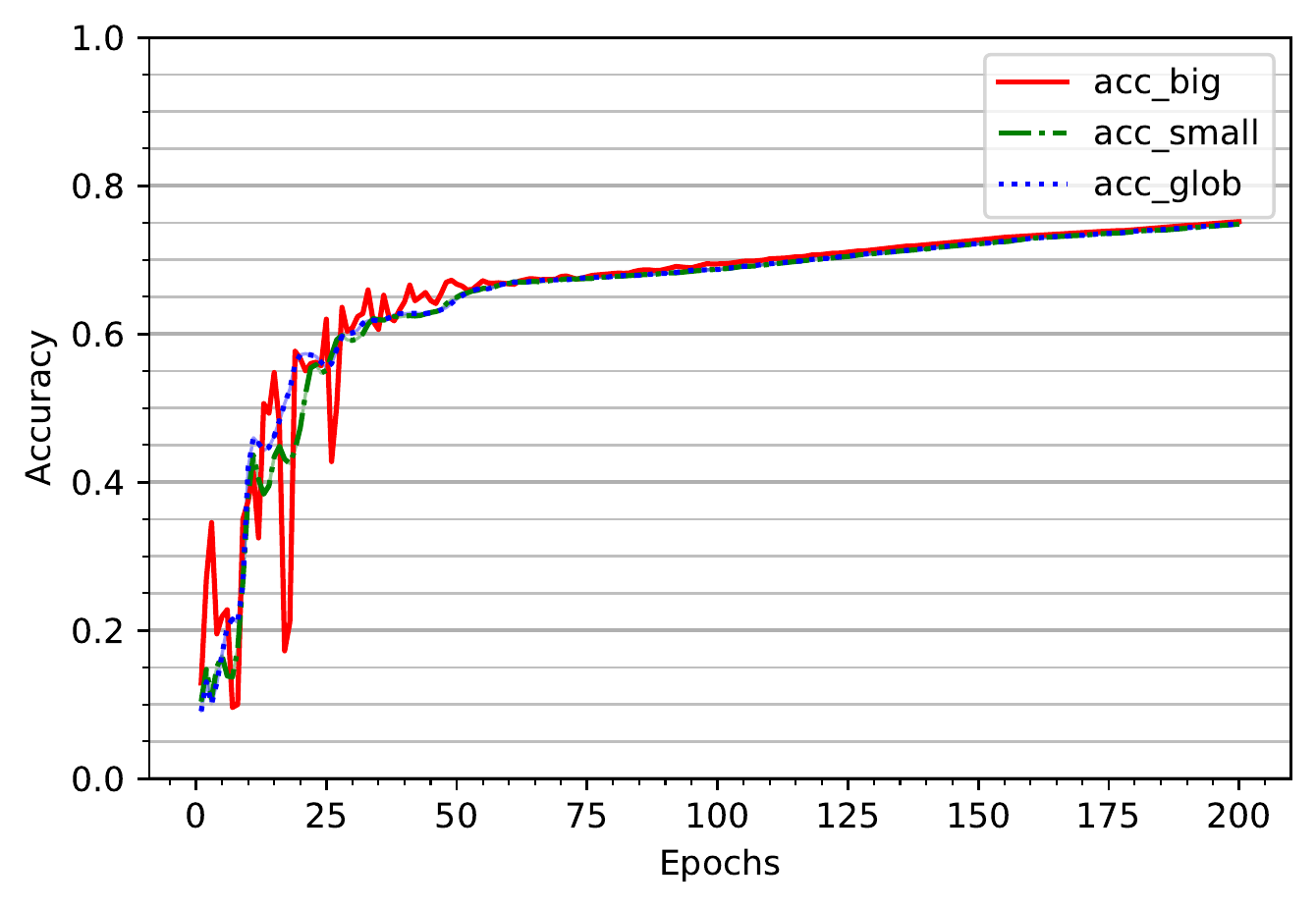}}
      \qquad
     \caption{Linear model on noisy FashionMNIST, for $\lambda = 10$.}
\end{figure}

\begin{figure}
    \centering
    \subfloat[Using the average] 
    {\includegraphics[scale=0.4]{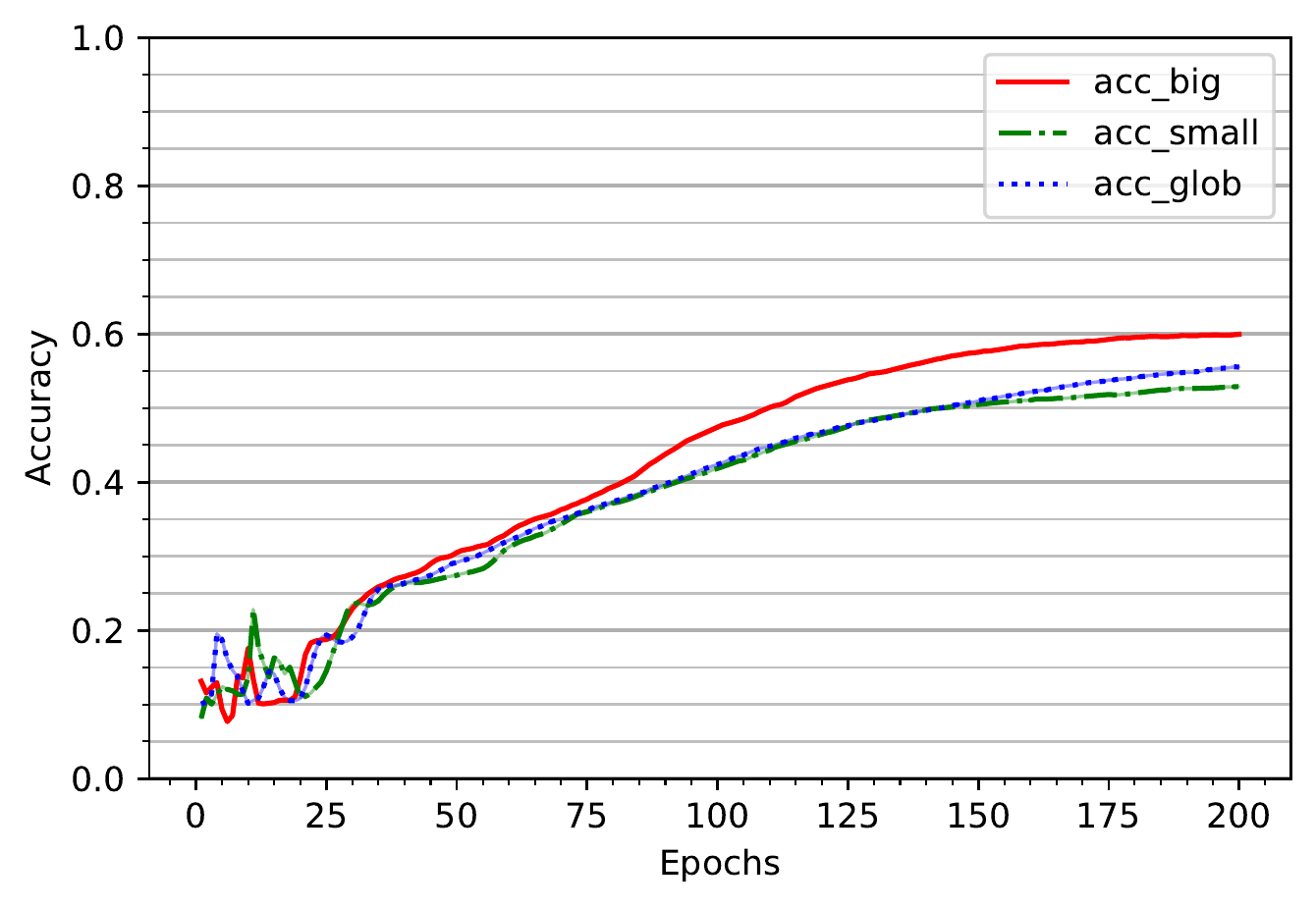}}
      \qquad
    \subfloat[Using the sum] {\includegraphics[scale=0.4]{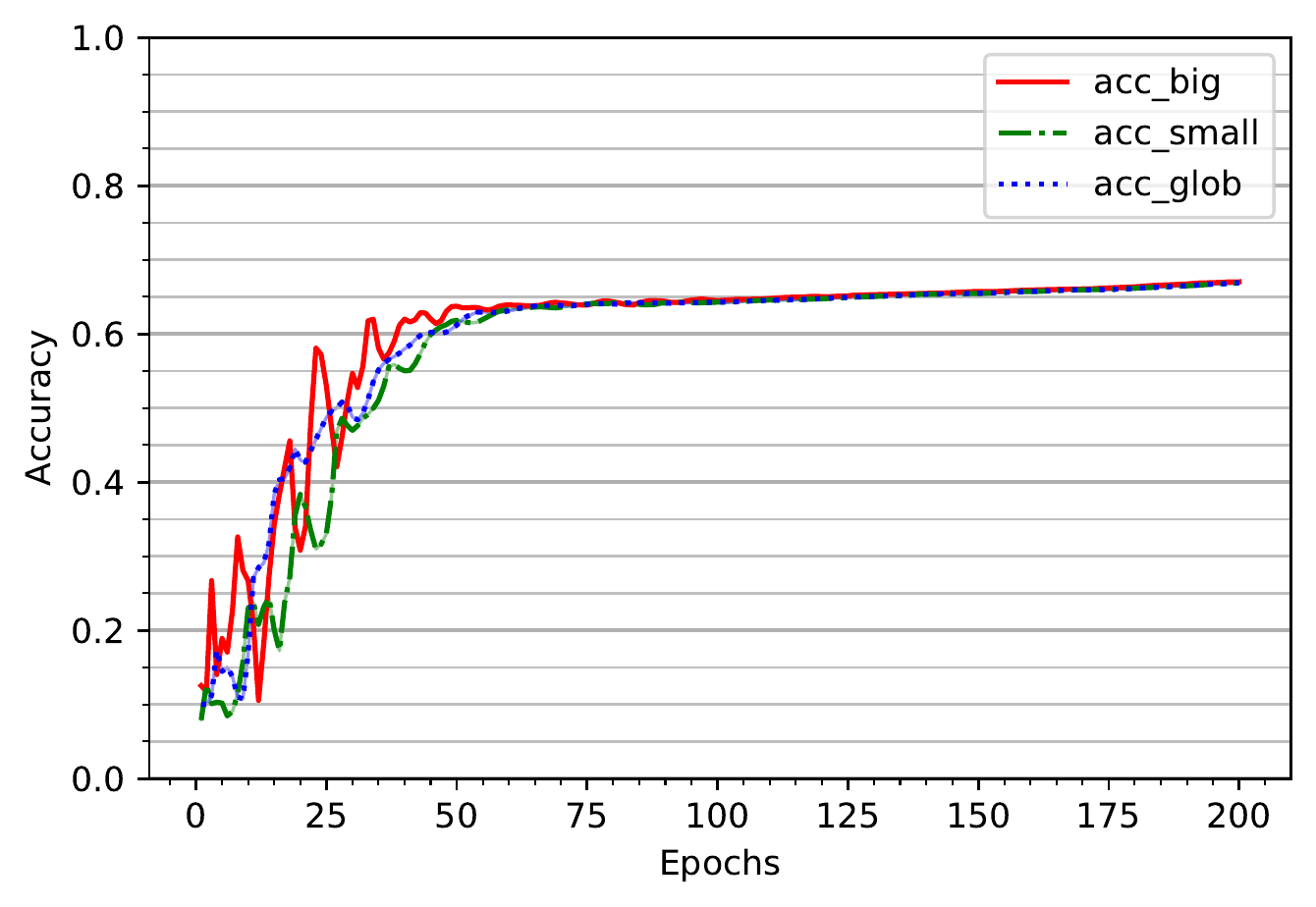}}
      \qquad
     \caption{2-layer neural network on noisy FashionMNIST, for $\lambda = 10$.}
\end{figure}

\begin{figure}
    \centering
    \subfloat[Using the average] 
    {\includegraphics[scale=0.4]{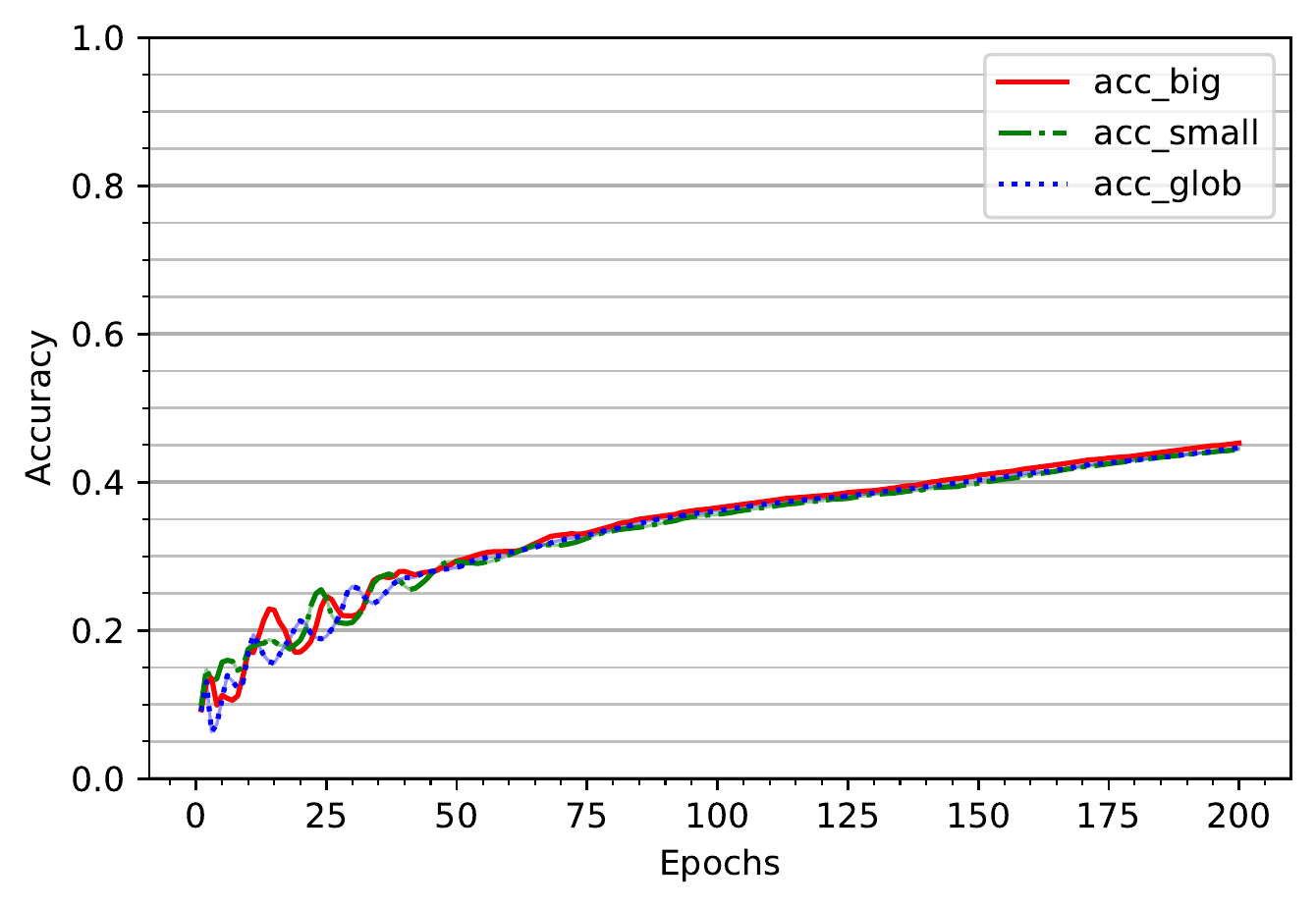}}
      \qquad
    \subfloat[Using the sum] {\includegraphics[scale=0.4]{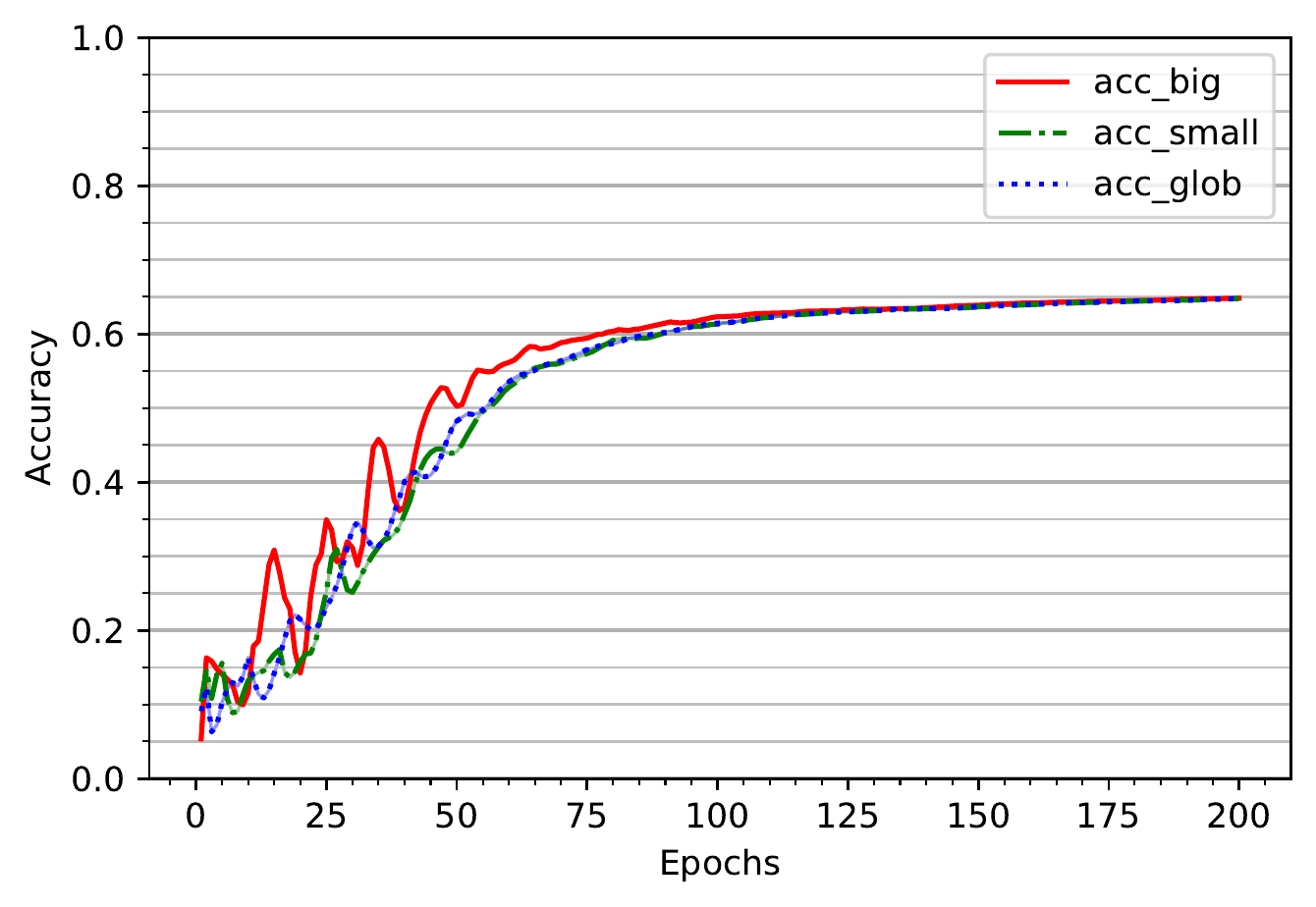}}
      \qquad
     \caption{Linear model on noisy FashionMNIST, for $\lambda = 100$.}
\end{figure}

\begin{figure}
    \centering
    \subfloat[Using the average] 
    {\includegraphics[scale=0.4]{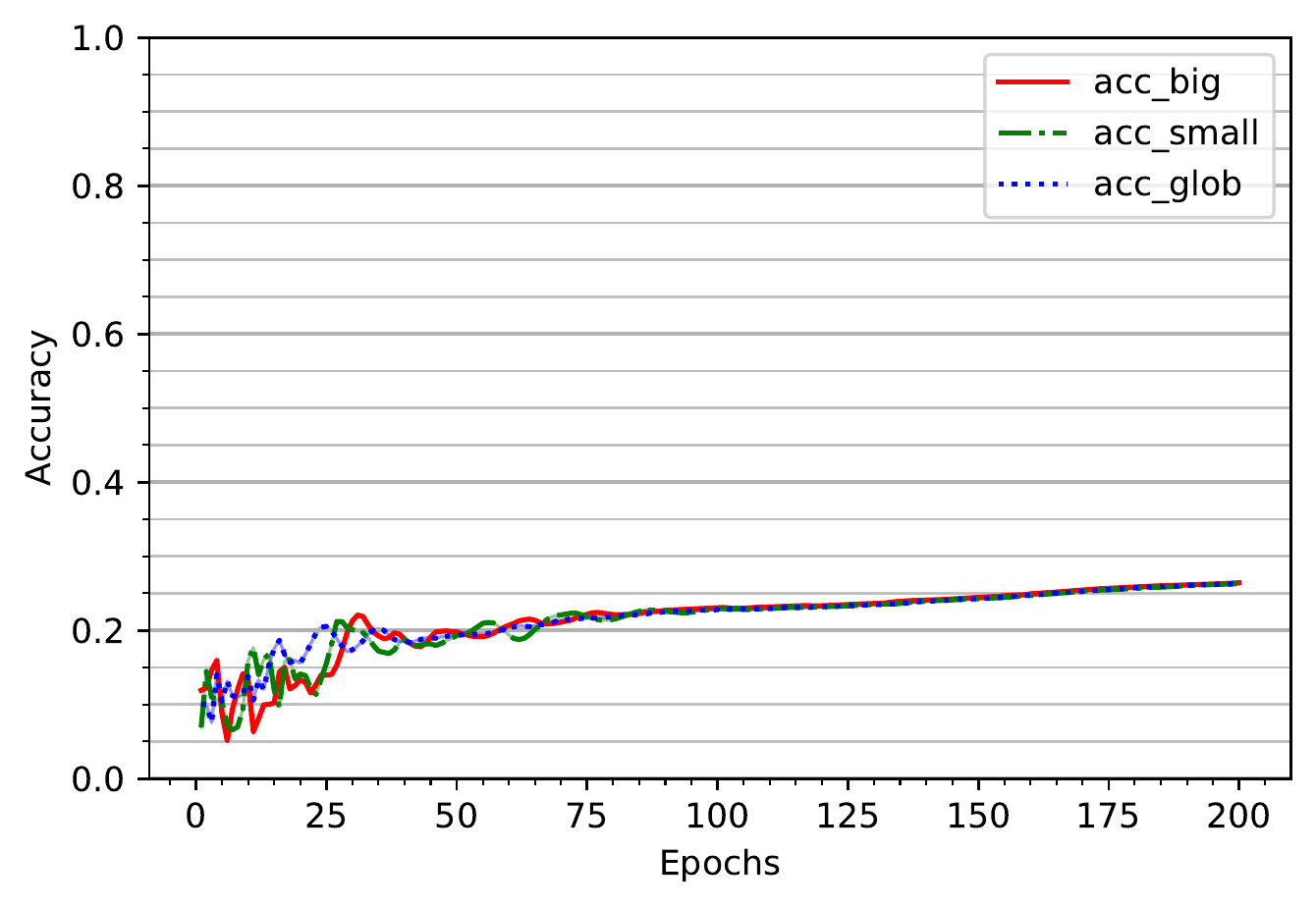}}
      \qquad
    \subfloat[Using the sum] {\includegraphics[scale=0.4]{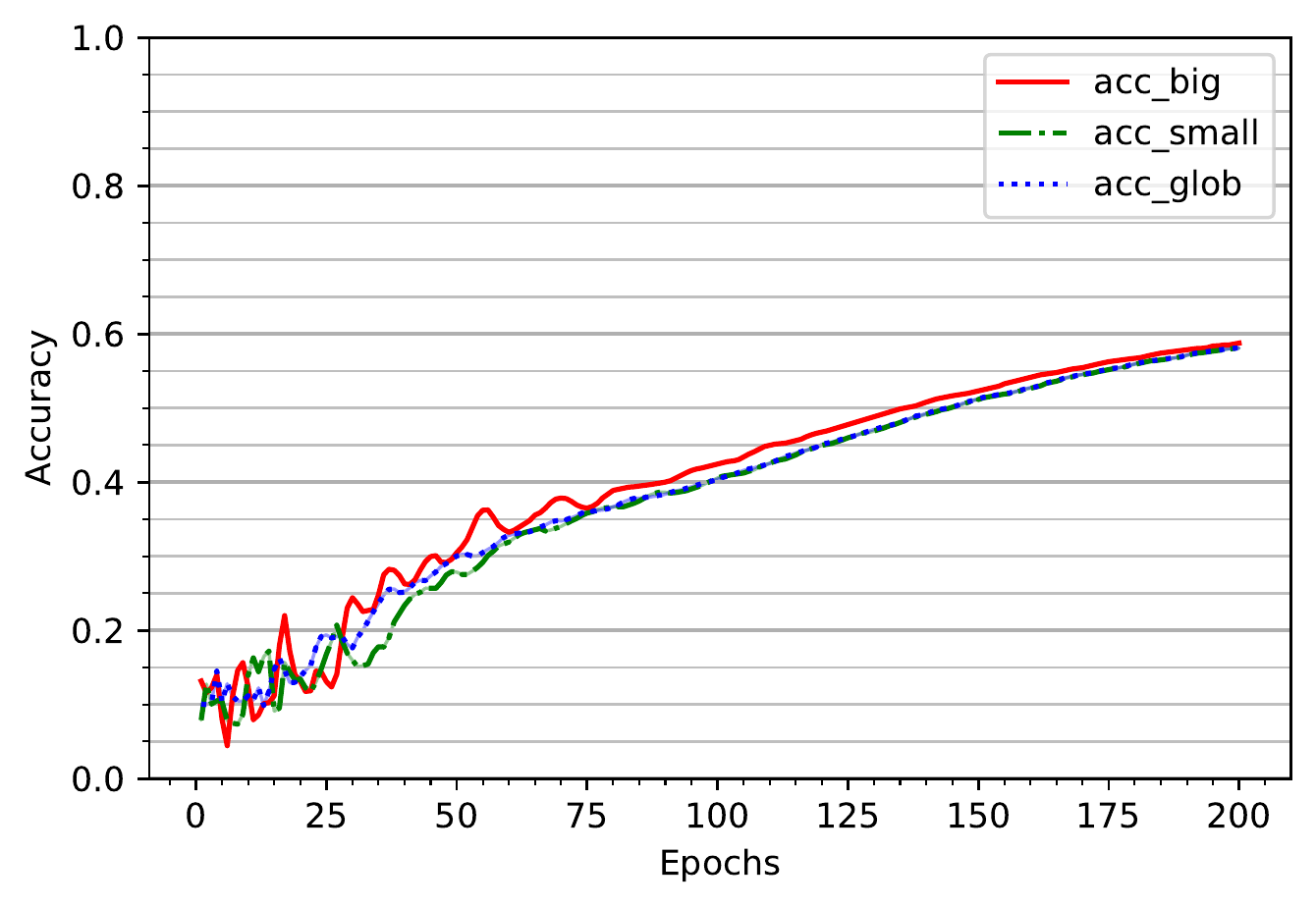}}
      \qquad
     \caption{2-layer neural network on noisy FashionMNIST, for $\lambda = 100$.}
\end{figure}

\subsubsection{FashionMNIST without noise}

Recall that we introduced noise into FashionMNIST to make the problem harder to learn and observe a clear difference between the average and the sum.
In this section, we present results of our experiments when the noise is removed.

\begin{figure}
    \centering
    \subfloat[Using the average] 
    {\includegraphics[scale=0.4]{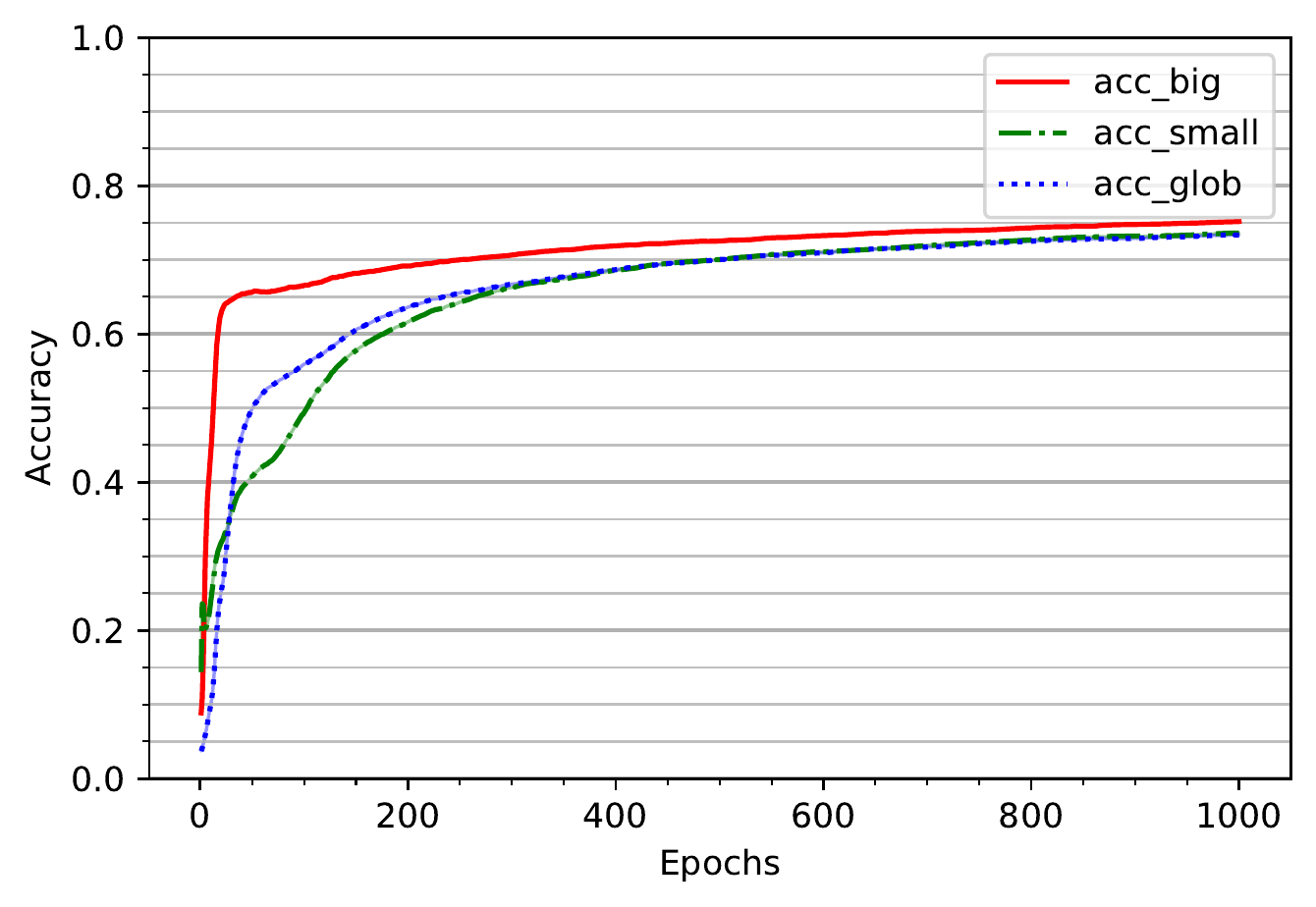}}
      \qquad
    \subfloat[Using the sum] {\includegraphics[scale=0.4]{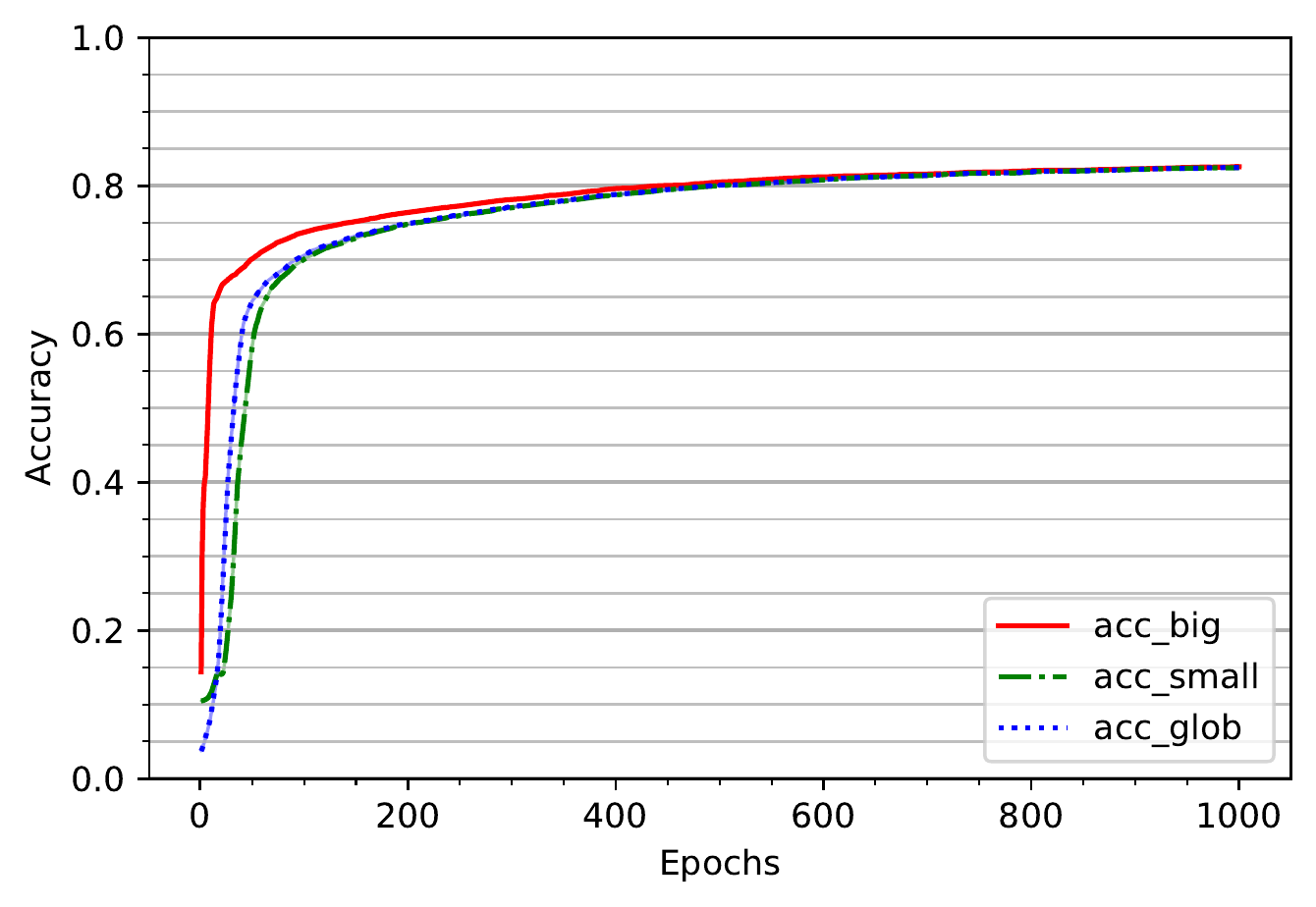}}
      \qquad
     \caption{Linear model on FashionMNIST (without noise), for $\lambda = 1$.}
\end{figure}

\begin{figure}
    \centering
    \subfloat[Using the average] 
    {\includegraphics[scale=0.4]{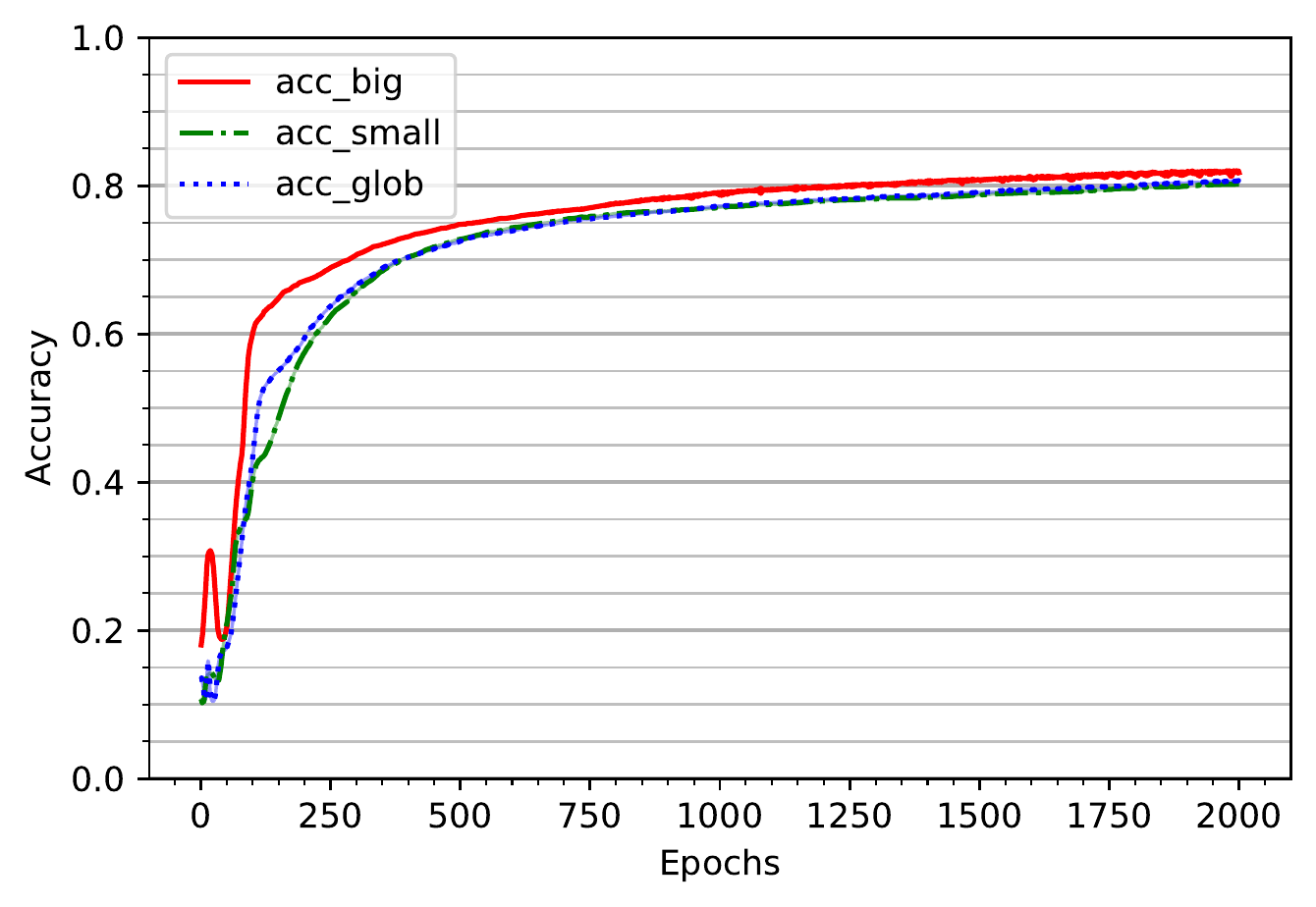}}
      \qquad
    \subfloat[Using the sum] {\includegraphics[scale=0.4]{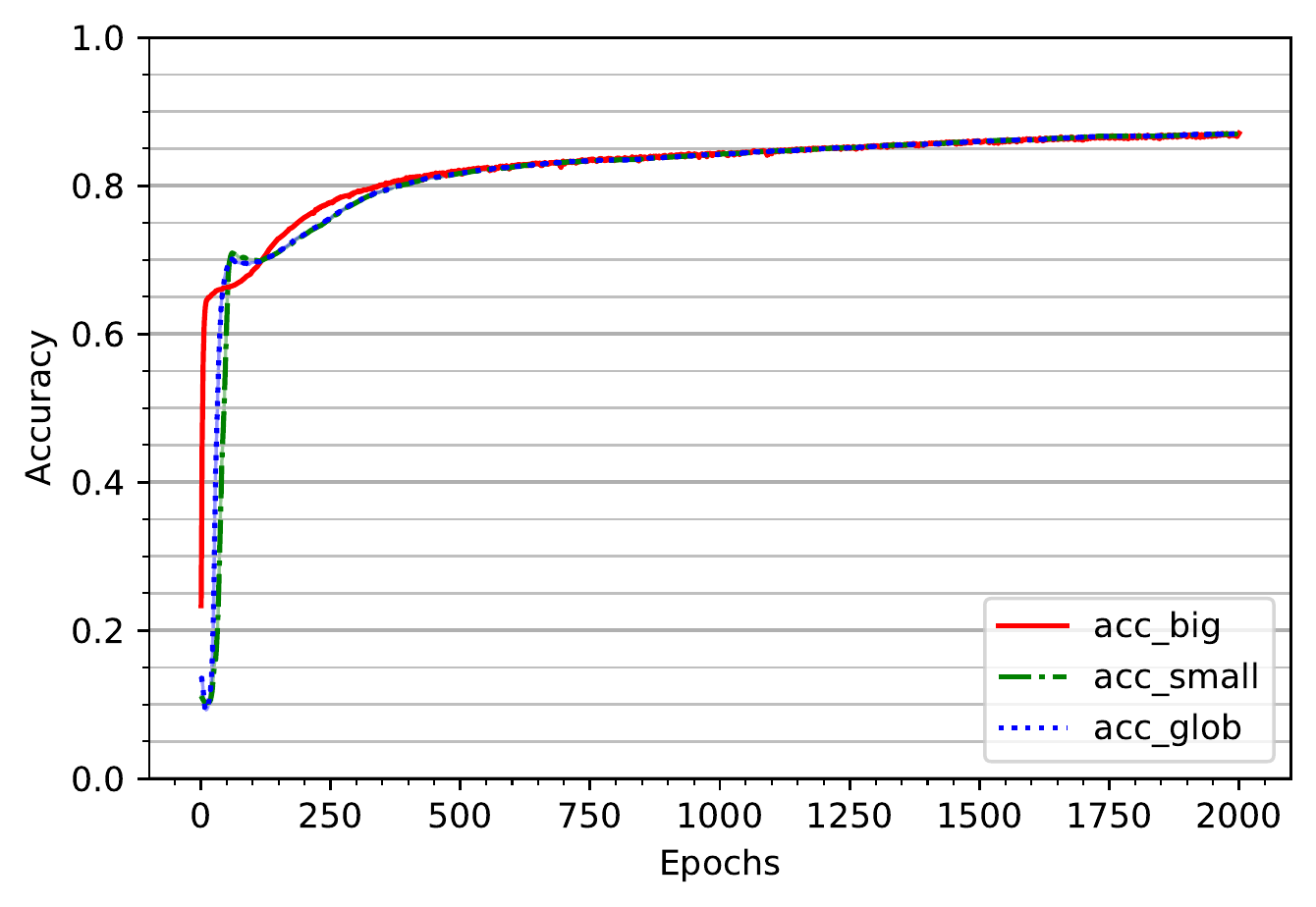}}
      \qquad
     \caption{2-layer neural network on FashionMNIST (without noise), for $\lambda = 1$.}
\end{figure}

Even without noise, the difference between using the sum and using the expectation still seems important.
We acknowledge, however, that the plots suggest that even though we ran this experiment for 10 times more (and 5 times more for the linear model) than other experiments, we might not have reached convergence yet, and that the use of the expectation might still eventually gets closer to the case of sum.
We believe that the fact that the difference between sum and expectation in the absence of noise is weak
is due to the fact that the FashionMNIST dataset is sufficiently linearly separable.
Thus, we achieve a near-zero loss in both cases, which make the sum and the expectation 
close at optimum.

Even in this case, however, we observed that the sum clearly outperforms the expectation especially, in the first epochs. 
We argue that the reason for this is the following.
By taking the average in local losses, the weights of the data of idle users are essentially blown out of proportion.
As a result, the optimizer will very quickly fit these data.
However, the signal from the data of the active user will then be too weak, so that the optimizer has to first almost perfectly fit the idle nodes' data before it can catch the signal of the active user's data and hence the average achieves weaker convergence performances than the sum.

\section{Linear Regression and Classification are Gradient PAC*}
\label{sec:models_proof}

Throughout this section, we use the following terminology.

\begin{definition}
Consider a parameterized event $\mathcal E(\NODEINPUT{})$. 
We say that the event $\mathcal E$ occurs with high probability if $\probability{\mathcal E(\NODEINPUT{})} \rightarrow 1$ as $\NODEINPUT{} \rightarrow \infty$.
\end{definition}

\subsection{Preliminaries}

Define $\norm{\Sigma}{2} \triangleq \max_{\norm{x}{2} \neq 0} (\norm{\Sigma x}{2}/\norm{x}{2})$ the $\ell_2$ operator norm of the matrix $\Sigma$. 
For symmetric matrices $\Sigma$, this is also the largest eigenvalue in absolute value.

\begin{theorem}[Covariance concentration, Theorem 6.5 in \cite{wainwright19}]
\label{th:covariance_concentration}
Denote $\Sigma = \expect{\query{\nodeinput} \query{\nodeinput}^T}$, where $\query{\nodeinput}\in \setR^d$ is from a $\sigma_{\query{}}$-sub-Gaussian random distribution $\Tilde{\query{}}$.
Then, there are universal constants $c_1$, $c_2$ and $c_3$ such that, for any set $\set{\query{\nodeinput}}_{\nodeinput \in [\NODEINPUT{}]}$ of i.i.d. samples from $\Tilde{\query{}}$, and any $\delta > 0$, the sample covariance $\widehat \Sigma = \frac{1}{\NODEINPUT{}} \sum \query{\nodeinput} \query{\nodeinput}^T$ satisfies the bound
\begin{equation}
    \probability{ \frac{1}{\sigma_{\query{}}^2} \norm{\empiricalcovariance - \Sigma}{2} \geq c_1 \left( \sqrt{\frac{d}{\NODEINPUT{}}} + \frac{d}{\NODEINPUT{}} \right) + \delta }
    \leq c_2 \exp \left( - c_3 \NODEINPUT{} \min(\delta, \delta^2) \right).
\end{equation}
\end{theorem}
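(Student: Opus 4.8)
The statement is a restatement of Theorem~6.5 in~\cite{wainwright19}, so in the final paper it suffices to cite it; for completeness, here is the argument one would reproduce, following the classical \emph{covering-net plus Bernstein} scheme for sample covariance matrices. First I would rewrite the operator norm of the symmetric matrix $\empiricalcovariance - \Sigma$ as a supremum over the unit sphere, namely $\norm{\empiricalcovariance - \Sigma}{2} = \sup_{v \in \sphere{d-1}} \absv{v^T (\empiricalcovariance - \Sigma) v}$, and then discretize: fixing a $1/4$-net $N$ of $\sphere{d-1}$ with $\card{N} \leq 9^{d}$, a standard covering argument gives $\norm{\empiricalcovariance - \Sigma}{2} \leq 2 \max_{v \in N} \absv{v^T (\empiricalcovariance - \Sigma) v}$.

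Next, for a \emph{fixed} direction $v \in \sphere{d-1}$, I would write $v^T (\empiricalcovariance - \Sigma) v = \frac{1}{\NODEINPUT{}} \sum_{\nodeinput = 1}^{\NODEINPUT{}} \bigl( (v^T \query{\nodeinput})^2 - \expect{(v^T \query{\nodeinput})^2} \bigr)$. Since $\query{\nodeinput}$ is $\sigma_{\query{}}$-sub-Gaussian, each $v^T \query{\nodeinput}$ is $\sigma_{\query{}}$-sub-Gaussian, hence each $(v^T \query{\nodeinput})^2$ is sub-exponential with parameters of order $\sigma_{\query{}}^2$; Bernstein's inequality for i.i.d.\ centered sub-exponential variables then gives, for a universal constant $c>0$ and every $t > 0$,
\[
  \probability{ \absv{v^T (\empiricalcovariance - \Sigma) v} \geq \sigma_{\query{}}^2 t } \leq 2 \exp\bigl( - c\, \NODEINPUT{} \min(t, t^2) \bigr).
\]
A union bound over $N$ combined with the covering inequality yields $\probability{ \norm{\empiricalcovariance - \Sigma}{2} \geq 2 \sigma_{\query{}}^2 t } \leq 2 \cdot 9^{d} \exp( - c\, \NODEINPUT{} \min(t, t^2) )$, and it remains to choose $t \asymp \sqrt{d/\NODEINPUT{}} + d/\NODEINPUT{} + \delta$ so that the factor $9^{d} = \exp(d \ln 9)$ is swallowed by the exponential, leaving $c_2 \exp( - c_3 \NODEINPUT{} \min(\delta, \delta^2) )$ after renaming universal constants.

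The only delicate point — and the reason the slightly awkward $\sqrt{d/\NODEINPUT{}} + d/\NODEINPUT{}$ term appears rather than a single power of $d/\NODEINPUT{}$ — is the bookkeeping in this last step: one has to verify that $c\,\NODEINPUT{}\min(t,t^2) \geq d\ln 9 + c_3 \NODEINPUT{}\min(\delta,\delta^2)$ exactly in the regime $t \gtrsim \sqrt{d/\NODEINPUT{}} + d/\NODEINPUT{} + \delta$, treating separately the sub-Gaussian regime where $\min(t,t^2) = t^2$ (so $\NODEINPUT{} t^2 \gtrsim d$ forces $t \gtrsim \sqrt{d/\NODEINPUT{}}$) and the sub-exponential regime where $\min(t,t^2) = t$ (so $\NODEINPUT{} t \gtrsim d$ forces $t \gtrsim d/\NODEINPUT{}$), and then checking that the additive slack $\delta$ persists through the rescaling of constants. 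This is purely mechanical, which is why in the final paper we simply invoke~\cite{wainwright19}.
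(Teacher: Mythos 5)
Your proposal is correct and matches how the paper handles this statement: the theorem is imported verbatim from \cite{wainwright19} (Theorem 6.5) and is not re-proved in the paper, and your sketch is precisely the standard covering-net plus sub-exponential Bernstein argument by which that cited result is established. Nothing further is needed beyond the citation.
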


\begin{theorem}[Weyl's Theorem, Theorem 4.3.1 in \cite{horn2012}]
\label{th:weyl}
  Let A and B be Hermitian\footnote{For real matrices, Hermitian is the same as symmetric.} and let the respective eigenvalues of $A$ and $B$ and $A+B$ be $\{\lambda_i(A)\}_{i=1}^d$, $\{\lambda_i(B)\}_{i=1}^d$, and $\{\lambda_i(A+B)\}_{i=1}^d$, each increasingly ordered. Then
  \begin{equation}
     \lambda_i(A+B) \leq \lambda_{i+j}(A)+\lambda_{d-j}(B),\quad j = 0,1, ..., d-i,
  \end{equation}
  and
  \begin{equation}
      \lambda_{i+j}(A) + \lambda_{j+1}(B) \leq \lambda_i(A+B),\quad j=0,...,i-1,
  \end{equation}
  for each $i = 1,...,d$.
\end{theorem}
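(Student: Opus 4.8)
The plan is to reconstruct the classical proof of Weyl's perturbation theorem rather than merely invoke the reference: the engine is the Courant--Fischer variational characterization of eigenvalues, combined with a one-line dimension count for intersecting subspaces. Recall that for a Hermitian matrix $M \in \setR^{d \times d}$ with increasingly ordered eigenvalues $\lambda_1(M) \le \cdots \le \lambda_d(M)$,
\begin{equation*}
  \lambda_i(M) = \min_{\dim S = i}\ \max_{\substack{x \in S \\ \norm{x}{2} = 1}} x^* M x = \max_{\dim S = d - i + 1}\ \min_{\substack{x \in S \\ \norm{x}{2} = 1}} x^* M x,
\end{equation*}
where $S$ ranges over linear subspaces of the indicated dimension; I would take this lemma as known (it is itself established in \cite{horn2012}).

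For the first (upper) inequality, fix $i$ and $j \in \{0, \dots, d - i\}$. Let $U$ be the span of eigenvectors of $A$ for its $i + j$ smallest eigenvalues, so $\dim U = i + j$ and $x^* A x \le \lambda_{i+j}(A)$ for every unit vector $x \in U$; similarly let $V$ be the span of eigenvectors of $B$ for its $d - j$ smallest eigenvalues, so $\dim V = d - j$ and $x^* B x \le \lambda_{d-j}(B)$ for every unit vector $x \in V$. Since $\dim(U \cap V) \ge \dim U + \dim V - d = i$, there is a subspace $S \subseteq U \cap V$ with $\dim S = i$. Every unit $x \in S$ satisfies $x^*(A+B)x = x^* A x + x^* B x \le \lambda_{i+j}(A) + \lambda_{d-j}(B)$, so feeding $S$ into the min form of Courant--Fischer for $A+B$ gives
\begin{equation*}
  \lambda_i(A+B) \le \max_{\substack{x \in S \\ \norm{x}{2} = 1}} x^*(A+B)x \le \lambda_{i+j}(A) + \lambda_{d-j}(B).
\end{equation*}

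For the second (lower) inequality I would proceed in the mirror-image way, using the max form of Courant--Fischer: take $U$ and $V$ to be spans of eigenvectors of $A$ and $B$ for their \emph{largest} eigenvalues, with dimensions chosen so that $\dim(U \cap V) \ge d - i + 1$, and bound $x^*(A+B)x$ from below on a subspace of that intersection. Equivalently --- and this is probably cleanest to write --- one applies the already-established first inequality to the Hermitian matrices $-A$ and $-B$, uses the identity $\lambda_k(-M) = -\lambda_{d+1-k}(M)$, and relabels $i$ and $j$ to land on the stated form. The only step requiring genuine care, and hence the main obstacle, is this index bookkeeping: one must verify that the auxiliary subspaces have admissible (nonnegative, at most $d$) dimensions for every index in the asserted ranges, and that the intersection dimension comes out exactly equal to what the variational formula demands. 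The analytic content itself --- the identity $x^*(A+B)x = x^* A x + x^* B x$ --- is trivial once the subspaces are in place.
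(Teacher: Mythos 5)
Your proof is correct, but note that the paper does not prove this statement at all: Theorem~\ref{th:weyl} is imported verbatim from Horn and Johnson (Theorem 4.3.1) as a black box, used only once (in Lemma~\ref{lemma:discrepancy_min_eigenvalues}, with $i=1$, $j=0$, $A=S$, $B=\Sigma-S$). So there is no in-paper argument to compare against; what you supply is the standard textbook proof via the Courant--Fischer min-max characterization plus the dimension count $\dim(U\cap V)\geq \dim U+\dim V-d$, and your treatment of the first inequality is complete and correct. The reduction of the second inequality to the first via $-A,-B$ and $\lambda_k(-M)=-\lambda_{d+1-k}(M)$ also works, and the bookkeeping is painless: substituting $i'=d+1-i$ turns $\lambda_{i'}(-A-B)\leq\lambda_{i'+j}(-A)+\lambda_{d-j}(-B)$ into $\lambda_{i-j}(A)+\lambda_{j+1}(B)\leq\lambda_i(A+B)$ for $j=0,\ldots,i-1$.

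One caveat: this lands on $\lambda_{i-j}(A)$, not on the $\lambda_{i+j}(A)$ printed in the statement above, and it cannot land there because the printed form is false as written (take $B=0$, $i=2$, $j=1$: it would assert $\lambda_3(A)\leq\lambda_2(A)$). The index in the paper's second inequality is a transcription typo for $\lambda_{i-j}(A)$ (Horn and Johnson's $\lambda_{i-j+1}(A)+\lambda_j(B)\leq\lambda_i(A+B)$, $j=1,\ldots,i$, after shifting $j$); your argument proves the correct version, and the only instance the paper actually uses ($i=1$, $j=0$, where $i+j=i-j$) is unaffected. So do not contort your relabeling to match the stated form; prove the correct inequality and note the typo.
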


\begin{lemma}
\label{lemma:discrepancy_min_eigenvalues}
Consider two symmetric definite positive matrices $S$ and $\Sigma$.
Denote $\rho_{min}$ and $\lambda_{min}$ their minimal eigenvalues.
Then $\absv{\rho_{min} - \lambda_{min}} \leq \norm{S - \Sigma}{2}$.
\end{lemma}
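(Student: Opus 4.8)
The plan is to prove Lemma~\ref{lemma:discrepancy_min_eigenvalues} as an immediate consequence of Weyl's Theorem (Theorem~\ref{th:weyl}), applied with the decomposition $\Sigma = S + (\Sigma - S)$. Both $S$ and $\Sigma$ are symmetric (hence Hermitian in the real case), and so is their difference $B \triangleq \Sigma - S$, so Weyl's inequalities apply to the triple $(S, B, S+B=\Sigma)$. The key is to extract the bound on the smallest eigenvalue, which corresponds to the index $i=1$ in the increasingly ordered convention used in Theorem~\ref{th:weyl}.

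Concretely, I would first apply the second inequality of Theorem~\ref{th:weyl} with $i=1$ and $j=0$, which gives $\lambda_1(S) + \lambda_1(B) \leq \lambda_1(S+B)$, i.e. $\rho_{min} + \lambda_{min}(B) \leq \lambda_{min}$, so that $\lambda_{min} - \rho_{min} \geq \lambda_{min}(B)$. Then I would apply the first inequality of Theorem~\ref{th:weyl} with $i=1$ and $j=0$ (so $j$ ranges in $0,\ldots,d-1$), which gives $\lambda_1(S+B) \leq \lambda_1(S) + \lambda_d(B)$, i.e. $\lambda_{min} \leq \rho_{min} + \lambda_{max}(B)$, so that $\lambda_{min} - \rho_{min} \leq \lambda_{max}(B)$. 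Combining the two, $\lambda_{min}(B) \leq \lambda_{min} - \rho_{min} \leq \lambda_{max}(B)$, and since $B$ is symmetric, $\max(|\lambda_{min}(B)|, |\lambda_{max}(B)|) = \norm{B}{2} = \norm{\Sigma - S}{2}$ by the characterization of the operator norm for symmetric matrices recalled just before Theorem~\ref{th:covariance_concentration}. Hence $|\lambda_{min} - \rho_{min}| \leq \norm{\Sigma - S}{2} = \norm{S - \Sigma}{2}$, which is the claim.

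There is essentially no obstacle here: the only thing to be careful about is matching the ordering convention (Theorem~\ref{th:weyl} uses increasingly ordered eigenvalues, so $\lambda_1 = \lambda_{min}$ and $\lambda_d = \lambda_{max}$) and checking that the index choices $i=1$, $j=0$ are admissible in both displayed inequalities of the theorem (they are: for the first inequality $j$ runs over $0,\dots,d-i = d-1$, and for the second over $0,\dots,i-1 = 0$). One could alternatively phrase the whole argument in one line by noting that Weyl's theorem implies $|\lambda_k(S+B) - \lambda_k(S)| \leq \norm{B}{2}$ for every $k$, and specializing to $k=1$; but since the excerpt only states Theorem~\ref{th:weyl} in its two-sided interlacing form, I would derive the $k=1$ case explicitly as above rather than invoke the packaged corollary. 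Note also that the hypothesis that $S$ and $\Sigma$ are definite positive is not actually needed for the bound — symmetry suffices — but it is harmless to keep it since that is the context in which the lemma is used.
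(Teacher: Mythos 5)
Your proposal is correct and follows exactly the paper's proof: the paper also obtains the lemma as a direct consequence of Weyl's Theorem (Theorem~\ref{th:weyl}) with $A=S$, $B=\Sigma-S$, $i=1$, $j=0$, which is precisely your choice of decomposition and indices, merely spelled out in more detail on your side. No gaps; your remark that positive definiteness is not needed is also accurate.
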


\begin{proof}
  This is a direct consequence of Theorem \ref{th:weyl}, for $A=S$, $B=\Sigma - S$, $i = 1$, and $j = 0$.
\end{proof}

\begin{corollary}
\label{cor:covariance_concentration}
There are universal constants $c_1$, $c_2$ and $c_3$ such that, for any $\sigma_{\query{}}$-sub-Gaussian vector distribution $\Tilde{\query{}} \in \setR^d$ and any $\delta > 0$, the sample covariance $\widehat \Sigma = \frac{1}{\NODEINPUT{}} \sum \query{\nodeinput} \query{\nodeinput}^T$ satisfies the bound
\begin{equation}
    \probability{ \frac{1}{\sigma_{\query{}}^2} \absv{\min \spectrum(\empiricalcovariance) - \min \spectrum(\Sigma)} \geq c_1 \left( \sqrt{\frac{d}{\NODEINPUT{}}} + \frac{d}{\NODEINPUT{}} \right) + \delta }
    \leq c_2 \exp \left( - c_3 \NODEINPUT{} \min(\delta, \delta^2) \right),
\end{equation}
where $\min \spectrum(\empiricalcovariance) \text{ and } \min \spectrum(\Sigma)$ are the minimal eigenvalues of $\empiricalcovariance$ and $\Sigma$.
\end{corollary}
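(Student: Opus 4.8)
The plan is to obtain Corollary~\ref{cor:covariance_concentration} as an immediate consequence of Theorem~\ref{th:covariance_concentration} together with Lemma~\ref{lemma:discrepancy_min_eigenvalues}, the only additional ingredient being monotonicity of probability under set inclusion. First I would recall that $\empiricalcovariance = \frac{1}{\NODEINPUT{}}\sum_{\nodeinput}\query{\nodeinput}\query{\nodeinput}^T$ is a symmetric (positive semidefinite) matrix and that $\Sigma = \expect{\query{\nodeinput}\query{\nodeinput}^T}$ is a symmetric positive definite matrix, so that---exactly as in the proof of Lemma~\ref{lemma:discrepancy_min_eigenvalues} via Weyl's theorem (Theorem~\ref{th:weyl}) applied with $A = \empiricalcovariance$, $B = \Sigma - \empiricalcovariance$, $i=1$, $j=0$---one has the deterministic pointwise bound
\begin{equation}
\absv{\min\spectrum(\empiricalcovariance) - \min\spectrum(\Sigma)} \leq \norm{\empiricalcovariance - \Sigma}{2}.
\end{equation}
Strictly speaking, Lemma~\ref{lemma:discrepancy_min_eigenvalues} is stated for two positive definite matrices, but its proof invokes only Weyl's theorem, which requires nothing beyond the matrices being Hermitian; hence the bound above remains valid even if $\empiricalcovariance$ happens not to be strictly positive definite for small $\NODEINPUT{}$, so no case distinction is needed.

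Next I would fix $c_1, c_2, c_3$ to be precisely the universal constants provided by Theorem~\ref{th:covariance_concentration}, and note that, since $\query{1}, \ldots, \query{\NODEINPUT{}}$ are i.i.d. draws from the $\sigma_{\query{}}$-sub-Gaussian distribution $\tilde{\query{}}$, the hypotheses of that theorem are met verbatim. Dividing the displayed inequality by $\sigma_{\query{}}^2 > 0$ shows that the event
\begin{equation*}
\set{\frac{1}{\sigma_{\query{}}^2}\absv{\min\spectrum(\empiricalcovariance) - \min\spectrum(\Sigma)} \geq c_1\left(\sqrt{\frac{d}{\NODEINPUT{}}} + \frac{d}{\NODEINPUT{}}\right) + \delta}
\end{equation*}
is contained in the event
\begin{equation*}
\set{\frac{1}{\sigma_{\query{}}^2}\norm{\empiricalcovariance - \Sigma}{2} \geq c_1\left(\sqrt{\frac{d}{\NODEINPUT{}}} + \frac{d}{\NODEINPUT{}}\right) + \delta}.
\end{equation*}
Passing to probabilities and using monotonicity, followed by the bound of Theorem~\ref{th:covariance_concentration} on the probability of the larger event, then yields exactly the claimed inequality $\probability{\cdots} \leq c_2\exp(-c_3\NODEINPUT{}\min(\delta,\delta^2))$.

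I do not expect any genuine obstacle: all the probabilistic content is carried by Theorem~\ref{th:covariance_concentration}, and the corollary is simply the standard observation that control of the operator-norm deviation $\norm{\empiricalcovariance - \Sigma}{2}$ transfers automatically to control of every individual eigenvalue---in particular the smallest one, which is what the later gradient-PAC* analysis of linear regression actually uses. The only point deserving a line of care is the mild mismatch between the positive-definiteness hypothesis of Lemma~\ref{lemma:discrepancy_min_eigenvalues} and the status of $\empiricalcovariance$, which is resolved by appealing to Weyl's theorem directly, as indicated above.
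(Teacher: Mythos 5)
Your proposal is correct and follows exactly the paper's route: the corollary is obtained by combining Theorem~\ref{th:covariance_concentration} with Lemma~\ref{lemma:discrepancy_min_eigenvalues} (i.e., Weyl's theorem giving $\absv{\min\spectrum(\empiricalcovariance)-\min\spectrum(\Sigma)} \leq \norm{\empiricalcovariance-\Sigma}{2}$) and monotonicity of probability. Your extra remark that positive definiteness of $\empiricalcovariance$ is not actually needed, since Weyl's theorem only requires symmetry, is a small but valid refinement of the paper's one-line argument.
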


\begin{proof}
  This follows from Theorem \ref{th:covariance_concentration} and Lemma \ref{lemma:discrepancy_min_eigenvalues}.
\end{proof}

\begin{lemma}
\label{lemma:min_empirical_eigenvalue}
With high probability, $\min \spectrum(\empiricalcovariance) \geq \min \spectrum (\Sigma)/2$.
\end{lemma}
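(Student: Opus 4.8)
The plan is to apply Corollary~\ref{cor:covariance_concentration} with a cleverly chosen $\delta$ so that the deviation $\absv{\min\spectrum(\empiricalcovariance) - \min\spectrum(\Sigma)}$ is guaranteed, with high probability, to be at most $\min\spectrum(\Sigma)/2$. Write $\lambda_{\min} \triangleq \min\spectrum(\Sigma)$, which is a fixed positive constant since $\Sigma$ is positive definite (and $\sigma_{\query{}}$ is a fixed constant). Set $\delta \triangleq \lambda_{\min} / (4 \sigma_{\query{}}^2)$, a fixed positive constant. Since $\sqrt{d/\NODEINPUT{}} + d/\NODEINPUT{} \rightarrow 0$ as $\NODEINPUT{} \rightarrow \infty$, there exists $\NODEINPUT{}_0$ such that for all $\NODEINPUT{} \geq \NODEINPUT{}_0$ we have $c_1(\sqrt{d/\NODEINPUT{}} + d/\NODEINPUT{}) \leq \lambda_{\min}/(4\sigma_{\query{}}^2)$, and hence $c_1(\sqrt{d/\NODEINPUT{}} + d/\NODEINPUT{}) + \delta \leq \lambda_{\min}/(2\sigma_{\query{}}^2)$.

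First I would invoke Corollary~\ref{cor:covariance_concentration}: for $\NODEINPUT{} \geq \NODEINPUT{}_0$,
\begin{equation}
    \probability{ \frac{1}{\sigma_{\query{}}^2} \absv{\min\spectrum(\empiricalcovariance) - \min\spectrum(\Sigma)} \geq \frac{\lambda_{\min}}{2\sigma_{\query{}}^2} }
    \leq \probability{ \frac{1}{\sigma_{\query{}}^2} \absv{\min\spectrum(\empiricalcovariance) - \min\spectrum(\Sigma)} \geq c_1\left(\sqrt{\tfrac{d}{\NODEINPUT{}}} + \tfrac{d}{\NODEINPUT{}}\right) + \delta }
    \leq c_2 \exp\left( - c_3 \NODEINPUT{} \min(\delta, \delta^2) \right).
\end{equation}
Multiplying through by $\sigma_{\query{}}^2$, this says $\absv{\min\spectrum(\empiricalcovariance) - \min\spectrum(\Sigma)} \leq \lambda_{\min}/2$ with probability at least $1 - c_2 \exp(-c_3 \NODEINPUT{} \min(\delta,\delta^2))$. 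On this event, $\min\spectrum(\empiricalcovariance) \geq \min\spectrum(\Sigma) - \lambda_{\min}/2 = \lambda_{\min}/2 = \min\spectrum(\Sigma)/2$.

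Finally I would conclude by noting that $\delta$ and $\min(\delta,\delta^2)$ are fixed positive constants (not depending on $\NODEINPUT{}$), so $c_2 \exp(-c_3 \NODEINPUT{} \min(\delta,\delta^2)) \rightarrow 0$ as $\NODEINPUT{} \rightarrow \infty$; hence the event $\{\min\spectrum(\empiricalcovariance) \geq \min\spectrum(\Sigma)/2\}$ holds with high probability in the sense of the definition preceding Section~\ref{sec:models_proof}. There is no real obstacle here — the only subtlety is bookkeeping the two sources of error (the $c_1(\sqrt{d/\NODEINPUT{}}+d/\NODEINPUT{})$ term, which vanishes, and the free parameter $\delta$, which we fix at a constant) so that their sum lands below the target threshold $\lambda_{\min}/(2\sigma_{\query{}}^2)$ for all large $\NODEINPUT{}$. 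This is routine, and the statement follows.
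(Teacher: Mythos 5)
Your proposal is correct and follows exactly the paper's own argument: invoke Corollary~\ref{cor:covariance_concentration} with $\delta \triangleq \lambda_{\min}/(4\sigma_{\query{}}^2)$, take $\NODEINPUT{}$ large enough that $c_1(\sqrt{d/\NODEINPUT{}}+d/\NODEINPUT{}) \leq \lambda_{\min}/(4\sigma_{\query{}}^2)$, and conclude that the eigenvalue deviation is at most $\lambda_{\min}/2$ with probability tending to one. The only difference is that you spell out the bookkeeping slightly more explicitly than the paper does; the substance is identical.
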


\begin{proof}
  Denote $\lambda_{min} \triangleq \min \spectrum(\Sigma)$ and $\widehat\lambda_{min} \triangleq \min \spectrum(\empiricalcovariance)$.
  Since each  $\query{\nodeinput}$ is drawn i.i.d. from a $\sigma_{\query{}}$-sub-Gaussian, 
  we can apply Corollary \ref{cor:covariance_concentration}. 
  Namely, there are constants $c_1$, $c_2$ and $c_3$, such that for any $\delta>0$, we have
  \begin{equation}
     \probability{\absv{\widehat\lambda_{min}-\lambda_{min}}\geq c_1 \sigma_{\query{}}^2 \left(\sqrt{\frac{d}{\NODEINPUT{}}}+\frac{d}{\NODEINPUT{}} \right) + \delta \sigma_{\query{}}^2 } \leq c_2 \exp{(-c_3 \NODEINPUT{} \min \set{\delta,\delta^2})}.
  \end{equation}
  We now set $\delta \triangleq \lambda_{min}/(4\sigma_{\query{}}^2)$ and we consider $\NODEINPUT{}$ large enough so that $c_1 \left(\sqrt{\frac{d}{\NODEINPUT{}}}+\frac{d}{\NODEINPUT{}} \right) \leq \lambda_{min}/(4\sigma_{\query{}}^2)$.
  With high probability, we then have $\widehat\lambda_{min}\geq \lambda_{min}/2$.
\end{proof}

\subsection{Linear Regression is Gradient-PAC*}
\label{app:least_square}

In this section, we prove the first part of Lemma~\ref{lemma:gradient-pac}.
Namely, we prove that linear regression is gradient-PAC* learning.

\subsubsection{Lemmas for linear regression}

Before moving to the main proof that linear regression is gradient-PAC*, we first prove a few useful lemmas.
These lemmas will rest on the following well-known theorems.

\begin{theorem}[Lemma 2.7.7 in \cite{vershynin18}]
\label{th:product-subgaussian}
If $X$ and $Y$ are sub-Gaussian, then $XY$ is sub-exponential.
\end{theorem}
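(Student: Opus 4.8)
The plan is to prove the clean quantitative version $\norm{XY}{\psi_1} \le \norm{X}{\psi_2}\norm{Y}{\psi_2}$, where $\norm{\cdot}{\psi_2}$ and $\norm{\cdot}{\psi_1}$ denote the usual sub-Gaussian and sub-exponential Orlicz norms, defined by $\norm{Z}{\psi_2} = \inf\set{t>0 \st \expect{\exp(Z^2/t^2)} \le 2}$ and $\norm{W}{\psi_1} = \inf\set{t>0 \st \expect{\exp(\absv{W}/t)} \le 2}$. Since $X$ and $Y$ sub-Gaussian means exactly $\norm{X}{\psi_2}, \norm{Y}{\psi_2} < \infty$, the inequality immediately yields $\norm{XY}{\psi_1} < \infty$, i.e. $XY$ sub-exponential, which is the statement.

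First I would reduce to the normalized case. The degenerate cases $X = 0$ or $Y = 0$ almost surely are trivial (then $XY = 0$), so assume both $\psi_2$-norms are positive; since these Orlicz norms are genuine norms, hence positively homogeneous, rescaling $X$ and $Y$ lets me assume $\norm{X}{\psi_2} = \norm{Y}{\psi_2} = 1$, which by definition means $\expect{\exp(X^2)} \le 2$ and $\expect{\exp(Y^2)} \le 2$. Next, applying Young's inequality $\absv{X}\absv{Y} \le \frac{X^2}{2} + \frac{Y^2}{2}$ pointwise gives $\exp(\absv{XY}) \le \exp(X^2/2)\exp(Y^2/2)$ almost surely; taking expectations and using the Cauchy--Schwarz inequality yields
\begin{align}
\expect{\exp(\absv{XY})} &\le \expect{\exp(X^2/2)\exp(Y^2/2)} \\
&\le \sqrt{\expect{\exp(X^2)}}\,\sqrt{\expect{\exp(Y^2)}} \le \sqrt{2}\cdot\sqrt{2} = 2.
\end{align}
Hence $\norm{XY}{\psi_1} \le 1 = \norm{X}{\psi_2}\norm{Y}{\psi_2}$, which completes the argument after undoing the normalization.

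I do not expect any real obstacle here: the argument is a one-line combination of Young's inequality and Cauchy--Schwarz. The only points demanding (minor) care are the verification that $\norm{\cdot}{\psi_2}$ and $\norm{\cdot}{\psi_1}$ are indeed norms (so that homogeneity is legitimate), and the handling of the degenerate cases where a factor vanishes almost surely. If one preferred to bypass Orlicz norms, an equivalent route would be to use the moment characterizations $\expect{\absv{X}^p} \le (C_1 p)^{p/2}$ and $\expect{\absv{Y}^p} \le (C_2 p)^{p/2}$ for all $p \ge 1$, and apply Cauchy--Schwarz on moments to get $\expect{\absv{XY}^p} \le (C p)^p$, which is precisely the moment bound characterizing sub-exponential random variables; this involves slightly more bookkeeping but is equally elementary.
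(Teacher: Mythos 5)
Your argument is correct: the paper does not prove this statement but simply cites it (it is Lemma~2.7.7 in Vershynin's book), and your proof is essentially the standard one from that reference — normalize the $\psi_2$-norms, apply Young's inequality to $\absv{XY}$, and bound $\expect{\exp(X^2/2)\exp(Y^2/2)}$ (Vershynin uses a second application of Young's inequality where you use Cauchy--Schwarz, an immaterial difference). No gaps; the quantitative bound $\norm{XY}{\psi_1} \leq \norm{X}{\psi_2} \norm{Y}{\psi_2}$ you obtain is exactly the cited result.
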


\begin{theorem}[Equation 2.18 in \cite{wainwright19}]
\label{th:sub-exponential}
If $X_1, \ldots, X_{\NODEINPUT{}}$ are iid sub-exponential variables, then there exist constants $c_4$, $c_5$ such that, for all $\NODEINPUT{}$, we have
\begin{equation}
    \forall t \in [0, c_4] \mathsep
    \probability{\absv{X-\expect{X}} \geq t \NODEINPUT{}} \leq 2 \exp \left( - c_5 \NODEINPUT{} t^2 \right).
\end{equation}
\end{theorem}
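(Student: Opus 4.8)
This is the standard Bernstein-type tail bound for sums of independent sub-exponential variables, and the plan is to run the Chernoff method. Write $X = \sum_{\nodeinput=1}^{\NODEINPUT{}} X_\nodeinput$ for the sum whose deviations are measured at scale $\NODEINPUT{}$, set $Y_\nodeinput \triangleq X_\nodeinput - \expect{X_\nodeinput}$, and let $S \triangleq X - \expect{X} = \sum_{\nodeinput} Y_\nodeinput$, so the $Y_\nodeinput$ are i.i.d., centered, and sub-exponential. The key input is the moment-generating-function characterization of sub-exponentiality: there are parameters $\nu, b > 0$, depending only on the common law (equivalently on its sub-exponential norm) and not on $\NODEINPUT{}$, with $\expect{e^{\lambda Y_\nodeinput}} \leq e^{\nu^2 \lambda^2 / 2}$ for all $\absv{\lambda} \leq 1/b$. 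By independence, $\expect{e^{\lambda S}} = \prod_{\nodeinput} \expect{e^{\lambda Y_\nodeinput}} \leq e^{\NODEINPUT{} \nu^2 \lambda^2 / 2}$ on the same range of $\lambda$.

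Next I would apply Markov's inequality to $e^{\lambda S}$: for every $\lambda \in [0, 1/b]$,
\begin{equation}
  \probability{S \geq t \NODEINPUT{}}
  \leq e^{-\lambda t \NODEINPUT{}} \, \expect{e^{\lambda S}}
  \leq \exp\!\left( \NODEINPUT{} \Bigl( \tfrac{\nu^2 \lambda^2}{2} - \lambda t \Bigr) \right).
\end{equation}
The quadratic $\lambda \mapsto \nu^2 \lambda^2/2 - \lambda t$ is minimized at $\lambda^\star = t/\nu^2$, which lies in the admissible interval $[0, 1/b]$ exactly when $t \leq \nu^2/b$. Taking $c_4 \triangleq \nu^2/b$, then for every $t \in [0, c_4]$ the choice $\lambda = \lambda^\star$ gives $\probability{S \geq t\NODEINPUT{}} \leq \exp(-\NODEINPUT{} t^2/(2\nu^2))$. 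Running the identical computation with $-Y_\nodeinput$ in place of $Y_\nodeinput$ yields the lower tail $\probability{S \leq -t\NODEINPUT{}} \leq \exp(-\NODEINPUT{} t^2/(2\nu^2))$, and a union bound combines the two into $\probability{\absv{S} \geq t \NODEINPUT{}} \leq 2\exp(-\NODEINPUT{} t^2/(2\nu^2))$, so the claim holds with $c_5 \triangleq 1/(2\nu^2)$.

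There is no genuinely hard step here; the one point deserving care --- and the reason the conclusion is restricted to $t \in [0, c_4]$ rather than all $t > 0$ --- is that the sub-exponential MGF bound is available only for $\absv{\lambda} \leq 1/b$, so the sub-Gaussian-type rate $e^{-c_5 \NODEINPUT{} t^2}$ can only be extracted in the small-deviation regime $t \leq c_4$; for $t > c_4$ one would instead optimize at the endpoint $\lambda = 1/b$ and recover a linear (Poissonian) tail of order $e^{-\Theta(\NODEINPUT{} t)}$, which is not needed for the applications in this paper. The remaining bookkeeping is simply checking that $\nu$, $b$, and hence $c_4$, $c_5$, are genuinely independent of $\NODEINPUT{}$, which is exactly what makes the inequality hold uniformly in $\NODEINPUT{}$.
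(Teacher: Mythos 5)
Your Chernoff-method argument is correct and is exactly the standard proof of this Bernstein-type sub-exponential tail bound; the paper itself does not prove the statement but imports it from \cite{wainwright19}, where the argument is the same MGF-plus-Markov optimization you carry out, including the restriction to the quadratic regime $t \leq \nu^2/b$. Your handling of the two tails and the $\NODEINPUT{}$-independence of the constants is sound, so nothing further is needed.
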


\begin{lemma}
\label{lemma:noise_query_sub-exponential}
For all $j \in [d]$, the random variables $X_{\nodeinput} \triangleq \noise{\nodeinput} \query{\nodeinput}[j]$ are iid, sub-exponential and have zero mean.
\end{lemma}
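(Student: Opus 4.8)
The plan is to verify the three asserted properties in turn, each of which follows quickly from the generative model of Theorem~\ref{th:linear_regression} (or its sub-Gaussian generalization used in Appendix~\ref{app:least_square}), combined with the product bound of Theorem~\ref{th:product-subgaussian}.

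First, \emph{independence and identical distribution}. Each data point is generated by drawing $\query{\nodeinput} \sim \querydistribution{}$ and an independent noise term $\noise{\nodeinput}$, with fresh independent draws across data points. Hence the pairs $(\query{\nodeinput}, \noise{\nodeinput})$ are i.i.d., and since $X_{\nodeinput} = \noise{\nodeinput} \query{\nodeinput}[j]$ is a fixed measurable function of the $\nodeinput$-th pair, the variables $X_{\nodeinput}$ are i.i.d. as well. Second, \emph{zero mean}: using the independence of $\noise{\nodeinput}$ from $\query{\nodeinput}$, we have $\expect{X_{\nodeinput}} = \expect{\noise{\nodeinput} \, \query{\nodeinput}[j]} = \expect{\noise{\nodeinput}} \, \expect{\query{\nodeinput}[j]} = 0$, since $\noise{\nodeinput}$ is assumed zero-mean and $\expect{\query{\nodeinput}[j]}$ is finite because $\query{\nodeinput}$ is bounded (resp. sub-Gaussian). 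Third, \emph{sub-exponentiality}: the coordinate $\query{\nodeinput}[j] = e_j^T \query{\nodeinput}$ is a one-dimensional linear functional of the sub-Gaussian vector $\query{\nodeinput}$, hence itself a sub-Gaussian scalar (with parameter at most $\sigma_{\query{}}$; in the bounded case this is immediate, any bounded variable being sub-Gaussian). Since $\noise{\nodeinput}$ is sub-Gaussian with parameter $\sigma_{\noise{}}$, applying Theorem~\ref{th:product-subgaussian} with $X = \noise{\nodeinput}$ and $Y = \query{\nodeinput}[j]$ shows that $X_{\nodeinput}$ is sub-exponential.

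There is no real obstacle here: the only point requiring a word of justification is that a coordinate of a sub-Gaussian random vector is a sub-Gaussian scalar, which is immediate from the definition of a sub-Gaussian vector (every linear functional is sub-Gaussian) specialized to the coordinate functional $e_j^T(\cdot)$. The lemma is essentially a bookkeeping step, assembling the model hypotheses into exactly the form needed so that the concentration bound of Theorem~\ref{th:sub-exponential} can be invoked for $\sum_{\nodeinput} X_{\nodeinput} = \big(\sum_{\nodeinput} \noise{\nodeinput} \query{\nodeinput}\big)[j]$ in the subsequent analysis of linear regression.
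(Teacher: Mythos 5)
Your proof is correct and follows essentially the same route as the paper's: i.i.d.-ness from the i.i.d. pairs $(\query{\nodeinput}, \noise{\nodeinput})$, zero mean from independence and $\expect{\noise{}} = 0$, and sub-exponentiality from Theorem~\ref{th:product-subgaussian} applied to the product of the sub-Gaussian noise and the sub-Gaussian coordinate $\query{\nodeinput}[j]$. The extra remark that a coordinate of a sub-Gaussian vector is sub-Gaussian is a small but welcome clarification that the paper leaves implicit.
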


\begin{proof}
  The fact that these variables are iid follows straightforwardly from the fact that the noises $\noise{\nodeinput}$ are iid, and the queries $\query{\nodeinput}$ are also iid.
  Moreover, both are sub-Gaussian, and by Theorem \ref{th:product-subgaussian}, the product of sub-Gaussian variables is sub-exponential.
  Finally, we have $\expect{X} = \expect{\noise{} \query{}[j]} = \expect{\noise{}} \expect{\query{}[j]} = 0$, using the independence of the noise and the query, and the fact that noises have zero mean ($\expect{\noise{}} = 0$).
\end{proof}

\begin{lemma}
\label{lemma:noise_query_concentration_bound}
  There exists $B$ such that $\norm{\sum_{\nodeinput \in \NODEINPUT{}} \noise{\nodeinput} \query{\nodeinput}}{2} \leq B \NODEINPUT{}^{3/4}$ with high probability.
\end{lemma}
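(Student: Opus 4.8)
The plan is to reduce the vector bound to $d$ scalar concentration inequalities, one per coordinate, and then union bound. First I would write
\[
  \norm{\textstyle\sum_{\nodeinput \in [\NODEINPUT{}]} \noise{\nodeinput} \query{\nodeinput}}{2}^2
  = \sum_{j \in [d]} \Big( \sum_{\nodeinput \in [\NODEINPUT{}]} \noise{\nodeinput} \query{\nodeinput}[j] \Big)^2,
\]
so that it suffices to control each scalar sum $S_j \triangleq \sum_{\nodeinput} \noise{\nodeinput} \query{\nodeinput}[j]$ for $j \in [d]$.

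By Lemma~\ref{lemma:noise_query_sub-exponential}, for each fixed $j$ the summands $\noise{\nodeinput} \query{\nodeinput}[j]$ are iid, zero-mean and sub-exponential, so Theorem~\ref{th:sub-exponential} applies: there are constants $c_4, c_5 > 0$ (depending only on the sub-exponential parameter of $\noise{} \query{}[j]$, hence ultimately on $\sigma_{\query{}}$ and $\sigma_{\noise{}}$, and not on the preferred model $\trueparam$) such that $\probability{\absv{S_j} \geq t\NODEINPUT{}} \leq 2\exp(-c_5 \NODEINPUT{} t^2)$ for all $t \in [0, c_4]$. The key move is the choice $t \triangleq \NODEINPUT{}^{-1/4}$: for $\NODEINPUT{}$ large enough this lies in $[0, c_4]$, the deviation threshold becomes $t\NODEINPUT{} = \NODEINPUT{}^{3/4}$, which is exactly the target scaling, while the tail exponent is $c_5 \NODEINPUT{} t^2 = c_5 \sqrt{\NODEINPUT{}} \to \infty$. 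Hence $\probability{\absv{S_j} \geq \NODEINPUT{}^{3/4}} \leq 2\exp(-c_5 \sqrt{\NODEINPUT{}})$ for all large $\NODEINPUT{}$.

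A union bound over the $d$ coordinates then gives $\probability{\exists j \in [d] \mathsep \absv{S_j} \geq \NODEINPUT{}^{3/4}} \leq 2d \exp(-c_5 \sqrt{\NODEINPUT{}}) \to 0$ as $\NODEINPUT{} \to \infty$. On the complementary high-probability event, $\absv{S_j} \leq \NODEINPUT{}^{3/4}$ for every $j$, so
\[
  \norm{\textstyle\sum_{\nodeinput} \noise{\nodeinput} \query{\nodeinput}}{2}^2 = \sum_{j \in [d]} S_j^2 \leq d \, \NODEINPUT{}^{3/2},
\]
and the claim follows with $B \triangleq \sqrt{d}$.

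There is no genuine obstacle here; this is a routine concentration estimate. The only point requiring care is the exponent bookkeeping: one needs the deviation threshold to scale like $\NODEINPUT{}^{3/4}$ while keeping the tail exponent diverging, which forces a sublinear choice of $t$ (a constant $t$ would only yield an $\Omega(\NODEINPUT{})$ bound, which is useless for gradient-PAC*). I would also stress that $B = \sqrt{d}$ is uniform over $\trueparam$ — since the query and noise distributions, hence $c_4, c_5$, do not depend on $\trueparam$ — as this uniformity is what is needed when the lemma is later combined with the covariance concentration bound (Lemma~\ref{lemma:min_empirical_eigenvalue}) to establish gradient-PAC* for linear regression.
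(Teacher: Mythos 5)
Your proof is correct and follows essentially the same route as the paper's: coordinate-wise sub-exponential concentration (via Lemma~\ref{lemma:noise_query_sub-exponential} and Theorem~\ref{th:sub-exponential}) with the threshold $t \sim \NODEINPUT{}^{-1/4}$, a union bound over the $d$ coordinates, and the $\ell_\infty$-to-$\ell_2$ comparison giving $B = \Theta(\sqrt{d})$. The only cosmetic difference is that the paper takes $u = v\NODEINPUT{}^{-1/4}$ with $v$ small so the bound applies for all $\NODEINPUT{}$, whereas you take $t = \NODEINPUT{}^{-1/4}$ and restrict to $\NODEINPUT{}$ large, which is equally valid for a with-high-probability statement.
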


\begin{proof}
  By Lemma \ref{lemma:noise_query_sub-exponential}, the terms $\noise{\nodeinput} \query{\nodeinput}[j]$ are iid, sub-exponential and have zero mean. 
  Therefore, by Theorem \ref{th:sub-exponential}, there exist constants $c_4$ and $c_5$ such that for any coordinate $j \in [d]$ of $\noise{\nodeinput} \query{\nodeinput}$ and for all $0\leq u \leq c_4$, we have
  \begin{equation}
    \probability{\absv{\sum_{\nodeinput \in \NODEINPUT{}} \noise{\nodeinput} \query{\nodeinput}[j]}\geq \NODEINPUT{} u} \leq 2\exp{(-c_5\NODEINPUT{}u^2)}.  
  \end{equation}
   Plugging $u =  v \NODEINPUT{}^{(-1/4)}$ into the inequality for some small enough constant $v$, and using union bound then yields
  \begin{equation}
     \probability{\norm{\sum_{\nodeinput \in \NODEINPUT{}} \noise{\nodeinput} \query{\nodeinput}}{2} \geq \NODEINPUT{}^{(3/4)}v\sqrt{d}}\leq \probability{\norm{\sum_{\nodeinput \in \NODEINPUT{}} \noise{\nodeinput} \query{\nodeinput}}{\infty} \geq \NODEINPUT{}^{(3/4)}v} \leq 2d\exp{(-c_5 \sqrt{\NODEINPUT{}}v^2)}.
  \end{equation}
  Defining $B \triangleq v\sqrt{d}$ yields the lemma.
\end{proof}

\subsubsection{Proof that linear regression is gradient-PAC*}

We now move on to proving that least square linear regression is gradient-PAC*.

\begin{proof}[Proof of Theorem \ref{th:linear_regression}]
  Note that $\nabla_\param \lossperinput (\param, \query{}, \answer{}) = (\param^T \query{} - \answer{}) \query{}$.
  Thus, on input $\nodeinput \in [\NODEINPUT{}]$, we have
  \begin{equation}
    \nabla_\param \lossperinput (\param, \query{\nodeinput}, \answer{}(\query{\nodeinput}, \trueparam)) = \left( (\param-\trueparam)^T \query{\nodeinput} \right) \query{\nodeinput} - \noise{\nodeinput} \query{\nodeinput}.
  \end{equation}
  Moreover, we have 
  \begin{equation}
    (\param - \trueparam)^T \nabla_\param \left(\reglocalweight{} \norm{\param}{2}^2\right) = 2\reglocalweight{} (\param - \trueparam)^T \param = 2\reglocalweight{} \norm{\param-\trueparam}{2}^2 +2\reglocalweight{} (\param - \trueparam)^T \trueparam .
  \end{equation}
  As a result, we have
  \begin{align}
  \label{eq:lin_reg_pac_grad}
    &(\param - \trueparam)^T \nabla_\param \independentloss{}(\param, \data{})
    = \\ &\NODEINPUT{}(\param - \trueparam)^T \widehat\Sigma (\param - \trueparam)
    - (\param - \trueparam)^T \left(\sum_{\nodeinput \in \NODEINPUT{}} \noise{\nodeinput} \query{\nodeinput} \right) +  2\reglocalweight{} \norm{\param-\trueparam}{2}^2 +2\reglocalweight{} (\param - \trueparam)^T \trueparam.
  \end{align}
  But now, with high probability, we have 
  $(\param - \trueparam)^T \widehat\Sigma (\param - \trueparam) 
    \geq (\lambda_{min}/2) \norm{\param - \trueparam}{2}^2$ 
  (Lemma \ref{lemma:min_empirical_eigenvalue})
  and $\norm{\sum_{\nodeinput \in \NODEINPUT{}} \noise{\nodeinput} \query{\nodeinput}}{2} \leq B \NODEINPUT{}^{(3/4)}$ (Lemma \ref{lemma:noise_query_concentration_bound}).
   Using the fact that $\norm{\trueparam}{2}\leq \parambound$ and the Cauchy-Schwarz inequality, we have 
  \begin{align}
     (\param - \trueparam)^T \nabla_\param \independentloss{}(\param, \data{}) &\geq (\frac{\lambda_{min}}{2}\NODEINPUT{}+\reglocalweight{}) \norm{\param - \trueparam}{2}^2 -  (B \NODEINPUT{}^{(3/4)} + 2\reglocalweight{}\parambound) \norm{\param - \trueparam}{2}.
  \end{align}
  Denoting $A_\parambound \triangleq \frac{\lambda_{min}}{2}$ and $B_\parambound \triangleq  B + 2\reglocalweight{}\parambound$ and using the fact that $\NODEINPUT{}\geq1$, we then have
  \begin{align}
    (\param - \trueparam)^T \nabla_\param \independentloss{}(\param, \data{}) &\geq A_\parambound \NODEINPUT{} \norm{\param - \trueparam}{2}^2-  B_\parambound \NODEINPUT{}^{(3/4)}  \norm{\param - \trueparam}{2}\\
    &\geq A_\parambound \NODEINPUT{} \min \set{\norm{\param - \trueparam}{2},\norm{\param - \trueparam}{2}^2} -  B_\parambound \NODEINPUT{}^{(3/4)}  \norm{\param - \trueparam}{2},
  \end{align}
  with high probability. 
  This corresponds to saying  Assumption \ref{ass:unbiased} is satisfied for $\alpha = 3/4$.
\end{proof}

\subsection{Logistic Regression}
\label{app:logistic_regression}

In this section, we now prove the second part of Lemma~\ref{lemma:gradient-pac}.
Namely, we prove that logistic regression is gradient-PAC* learning.

\subsubsection{Lemmas about the sigmoid function}

We first prove two useful lemmas about the following logistic distance function.

\begin{definition}
We define the logistic distance function by $\logisticdistance(a,b) \triangleq (a-b)\left(\sigma(a)-\sigma(b)\right)$.
\end{definition}

\begin{lemma}
\label{lemma:sigmiod_bounded_inequality}
  If $a,b\in \setR$ such that for some $k>0$, $\absv{a} \leq k$ and $\absv{b} \leq k$, then there exists some constant $c_k>0$ such that 
  \begin{equation}
    \logisticdistance(a,b) \geq c_k \absv{a-b}^2.
  \end{equation}
\end{lemma}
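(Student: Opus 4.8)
The plan is to prove the inequality $\logisticdistance(a,b) = (a-b)(\sigma(a)-\sigma(b)) \geq c_k |a-b|^2$ for $|a|,|b| \leq k$ by exploiting the fact that $\sigma$ is a smooth, strictly increasing function, so its derivative is bounded below on any compact interval. First I would observe that by the mean value theorem, for $a \neq b$ there exists $\xi$ strictly between $a$ and $b$ (hence with $|\xi| \leq k$) such that $\sigma(a) - \sigma(b) = \sigma'(\xi)(a-b)$, where $\sigma'(z) = \sigma(z)(1-\sigma(z)) = \frac{e^{-z}}{(1+e^{-z})^2}$. Therefore $\logisticdistance(a,b) = \sigma'(\xi)(a-b)^2$, and it remains only to bound $\sigma'(\xi)$ from below uniformly over $|\xi| \leq k$.

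The key step is then to set $c_k \triangleq \min_{|z| \leq k} \sigma'(z)$. Since $\sigma'$ is continuous and strictly positive on all of $\setR$, and $[-k,k]$ is compact, this minimum is attained and is strictly positive; in fact by symmetry and monotonicity of $\sigma'$ in $|z|$ one gets the explicit value $c_k = \sigma'(k) = \frac{e^{-k}}{(1+e^{-k})^2} > 0$. This gives $\logisticdistance(a,b) = \sigma'(\xi)(a-b)^2 \geq c_k (a-b)^2 = c_k |a-b|^2$ for $a \neq b$, and the case $a = b$ is trivial since both sides vanish. That completes the argument.

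There is no real obstacle here — the only thing to be careful about is that $c_k$ must be chosen depending only on $k$ (not on $a,b$), which is exactly what the compactness of $[-k,k]$ and positivity of $\sigma'$ deliver; and that the mean value point $\xi$ indeed lies in $[-k,k]$, which follows because $\xi$ is between $a$ and $b$ and both have absolute value at most $k$ (the interval between them is contained in $[-k,k]$ by convexity of that interval). One could alternatively avoid the mean value theorem and write $\sigma(a)-\sigma(b) = \int_0^1 \sigma'(b + t(a-b))\,dt \cdot (a-b)$, bounding the integrand below by $c_k$ pointwise since $b + t(a-b) \in [-k,k]$ for all $t \in [0,1]$; this integral form is perhaps cleaner and avoids discussing the $a=b$ case separately, since the identity holds for all $a,b$. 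Either route is a few lines.
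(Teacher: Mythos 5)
Your argument is correct and follows essentially the same route as the paper: apply the mean value theorem to get $\sigma(a)-\sigma(b)=\sigma'(\xi)(a-b)$ with $\xi\in[-k,k]$, then lower-bound $\sigma'$ on $[-k,k]$ by $c_k=\sigma'(k)$ using positivity, symmetry, and monotonicity of $\sigma'$. The integral-form variant you mention is only a cosmetic difference.
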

  
\begin{proof}
  Note that the derivative of $\sigma(z)$ is strictly positive, symmetric ($\sigma'(z) = \sigma'(-z)$) and monotonically decreasing for $z\geq0$. Therefore, for any $z\in[-k,k]$, we know $\sigma'(z)\geq c_k \triangleq \sigma'(k)$. Thus, by the mean value theorem, we have
  \begin{equation}
    \frac{\sigma(a)-\sigma(b)}{a-b} \geq c_k.
  \end{equation}
  Multiplying both sides by $(a-b)^2$ then yields the lemma.
\end{proof}

\begin{lemma}
\label{lemma:sigmiod_general_inequality}
  If $b \in \setR$, and $\absv{b}\leq k$, for some $k > 0$, then there exists a constant $d_k$, such that for any $a \in \setR$, we have 
  \begin{equation}
    \logisticdistance(a,b) \geq d_k \absv{a-b} - d_k
  \end{equation}
\end{lemma}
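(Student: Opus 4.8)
The plan is to split the argument according to whether $a$ lies in a bounded window around the origin or far from it, letting the additive slack $-d_k$ absorb the region where $\absv{a-b}$ is small — there a purely multiplicative bound must fail, since $\logisticdistance(a,b)$ vanishes at $a=b$.

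First I would treat the \emph{saturation regime}. Set $\gamma_k \triangleq \sigma(k+1) - \sigma(k) > 0$, a constant depending only on $k$ since $\sigma$ is strictly increasing. Suppose $a \geq k+1$; then $a - b \geq (k+1) - k = 1 > 0$, so $\absv{a-b} = a-b$, while $\sigma(a) \geq \sigma(k+1)$ and $\sigma(b) \leq \sigma(k)$ give $\sigma(a) - \sigma(b) \geq \gamma_k$. Hence $\logisticdistance(a,b) = (a-b)(\sigma(a)-\sigma(b)) \geq \gamma_k \absv{a-b}$. The case $a \leq -(k+1)$ is symmetric: using $\sigma(-z) = 1-\sigma(z)$ one gets $\sigma(b) - \sigma(a) \geq \gamma_k$ and $b - a \geq 1$, so again $\logisticdistance(a,b) \geq \gamma_k \absv{a-b}$. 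Thus, whenever $\absv{a} \geq k+1$ and $d_k \leq \gamma_k$, we have $\logisticdistance(a,b) \geq \gamma_k \absv{a-b} \geq d_k \absv{a-b} - d_k$.

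Next I would handle the \emph{bounded regime} $\absv{a} \leq k+1$. Since also $\absv{b} \leq k \leq k+1$, Lemma~\ref{lemma:sigmiod_bounded_inequality} applied with parameter $k+1$ gives $\logisticdistance(a,b) \geq c_{k+1}\absv{a-b}^2$. It therefore suffices to choose $d_k$ so that $c_{k+1} t^2 \geq d_k t - d_k$ for every $t \geq 0$, i.e. so that the quadratic $c_{k+1}t^2 - d_k t + d_k$ is nonnegative on $\setR$; since its leading coefficient is positive, this holds exactly when its discriminant $d_k^2 - 4c_{k+1}d_k$ is nonpositive, i.e. when $d_k \leq 4 c_{k+1}$. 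Taking $d_k \triangleq \min\{\gamma_k,\, 4 c_{k+1}\} > 0$ — a constant depending only on $k$ — makes both regimes valid simultaneously, which establishes the lemma.

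The only slightly delicate point, worth stating explicitly, is \emph{where to cut}: one must compare against $\sigma$ evaluated at $k+1$ rather than at $k$, so that in the far regime $\sigma(a)$ is \emph{uniformly} bounded away from $\sigma(b)$ — at exactly $\absv{a}=k$ this gap can be arbitrarily small, which is why a naive threshold would break. Everything else is elementary, relying only on the monotonicity of $\sigma$, the identity $\sigma(-z)=1-\sigma(z)$, and the already-proved Lemma~\ref{lemma:sigmiod_bounded_inequality}.
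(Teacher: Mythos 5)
Your proof is correct, but it organizes the case analysis differently from the paper. The paper splits on the size of $\absv{a-b}$: for $\absv{a-b}\geq 1$ it argues $\absv{\sigma(a)-\sigma(b)} \geq \min\{\sigma(b)-\sigma(b-1),\,\sigma(b+1)-\sigma(b)\} \geq \sigma(k+1)-\sigma(k)$ (using the symmetry and monotone decay of $\sigma'$ on $[0,\infty)$), giving $\logisticdistance(a,b)\geq d_k\absv{a-b}$ with the explicit constant $d_k=\sigma(k+1)-\sigma(k)$; for $\absv{a-b}\leq 1$ it simply uses $\logisticdistance(a,b)\geq 0 \geq d_k\absv{a-b}-d_k$. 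You instead split on the size of $\absv{a}$: when $\absv{a}\geq k+1$ you obtain the same gap constant $\gamma_k=\sigma(k+1)-\sigma(k)$ purely from the monotonicity of $\sigma$ (no appeal to the shape of $\sigma'$), and when $\absv{a}\leq k+1$ you invoke Lemma~\ref{lemma:sigmiod_bounded_inequality} with radius $k+1$ together with a discriminant argument to absorb the linear term, ending with $d_k=\min\{\gamma_k,\,4c_{k+1}\}$. Both arguments are sound; yours buys a simpler far-regime step at the price of leaning on the preceding quadratic lemma and a slightly less explicit constant, whereas the paper's near-regime step is trivial (nonnegativity of $\logisticdistance$) precisely because its cut is on $\absv{a-b}$ rather than on $\absv{a}$. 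Since the lemma only asserts the existence of some constant $d_k>0$ depending on $k$, either choice serves the downstream use in Lemma~\ref{lemma:logistic_discrepancy_true_reported}.
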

\begin{proof}
  Assume $\absv{a-b}\geq 1$ and define $d_k \triangleq \sigma(k+1)-\sigma(k)$. If $b\geq 0$, since $\sigma'(z)$ is decreasing for $z\geq0$, we have $\sigma(b)-\sigma(b-1)\geq\sigma(b+1)-\sigma(b) \geq d_k$, and by symmetry, a similar argument holds for $b\leq0$. Thus, we have
  \begin{equation}
    \absv{\sigma(a)-\sigma(b)} \geq \min \set{\sigma(b)-\sigma(b-1),\sigma(b+1)-\sigma(b)} \geq d_k.
  \end{equation}
  Therefore, 
  \begin{equation}
      (a-b)\left(\sigma(a)-\sigma(b)\right) \geq d_k \absv{a-b}\geq  d_k \absv{a-b}-d_k.
  \end{equation}
  For the case of $\absv{a-b}\leq 1$, we also have 
  $(a-b)\left(\sigma(a)-\sigma(b)\right) \geq 0 \geq  d_k \absv{a-b}-d_k$.
\end{proof}

\subsubsection{A uniform lower bound}

\begin{definition}
Denote $\sphere{d-1} \triangleq \set{\unitvector{} \in \setR^d \st \norm{\unitvector{}}{2} = 1}$ the hypersphere in $\setR^d$.
\end{definition}

\begin{lemma}
\label{lemma:positive_query_for_all_unitvectors}
Assume $\support (\querydistribution{})$ spans $\setR^d$. Then, for all $\unitvector{} \in \sphere{d-1}$, 
$\expect{ \absv{\query{}^T \unitvector{}}} > 0$.
\end{lemma}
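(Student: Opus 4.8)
The plan is to argue by contradiction. Suppose there exists a unit vector $\unitvector{} \in \sphere{d-1}$ with $\expect{\absv{\query{}^T \unitvector{}}} = 0$. Since $\absv{\query{}^T \unitvector{}}$ is a nonnegative random variable, a vanishing expectation forces $\query{}^T \unitvector{} = 0$ almost surely under $\querydistribution{}$. The next step is to transfer this almost-sure statement to the support: the hyperplane $H_{\unitvector{}} \triangleq \set{x \in \setR^d \st x^T \unitvector{} = 0}$ is closed and carries full $\querydistribution{}$-mass, and since $\support(\querydistribution{})$ is by definition contained in every closed set of probability $1$, we obtain $\support(\querydistribution{}) \subseteq H_{\unitvector{}}$.

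To finish, I would note that because $\unitvector{} \neq 0$, the set $H_{\unitvector{}}$ is a proper linear subspace of $\setR^d$, so $\vectorspan(\support(\querydistribution{})) \subseteq H_{\unitvector{}} \subsetneq \setR^d$. This contradicts the hypothesis that $\support(\querydistribution{})$ spans $\setR^d$. Hence no such $\unitvector{}$ exists, and $\expect{\absv{\query{}^T \unitvector{}}} > 0$ for every $\unitvector{} \in \sphere{d-1}$, which is the claim.

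An equivalent and slightly more constructive route, which I might prefer for concreteness, is the following: since $\support(\querydistribution{})$ spans $\setR^d$, one can extract linearly independent points $q_1, \dots, q_d$ from it; for any $\unitvector{} \in \sphere{d-1}$, not all $q_i$ can be orthogonal to $\unitvector{}$ (otherwise $\unitvector{}$ would annihilate a basis), so fix $i$ with $q_i^T \unitvector{} \neq 0$; by definition of the support, every ball around $q_i$ has positive $\querydistribution{}$-probability, and on a small enough such ball $\absv{\query{}^T \unitvector{}}$ is bounded below by a positive constant, forcing $\expect{\absv{\query{}^T \unitvector{}}} > 0$. There is no real obstacle here; the only care needed is the measure-theoretic bookkeeping — that zero expectation of a nonnegative variable yields almost-sure vanishing, and that the support is contained in every closed full-measure set — and making sure the notion of ``spanning'' is the linear span of $\support(\querydistribution{})$.
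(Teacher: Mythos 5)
Your proposal is correct, and it actually contains two valid arguments. Your second, ``constructive'' route is essentially the paper's own proof: the paper also picks a support point $\query{*}$ with $\query{*}^T \unitvector{} \neq 0$ (it does so by writing $\unitvector{}$ as colinear to a linear combination of support points, whereas you extract a linearly independent family from the spanning support --- the same idea), then uses continuity to lower-bound $\absv{\query{}^T \unitvector{}}$ by a positive constant on a small ball $\ball(\query{*},\varepsilon)$, and concludes via the positive probability of that ball together with the law of total expectation. Your first route, by contradiction, is genuinely different in flavor: it converts $\expect{\absv{\query{}^T\unitvector{}}}=0$ into $\query{}^T\unitvector{}=0$ almost surely, then uses the fact that the support is contained in every closed set of full measure to place $\support(\querydistribution{})$ inside the proper subspace $\unitvector{}^\perp$, contradicting the spanning hypothesis. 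This version is shorter and purely measure-theoretic, requiring no quantitative estimate; the paper's version is slightly more explicit (it exhibits the lower bound $ap/2$), but nothing downstream needs that explicit constant --- the subsequent uniform bound in the paper is obtained by compactness of $\sphere{d-1}$ from mere positivity --- so either argument serves equally well. The one measure-theoretic point you flag (support contained in every closed full-measure set) is indeed the only thing to justify, and it follows directly from the standard definition of the support.
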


\begin{proof}
  Let $\unitvector{} \in \sphere{d-1}$.
  We know that there exists $\query{1}, \ldots, \query{d} \in \support(\querydistribution{})$ and $\alpha_1, \ldots, \alpha_d \in \setR$ such that $\unitvector{}$ is colinear with $\sum \alpha_j \query{j}$.
  In particular, we then have $\unitvector{}^T \sum \alpha_j \query{j} = \sum \alpha_j (\query{j}^T \unitvector{}) \neq 0$.
  Therefore, there must be a query $\query{*} \in \support(\querydistribution{})$ such that $\query{*}^T \unitvector{} \neq 0$,
  which implies $a \triangleq \absv{\query{*}^T \unitvector{}} > 0$
  By continuity of the scalar product, there must then also exist $\varepsilon > 0$ such that, for any $\query{} \in \ball(\query{*}, \varepsilon)$, we have $\absv{\query{}^T \unitvector{}} \geq a/2$, where  $\ball(\query{*}, \varepsilon)$ is an Euclidean ball centered on $\query{*}$ and of radius $\varepsilon$.
  
  But now, by definition of the support, we know that $p \triangleq \probability{\query{} \in \ball(\query{*}, \varepsilon)} > 0$.
  By the law of total expectation, we then have
  \begin{align}
      \expect{\absv{\query{}^T \unitvector{}}}
      &= \expect{\absv{\query{}^T \unitvector{}} \st \query{} \in \ball(\query{*}, \varepsilon)} \probability{\query{} \in \ball(\query{*}, \varepsilon)} \nonumber \\
      &\qquad \qquad + \expect{\absv{\query{}^T \unitvector{}} \st \query{} \notin \ball(\query{*}, \varepsilon)} \probability{\query{} \notin \ball(\query{*}, \varepsilon)} \\
      &\geq ap/2 + 0 > 0,
  \end{align}
  which is the lemma.
\end{proof}

\begin{lemma}
\label{lemma:logistic_uniform_lower_bound}
Assume that, for all unit vectors $\unitvector{} \in \sphere{d-1}$, we have $\expect{ \absv{\query{}^T \unitvector{}}} > 0$, and that $\support(\querydistribution{})$ is bounded by $\distbound$. 
Then there exists $C>0$ such that, with high probability, 
  \begin{equation}
     \forall \unitvector{} \in \sphere{d-1} \mathsep
     \sum_{\nodeinput \in \NODEINPUT{}} \absv{\query{\nodeinput}^T \unitvector{}} \geq C \NODEINPUT{}.
  \end{equation}
\end{lemma}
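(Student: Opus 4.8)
The plan is to prove the uniform lower bound by combining a compactness argument for the population quantity $\unitvector{} \mapsto \expect{\absv{\query{}^T\unitvector{}}}$ with a covering-number (net) argument that upgrades pointwise concentration into uniform concentration over $\sphere{d-1}$. The three ingredients are: (i) the population mean is bounded away from $0$ uniformly on the sphere; (ii) for each fixed direction the empirical average concentrates around its mean; (iii) both the empirical average and the population mean are Lipschitz in the direction, so control on a fine finite net transfers to the whole sphere.

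First I would show that $\varphi(\unitvector{}) \triangleq \expect{\absv{\query{}^T\unitvector{}}}$ is continuous on $\sphere{d-1}$: since $\support(\querydistribution{})$ is bounded by $\distbound$, the integrand is dominated by $\distbound$ and is continuous in $\unitvector{}$, so dominated convergence gives continuity of $\varphi$. By hypothesis (which is exactly the conclusion of Lemma~\ref{lemma:positive_query_for_all_unitvectors}), $\varphi(\unitvector{}) > 0$ for every $\unitvector{}$, and since $\sphere{d-1}$ is compact, $\varphi$ attains a strictly positive minimum $m \triangleq \min_{\unitvector{} \in \sphere{d-1}} \varphi(\unitvector{}) > 0$.

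Next I would fix a finite $\epsilon$-net $\mathcal{N} \subset \sphere{d-1}$ with $\epsilon \triangleq m/(4\distbound)$; its cardinality $\card{\mathcal{N}}$ depends only on $d$, $m$, $\distbound$, not on $\NODEINPUT{}$. For each fixed $\unitvectorbis{} \in \mathcal{N}$, the variables $\absv{\query{\nodeinput}^T\unitvectorbis{}}$ are i.i.d., bounded in $[0,\distbound]$, with mean $\varphi(\unitvectorbis{}) \geq m$; a standard Hoeffding/Chernoff bound for bounded i.i.d. variables (which can also be obtained from Theorem~\ref{th:sub-exponential}, since bounded variables are sub-exponential) gives a constant $c > 0$ depending on $m$ and $\distbound$ with $\probability{\frac{1}{\NODEINPUT{}}\sum_{\nodeinput}\absv{\query{\nodeinput}^T\unitvectorbis{}} < m/2} \leq \exp(-c\NODEINPUT{})$. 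A union bound over $\mathcal{N}$ then shows that the event on which $\frac{1}{\NODEINPUT{}}\sum_{\nodeinput}\absv{\query{\nodeinput}^T\unitvectorbis{}} \geq m/2$ holds simultaneously for all $\unitvectorbis{} \in \mathcal{N}$ has probability at least $1 - \card{\mathcal{N}}\exp(-c\NODEINPUT{})$, which tends to $1$; hence it occurs with high probability.

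Finally I would pass from the net to the full sphere using a Lipschitz estimate: for any $\unitvector{},\unitvector{}' \in \sphere{d-1}$ and any $\nodeinput$, $\absv{\,\absv{\query{\nodeinput}^T\unitvector{}} - \absv{\query{\nodeinput}^T\unitvector{}'}\,} \leq \absv{\query{\nodeinput}^T(\unitvector{}-\unitvector{}')} \leq \distbound \norm{\unitvector{}-\unitvector{}'}{2}$, so $\unitvector{} \mapsto \frac{1}{\NODEINPUT{}}\sum_{\nodeinput}\absv{\query{\nodeinput}^T\unitvector{}}$ is $\distbound$-Lipschitz. On the high-probability event above, for any $\unitvector{} \in \sphere{d-1}$ pick $\unitvectorbis{} \in \mathcal{N}$ with $\norm{\unitvector{}-\unitvectorbis{}}{2} \leq \epsilon$; then $\frac{1}{\NODEINPUT{}}\sum_{\nodeinput}\absv{\query{\nodeinput}^T\unitvector{}} \geq \frac{1}{\NODEINPUT{}}\sum_{\nodeinput}\absv{\query{\nodeinput}^T\unitvectorbis{}} - \distbound\epsilon \geq m/2 - m/4 = m/4$, so that $\sum_{\nodeinput \in \NODEINPUT{}}\absv{\query{\nodeinput}^T\unitvector{}} \geq (m/4)\NODEINPUT{}$, and $C \triangleq m/4$ works. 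The only genuinely delicate point is the uniformity over the uncountable index set $\sphere{d-1}$, which is precisely what the net-plus-Lipschitz combination resolves; the rest (continuity via dominated convergence, concentration for bounded i.i.d. variables, a finite union bound) is routine.
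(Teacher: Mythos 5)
Your proposal is correct and follows essentially the same route as the paper's proof: a compactness/continuity argument giving a uniform positive lower bound on $\expect{\absv{\query{}^T\unitvector{}}}$, Hoeffding plus a union bound over a finite $\varepsilon$-net with $\varepsilon$ of order $C_0/\distbound$, and a $\distbound$-Lipschitz estimate to transfer the bound from the net to all of $\sphere{d-1}$, yielding the same constant $C = C_0/4$. The only cosmetic difference is that you spell out the continuity of the population mean via dominated convergence, which the paper leaves implicit.
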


\begin{proof}
  By continuity of the scalar product and the expectation operator, and by compactness of $\sphere{d-1}$, we know that 
  \begin{equation}
      C_0 \triangleq \inf_{\unitvector{} \in \setR^d} \expect{ \absv{\query{}^T \unitvector{}}} > 0.
  \end{equation}
  Now define $\varepsilon \triangleq C_0 / 4\distbound$. 
  Note that $\sphere{d-1} \subset \bigcup_{\unitvector{} \in \sphere{d-1}} \ball(\unitvector{}, \varepsilon)$.
  Thus we have a covering of the hypersphere by open sets.
  But since $\sphere{d-1}$ is compact, we know that we can extract a finite covering.
  In other words, there exists a finite subset $S \subset \sphere{d-1}$ such that $\sphere{d-1} \subset \bigcup_{\unitvector{} \in S} \ball(\unitvector{}, \varepsilon)$.
  Put differently, for any $\unitvectorbis{} \in \sphere{d-1}$, there exists $\unitvector{} \in S$ such that $\norm{\unitvector{} - \unitvectorbis{}}{2} \leq \varepsilon$.
  
  Now consider $\unitvector{} \in S$. 
  Given that $\support(\querydistribution{})$ is bounded, we know that $\absv{\query{\nodeinput}^T \unitvector{}} \in [0, \distbound]$.
  Moreover, such variables $\absv{\query{\nodeinput}^T \unitvector{}}$ are iid.
  By Hoeffding's inequality, for any $t > 0$, we have
  \begin{equation}
     \probability{\absv{\sum_{\nodeinput \in \NODEINPUT{}} \absv{\query{\nodeinput}^T \unitvector{}}-\NODEINPUT{}\expect{\absv{\query{}^T\ \unitvector{}}}} \geq \NODEINPUT{} t} 
     \leq 2 \exp \left( \frac{-2\NODEINPUT{}t^2}{\distbound} \right).
  \end{equation}
  Choosing $t = C_0/2$ then yields
  \begin{align}
    \probability{\sum_{\nodeinput \in \NODEINPUT{}} \absv{\query{\nodeinput}^T \unitvector{}}
    \leq \frac{C_0\NODEINPUT{}}{2}} &\leq \probability{\absv{\sum_{\nodeinput \in \NODEINPUT{}} \absv{\query{\nodeinput}^T \unitvector{\param-\trueparam}}-\NODEINPUT{}\expect{\absv{\query{}^T\ \unitvector{\param-\trueparam}}}} \geq \frac{\NODEINPUT{}C_0}{2}} \\
    &\leq 2 \exp \left(\frac{-\NODEINPUT{}{C_0}^2}{2\distbound}\right).
  \end{align}
  Taking a union bound for $\unitvector{} \in S$ then guarantees 
  \begin{equation}
      \probability{\forall \unitvector{} \in S \mathsep \sum_{\nodeinput \in \NODEINPUT{}} \absv{\query{\nodeinput}^T \unitvector{}} \geq \frac{C_0\NODEINPUT{}}{2}}
      \geq 1 - 2 \card{S} \exp \left(\frac{-\NODEINPUT{}{C_0}^2}{2\distbound}\right),
  \end{equation}
  which clearly goes to 1 as $\NODEINPUT{} \rightarrow \infty$.
  Thus $\forall \unitvector{} \in S \mathsep \sum_{\nodeinput \in \NODEINPUT{}} \absv{\query{\nodeinput}^T \unitvector{}} \geq \frac{C_0\NODEINPUT{}}{2}$ holds with high probability.
  
  Now consider $\unitvectorbis{} \in \sphere{d-1}$.
  We know that there exists $\unitvector{} \in S$ such that $\norm{\unitvector{} - \unitvectorbis{}}{2} \leq \varepsilon$.
  Then, we have
  \begin{align}
      \sum_{\nodeinput \in [\NODEINPUT{}]} \absv{\query{\nodeinput}^T \unitvectorbis{}}
      &= \sum_{\nodeinput \in [\NODEINPUT{}]} \absv{\query{\nodeinput}^T \unitvector{} + \query{\nodeinput}^T (\unitvectorbis{} - \unitvector{})} \\
      &\geq \sum_{\nodeinput \in [\NODEINPUT{}]} \absv{\query{\nodeinput}^T \unitvector{}} - \NODEINPUT{} \distbound \norm{\unitvectorbis{} - \unitvector{}}{2} \\
      &\geq \frac{C_0 \NODEINPUT{}}{2} - \NODEINPUT{} \distbound \frac{C_0}{4 \distbound} 
      = \frac{C_0 \NODEINPUT{}}{4},
  \end{align}
  which proves the lemma.
\end{proof}

\subsubsection{Lower bound on the discrepancy between preferred and reported answers}

\begin{lemma}
\label{lemma:logistic_discrepancy_true_reported}
Assume that $\querydistribution{}$ has a bounded support, whose interior contains the origin.
Suppose also that $\norm{\trueparam}{2} \leq \parambound$.
Then there exists $A_\parambound$ such that, with high probability, we have
\begin{equation}
    \sum_{\nodeinput \in [\NODEINPUT{}]} \logisticdistance (\query{\nodeinput{}}^T \param, \query{\nodeinput{}}^T \trueparam)
    \geq A_\parambound \NODEINPUT{} \min \set{\norm{\param - \trueparam}{2}, \norm{\param - \trueparam}{2}^2}.
\end{equation}
\end{lemma}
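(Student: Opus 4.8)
The plan is to reduce the estimate to a uniform-over-directions lower bound on the queries (already supplied by Lemma~\ref{lemma:logistic_uniform_lower_bound}), and then to handle small and large displacements $\norm{\param-\trueparam}{2}$ with Lemmas~\ref{lemma:sigmiod_bounded_inequality} and~\ref{lemma:sigmiod_general_inequality} respectively. If $\param=\trueparam$ both sides vanish, so assume $r \triangleq \norm{\param-\trueparam}{2}>0$ and write $\param-\trueparam = r\unitvector{}$ with $\unitvector{}\in\sphere{d-1}$; then $\query{\nodeinput}^T\param - \query{\nodeinput}^T\trueparam = r\,\query{\nodeinput}^T\unitvector{}$ for every $\nodeinput$. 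Since the origin is interior to $\support(\querydistribution{})$, this support spans $\setR^d$, so Lemma~\ref{lemma:positive_query_for_all_unitvectors} gives $\expect{\absv{\query{}^T\unitvector{}}}>0$ for all $\unitvector{}\in\sphere{d-1}$; combined with boundedness of the support, Lemma~\ref{lemma:logistic_uniform_lower_bound} yields $C>0$ such that, with high probability, $\sum_{\nodeinput\in[\NODEINPUT{}]}\absv{\query{\nodeinput}^T\unitvector{}}\geq C\NODEINPUT{}$ simultaneously for all $\unitvector{}\in\sphere{d-1}$. By Cauchy--Schwarz the same event gives $\sum_{\nodeinput\in[\NODEINPUT{}]}(\query{\nodeinput}^T\unitvector{})^2 \geq (\sum_{\nodeinput}\absv{\query{\nodeinput}^T\unitvector{}})^2/\NODEINPUT{}\geq C^2\NODEINPUT{}$. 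I will work on this event, and I fix $k_1\triangleq \distbound\parambound$ (so that $\absv{\query{\nodeinput}^T\trueparam}\leq k_1$ for all $\nodeinput$) and the threshold $R_0\triangleq\max\set{1,2/C}$.

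For the small-displacement case $r\leq R_0$, each argument satisfies $\absv{\query{\nodeinput}^T\param}\leq k_1+R_0\distbound \triangleq k_2$, so Lemma~\ref{lemma:sigmiod_bounded_inequality} with constant $k_2$ gives $\logisticdistance(\query{\nodeinput}^T\param,\query{\nodeinput}^T\trueparam)\geq c_{k_2}r^2(\query{\nodeinput}^T\unitvector{})^2$; summing and using the covariance-type bound above, $\sum_{\nodeinput}\logisticdistance(\query{\nodeinput}^T\param,\query{\nodeinput}^T\trueparam)\geq c_{k_2}C^2 r^2\NODEINPUT{}\geq c_{k_2}C^2\NODEINPUT{}\min\set{r,r^2}$, since $r^2\geq\min\set{r,r^2}$. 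For the large-displacement case $r>R_0$ we have $\min\set{r,r^2}=r$, and since $\absv{\query{\nodeinput}^T\trueparam}\leq k_1$, Lemma~\ref{lemma:sigmiod_general_inequality} gives $\logisticdistance(\query{\nodeinput}^T\param,\query{\nodeinput}^T\trueparam)\geq d_{k_1}r\absv{\query{\nodeinput}^T\unitvector{}} - d_{k_1}$; summing, $\sum_{\nodeinput}\logisticdistance(\query{\nodeinput}^T\param,\query{\nodeinput}^T\trueparam)\geq d_{k_1}\NODEINPUT{}(rC-1)$, and $r>2/C$ forces $rC-1>rC/2$, so this is at least $(d_{k_1}C/2)\NODEINPUT{}\,r = (d_{k_1}C/2)\NODEINPUT{}\min\set{r,r^2}$. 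Setting $A_\parambound\triangleq\min\set{c_{k_2}C^2,\,d_{k_1}C/2}>0$ then gives the claimed bound on the (high-probability) event.

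The only genuinely delicate point is the uniformity over $\unitvector{}\in\sphere{d-1}$ in the query lower bound: a pointwise Hoeffding estimate for a fixed direction is not enough, which is exactly why Lemma~\ref{lemma:logistic_uniform_lower_bound} is proved through a compactness/finite-covering argument; here I simply invoke it. Everything else is bookkeeping --- the constants $k_1,k_2$ stay finite precisely because $\norm{\trueparam}{2}\leq\parambound$ and $\support(\querydistribution{})$ is bounded --- plus matching the quadratic regime (governed by $c_{k_2}$) and the linear regime (governed by $d_{k_1}$) across the threshold $R_0$, for which the choice $R_0\geq 2/C$ is what makes the $-d_{k_1}\NODEINPUT{}$ term harmless.
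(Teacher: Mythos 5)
Your proof is correct, and its overall skeleton matches the paper's: the same split into a small-displacement regime handled by Lemma~\ref{lemma:sigmiod_bounded_inequality} and a large-displacement regime handled by Lemma~\ref{lemma:sigmiod_general_inequality}, with the uniform-over-directions bound of Lemma~\ref{lemma:logistic_uniform_lower_bound} (fed by Lemma~\ref{lemma:positive_query_for_all_unitvectors}) supplying the probabilistic input, exactly because a pointwise Hoeffding bound would not be uniform in $\unitvector{}$. The one genuine deviation is in the quadratic regime: the paper lower-bounds $(\param-\trueparam)^T \left(\sum_{\nodeinput} \query{\nodeinput}\query{\nodeinput}^T\right)(\param-\trueparam)$ by invoking the covariance concentration result (Theorem~\ref{th:covariance_concentration}), so its proof runs on the intersection of two high-probability events and obtains the constant $\lambda_{min}/2$, where $\lambda_{min} = \min\spectrum\left(\expect{\query{}\query{}^T}\right)$. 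You instead derive the quadratic bound deterministically from the already-established event, via Cauchy--Schwarz: $\sum_{\nodeinput}(\query{\nodeinput}^T\unitvector{})^2 \geq \frac{1}{\NODEINPUT{}}\left(\sum_{\nodeinput}\absv{\query{\nodeinput}^T\unitvector{}}\right)^2 \geq C^2\NODEINPUT{}$. This buys a leaner argument — a single concentration event and no appeal to sub-Gaussian covariance concentration — at the cost of a weaker constant (since $\expect{\absv{\query{}^T\unitvector{}}} \leq \sqrt{\expect{(\query{}^T\unitvector{})^2}}$, your $C^2$ is smaller than the paper's $\lambda_{min}/2$), which is immaterial for the statement. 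Your bookkeeping of the thresholds ($R_0 = \max\set{1, 2/C}$, $k_2 = \parambound\distbound + R_0\distbound$) and the use of $\norm{\trueparam}{2}\leq\parambound$ together with the bounded support are all in order, so the proof stands as written.
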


\begin{proof}
  Note that by Cauchy-Schwarz inequality we have
  \begin{equation}
    \absv{\query{\nodeinput}^T\trueparam} \leq \norm{\query{\nodeinput}}{2}\norm{\trueparam}{2}\leq \distbound \parambound.
  \end{equation}
  Thus, Lemma \ref{lemma:sigmiod_general_inequality} implies the existence of a positive constant $d_\parambound$, such that for all $\param \in \setR^d$, we have
  \begin{align}
          \sum_{\nodeinput \in \NODEINPUT{}} 
          \logisticdistance \left( \query{\nodeinput}^T \param, \query{\nodeinput}^T \trueparam \right)
          &\geq \sum_{\nodeinput \in \NODEINPUT{}} \left(d_\parambound \absv{\query{\nodeinput}^T\param - \query{\nodeinput}^T\trueparam} - d_\parambound \right)\\
          &= -d_\parambound \NODEINPUT{} +  d_\parambound\norm{\param-\trueparam}{2} \sum_{\nodeinput \in \NODEINPUT{}} \absv{\query{\nodeinput}^T \unitvector{\param-\trueparam}},
  \end{align}
  where $\unitvector{\param-\trueparam} \triangleq (\param-\trueparam)/\norm{\param-\trueparam}{2}$ is the unit vector in the direction of $\param-\trueparam$. 
  
  Now, by Lemma \ref{lemma:logistic_uniform_lower_bound}, we know that, with high probability,
  for all unit vectors $\unitvector{} \in \sphere{d-1}$, we have $\sum \absv{\query{\nodeinput}^T \unitvector{}} \geq C \NODEINPUT{}$.
  Thus, for $\NODEINPUT{}$ sufficiently large, for any $\param \in \setR^d$, with high probability, we have 
  \begin{equation}
  \label{eq:log_reg_bound}
      \sum_{\nodeinput \in \NODEINPUT{}} 
      \logisticdistance (\query{\nodeinput{}}^T \param, \query{\nodeinput{}}^T \trueparam)
      \geq \frac{d_\parambound C_{min}}{2}\NODEINPUT{} \norm{\param-\trueparam}{2} - d_\parambound \NODEINPUT{}.
  \end{equation}
  Defining $e_\parambound\triangleq \frac{d_\parambound C_{min}}{4}$, and $f_\parambound \triangleq \frac{4}{C_{min}}$, for $\norm{\param-\trueparam}{2}>f_\parambound$, we then have
  \begin{equation}
  \label{eq:logreg_second_inequality}
     \sum_{\nodeinput \in \NODEINPUT{}} 
     \logisticdistance (\query{\nodeinput{}}^T \param, \query{\nodeinput{}}^T \trueparam)
     \geq e_\parambound \NODEINPUT{}\norm{\param-\trueparam}{2}.
  \end{equation}
  We now focus on the case of $\norm{\param-\trueparam}{2}\leq f_\parambound$. The triangle inequality yields $\norm{\param}{2}\leq \norm{\param-\trueparam}{2} + \norm{\trueparam}{2} \leq f_\parambound + \parambound$. By Cauchy-Schwarz inequality, we then have $\absv{\query{\nodeinput}^T \param}\leq (f_\parambound + \parambound)\distbound \triangleq g_\parambound$ and $\absv{\query{\nodeinput}^T\trueparam} \leq \parambound \distbound \leq g_\parambound$. 
  Thus, by Lemma \ref{lemma:sigmiod_bounded_inequality}, we know there exists some constant $c_\parambound$ such that
  \begin{align}
    \sum_{\nodeinput \in \NODEINPUT{}} \left(\sigma(\query{\nodeinput}^T\param) - \sigma(\query{\nodeinput}^T\trueparam)\right) (\query{\nodeinput}^T\param - \query{\nodeinput}^T\trueparam) &\geq \sum_{\nodeinput \in \NODEINPUT{}} c_\parambound \absv{\query{\nodeinput}^T\param - \query{\nodeinput}^T\trueparam}^2 \\
    &= \sum_{\nodeinput \in \NODEINPUT{}} c_\parambound (\param-\trueparam)^T\query{i}\query{i}^T(\param-\trueparam)\\
    &= c_\parambound (\param-\trueparam)^T \left( \sum_{\nodeinput \in \NODEINPUT{}} \query{i}\query{i}^T \right) (\param-\trueparam).
  \end{align}
  Since distribution $\Tilde{\query{}}$ is bounded (and thus sub-Gaussian), by Theorem \ref{th:covariance_concentration}, with high probability, we have
  \begin{equation}
    (\param-\trueparam)^T \left( \sum_{\nodeinput \in \NODEINPUT{}} \query{i}\query{i}^T \right) (\param-\trueparam) \geq \frac{\lambda_{min}}{2} \NODEINPUT{} \norm{\param-\trueparam}{2}^2,
  \end{equation}
  where $\lambda_{min}$ is the smallest eigenvalue of $\expect{\query{\nodeinput}\query{\nodeinput}^T}$. Thus, for $\norm{\param-\trueparam}{2}\leq f_\parambound$, we have
  \begin{equation}
  \label{eq:logreg_first_inequality}
    \sum_{\nodeinput \in \NODEINPUT{}} \left(\sigma(\query{\nodeinput}^T\param) - \sigma(\query{\nodeinput}^T\trueparam)\right) (\query{\nodeinput}^T\param - \query{\nodeinput}^T\trueparam) \geq \frac{\lambda_{min}c_\parambound}{2} \NODEINPUT{} \norm{\param-\trueparam}{2}^2.
  \end{equation}
  Combining this with (\ref{eq:logreg_second_inequality}), and defining $A_\parambound \triangleq \min \set{\frac{\lambda_{min}c_\parambound}{2}, e_\parambound}$, we then obtain the lemma.
\end{proof}

\subsubsection{Proof that logistic regression is gradient-PAC*}

Now we proceed with the proof that logistic regression is gradient-PAC*.

\begin{proof}[Proof of Theorem \ref{th:logistic_regression}]
  Note that $\sigma(-z) = e^{-z} \sigma(z) = 1-\sigma(z)$ and $\sigma'(z) = e^{-z} \sigma^2(z)$. We then have
  \begin{align}
    \nabla_\param \lossperinput (\param, \query{}, \answer{})
    &= - \frac{\sigma'(\answer{} \query{}^T \param) \answer{} \query{}}{\sigma(\answer{} \query{}^T \param)} 
    = - e^{-\answer{} \query{}^T \param} \sigma(\answer{} \query{}^T \param) \answer{} \query{} \\
    &= - \sigma(- \answer{} \query{}^T \param) \answer{} \query{} 
    = \left(\sigma(\query{}^T\param) - \indicator{\answer{}=1}\right)\query{},
  \end{align}
  where $\indicator{\answer{}=1}$ is the indicator function that outputs $1$ if $\answer{}=1$, and $0$ otherwise.
  As a result,
  \begin{align}
    &(\param-\trueparam)^T\nabla_\param \independentloss{}(\param, \data{}) =\\ &(\param-\trueparam)^T \left( \sum_{\nodeinput \in \NODEINPUT{}} \left(\sigma(\query{\nodeinput}^T\param) - \indicator{\answer{\nodeinput}=1}\right)\query{\nodeinput}\right) + 2 \reglocalweight{} (\param-\trueparam)^T\param \\ &= (\param-\trueparam)^T \left( \sum_{\nodeinput \in \NODEINPUT{}} \left(\sigma(\query{\nodeinput}^T\param) - \sigma(\query{\nodeinput}^T\trueparam) + \sigma(\query{\nodeinput}^T\trueparam) - \indicator{\answer{\nodeinput}=1}\right)\query{\nodeinput}\right)\\
    &+2\reglocalweight{} \norm{\param-\trueparam}{2}^2 +2\reglocalweight{} (\param - \trueparam)^T \trueparam\\ 
    &= \label{eq:log_error} 
    \sum_{\nodeinput \in [\NODEINPUT{}]}
    \logisticdistance\left( \query{\nodeinput}^T \param, \query{\nodeinput}^T \trueparam \right)
    + (\param-\trueparam)^T \left( \sum_{\nodeinput \in \NODEINPUT{}} \left(\sigma(\query{\nodeinput}^T\trueparam) - \indicator{\answer{\nodeinput}=1}\right)\query{\nodeinput}\right)\\
    &+2\reglocalweight{} \norm{\param-\trueparam}{2}^2 +2\reglocalweight{} (\param - \trueparam)^T \trueparam.
  \end{align}
  By Lemma \ref{lemma:logistic_discrepancy_true_reported}, with high probability, we have
  \begin{equation}
      \sum_{\nodeinput \in [\NODEINPUT{}]}
    \logisticdistance\left( \query{\nodeinput}^T \param, \query{\nodeinput}^T \trueparam \right)
    \geq A_\parambound \NODEINPUT{} \min \set{ \norm{\param - \trueparam}{2}, \norm{\param - \trueparam}{2}^2 }.
  \end{equation}
  To control the second term of (\ref{eq:log_error}), note that the random vectors $Z_\nodeinput \triangleq \left(\sigma(\query{\nodeinput}^T\trueparam) - \indicator{\answer{\nodeinput}=1}\right)\query{\nodeinput}$ are iid with norm at most $\distbound$. 
  Moreover, since $\expect{\indicator{\answer{\nodeinput}=1}|\query{\nodeinput}} = \sigma(\query{\nodeinput}^T\trueparam)$, by the tower rule, we have $\expect{Z_\nodeinput} = \expect{\expect{Z_\nodeinput|\query{\nodeinput}}} = 0$. Therefore, by applying Hoeffding’s bound to every coordinate of $Z_i$, and then taking a union bound, for any $B>0$, we have 
  \begin{equation}
    \probability{\norm{\sum_{\nodeinput \in \NODEINPUT{}} Z_i}{2}\geq B \NODEINPUT{}^{3/4}} 
    \leq 2d\exp \left(-\frac{B^2\sqrt{\NODEINPUT{}}}{2d\distbound^2} \right).
  \end{equation}
  Applying now Cauchy-Schwarz inequality, with high probability, we have
  \begin{equation}
    \absv{\nonumber (\param-\trueparam)^T \left( \sum_{\nodeinput \in \NODEINPUT{}} \left(\sigma(\query{\nodeinput}^T\trueparam) - \indicator{\answer{\nodeinput}=1}\right)\query{\nodeinput}\right) }\leq B \NODEINPUT{}^{3/4} \norm{\param-\trueparam}{2}.
  \end{equation}
  Combining this with (\ref{eq:logreg_first_inequality}) and using $\norm{\trueparam}{2}^2\leq \parambound$, we then have
  \begin{align}
     &(\param - \trueparam)^T \nabla_\param \independentloss{}(\param, \data{})\\ &\geq (A_\parambound\NODEINPUT{}+\reglocalweight{}) \set{ \norm{\param - \trueparam}{2}, \norm{\param - \trueparam}{2}^2 } -  (B \NODEINPUT{}^{(3/4)} + 2\reglocalweight{}\parambound) \norm{\param - \trueparam}{2}\\
     &\geq A_\parambound\NODEINPUT{} \set{ \norm{\param - \trueparam}{2}, \norm{\param - \trueparam}{2}^2 } -  B_\parambound \NODEINPUT{}^{(3/4)} \norm{\param - \trueparam}{2},
  \end{align}
  where $B_\parambound = B + 2\reglocalweight{}\parambound$.
  This shows that Assumption $\ref{ass:unbiased}$ is satisfied for logistic loss for $\alpha = 3/4$, and $A_\parambound$ and $B_\parambound$ as previously defined.
  
\end{proof}

\section{Proofs of Local PAC*-Learnability}
\label{sec:pac_proof}

Let us now prove Lemma~\ref{th:pac}.
To do so, consider the preferred models $\trueparamfamily$ and a subset $\HONEST \subset [\NODE]$ of honest users.
Denote $\datafamily{-\HONEST}$ the datasets provided by users $\node \in [\NODE] - \HONEST$.
Each honest user $\honest \in \HONEST$ provides an honest dataset $\data{\honest}$ of cardinality at least $\NODEINPUT{} \geq 1$.
Consider the bound $K_\HONEST \triangleq \max_{\honest \in \HONEST} \norm{\trueparamsub{\honest}}{2}$ on the parameter norm of honest active users $\honest \in \HONEST$.

\subsection{Bounds on the Optima}

Before proving the theorem, we prove a useful lemma that bounds the set of possible values for the global model and honest local models.

\begin{lemma}
\label{lemma:bounded_optimum}
  Assume that $\regularization$ and $\lossperinput$ are nonnegative.
  For $\NODEINPUT{}$ large enough, if all honest active users $\honest \in \HONEST$ provide at least $\NODEINPUT{}$ data, then, with high probability, $\optimumfamily_{\HONEST}$ must lie in a compact subset of $\setR^{d \times \HONEST}$ that does not depend on $\NODEINPUT{}$.
\end{lemma}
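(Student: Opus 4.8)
The plan is to get the bound from the gradient-PAC* hypothesis on $\lossperinput$ together with one value comparison of the global loss against a cheap reference configuration, chosen so that the only term that grows with $\NODEINPUT{}$ cancels. \textbf{Step 1 (reduce to the good event).} I apply Definition~\ref{ass:unbiased} with $\parambound \triangleq K_{\HONEST}$, obtaining constants $A_\parambound, B_\parambound > 0$ and $\alpha_\parambound < 1$. For each honest user $\honest \in \HONEST$ the event $\event(\data{\honest}, \trueparamsub{\honest}, \NODEINPUT{}, A_\parambound, B_\parambound, \alpha_\parambound)$ holds with probability tending to $1$ as $\NODEINPUT{} \to \infty$, so by a union bound over the finite set $\HONEST$ they all hold simultaneously with high probability; from now on I work on this event. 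Integrating the gradient inequality of $\event$ along the segment from $\trueparamsub{\honest}$ in a unit direction $u$ (using that $\localloss{\honest}$ is differentiable, since $\lossperinput$ is) yields a value form: for every $\param$,
\begin{equation}
\localloss{\honest}(\param, \data{\honest}) - \localloss{\honest}(\trueparamsub{\honest}, \data{\honest}) \geq \Psi\bigl(\norm{\param - \trueparamsub{\honest}}{2}\bigr), \quad \Psi(r) \triangleq \int_0^r \Bigl( A_\parambound \NODEINPUT{} \min\{1, s\} - B_\parambound \NODEINPUT{}^{\alpha_\parambound} \Bigr)\, ds .
\end{equation}
Here $\Psi$ is convex, $\Psi(0) = 0$, $\min_{r \geq 0} \Psi(r) = - B_\parambound^2 \NODEINPUT{}^{2\alpha_\parambound - 1}/(2 A_\parambound)$, and $\Psi(r) \geq (A_\parambound \NODEINPUT{} - B_\parambound \NODEINPUT{}^{\alpha_\parambound})(r - 1)$ for $r \geq 1$.

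\textbf{Step 2 (the reference configuration).} Let $(\optcommon, \optimumfamily{})$ be a global minimizer of $\globalloss{}$, which exists because $\lossperinput \geq 0$, $\regularization \geq 0$, and the terms $\reglocalweight{} \norm{\paramsub{\node}}{2}^2$ together with $\regularization$ make $\globalloss{}$ coercive. I compare its value with that at the configuration $\common = 0$, $\paramsub{\honest} = \trueparamsub{\honest}$ for $\honest \in \HONEST$, $\paramsub{\node} = 0$ for $\node \notin \HONEST$: optimality gives $\globalloss{}(\optcommon, \optimumfamily{}, \datafamily{}) \leq \globalloss{}\bigl(0, (\trueparamfamily_{\HONEST}, \vec{0}), \datafamily{}\bigr)$. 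Bounding the left side below by $\sum_{\honest \in \HONEST} \localloss{\honest}(\optimumsub{\honest}, \data{\honest})$ (every other summand of $\globalloss{}$ is nonnegative) and expanding the right side, the $\NODEINPUT{}$-dependent quantities $\localloss{\honest}(\trueparamsub{\honest}, \data{\honest})$ appear on both sides and cancel, leaving
\begin{equation}
\sum_{\honest \in \HONEST} \Bigl( \localloss{\honest}(\optimumsub{\honest}, \data{\honest}) - \localloss{\honest}(\trueparamsub{\honest}, \data{\honest}) \Bigr) \leq C_1, \quad C_1 \triangleq \sum_{\honest \in \HONEST} \regularization(0, \trueparamsub{\honest}) + \sum_{\node \notin \HONEST} \Bigl( \localloss{\node}(0, \data{\node}) + \regularization(0, 0) \Bigr),
\end{equation}
where $C_1$ depends only on $\trueparamfamily_{\HONEST}$ and $\datafamily{-\HONEST}$, not on $\NODEINPUT{}$.

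\textbf{Step 3 (conclude).} Substituting Step 1 into Step 2 gives $\sum_{\honest \in \HONEST} \Psi(\norm{\optimumsub{\honest} - \trueparamsub{\honest}}{2}) \leq C_1$. Lower-bounding all but one summand by $\min \Psi = -B_\parambound^2 \NODEINPUT{}^{2\alpha_\parambound - 1}/(2 A_\parambound)$ and then using the linear lower bound on $\Psi$ over $[1, \infty)$, one finds that for any $\honest \in \HONEST$ with $\norm{\optimumsub{\honest} - \trueparamsub{\honest}}{2} \geq 1$,
\begin{equation}
\norm{\optimumsub{\honest} - \trueparamsub{\honest}}{2} \leq 1 + \frac{C_1 + \card{\HONEST}\, B_\parambound^2 \NODEINPUT{}^{2\alpha_\parambound - 1} / (2 A_\parambound)}{A_\parambound \NODEINPUT{} - B_\parambound \NODEINPUT{}^{\alpha_\parambound}} .
\end{equation}
Since $\alpha_\parambound < 1$, the fraction tends to $0$ as $\NODEINPUT{} \to \infty$, so for $\NODEINPUT{}$ large enough the right side is $\leq 2$; combined with the trivial bound in the complementary case, $\norm{\optimumsub{\honest} - \trueparamsub{\honest}}{2} \leq 2$ for all $\honest \in \HONEST$, whence $\optimumfamily_{\HONEST}$ lies in the compact set $\prod_{\honest \in \HONEST} \ball(0, K_{\HONEST} + 2)$, independently of $\NODEINPUT{}$, with high probability for all $\NODEINPUT{}$ large enough. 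The main obstacle is conceptual rather than computational: one has to see that no control of the global model $\optcommon$ is needed (it may be dragged arbitrarily far by the possibly adversarial datasets $\datafamily{-\HONEST}$), because choosing $\common = 0$ in the reference configuration already makes the $\NODEINPUT{}$-divergent term $\sum_{\honest} \localloss{\honest}(\trueparamsub{\honest}, \data{\honest})$ cancel; the only quantitative care is to keep the sub-linear slack $\NODEINPUT{}^{2\alpha_\parambound - 1}$ of $\Psi$ negligible against $A_\parambound \NODEINPUT{}$, which is exactly what $\alpha_\parambound < 1$ ensures.
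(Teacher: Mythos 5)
Your proof is correct and follows essentially the same route as the paper's: integrate the gradient-PAC* inequality along the segment from $\trueparamsub{\honest}$ to get a value lower bound on $\localloss{\honest}(\param,\data{\honest})-\localloss{\honest}(\trueparamsub{\honest},\data{\honest})$, compare the global optimum against the reference configuration $(\common,\paramfamily)=(0,(\trueparamfamily_{\HONEST},\vec{0}))$ whose honest-data terms cancel, and conclude $\norm{\optimumsub{\honest}-\trueparamsub{\honest}}{2}\leq 2$ for $\NODEINPUT{}$ large enough. Your only (welcome) refinement is to aggregate over all honest users and explicitly absorb the negative slack $-B_\parambound^2\NODEINPUT{}^{2\alpha_\parambound-1}/(2A_\parambound)$ from the remaining honest terms, a point the paper's proof leaves implicit.
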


\begin{proof}
  Denote $L^0  \triangleq \globalloss{} (0, (\trueparamfamily_{H},0_{-\HONEST}), (\emptyset,\datafamily{-\HONEST}))$.
  Essentially, we will show that, if $\optimumfamily_{\HONEST}$ is too far from $\trueparamfamily_{\HONEST}$, then the loss will take values strictly larger than $L^0$.

Assumption \ref{ass:unbiased} implies the existence of an event $\mathcal E$ that occurs with probability at least $\Probability_0 \triangleq \Probability(K_\HONEST, \NODEINPUT{})^{\card{\HONEST}}$, under which, for any $\paramsub{\honest} \in \setR^d$, we have
\begin{equation}
\label{eq:week_gradient-pac}
    \left( \paramsub{\honest} - \trueparamsub{\honest} \right)^T \nabla \independentloss{\honest} \left(\paramsub{\honest} \right)
    \geq A_{K_\HONEST} \NODEINPUT{} \min \left\lbrace \norm{\paramsub{\honest}-\trueparamsub{\honest}}{2}, \norm{\paramsub{\honest}-\trueparamsub{\honest}}{2}^2 \right\rbrace - B_{K_\HONEST} \NODEINPUT{}^\alpha \norm{\paramsub{\honest} - \trueparamsub{\honest}}{2},
\end{equation}
which implies
\begin{equation}
    \unitvector{\left( \paramsub{\honest} - \trueparamsub{\honest} \right)}^T \nabla \independentloss{\honest} \left(\paramsub{\honest} \right)
    \geq A_{K_\HONEST} \NODEINPUT{} \min \left\lbrace 1, \norm{\paramsub{\honest}-\trueparamsub{\honest}}{2} \right\rbrace - B_{K_\HONEST} \NODEINPUT{}^\alpha.
\end{equation}
Note also that $P_0 \rightarrow 1$ as $\NODEINPUT{} \rightarrow \infty$.
We now integrate both sides over the line segment from $\trueparamsub{\honest}$ to $\paramsub{\honest}$. The fundamental theorem of calculus for line integrals then yields

\begin{align}
&\independentloss{\honest} \left(\paramsub{\honest} \right) - \independentloss{\honest} \left(\trueparamsub{\honest} \right)=
  \norm{\paramsub{\honest}-\trueparamsub{\honest}}{2} \int_{t=0}^1 \unitvector{\left( \paramsub{\honest} - \trueparamsub{\honest} \right)}^T \nabla \independentloss{} \left(\trueparamsub{\honest}+t(\paramsub{\honest}-\trueparamsub{\honest}) \right) dt \\
  &\geq \norm{\paramsub{\honest}-\trueparamsub{\honest}}{2} \int_{t=0}^1 \left( A_{K_\HONEST} \NODEINPUT{} \min \left\lbrace 1, t\norm{\paramsub{\honest}-\trueparamsub{\honest}}{2} \right\rbrace - B_{K_\HONEST} \NODEINPUT{}^\alpha \right) dt\\
  &=  \norm{\paramsub{\honest}-\trueparamsub{\honest}}{2} \int_{t=0}^1 \left( A_{K_\HONEST} \NODEINPUT{} \min \left\lbrace 1, t\norm{\paramsub{\honest}-\trueparamsub{\honest}}{2} \right\rbrace \right) dt - B_{K_\HONEST} \NODEINPUT{}^\alpha \norm{\paramsub{\honest}-\trueparamsub{\honest}}{2}.
\end{align}
Now, if $\norm{\paramsub{\honest}-\trueparamsub{\honest}}{2}>2$, we then have
\begin{align}
\independentloss{\honest} \left(\paramsub{\honest} \right) - \independentloss{\honest} \left(\trueparamsub{\honest} \right) &\geq \left(\frac{ A_{K_\HONEST} \NODEINPUT{}}{2} - B_{K_\HONEST} \NODEINPUT{}^\alpha\right) \norm{\paramsub{\honest}-\trueparamsub{\honest}}{2}\\
&\geq A_{K_\HONEST} \NODEINPUT{} - 2 B_{K_\HONEST} \NODEINPUT{}^\alpha.
\end{align}
Now for $\NODEINPUT{} > \NODEINPUT{1} \triangleq \max \set{{2L^0}/{A_{K_\HONEST}},(4B_{K_\HONEST}/A_{K_\HONEST})^\frac{1}{1-\alpha}}$, we have
\begin{equation}
    \independentloss{\honest} \left(\paramsub{\honest} \right) - \independentloss{\honest} \left(\trueparamsub{\honest} \right) > L^0.
\end{equation}
This implies that if $\norm{\paramsub{\honest}-\trueparamsub{\honest}}{2}>2$ for any $\honest \in \HONEST$, then we have
\begin{equation}
  \globalloss{} (0, (\trueparamfamily_{\HONEST},0_{-\HONEST}), \datafamily{}) < \globalloss{} (\common, (\paramfamily_{\HONEST},\paramfamily_{-\HONEST}), \datafamily{}),
\end{equation}
regardless of $\common$ and $\paramsub{-\HONEST}$. Therefore, we must have $\norm{\trueparamsub{\honest}-\optimumsub{\honest}}{2}\leq 2$.
Such inequalities describe a bounded closed subset of $\setR^{d \times \HONEST}$, which is thus compact.
\end{proof}

\begin{lemma}
\label{lemma:bounded_global_optimum}
  Assume that $\regularization(\common, \param) \rightarrow \infty$ as $\norm{\common - \param}{2} \rightarrow \infty$,
  and that $\norm{\trueparamsub{\honest}-\optimumsub{\honest}}{2}\leq 2$ for all honest users $\honest \in \HONEST$.
  Then $\optcommon$ must lie in a compact subset of $\setR^{d}$ that does not depend on $\NODEINPUT{}$.
\end{lemma}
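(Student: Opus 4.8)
The plan is to combine the coercivity of $\regularization$ in $\norm{\common - \param}{2}$ with the optimality of the global minimum $(\optcommon, \optimumfamily{})$ of $\globalloss{}(\cdot,\cdot,\datafamily{})$, by comparing the global loss at $(\optcommon, \optimumfamily{})$ against a reference configuration chosen so as to kill every $\NODEINPUT{}$-dependent quantity. If $\HONEST = \emptyset$ the statement is vacuous, so I fix some $\honest_0 \in \HONEST$; by the hypothesis $\norm{\trueparamsub{\honest} - \optimumsub{\honest}}{2} \leq 2$, every honest local model satisfies $\norm{\optimumsub{\honest}}{2} \leq K_\HONEST + 2$, where $K_\HONEST \triangleq \max_{\honest \in \HONEST}\norm{\trueparamsub{\honest}}{2}$, so the honest local models already lie in a fixed compact set independent of $\NODEINPUT{}$.

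The reference configuration keeps the honest local models at $\optimumfamily_{\HONEST}$, zeroes the global model, and zeroes every non-honest local model, i.e.\ I would use $\globalloss{}(\optcommon, \optimumfamily{}, \datafamily{}) \leq \globalloss{}(0, (\optimumfamily_{\HONEST}, 0_{-\HONEST}), \datafamily{})$. Expanding both sides via (\ref{eq:loss_global}), the honest local-loss terms $\sum_{\honest \in \HONEST}\localloss{\honest}(\optimumsub{\honest}, \data{\honest})$ --- the only quantities that depend on $\NODEINPUT{}$ --- appear identically on both sides and cancel. Then, using nonnegativity of the local losses and of $\regularization$ (as in Lemma~\ref{lemma:bounded_optimum}, and as holds for the $\ell_2^2$, $\ell_2$ and smooth-$\ell_2$ regularizations), I would discard all remaining nonnegative terms on the left except one, to get
\begin{equation*}
  \regularization(\optcommon, \optimumsub{\honest_0})
  \;\leq\; \sum_{\node \notin \HONEST}\localloss{\node}(0, \data{\node})
  + \sum_{\honest \in \HONEST}\regularization(0, \optimumsub{\honest})
  + \sum_{\node \notin \HONEST}\regularization(0, 0)
  \;\triangleq\; \bar C .
\end{equation*}
Here $\bar C$ is bounded independently of $\NODEINPUT{}$: the datasets $\datafamily{-\HONEST}$ are fixed, $\regularization(0,0)$ is a constant, and $\regularization(0, \optimumsub{\honest}) \leq \sup_{\norm{v}{2} \leq K_\HONEST + 2}\regularization(0, v) < \infty$ by continuity of $\regularization$ and the bound on $\optimumsub{\honest}$.

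To finish, since $\regularization(\common, \param) \to \infty$ as $\norm{\common - \param}{2} \to \infty$, there is a radius $R$ depending only on $\bar C$ --- hence not on $\NODEINPUT{}$ --- such that $\regularization(\optcommon, \optimumsub{\honest_0}) \leq \bar C$ forces $\norm{\optcommon - \optimumsub{\honest_0}}{2} \leq R$; together with $\norm{\optimumsub{\honest_0}}{2} \leq K_\HONEST + 2$ this places $\optcommon$ in the closed ball $\ball(0, R + K_\HONEST + 2)$, which is compact and independent of $\NODEINPUT{}$. The main point to get right is the choice of reference configuration: the non-honest optimal local models $\optimumsub{\node}$ ($\node \notin \HONEST$) are \emph{not} a priori bounded uniformly in $\NODEINPUT{}$, so one cannot simply keep the whole optimum $\optimumfamily{}$ fixed and vary only $\common$ (that would leave $\regularization(0, \optimumsub{\node})$ uncontrolled on the right-hand side); zeroing the non-honest coordinates on the comparison side avoids this, at the harmless cost of adding the fixed terms $\localloss{\node}(0, \data{\node})$, while the potentially $\NODEINPUT{}$-growing honest local-loss terms cancel exactly.
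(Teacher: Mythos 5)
Your proof is correct, and it follows the same overall skeleton as the paper's argument (optimality comparison against a fixed reference configuration, nonnegativity to isolate one regularization term, coercivity of $\regularization$ to bound $\norm{\optcommon - \optimumsub{\honest_0}}{2}$, then the triangle inequality with the radius-$2$ bound from the previous lemma). The one genuine difference is the choice of reference point: the paper compares against $(0, (\trueparamfamily_{\HONEST}, 0_{-\HONEST}))$ and reuses the constant $L^0$ introduced in the proof of Lemma~\ref{lemma:bounded_optimum}, asserting that $\regularization(\optcommon, \optimumsub{\honest'}) \geq L^0 + 1$ would contradict optimality; you instead compare against $(0, (\optimumfamily_{\HONEST}, 0_{-\HONEST}))$, keeping the honest local models at their optimized values. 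Your choice is the sharper one: it makes the honest local-loss terms $\localloss{\honest}(\optimumsub{\honest}, \data{\honest})$ cancel exactly on both sides, so the resulting bound $\bar C$ is manifestly independent of $\NODEINPUT{}$, whereas the paper's comparison implicitly requires controlling the difference $\sum_{\honest \in \HONEST} \left( \independentloss{\honest}(\trueparamsub{\honest}) - \independentloss{\honest}(\optimumsub{\honest}) \right)$, which is not obviously $O(1)$ in $\NODEINPUT{}$ and is left unaddressed in the paper's one-line justification. The price you pay is mild and fully handled: the extra terms $\localloss{\node}(0, \data{\node})$ for $\node \notin \HONEST$ are fixed since $\datafamily{-\HONEST}$ is fixed, and $\regularization(0, \optimumsub{\honest})$ is bounded uniformly via continuity of $\regularization$ on the ball of radius $K_\HONEST + 2$. (Both you and the paper read the coercivity hypothesis as uniform in the second argument, i.e.\ the radius $R$ depends only on the threshold; this is automatic for the $\ell_2^2$, $\ell_2$ and smooth-$\ell_2$ regularizations, which are functions of $\common - \param$, so it is not a gap relative to the paper.)
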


\begin{proof}
Consider an honest user $\honest'$.
Given our assumption on $\regularization \rightarrow \infty$, we know that there exists $D_{K_\HONEST}$ such that if $\norm{\common-\optimumsub{\honest'}}{2}\geq D_{K_\HONEST}$, then $\regularization(\common,\optimumsub{\honest'}) \geq L^0 + 1$.
Thus any global optimum $\optcommon$ must satisfy
$\norm{\optcommon-\trueparamsub{\honest'}}{2}
\leq \norm{\optcommon-\optimumsub{\honest'}}{2} + \norm{\optimumsub{\honest'} -\trueparamsub{\honest'}}{2}
\leq D_{K_\HONEST} + 2$.
\end{proof}

\subsection{Proof of Lemma \ref{th:pac}}

\begin{proof}[Proof of Lemma \ref{th:pac}]
Fix $\varepsilon, \delta >0$.
We want to show the existence of some value of $\NODEINPUT{} (\varepsilon, \delta, \datafamily{-\HONEST}, \trueparamfamily{})$ that will guarantee $(\varepsilon, \delta)$-locally PAC* learning for honest users.

By lemmas \ref{lemma:bounded_optimum} and \ref{lemma:bounded_global_optimum}, we know that the set $C$ of possible values for $(\optcommon,\optimumfamily_{\HONEST})$ is compact. Now, we define
\begin{equation}
  E_{K_\HONEST} \triangleq \max_{(\common,\param) \in C}  \norm{\nabla_{\param} \regularization (\common,\param)}{2}
\end{equation}
the maximum of the norm of achievable gradients at the optimum. We know this maximum exists since $C$ is compact.

Using the optimality of $(\optcommon, \optimumfamily{})$, for all $\honest \in \HONEST$, we have
\begin{align}
    0&\in (\optimumsub{\honest} - \trueparamsub{\honest})^T \nabla_{\paramsub{\honest}} \globalloss{} (\optcommon, \optimumfamily{}) \\
    &= (\optimumsub{\honest} - \trueparamsub{\honest})^T \nabla \independentloss{\honest}(\optimumsub{\honest})
    + (\optimumsub{\honest} - \trueparamsub{\honest})^T \nabla_{\paramsub{\honest}} \regularization(\optcommon,\optimumsub{\honest}) \\
    &\geq (\optimumsub{\honest} - \trueparamsub{\honest})^T \nabla \independentloss{\honest}(\optimumsub{\honest})  -  \norm{\optimumsub{\honest} - \trueparamsub{\honest}}{2} \norm{\nabla_{\paramsub{\honest}} \regularization(\optcommon,\optimumsub{\honest})}{2}\\
    & \geq (\optimumsub{\honest} - \trueparamsub{\honest})^T \nabla \independentloss{\honest}(\optimumsub{\honest})  - E_{K_\HONEST} \norm{\optimumsub{\honest} - \trueparamsub{\honest}}{2}.
\end{align}
We now apply assumption \ref{ass:unbiased}  for $\param = \optimumsub{\honest}$ (for $\honest \in \HONEST$). Thus, there exists some other event $\event'$ with probability at least $\Probability_0$, under which, for all $\honest \in \HONEST$, we have
\begin{equation}
  0 \geq A_{K_\HONEST} \NODEINPUT{} \min \left\lbrace \norm{\optimumsub{\honest} - \trueparamsub{\honest}}{2}, \norm{\optimumsub{\honest} - \trueparamsub{\honest}}{2}^2 \right\rbrace - B_{K_\HONEST} \NODEINPUT{}^\alpha \norm{\optimumsub{\honest} - \trueparamsub{\honest}}{2} - E_{K_\HONEST} \norm{\optimumsub{\honest} - \trueparamsub{\honest}}{2}.
\end{equation}

Now if $\NODEINPUT{}>\NODEINPUT{2} \triangleq \max \set{2E_{K_\HONEST}/A_{K_\HONEST},(2B_{K_\HONEST}/A_{K_\HONEST})^\frac{1}{1-\alpha}}$ this inequality cannot hold for $\norm{\optimumsub{\honest} - \trueparamsub{\honest}}{2} \geq 1$. Therefore, for $\NODEINPUT{} > \NODEINPUT{2}$, we have $\norm{\optimumsub{\honest} - \trueparamsub{\honest}}{2}<1$, and thus,
\begin{equation}
  0\geq  A_{K_\HONEST} \NODEINPUT{}  \norm{\optimumsub{\honest} - \trueparamsub{\honest}}{2}^2 - B_{K_\HONEST} \NODEINPUT{}^\alpha \norm{\optimumsub{\honest} - \trueparamsub{\honest}}{2} - E_{K_\HONEST} \norm{\optimumsub{\honest} - \trueparamsub{\honest}}{2}
\end{equation}
and thus,
\begin{equation}
    \norm{\optimumsub{\honest} - \trueparamsub{\honest}}{2} \leq \frac{B_{K_\HONEST}\NODEINPUT{}^\alpha+E_{K_\HONEST}}{A_{K_\HONEST} \NODEINPUT{}}.
\end{equation}
Now note that $\probability{\mathcal E \wedge \mathcal E'} = 1 - \probability{\neg \mathcal E \vee \neg \mathcal E'} \geq 1-\probability{\neg \mathcal E} - \probability{\neg \mathcal E'} = 2\Probability_0 -1$.
It now suffices to consider $\NODEINPUT{}$ larger than $\NODEINPUT{2}$ and large enough so that $\Probability(K_\HONEST, \NODEINPUT{})^{\card{\HONEST}} \geq 1-\delta/2$ (whose existence is guaranteed by Assumption \ref{ass:unbiased}, and which guarantees $2\Probability_0 - 1 \geq 1-\delta$)
and so that $\frac{B_{K_\HONEST}\NODEINPUT{}^\alpha+E_{K_\HONEST}}{A_{K_\HONEST} \NODEINPUT{}} \leq \varepsilon$ to obtain the theorem.
\end{proof}

\section{Convergence of \CounterGradientAttack{} Against  \texorpdfstring{$\ell_2^2$}{l2}}
\label{app:cga_vs_l22}

To write our proof, we define $\globalloss_{-\strategicnode}^\common : \setR^d \rightarrow \setR$ by
\begin{align}
    \globalloss_{-\strategicnode}^\common (\common) 
    &\triangleq \inf_{\paramfamily{}} 
    \set{\globalloss{} (\common, \paramfamily{}, \datafamily{}) -
    \localloss{\strategicnode} (\paramsub{\strategicnode}, \data{\strategicnode})
    - \regularization (\common, \paramsub{\strategicnode})} \\
    &= \inf_{\paramfamily{}} \sum_{\node \neq \strategicnode} \localloss{} (\paramsub{\node}, \data{\node}) + \regweightsub{} \sum_{\node \neq \strategicnode} \norm{\common - \paramsub{\node}}{2}^2.
\end{align}
In other words, it is the loss when local models are optimized, and when the data of strategic user $\strategicnode$ are removed.

\begin{lemma}
Assuming $\ell_2^2$ regularization and convex loss-per-input functions $\lossperinput$, for any datasets $\datafamily{}$, $\globalloss$ is strongly convex.
As a result, so is $\globalloss_{-\strategicnode}^\common$.
\end{lemma}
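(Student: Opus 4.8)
The plan is to first show that $\globalloss$ itself is (globally) strongly convex, and then obtain strong convexity of $\globalloss_{-\strategicnode}^\common$ as a partial infimum via Lemma~\ref{lemma:infimum_strongly_convex}. Recall that with $\ell_2^2$ regularization we have $\regularization(\common,\param) = \regularization_0(\common-\param)$ with $\regularization_0(z) = \regweightsub{}\norm{z}{2}^2$, and the local losses take the form $\localloss{\node}(\paramsub{\node},\data{\node}) = \reglocalweight{}\norm{\paramsub{\node}}{2}^2 + \sum_{\varx\in\data{\node}}\lossperinput(\paramsub{\node},\varx)$ with $\reglocalweight{} > 0$. Since each $\lossperinput$ is convex, $\globalloss$ minus the quadratic part $\reglocalweight{}\sum_{\node}\norm{\paramsub{\node}}{2}^2 + \regweightsub{}\sum_{\node}\norm{\common - \paramsub{\node}}{2}^2$ is convex, so it suffices to prove that this quadratic is $\kappa$-strongly convex for some $\kappa > 0$, i.e.\ that its Hessian is $\succeq \kappa\identitymatrix$.

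This is exactly the computation performed in the proof of Lemma~\ref{lemma:smooth_globalloss}, the only difference being that for $\ell_2^2$ the function $\regularization_0(z) = \regweightsub{}\norm{z}{2}^2$ is \emph{globally} $2\regweightsub{}$-strongly convex rather than merely locally so, which upgrades the conclusion from local to global strong convexity. Concretely, I would argue that it already suffices for $\reglocalweight{}\sum_{\node}\norm{\paramsub{\node}}{2}^2 + \regweightsub{}\norm{\common - \paramsub{1}}{2}^2$ to be strongly convex (the loss-per-input terms and the remaining $\regweightsub{}\norm{\common-\paramsub{\node}}{2}^2$, $\node\neq 1$, being convex), and handle this by the same Young's-inequality split as in Lemma~\ref{lemma:smooth_globalloss}: a test vector $(\varw_0,\varw_1,\dots)$ gives a quadratic form $2\regweightsub{}\norm{\varw_0}{2}^2 - 4\regweightsub{}\varw_0^T\varw_1 + 2(\reglocalweight{}+\regweightsub{})\norm{\varw_1}{2}^2 + 2\reglocalweight{}\sum_{\node\neq 1}\norm{\varw_{\node}}{2}^2$, and bounding $-4\regweightsub{}\varw_0^T\varw_1 \geq -2\regweightsub{}(\alpha\norm{\varw_0}{2}^2 + \alpha^{-1}\norm{\varw_1}{2}^2)$ with $\alpha$ chosen in the nonempty interval $\left(\regweightsub{}/(\reglocalweight{}+\regweightsub{}),\,1\right)$ makes every resulting coefficient strictly positive; their minimum is a valid $\kappa$.

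For the second claim, observe that the bracketed expression in the definition of $\globalloss_{-\strategicnode}^\common$ does not involve $\paramsub{\strategicnode}$ and is literally the global loss of the personalized-learning instance restricted to users $[\NODE]\setminus\set{\strategicnode}$, under the same $\ell_2^2$ regularization and the same convex $\lossperinput$. Hence the first part applies to it: the map $\Phi(\common,\paramfamily_{-\strategicnode}) \triangleq \sum_{\node\neq\strategicnode}\localloss{\node}(\paramsub{\node},\data{\node}) + \regweightsub{}\sum_{\node\neq\strategicnode}\norm{\common-\paramsub{\node}}{2}^2$ is $\kappa'$-strongly convex in $(\common,\paramfamily_{-\strategicnode})$ for some $\kappa' > 0$ (assuming, as in the attack scenario, there is at least one user besides $\strategicnode$). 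Since $\globalloss_{-\strategicnode}^\common(\common) = \inf_{\paramfamily_{-\strategicnode}}\Phi(\common,\paramfamily_{-\strategicnode})$, Lemma~\ref{lemma:infimum_strongly_convex} immediately yields that $\globalloss_{-\strategicnode}^\common$ is well-defined and $\kappa'$-strongly convex on $\setR^d$.

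The one point that needs care — and where a careless argument fails — is that $\regweightsub{}\sum_{\node}\norm{\common-\paramsub{\node}}{2}^2$ is by itself only positive semidefinite, the common-drift direction $\common = \paramsub{1} = \dots = \paramsub{\NODE}$ lying in its kernel; so strong convexity of $\globalloss$ genuinely uses $\reglocalweight{} > 0$ (convexity of $\lossperinput$ alone is not enough, as $\lossperinput\equiv 0$ shows), and this is also precisely why the statement is for $\ell_2^2$ — whose regularizer is globally strongly convex — rather than smooth-$\ell_2$, whose Hessian decays to $0$ at infinity and which is therefore only \emph{locally} strongly convex. Getting the two coefficients positive simultaneously is what forces the interval constraint on $\alpha$, and is the only calculation worth writing out in detail.
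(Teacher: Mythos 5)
Your proof is correct and follows essentially the same route as the paper: you decompose $\globalloss$ into convex terms plus the quadratic $\reglocalweight{}\sum_{\node}\norm{\paramsub{\node}}{2}^2 + \regweightsub{}\norm{\common-\paramsub{1}}{2}^2$, establish its (here global) strong convexity by the same Young-inequality split used in Lemma~\ref{lemma:smooth_globalloss}, and obtain strong convexity of $\globalloss_{-\strategicnode}^\common$ via Lemma~\ref{lemma:infimum_strongly_convex}. Your added care in applying the infimum lemma to the restricted loss over users $\node\neq\strategicnode$, and in flagging that the argument needs $\reglocalweight{}>0$ and at least one user besides $\strategicnode$, only makes explicit what the paper's terser proof leaves implicit.
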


\begin{proof}
Note that the global loss can be written as a sum of convex function, and of $\reglocalweight{} \sum \norm{\paramsub{\node}}{2}^2 + \norm{\common - \paramsub{1}}{2}^2$.
Using tricks similar to the proof of Lemma~\ref{lemma:smooth_globalloss}, we see that the loss is strongly convex.
The latter part of the lemma is then a straightforward application of Lemma \ref{lemma:infimum_strongly_convex}.
\end{proof}

We now move on to the proof of Theorem~\ref{th:counter_gradient_manipulates_arbitrarily}.
Note that our statement of the theorem was not fully explicit, especially about the upper bound on the constant learning rate $\learningrate{}$.
Here, we prove that it holds for $\learningrate{\iteration} = \learningrate{} \leq 1/3L$, where $L$ is a constant such that $\globalloss_{-\strategicnode}^\common$ is $L$-smooth.
The existence of $L$ is guaranteed by Lemma~\ref{lemma:minimizedLossSmooth}.

\begin{proof}[Proof of Theorem~\ref{th:counter_gradient_manipulates_arbitrarily}]
Note that by Lemma~\ref{lemma:minDerivative}, $\globalloss_{-\strategicnode}^\common$ is convex, differentiable and $L$-smooth, and $\nabla \globalloss_{-\strategicnode}^\common (\common^\iteration) = \truegradient{-\strategicnode}{\iteration}$.
For $\ell_2^2$ regularization, we have $\GRADIENT(\common) = \setR^d$ for all $\common \in \setR^d$. 
Then the minimum of \eqref{eq:counter-gradient-attack} is zero, which is obtained when $\gradient{\strategicnode}{\iteration} \triangleq
    \frac{\common^\iteration - \trueparamsub{\strategicnode}}{\learningrate{}}
    - \estimatedgradient{-\strategicnode}{\iteration} =
    \gradient{\strategicnode}{\iteration-1}
    + \frac{\common^\iteration - \trueparamsub{\strategicnode}}{\learningrate{}}
    + \frac{\common^{\iteration} - \common^{\iteration-1}}{\learningrate{}}$.
Note that
\begin{align}
    \common^{\iteration+1}
    &= \common^\iteration
        - \learningrate{} \truegradient{-\strategicnode}{\iteration}
        - \learningrate{} \gradient{\strategicnode}{\iteration} \\
    &= \common^\iteration
        - \learningrate{} \truegradient{-\strategicnode}{\iteration}
        - (\common^\iteration - \trueparamsub{\strategicnode})
        + (\common^{\iteration-1} - \common^\iteration)
        - \learningrate{} \gradient{\strategicnode}{\iteration-1} \\
    &= \trueparamsub{\strategicnode}
        - \learningrate{\iteration} (\truegradient{-\strategicnode}{\iteration} + \gradient{\strategicnode}{\iteration - 1})
        +  \learningrate{} (\truegradient{-\strategicnode}{\iteration-1} + \gradient{\strategicnode}{\iteration - 1}) \\
    &= \trueparamsub{\strategicnode} - \learningrate{} (\truegradient{-\strategicnode}{\iteration} - \truegradient{-\strategicnode}{\iteration-1}).
    \label{eq:counter-gradient-attack-on-global-model}
\end{align}
Therefore, $\common^{\iteration+1} - \common^\iteration = \learningrate{} (\truegradient{-\strategicnode}{\iteration} - \truegradient{-\strategicnode}{\iteration-1}) - \learningrate{} (\truegradient{-\strategicnode}{\iteration-1} - \truegradient{-\strategicnode}{\iteration-2})$.

Then, using the $L$-smoothness of $\globalloss_{-\strategicnode}^\common$, and denoting $u_\iteration \triangleq \norm{\common^{\iteration+1} - \common^\iteration}{2}$,
we have $u_{\iteration+1}
    \leq L \learningrate{\iteration} u_\iteration
    + L \learningrate{\iteration -1} u_{\iteration-1}$.
Now assume that $\learningrate{} \leq 1/3L$.
Then $u_{\iteration+1} \leq \frac{1}{3} (u_\iteration + u_{\iteration-1})$.
We then know that $u_{\iteration+2} \leq \frac{1}{3} (u_{\iteration+1} + u_\iteration) \leq \frac{1}{3} (\frac{1}{3} (u_\iteration + u_{\iteration-1}) + u_\iteration) = \frac{4}{9} u_\iteration + \frac{1}{9} u_{\iteration-1}$.

Now define $v_\iteration \triangleq u_\iteration + u_{\iteration-1}$.
We then have $v_{\iteration + 2} \leq u_{\iteration+2} + u_{\iteration+1} \leq \frac{7}{9} u_\iteration + \frac{4}{9} u_{\iteration-1} \leq \frac{7}{9} (u_\iteration + u_{\iteration-1}) \leq \frac{7}{9}v_{\iteration}$.
By induction, we know that $v_\iteration \leq (7/9)^{(\iteration-1)/2} \max \set{v_0, v_1} \leq (\sqrt{7}/3)^\iteration \left( (\sqrt{7}/3) \max \set{v_0, v_1} \right)$.
Thus, defining $\alpha \triangleq \sqrt{7}/3 < 1$, there exists $C>0$ such that $u_\iteration \leq v_\iteration \leq C \alpha^\iteration$.
This implies that $\sum \norm{\common^{\iteration+1} - \common^\iteration}{2} \leq \sum C \alpha^\iteration < \infty$.
Thus $\sum (\common^{\iteration+1} - \common^\iteration)$ converges, which implies the convergence of $\common^\iteration$ to a limit $\common^\infty$.
By $L$-smoothness, we know that $\truegradient{-\strategicnode}{\iteration}$ must converge too.
Taking \eqref{eq:counter-gradient-attack-on-global-model} to the limit then implies $\common^\infty = \trueparamsub{\strategicnode}$.
This shows that the strategic user achieves precisely what they want with \CounterGradientAttack{}.
It is thus optimal.
\end{proof}

\section{\CounterGradientAttack{} on MNIST}
\label{app:CGA_MNIST}

In this section, \CounterGradientAttack{} is executed against 10 honest users, each one having 6,000 randomly and data points of MNIST, drawn randomly and independently. 
\CounterGradientAttack{} is run by a strategic user whose target model $\trueparamsub{\strategicnode}$ labels 0's as 1's, 1's as 2's, and so on, until 9's as 0's.
We learn $\trueparamsub{\strategicnode}$ by relabeling the MNIST training dataset and learning from the relabeled data.
We use $\lambda = 1$, Adam optimizer and a decreasing learning rate.

\begin{figure}[h]
    \centering
    \subfloat[Using $\ell_2^2$] 
    {\includegraphics[scale=0.48]{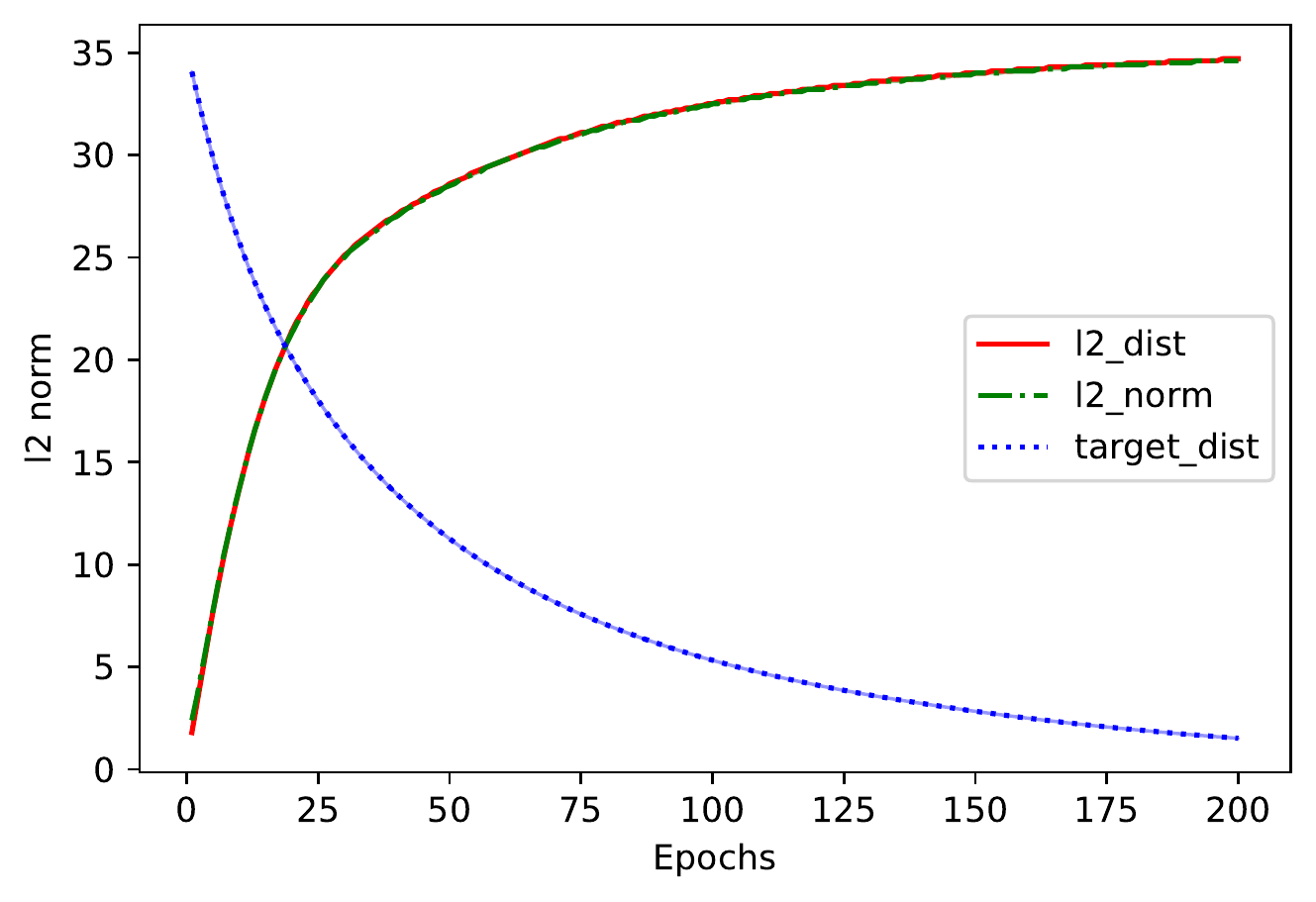}}
      \qquad
    \subfloat[Using $\ell_2$] {\includegraphics[scale=0.48]{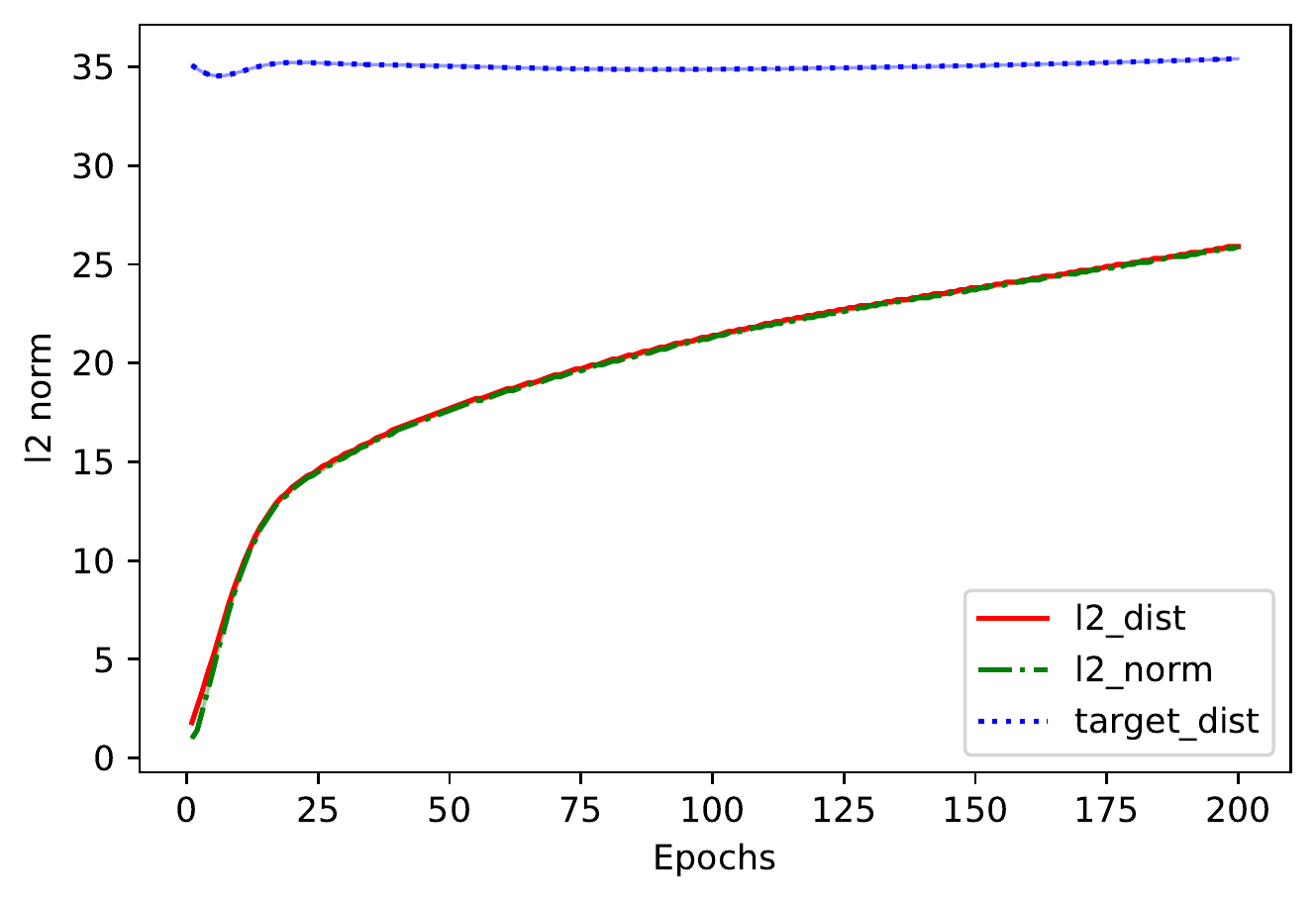}}
      \qquad
     \caption{Norm of global model, distance to initialisation and distance to target, under attack by \CounterGradientAttack{}. In particular, we see that the attack against $\ell_2^2$ is successful, as the distance between the global model and the target model goes to zero.}
     \label{fig:attack_dist}
\end{figure}

\section{Cifar-10 on VGG 13-BN Experiments}
\label{app:cifar}

We considered VGG 13-BN, which was pretrained on cifar-10 by~\cite{huy_phan_2021_4431043}.
We now assume that 10 users are given part of the cifar-10 database, while a strategic user also joins to the personalized federated gradient descent algorithm.
The strategic user's goal is to bias the global model towards a target model, which misclassifies the cifar-10 data, by reclassifying 0 into 1, 1 into 2... and 9 into 0.

\subsection{Counter-Gradient Attack}

We first show the result of performing counter-gradient attack on the last layer of the neural network.
Essentially, images are now reduced to their vector embedding, and the last layer performs a simple linear classification akin to the case of MNIST (see Appendix~\ref{app:CGA_MNIST}).

\begin{figure}[h]
    \centering
    \subfloat[Accuracy according to attacker's objective]
    {\includegraphics[scale=0.48]{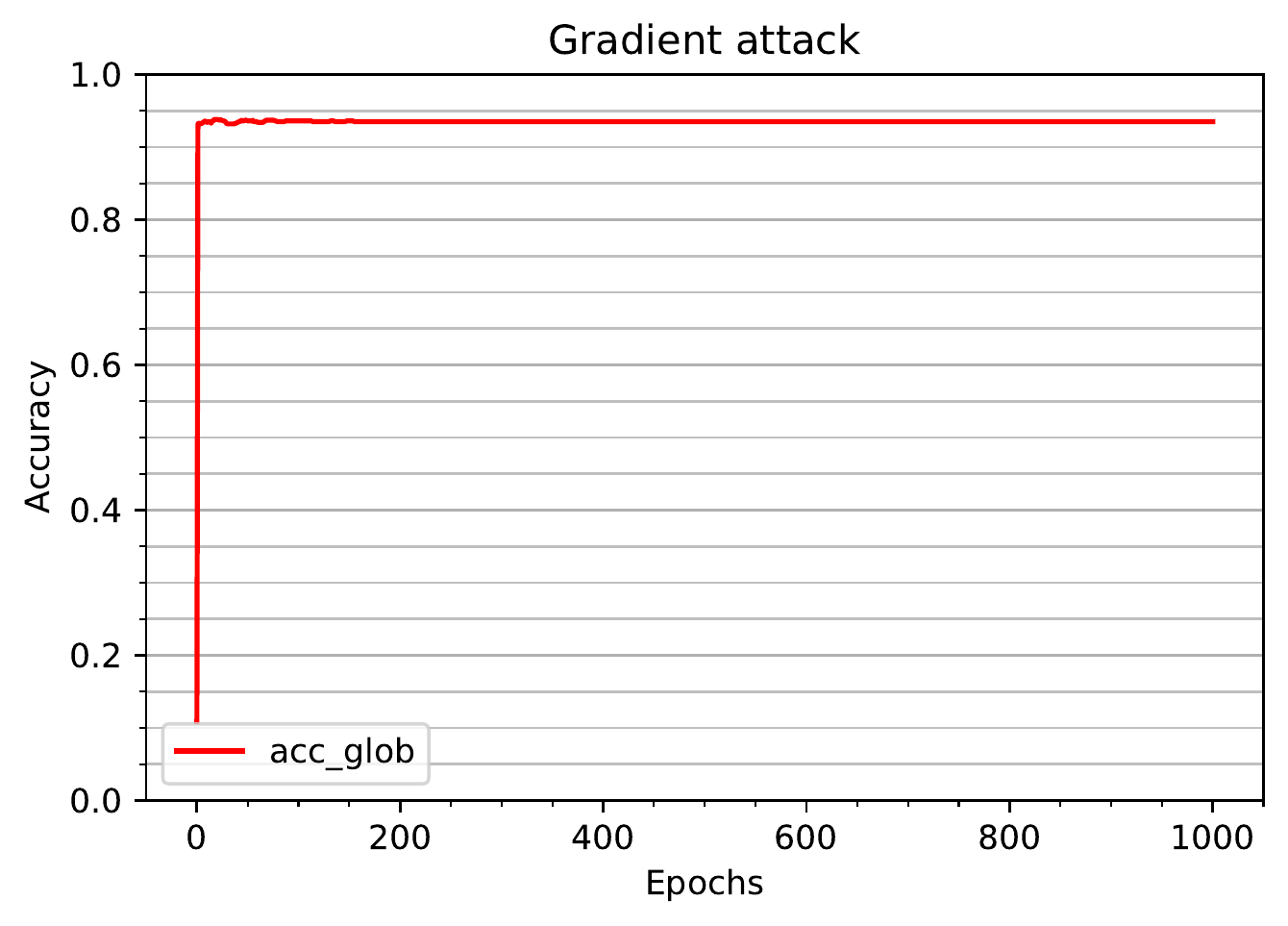}}
      \qquad
    \subfloat[Distances] {\includegraphics[scale=0.48]{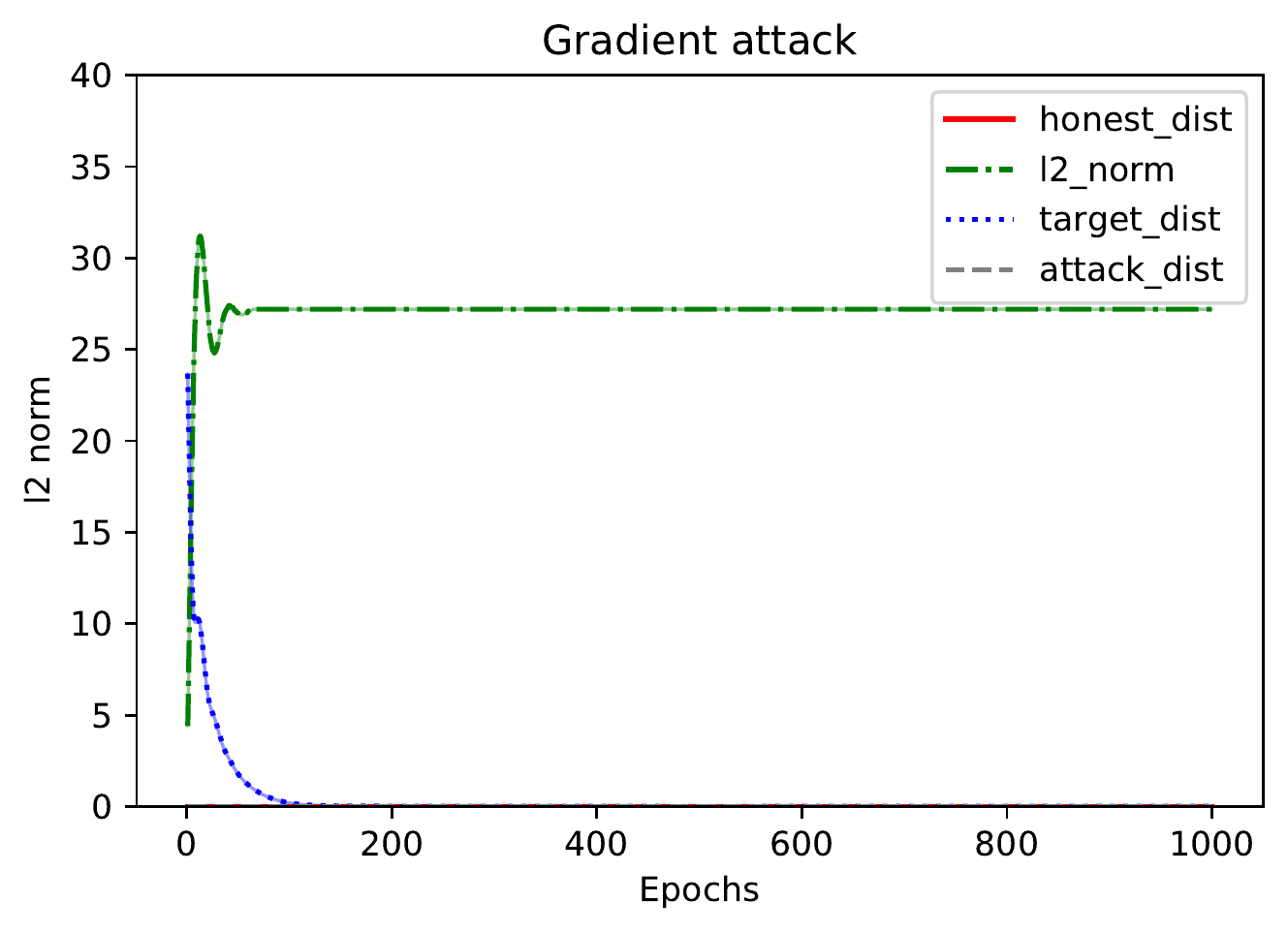}}
      \qquad
     \caption{CGA on cifar-10.}
\end{figure}

\subsection{Reconstructing a Model Attack}

Reconstructing an attack model whose effect is equivalent to the counter-gradient attack is identical to what was done in the case of MNIST (see Section~\ref{sec:data_poisoning}).

\begin{figure}[h]
    \centering
    \subfloat[Accuracy according to attacker's objective]
    {\includegraphics[scale=0.48]{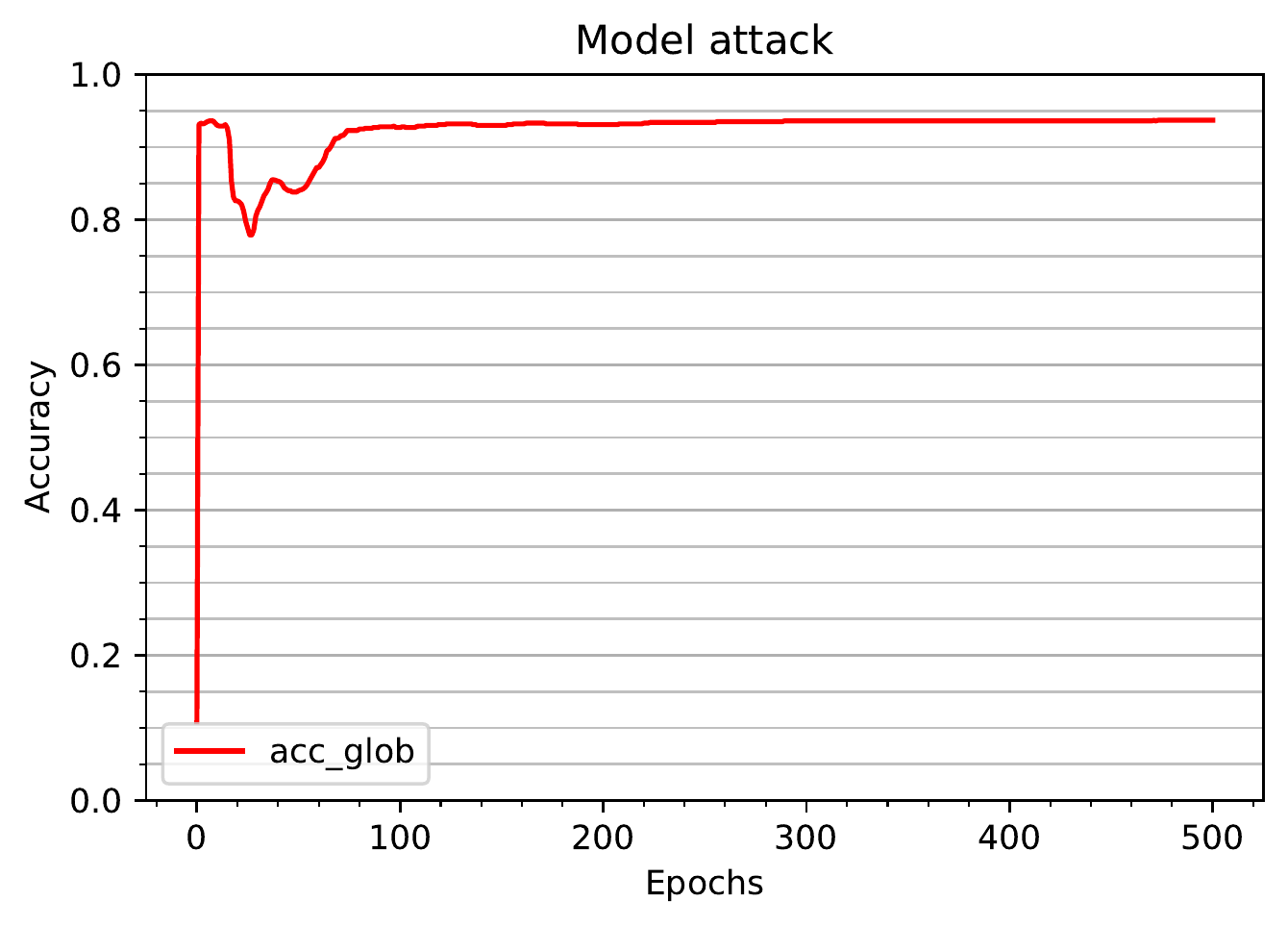}}
      \qquad
    \subfloat[Distances] {\includegraphics[scale=0.48]{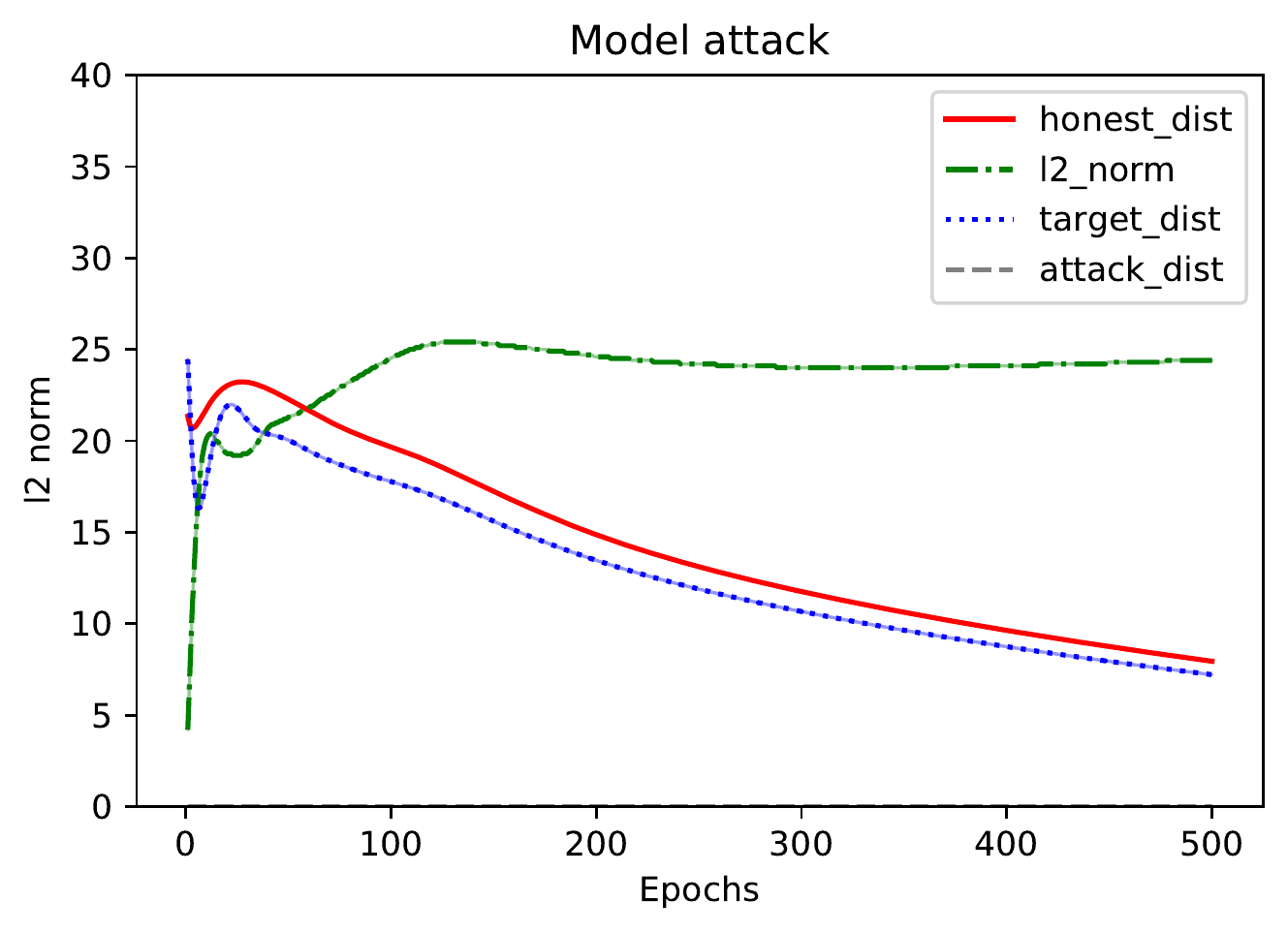}}
      \qquad
     \caption{Model attack on cifar-10.}
\end{figure}

\subsection{Reconstructing Data Poisoning}

This last step is however nontrivial.
On one hand, we could simply use the attack model to label a large number of random images.
However, this solution would likely require a large sample complexity.
For a more efficient data poisoning, we can construct vector embeddings on the indifference affine subspace $V$, as was done for MNIST in Section~\ref{sec:model_to_data_l22}.
This is what is shown below.

\begin{figure}[h]
    \centering
    \subfloat[Accuracy according to attacker's objective]
    {\includegraphics[scale=0.48]{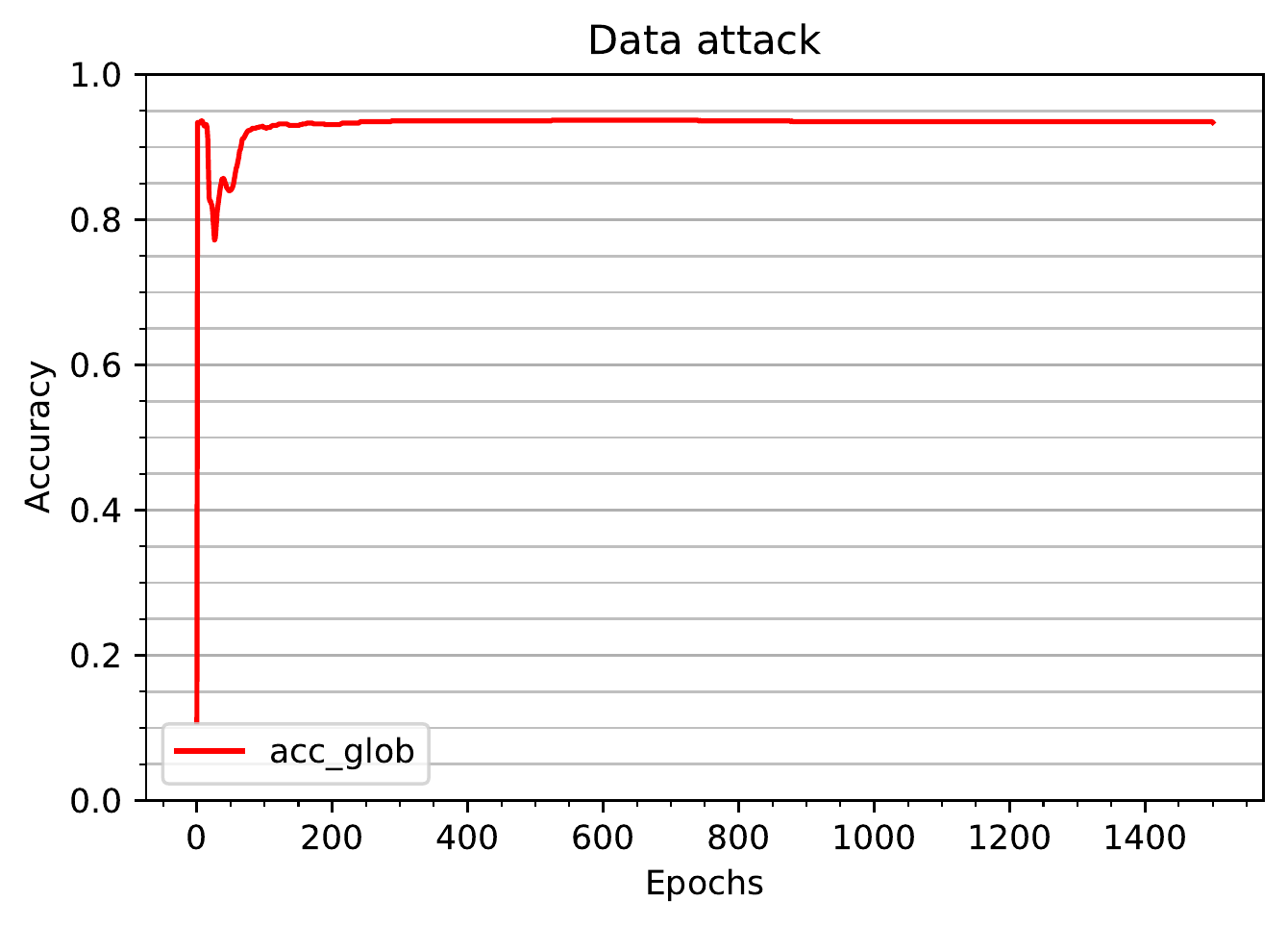}}
      \qquad
    \subfloat[Distances] {\includegraphics[scale=0.48]{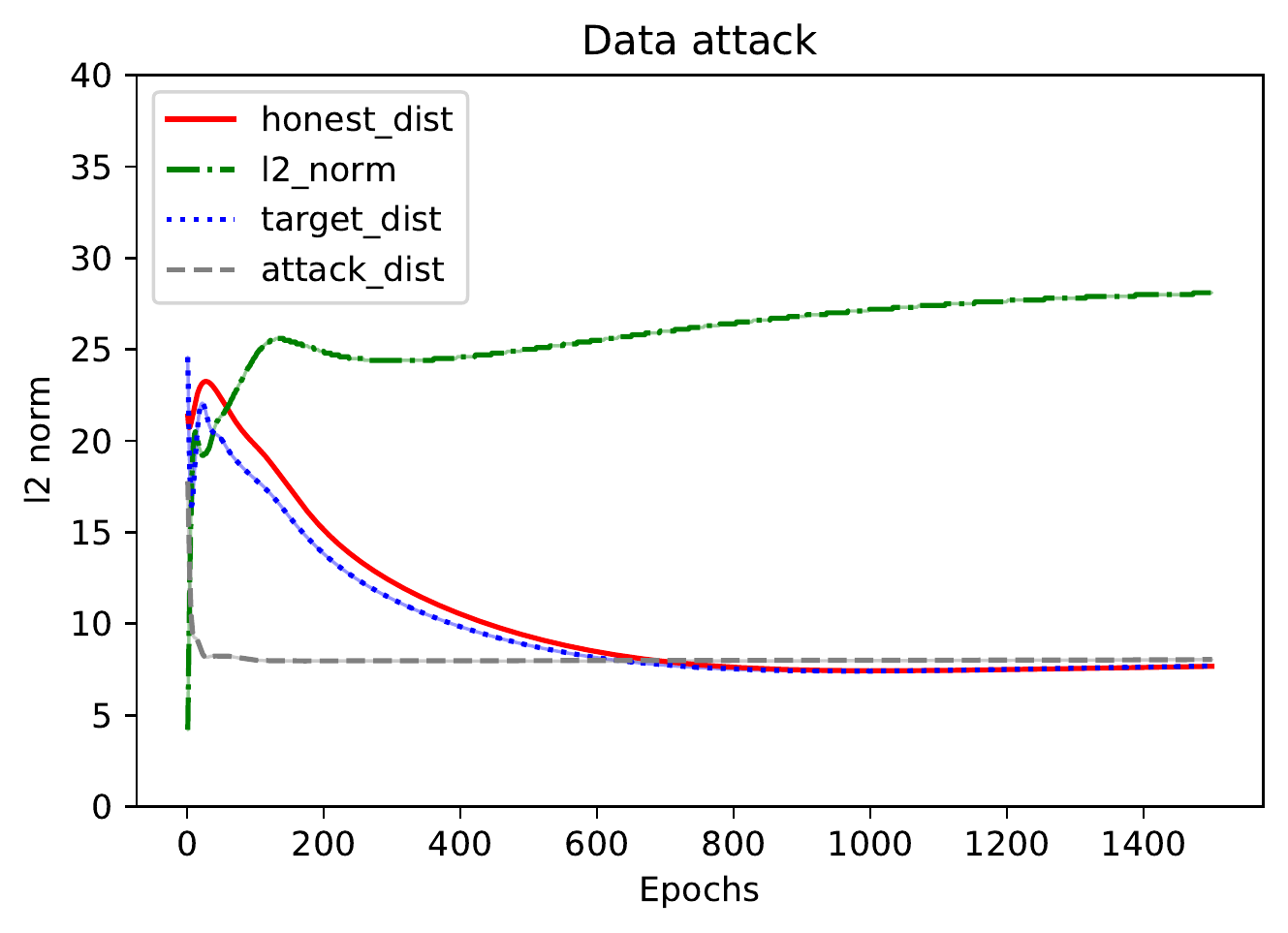}}
      \qquad
     \caption{Data poisoning on cifar-10.}
\end{figure}

We acknowledge however that this does not quite correspond to data poisoning, as it requires reporting a vector embedding and its label, rather than an actual image and its label.
The challenge is then to reconstruct an image that has a given vector embedding.
We note that, while this is not a straightforward task in general, this has been shown to be at least somewhat possible for some neural networks, especially when they are designed to be interpretable~\citep{ZeilerF14,WangWZWM019,MaiCYJ19}.

\section{Single Data Poisoning for Least Square Linear Regression}
\label{app:data_poisoning_linear_regression}

\begin{proof}[Proof of Theorem \ref{th:singleDataAttach}]
  We define the minimized loss with respect to $\common$ and without strategic user $\strategicnode$ by
  \begin{equation}
    \ModifiedLoss{-\strategicnode}^* (\common, \datafamily{-\strategicnode})
    \triangleq \min_{\paramfamily_{-\strategicnode} \in \setR^{d \times (\NODE-1)}} \set{ \sum_{\node \neq \strategicnode} \independentloss{\node} (\paramsub{\node}, \data{\node})
    + \sum_{\node \neq \strategicnode} \lambda \norm{\paramsub{\node}-\common}{2}^2 }.
  \end{equation}
  Now consider a subgradient $g \in \nabla_\common \ModifiedLoss{-\strategicnode}^* (\targetvalue, \datafamily{-\strategicnode})$ of the minimized loss at $\targetvalue$.
  For $x \triangleq \frac{-g}{2\lambda}$, then have $-g \in \nabla  \left(\regweightsub{} \norm{\varx}{2}^2\right)$.
  We then define $\strategicvote \triangleq \targetvalue - \varx$.
  \begin{align}
    0 = g - g
    &\in \nabla_\common \ModifiedLoss{-\strategicnode}^* (\targetvalue, \datafamily{-\strategicnode}) + \nabla_\common  \left(\regweightsub{} \norm{\strategicvote - \targetvalue}{2}^2\right) \\
    &=
    \nabla_\common \ModifiedLoss{\strategicnode} (\targetvalue, \optimumfamily_{-\strategicnode}(\strategicvote, \datafamily{-\strategicnode}), \strategicvote, \datafamily{-\strategicnode}),
  \end{align}
  where $\ModifiedLoss{\strategicnode}$ is defined by (\ref{eq:modified_loss}). 
  Now consider the data point $(\query{},\answer{}) = (g,g^T\strategicvote-1)$.
  For $\data{\strategicnode} = \set{(\query{},\answer{})}$, we then have $\nabla \independentloss{\strategicnode} (\strategicvote, \data{\strategicnode}) = g$, which implies
  \begin{equation}
    \nabla_{\paramsub{\strategicnode}} \globalloss{} (\targetvalue, (\strategicvote, \optimumfamily_{-\strategicnode}(\strategicvote, \datafamily{-\strategicnode}), \datafamily{}) = 0.
  \end{equation}
  Combining it all together with the uniqueness of the solution then yields
  \begin{equation}
    \argmin_{(\common,\paramfamily)} \set{\globalloss{} (\common, \paramfamily{}, \datafamily{})} = \left(\targetvalue,\left(\strategicvote,\optimumfamily_{-\strategicnode}(\strategicvote, \datafamily{-\strategicnode})\right)\right),
  \end{equation}
  which is what we wanted.
\end{proof}
\section{Data Poisoning Against Linear Classification}
\label{app:data_poisoning_linear_classification}

\subsection{Generating Efficient Poisoning Data and Initialization}

For every label $a \in \set{1, \ldots, 9}$, we define $y_a \triangleq \paramsub{a}^\spadesuit - \paramsub{0}^\spadesuit$, and $c_a \triangleq - (\paramsub{a0}^\spadesuit - \paramsub{00}^\spadesuit)$ (where $\paramsub{a0}^\spadesuit$ is the bias of the linear classifier).
The indifference subspace $V$ is then the set of images $\query{} \in \setR^d$ such that $\query{}^T y_a = c_a$ for all $a \in \set{1, \ldots, 9}$.

To project any image $X \in \setR^d$ on $V$, let us first construct an orthogonal basis of the vector space orthogonal to $V$, using the Gram-Schmidt algorithm.
Namely, we first define $z_1 \triangleq y_1$.
Then, for any answer $a \in \set{1, \ldots, 9}$, we define
\begin{equation}
    z_a \triangleq y_a - \sum_{b < a} y_a^T z_b \frac{z_b}{\norm{z_b}{2}^2}.
\end{equation}
It is easy to check that for $b<a$, we have $z_a^T z_b = 0$.
Moreover, if $\query{} \in V$, then 
\begin{align}
    z_a^T \query{} 
    &= y_a^T \query{} - \sum_{b < a} \frac{(y_a^T z_b) (z_b^T \query{})}{\norm{z_b}{2}^2}
    = c_a - \sum_{b < a} \frac{(y_a^T z_b) (z_b^T \query{})}{\norm{z_b}{2}^2}.
\end{align}
By induction, we see that $z_a^T \query{}$ is a constant independent from $\query{}$.
Indeed, for $a = 1$, this is clear as $z_1^T \query{} = y_1^T \query{} = c_1$.
Moreover, for $a > 1$, then, in the computation of $z_a^T \query{}$, $\query{}$ always appear as $z_b^T \query{}$ for $b < a$.
Moreover, denoting $c'_a$ the constant such that $z_a^T \query{} = c'_a$ for all $a \in \set{1, \ldots 9}$, we see that these constants can be computed by
\begin{equation}
    c'_a = c_a - \sum_{b < a} \frac{y_a^T z_b}{\norm{z_b}{2}^2} c'_b.
\end{equation}
Finally, we can simply perform repeated projection onto the hyperplanes where $a$ is equally probable as the answer $0$.
To do this, we first define the orthogonal projection $P(X,y,c)$ of $X \in \setR^d$ on the hyperplane $x^T y = c$, which is given by
\begin{equation}
    P(X,y,c) = X - (X^T y - c) \frac{y}{\norm{y}{2}^2}.
\end{equation}
It is straightforward to verify that $P(X,y,c)^T y = c$ and that $P(P(X,y,c),y,c) = P(X,y,c)$.
We then canonically define repeated projection by induction, as
\begin{equation}
    P(X,(y_1, \ldots, y_{k+1}), (c_1, \ldots, c_{k+1})) \triangleq
    P( P(X,(y_1, \ldots, y_k), (c_1, \ldots, c_k)), y_{k+1}, c_{k+1}).
\end{equation}
Now consider any image $X \in \setR^d$.
Its projection can be obtained by setting
\begin{equation}
    \query{} \triangleq P(X, (z_1, \ldots, z_9), (c_1', \ldots c_9')) + \xi.
\end{equation}
Note that to avoid being exactly on the boundary, and thus retrieve information about the scales of $\param^\spadesuit$ and on which side of the boundary favors which label, we add a small noise $\xi$, to make sure $\query{}$ does not lie exactly on $V$ (which would lead to multiple solutions for the learning), but small enough so that the probabilities of the different label remain close to $0.1$ (the equiprobable probability).

We acknowledge that images obtained this way may not be in $[0,1]^d$, like the images of the MNIST dataset.
In general, one could search for points $\query{} \in V \cap [0,1]^d$.
Note that in theory, by Theorem~\ref{th:logistic_regression} (or a generalization of it), labeling random images in $[0,1]^d$ should suffice. 
However, in the case where $V \cap [0,1]^d$ is empty,
this procedure may require the labeling of significantly more images to be successful. This is discussed in  more detail in Section \ref{sec:clipp_attack}.

The convergence to the optimum is slow. 
But given that the problem is strictly convex, we focus here mostly on showing that the minimum is indeed a poisoned model.
To boost the convergence, we initialize our learning algorithm at a point close to what we expect to be the minimum, by taking this minimum and adding a Gaussian noise, and then we observe the convergence to this minimum.

\subsection{A Brief Theory of Data Poisoning for Linear Classification}
\label{exp_setup}

Using the efficient poisoning data fabrication, we thus have a set of images $(\query{}, p(\query{}))$, where $p_a(\query{})$ is the probability assigned to image $\query{}$ and label $a$.
This defines the following local loss for the strategic user:
\begin{equation}
    \localloss{\strategicnode} (\paramsub{\strategicnode}, \data{\strategicnode}) 
    = \sum_{(\query{}, p(\query{})) \in \data{\strategicnode}} \sum_{a \in \set{0,1, \ldots, 9}} p_a(\query{}) \ln \sigma_a(\paramsub{\strategicnode}, \query{}),
\end{equation}
where $\sigma_a(\paramsub{\strategicnode}, \query{}) = \frac{\exp (\paramsub{\strategicnode a}^T \query{} + \paramsub{\strategicnode a 0})}{\sum \exp (\paramsub{\strategicnode b}^T \query{} + \paramsub{\strategicnode b 0})}$ is the probability that image $\query{}$ has label $a$, according to the model $\paramsub{\strategicnode}$.
We acknowledge that such labelings of queries is unusual.
Evidently, in practice, an image may be labeled $N$ times, and the number of labels $N_a$ it received can be set to be approximately $N_a \approx N p_a(\query{})$.

It is noteworthy that the gradient of the loss function is then given by
\begin{equation}
    \left( \paramsub{\strategicnode} - \paramsub{\strategicnode}^\spadesuit \right)^T \nabla_{\paramsub{\strategicnode}} \localloss{\strategicnode} (\paramsub{\strategicnode}, \data{\strategicnode})
    = \sum_{\query{} \in \data{\strategicnode}} \sum_{a \in \set{0,1,\ldots, 9}} 
    \left( \sigma_a(\paramsub{\strategicnode}, \query{}) - \sigma_a(\paramsub{\strategicnode}^\spadesuit, \query{}) \right) \left( \paramsub{\strategicnode a} - \paramsub{\strategicnode a}^\spadesuit \right)^T \query{}^+,
\end{equation}
where we defined $\query{}^+ \triangleq (1, \query{})$ (which allows to factor in the bias of the model.
This shows that $\nabla_{\paramsub{\strategicnode}} \localloss{\strategicnode} (\paramsub{\strategicnode}, \data{\strategicnode})$ points systematically away from $\strategicvote$, and thus that gradient descent will move towards $\strategicvote$.

In fact, if the set of images $\query{}$ cover all dimensions (which occurs if there are $\Omega(d)$ images, which is the case for 2,000 images, since $d = 784$), then gradient descent will always move the model in the direction of $\strategicvote$, which will be the minimum.
Moreover, by overweighting each data $(\query{}, p(\query{}))$ by a factor $\alpha$ (as though the image $\query{}$ was labeled $\alpha$ times), we can guarantee gradient-PAC* learning, which means that we will have $\optimumsub{\strategicnode} \approx \strategicvote$, even in the personalized federated learning framework.
This shows why data poisoning should work in theory, with relatively few data injections.

Note that the number of other users does make learning harder.
Indeed, the gradient of the regularization $\regularization(\common, \paramsub{\strategicnode})$ at $\common = \trueparamsub{\strategicnode}$ and $\paramsub{\strategicnode} = \strategicvote$ is equal to $2 \regweightsub{} \norm{\trueparamsub{\strategicnode} - \strategicvote}{2}$.
As the number $\NODE -1$ of other users grows, we should expect this distance to grow roughly proportionally to $\NODE$.
In order to make strategic user $\strategicnode$ robustly learn $\strategicvote$, the norm of the gradient of the local loss $\localloss{\strategicnode}$ at $\trueparamsub{\strategicnode}$ must be vastly larger than $2 \regweightsub{} \norm{\trueparamsub{\strategicnode} - \strategicvote}{2}$.
This means that the value of $\alpha$ (or, equivalently, the number of data injected in $\data{\strategicnode}$) must also grow proportionally to $\NODE$.

\subsection{Data Poisoning Against MNIST with Images in  $[0,1]^d$}
\label{sec:clipp_attack}
Note that in the data poisoning attack depicted in Figure~\ref{fig:main}, poisoned data points are easily detectable, as they do not necessarily lie in $[0,1]^d$ like the pristine images of the MNIST dataset. However, this can be mitigated by the attacker with the cost of providing significantly more data points ($\sim 10^5$). For this, we conduct another experiment in which the attacker divides the poisoned images  by the maximum value and clips negative values to 0 (to get images located in $[0, 1]^d$). The results of this experiment are depicted in Figure~\ref{fig:data_attack_clipped}.

\begin{figure}[t]
    \centering
     \vspace{-0.8mm}
    \includegraphics[width=.4
    \linewidth]{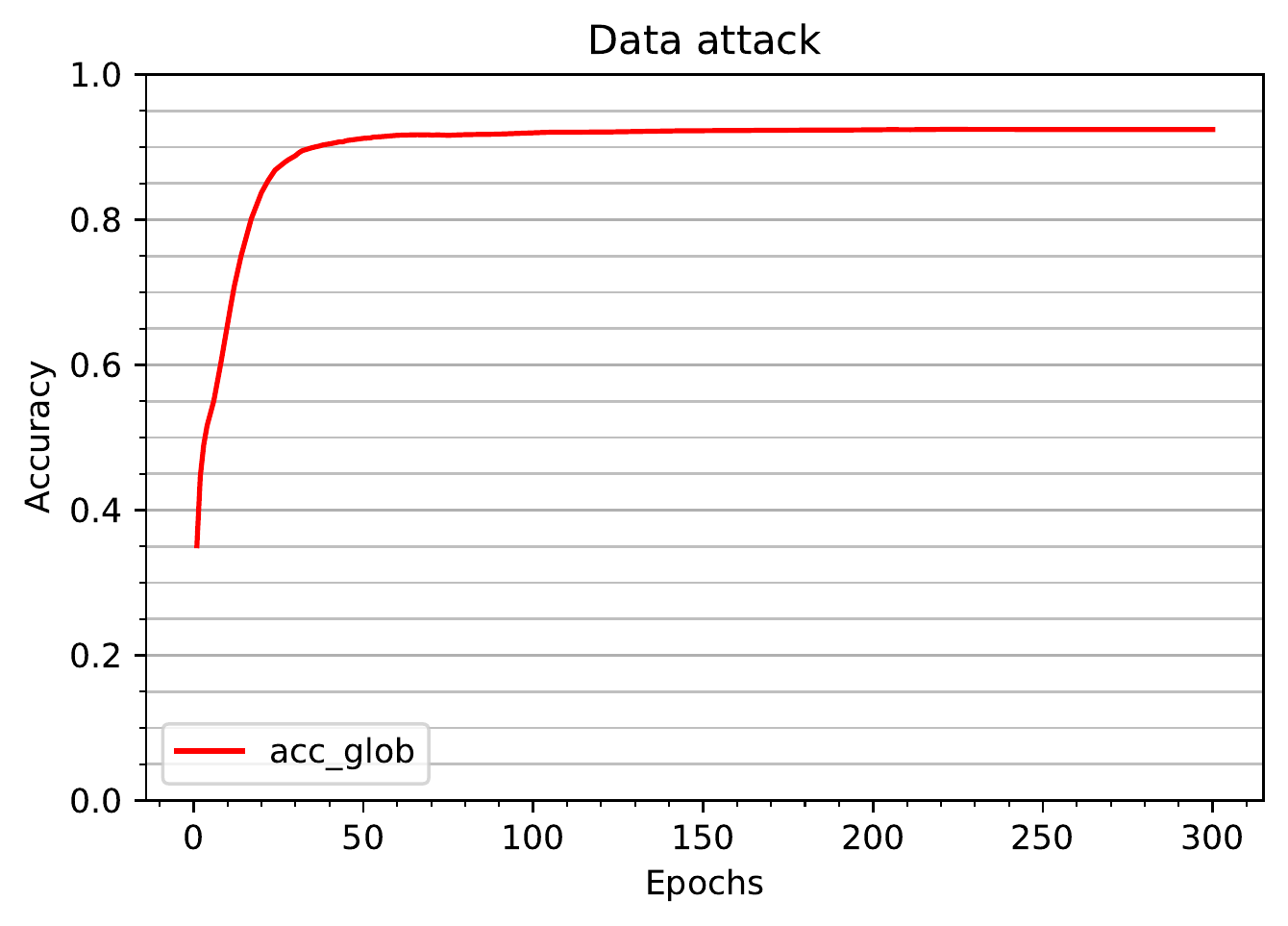}
    \caption{ Accuracy of $\common^\iteration$ according to $\trueparamsub{\strategicnode}$ (which relabels $0\rightarrow1\rightarrow2\rightarrow ...\rightarrow 9 \rightarrow0$), under our data poisoning attack with poisoned images in $[0,1]^d$, with one attacker against two honest users.}
    \vspace{-2mm}
     \label{fig:data_attack_clipped}
\end{figure}

\end{document}